\newtheoremstyle{myexample} 
    {\topsep}                    
    {\topsep}                    
    {\rm }                   
    {}                           
    {\bf }                   
    {.}                          
    {.5em}                       
    {}  
\newtheoremstyle{myremark} 
    {\topsep}                    
    {\topsep}                    
    {\rm}                        
    {}                           
    {\bf}                        
    {.}                          
    {.5em}                       
    {}  
\newtheorem{claim}{Claim}[section]
\newtheorem{lemma}[claim]{Lemma}
\newtheorem{theorem}[claim]{Theorem}
\newtheorem{proposition}[claim]{Proposition}
\newtheorem*{theorem*}{Theorem}
\def\M{\mathcal{M}}
\def\R{\mathbb{R}}
\def\bx{{\boldsymbol x}}
\def\bz{{\boldsymbol z}}
\def\v{{\boldsymbol v}}
\def\u{{\boldsymbol u}}
\def\0{{\boldsymbol 0}}
\def\1{{\boldsymbol 1}}
\def\x{{\boldsymbol x}}
\def\a{{\boldsymbol a}}
\def\b{{\boldsymbol b}}
\def\A{{\boldsymbol A}}
\def\B{{\boldsymbol B}}
\def\D{{\boldsymbol D}}
\def\G{{\boldsymbol G}}
\def\U{{\boldsymbol U}}
\def\V{{\boldsymbol V}}
\def\P{{\boldsymbol P}}
\def\W{{\boldsymbol W}}
\def\X{{\boldsymbol X}}
\def\Y{{\boldsymbol Y}}
\def\J{{\boldsymbol J}}
\def\bR{{\boldsymbol R}}
\def\bE{{\boldsymbol E}}
\def\Z{{\boldsymbol Z}}
\def\M{{\boldsymbol M}}
\def\u{{\boldsymbol u}}
\def\I{{\boldsymbol I}}
\def\1{{\boldsymbol 1}}
\def\m{{\boldsymbol m}}
\def\bSigma{{\boldsymbol \Sigma}}
\def\S{{\boldsymbol{S}}}
\def\r{{\boldsymbol r}}
\def\e{{\boldsymbol e}}
\def\m{{\boldsymbol m}}
\def\v{{\boldsymbol v}}
\def\u{{\boldsymbol u}}
\def\bQ{{\boldsymbol Q}}
\def\bR{{\boldsymbol R}}
\def\bE{{\boldsymbol E}}
\def\m{{\boldsymbol m}}
\def\Vtm{{\left(\V^\top\right)_\m}}
\def\Vtmt{{\left(\V^\top\right)_\m^\top}}
\def\Dh{{\hat{\D}}}
\def\bm{\bar{\b}}
\def\bh{\hat{\b}}
\def\Bm{\bar{\B}}
\def\Bh{\hat{\B}}
\def\Jh{\hat{\J}}
\def\E{{\mathbb{E}}}
\DeclareMathOperator*{\argmin}{arg\,min}
\newcommand{\tr}[1]{\mathrm{Tr}\left[ #1 \right]}
\newcommand{\norm}[1]{\left\lVert #1 \right\rVert}
\newcommand{\abs}[1]{\left\lvert#1\right\rvert}
\newcommand{\expbr}[1]{\mathrm{exp} \left( #1 \right)}
\newcommand{\opn}[1]{\left\lVert #1 \right\rVert_{op}}
\newcommand{\maxn}[1]{\left\lVert #1 \right\rVert_{max}}
\def\Em{{\E_\m}}
\def\alphr{{\alpha_{R}}}
\def\plr{{\log^{\alphr}(d)}}
\renewcommand{\P}[1]{\mathbb{P}\left(#1 \right)}
\DeclareMathOperator*{\Psu}{\mathbb{P}}
\newcommand{\Psub}[2]{\Psu_{#1}\left(#2 \right)}
\def\ber{\mathrm{Bern}(p)}
\def\ball{\mathcal{B}}
\newcommand{\pl}[1]{\mathrm{poly}_{#1}\mathrm{(log(d))}}
\newcommand{\Diag}[1]{\mathrm{Diag}\left(#1\right)}
\newcommand{\Om}[1]{O\left(#1\right)}
\icmltitlerunning{Compression of Structured Data with Autoencoders: Provable Benefit of Nonlinearities and Depth}
\begin{document}

\twocolumn[

\icmltitle{Compression of Structured Data with Autoencoders: \\ Provable Benefit of Nonlinearities and Depth}

\icmlsetsymbol{equal}{*}

\begin{icmlauthorlist}
\icmlauthor{Kevin Kögler}{equal,yyy}
\icmlauthor{Alexander Shevchenko}{equal,yyy}
\icmlauthor{Hamed Hassani}{comp}
\icmlauthor{Marco Mondelli}{yyy}

\end{icmlauthorlist}

\icmlaffiliation{yyy}{ISTA, Klosterneuburg, Austria}
\icmlaffiliation{comp}{Department of Electrical and Systems Engineering, University of Pennsylvania, USA}

\icmlcorrespondingauthor{Kevin Kögler}{kevin.koegler@ist.ac.at}
\icmlcorrespondingauthor{Alexander Shevchenko}{alex.shevchenko@ist.ac.at}

\icmlkeywords{Machine Learning, ICML}

\vskip 0.3in
]

\printAffiliationsAndNotice{\icmlEqualContribution} 

\begin{abstract}
Autoencoders are a prominent model in many empirical branches of machine learning and lossy data compression. However, basic theoretical questions remain unanswered even in a shallow two-layer setting. In particular, to what degree does a shallow autoencoder capture the structure of the underlying data distribution? For the prototypical case of the 1-bit compression of \emph{sparse} Gaussian data, we prove that gradient descent converges to a solution that completely disregards the sparse structure of the input. Namely, the performance of the algorithm is the same as if it was compressing a Gaussian source -- with no sparsity.  For general data distributions, we give evidence of a phase transition phenomenon in the shape of the gradient descent minimizer, as a function of the data sparsity: below the critical sparsity level, the minimizer is a rotation taken uniformly at random (just like in the compression of non-sparse data); above the critical sparsity, the minimizer is the identity (up to a permutation). 
Finally, by exploiting a connection with approximate message passing algorithms, we show how to improve upon Gaussian performance for the compression of sparse data: adding a denoising function to a shallow architecture already reduces the loss provably, and a suitable multi-layer decoder leads to a further improvement.  
We validate our findings on 
image datasets, such as CIFAR-10 and MNIST. 
\end{abstract}

\section{Introduction}\label{section:intro}

Autoencoders have achieved remarkable performance  
in many machine learning areas, such as  generative modeling \cite{kingmaauto},  
inverse problems \cite{peng2020solving} and data compression \cite{balle2017endtoend, theis2017lossy, agustsson2017soft}. Motivated by this practical success, an active area of research is aimed at theoretically analyzing the performance of autoencoders to understand the quality and dynamics of representation learning when these architectures are trained with gradient methods.

Formally, 
consider the encoding of 
$\x\in\mathbb R^d$ given by 
\vspace{-.4em}
\begin{equation}\label{eq:shallow_encoding}
    \bz = \sigma(\B \x), \quad \B \in \mathbb{R}^{n\times d}, \quad \bz \in \mathbb{R}^n,
\end{equation}
where the non-linear activation $\sigma(\cdot)$ is applied component-wise. The ratio $r=n/d$ is referred to as the compression rate. For a shallow (two-layer) autoencoder, the decoding  consists of a single linear transformation $\A \in \mathbb{R}^{d\times n}$:
\vspace{-.4em}
\begin{equation}\label{eq:linear_decoding} 
    \hat{\bx}_{\boldsymbol{\Theta}}(\bx) = \A \bz = \A \sigma(\B \bx).
\end{equation}
The optimal set of parameters $\boldsymbol{\Theta} = \{\A,\B\}$ minimizes the mean-squared error (MSE)
\vspace{-.4em}
\begin{equation}\label{eq:mse}
    \mathcal{R}(\boldsymbol{\Theta}) := d^{-1}\E \left[\|\boldsymbol{x} - \hat{\boldsymbol{x}}_{\boldsymbol{\Theta}}(\boldsymbol{x})\|_2^2\right],
\end{equation}
where the expectation is taken over the data distribution $\x$.
The model described in \eqref{eq:linear_decoding} is a natural extension of \emph{linear} autoencoders ($\sigma(x) = \alpha \cdot x$ for some $\alpha \neq 0$), which were thoroughly studied over the past years \cite{kunin2019loss,gidel2019implicit, bao2020regularized}. In an effort to go beyond the linear setting, a number of recent works have considered the \emph{non-linear} model \eqref{eq:linear_decoding}. Specifically, \citet{pmlr-v162-refinetti22a, nguyen2021analysis} study the training dynamics under specific scaling regimes of the input dimension $d$ and the number of neurons $n$, which lead to either vanishing or diverging compression rates. \citet{shevchenko2023fundamental} focus on the  proportional regime in which $d$ and $n$ grow at the same speed, but their analysis relies heavily on Gaussian data assumptions. In contrast with Gaussian data that lacks any particular structure, real data often exhibits rich structural properties. For instance, images are inherently sparse, and this property has been exploited by various compression schemes such as jpeg. In this view, it is paramount to go beyond the analysis of unstructured Gaussian data and address the following fundamental questions:

\vspace{-.3em}

\begin{center}   
\begin{minipage}{0.48\textwidth}
\textit{Does gradient descent training of the two-layer autoencoder \eqref{eq:linear_decoding} capture the structure in the data? How does increasing the expressivity of the decoder impact the performance?
}
\end{minipage}
\end{center}

 To address these questions, we consider the compression of structured data via the non-linear autoencoder \eqref{eq:shallow_encoding} with $\sigma\equiv \mathrm{sign}$ (1-bit compressed sensing, \cite{boufounos20081}) and show how the data structure is captured by the architecture of the decoder. Let us explain the choice of $\sigma  \equiv \mathrm{sign}$. Apart from the connection to classical information and coding theory \cite{CoverThomas}, its scale invariance prevents the model from entering the \emph{linear regime}. Namely, if $\sigma(\cdot)$ has a well-defined non-vanishing derivative at zero, by picking an encoding matrix $\B$ s.t.\ $\|\B\|_{op}\ll 1$, one can linearize the model, i.e., $\hat{\x}_{\boldsymbol{\Theta}}(\x) \approx \sigma'(0) \cdot \A\B\x$, which results in PCA-like behaviour \cite{pmlr-v162-refinetti22a}. 
Thus, $\mathrm{sign}$ is a natural candidate to tackle the non-linear setting of interest in applications and, in fact, hard-thresholding activations are common in large-scale models \cite{van2017neural}.

Our main contributions can be summarized as follows:
\vspace{-.8em}
\begin{itemize}
    \item Theorem \ref{thm:GD-min-sparse-body} proves that the \emph{linear decoder} in \eqref{eq:linear_decoding} may be \emph{unable to exploit the sparsity} in the data: when $\x$ has a Bernoulli-Gaussian (or ``sparse Gaussian'') distribution, both the gradient descent solution and the MSE coincide with those obtained for the compression of purely Gaussian data (with no sparsity).

 \vspace{-.25em}
   \item Going beyond Gaussian data, we give evidence of the emergence of a \emph{phase transition} in the structure of the optimal matrices $\A, \B$ in 
    \eqref{eq:linear_decoding}, as the sparsity level $p\in (0, 1)$ varies:
    Proposition \ref{proposition:identity_is_better} locates the critical value of $p$ such that the minimizer stops being a random rotation (as for purely Gaussian data), and it becomes the identity (up a permutation); numerical simulations for gradient descent corroborate this phenomenology and display a ``staircase'' behavior of the loss function. 
.

\vspace{-.25em}

\item While for the compression of sparse Gaussian data the linear decoder in \eqref{eq:linear_decoding} does not capture the sparsity, we show in Section \ref{sec:5} that increasing the expressivity of the decoder 
improves upon Gaussian performance. First, we  
post-process  
the output of \eqref{eq:linear_decoding}, i.e., we consider
\vspace{-.5em}
\begin{equation}\label{eq:linear_decoding_denoising}
    \hat{\x}_{\boldsymbol{\Theta}}(\x) = f(\A \boldsymbol{z}) = f(\A \mathrm{sign}(\B\x)),
\vspace{-2mm}\end{equation}
where $f$ 
is applied component-wise, and we prove that a suitable 
$f$ leads to a smaller MSE. In other words, adding a \emph{nonlinearity} to the linear decoder in \eqref{eq:linear_decoding} provably helps. Finally, we  
further improve the performance by increasing the \emph{depth} and using a multi-layer decoder. Our analysis leverages a connection between multi-layer autoencoders and the iterates of the RI-GAMP algorithm proposed by \citet{venkataramanan2022estimation}, which may be of independent interest.

\end{itemize}
\vspace{-.5em}
Experiments on syntethic data confirm our findings, and similar phenomena are displayed when running gradient descent to compress CIFAR-10/MNIST images.
Taken together, our results show that, for the compression of structured data, a more expressive decoding architecture provably improves performance. This is in sharp contrast with the compression of unstructured, Gaussian data where, as discussed in Section 6 of \cite{shevchenko2023fundamental}, multiple decoding layers do not help.

\section{Related work}\label{section:related_works}

\paragraph{Theoretical results for autoencoders.} The practical success of autoencoders has spurred a flurry of theoretical research, started with the analysis of linear autoencoders: \citet{kunin2019loss} indicate a PCA-like behaviour of the minimizers of the $L_2$-regularized loss; \citet{bao2020regularized} provide evidence that the convergence to the minimizer is slow due to ill-conditioning, which worsens as the dimension of the latent space increases; \citet{oftadeh2020eliminating} study the geometry of the loss landscape; \citet{gidel2019implicit} quantify the time-steps of the training dynamics at which deep linear networks recover features of increasing complexity. More recently, the focus has shifted towards \emph{non-linear} autoencoders. 
\citet{pmlr-v162-refinetti22a} characterize the training dynamics 
via a
system of ODEs when the compression rate $r$ is vanishing. 
\citet{nguyen2021analysis} takes a mean-field view that requires a polynomial growth of the number of neurons $n$ in the input dimension $d$, which results in a diverging compression rate.
\citet{cui2023high} use tools from statistical physics 
to predict the MSE of denoising a Gaussian mixture via a two-layer autoencoder with a skip connection. \citet{shevchenko2023fundamental} consider the compression of Gaussian data with a two-layer autoencoder when the compression rate $r$ is fixed and show that gradient descent methods achieve a minimizer of the MSE.

\vspace{-.5em}

\paragraph{Incremental learning and staircases in the training dynamics.} Phenomena similar to the staircase behavior of the loss function that we exhibit in Figure \ref{fig:sgd_rademacher} have drawn significant 
attention. For  
parity learning,   
the line of works \cite{abbe2021staircase, abbe2022merged, abbe2023sgd} shows that parities are recovered in a 
sequential fashion with increasing complexity. A similar behaviour is observed in transformers with diagonal weight matrices at small initialization \cite{abbe2023transformers}: gradient descent progressively learns a solution of increasing rank.
For a single index model, \citet{berthier2023learning} 
show a separation of time-scales at which the training dynamics follows an alternating pattern of plateaus and rapid decreases in the loss. Evidence of incremental learning in deep linear networks is 
provided by \citet{berthier2023incremental, pesme2023saddle, simon2023stepwise, jacot2021saddle, milanesi2021implicit}. The recent work by \citet{szekely2023learning} shows that the cumulants of the data distribution are learnt sequentially, revealing a sample complexity gap between neural networks and random features.

\vspace{-.5em}

\paragraph{Approximate Message Passing (AMP).} AMP refers to a family of iterative algorithms developed for a variety of statistical inference problems \cite{feng2022unifying}. Such problems include the recovery of a signal $\x$ from observations $\bz$ of the form in \eqref{eq:shallow_encoding}, namely,  a Generalized Linear Model \cite{McCullagh_GLM}, when the encoder matrix $\B$ is Gaussian \cite{rangan_GAMP,mondelli-2021-amp-spec-glm} or rotationally-invariant \cite{rangan2019vector,schniter2016vector,ma2017orthogonal,takeuchi2019rigorous}. Of particular interest for our work is the RI-GAMP algorithm by \citet{venkataramanan2022estimation}. In fact, RI-GAMP enjoys a  
computational graph structure that can be mapped to a suitable neural network, 
and it approaches the information-theoretically optimal MSE. The optimal MSE was computed via the replica method by \citet{takeda2006analysis,tulino2013support}, and these predictions were rigorously confirmed for the high-temperature regime by \citet{li2023random}.

\section{Preliminaries}

\vspace{-.25em}

\paragraph{Notation.} We use plain symbols $a,b$ for scalars, bold symbols $\boldsymbol{a}, \boldsymbol{b}$ for vectors, and capitalized bold symbols $\A,\B$ for matrices. Given a vector $\a$, its $\ell_2$-norm is $\|\a\|_2$. Given a matrix $\A$, its operator norm is $\opn{\A}$. 
We denote a unidimensional Gaussian distribution with mean $\mu$ and variance $\sigma^2$ by $\mathcal{N}(\mu,\sigma^2)$. We use the shorthand $\hat{\bx}$ for 
$\hat{\bx}_{\boldsymbol{\Theta}}$. 
Unless specified otherwise, function are applied \emph{component-wise} to vector/matrix-valued inputs. We denote by $C, c>0$ universal constants, which are independent of $n, d$. 

\vspace{-.5em}

\paragraph{Data distribution and MSE.} For $p\in(0,1]$, a sparse Gaussian distribution ${\rm SG}_1(p)$ is equal to $\mathcal{N}(0,1/p)$ with probability $p$ and is $0$ otherwise. The scaling of the variance of the Gaussian component ensures a unit second moment for all $p$. We use the notation $\x\sim {\rm SG}_d(p)$ to denote a vector with i.i.d.\ components distributed according to ${\rm SG}_1(p)$. Decreasing $p$ makes $\x\sim {\rm SG}_d(p)$ more sparse: for $p=1$ one recovers the isotropic Gaussian, i.e., ${\rm SG}_d(1) \equiv \mathcal{N}(\0,\I)$, while $p=0$ implies that $\x = \0$.

\citet{shevchenko2023fundamental} consider Gaussian data $\x\sim {\rm SG}_d(1)$ and the two-layer autoencoder with linear decoder in \eqref{eq:linear_decoding}. Their analysis shows that, for a compression rate $r\leq 1$, the MSE obtained by minimizing \eqref{eq:mse} over $\boldsymbol{\Theta} = \{\A,\B\}$ is given by 
\vspace{-.75em}
\begin{equation}\label{eq:gaussian_val}
  \mathcal R_{\rm Gauss}:= 1 - \frac{2}{\pi} \cdot r. 
\end{equation}
The set of minimizers $(\A,\B)$ has a weight-tied orthogonal structure, 
i.e., $\B\B^\top = \I$ and $\A \propto \B^\top$, and gradient-based optimization schemes reach a global minimum.

\section{Limitations of a linear decoding layer}\label{sec:4}
Our main technical result is that a two-layer autoencoder with a single linear decoding layer does not capture the sparse structure of the data. Specifically, we consider the autoencoder in \eqref{eq:linear_decoding} with Gaussian data $\x\sim {\rm SG}_d(p)$ trained via gradient descent. We show that, when $n, d$ are both large (holding the compression rate $r=n/d$ fixed), the trajectory of the algorithm is the same as that obtained from the compression of non-sparse data, i.e., $\x\sim \rm{SG}_d(1) \equiv \mathcal{N}(\boldsymbol{0}, \I)$. As a consequence, the minimizer has a weight-tied orthogonal structure ($\B\B^\top = \I$, $\A \propto \B^\top$), and the MSE at convergence is given by $\mathcal R_{\rm Gauss}$ as defined in \eqref{eq:gaussian_val}. 

We now go into the details.
Since the optimization objective is convex in $\A$, we consider the following alternating minimization version of Riemannian gradient descent:
\vspace{-.4em}
\begin{equation}\label{eq:body-GDmin-formulas}
    \begin{split}
\A(t+1) &= \argmin_\A \mathcal{R}(\A, \B(t)),\\
\B(t+1) \hspace{-.15em}&:=\mathrm{proj} \left( \hspace{-.1em}\B(t) - \eta \left(\nabla_{\B(t)}+ \G(t) \right) \right).
    \end{split}
\end{equation}
In fact, due to the convexity in $\A$ of the MSE $\mathcal{R}(\cdot,\cdot)$ in \eqref{eq:mse}, we can compute in closed form $\argmin_\A \mathcal{R}(\A, \B(t))$. Here, Riemannian refers to  the space of matrices with unit-norm rows, $\nabla_{\B(t)}$ is a shorthand for the gradient $\nabla_{\B(t)}\mathcal{R}(\A(t),\B(t))$,  and $\mathrm{proj}$ normalizes the rows of a matrix to have unit norm. The projection step (and, hence, the Riemannian nature of the optimization) is due to the scale-invariance of $\mathrm{sign}$, and it ensures numerical stability. The term $\G(t)$ corresponds to
Gaussian noise of arbitrarily small variance, which acts as a (probabilistic) smoothing for the discontinuity of $\mathrm{sign}$ 
at $0$ and, therefore, implies that the gradient is well-defined along the trajectory of the algorithm. (Note that $\G(t)$ is not needed in experiments, as we use a straight-through estimator, see Appendix \ref{app:num-setup}).

\begin{theorem}[Gradient descent does not capture the sparsity]\label{thm:GD-min-sparse-body}
    Consider the gradient descent algorithm in \eqref{eq:body-GDmin-formulas} with $\x\sim {\rm SG}_d(p)$ and $(\G(t))_{i,j} \sim \mathcal N(0, \sigma^2)$, where $d^{-\gamma_g}\leq \sigma \leq C/d$ for some fixed $1<\gamma_g<\infty$. Initialize the algorithm with $\B(0)$ equal to a row-normalized Gaussian, i.e., $\B'_{i, j}(0)\sim \mathcal N(0, 1/d)$, $\B(0)=\mathrm{proj}(\B'(0))$, and let $\B(0)=\U\S(0)\V^\top$ be its SVD. Let the step size $\eta$ be $\Theta(1/\sqrt{d})$. Then, for any fixed $r<1$ and $T_{\rm max} \in (0, \infty)$, with probability at least $1-Cd^{-3/2}$, the following holds for all $t\le T_{\rm max}/\eta$
 \vspace{-.4em}
   \begin{equation}\label{eq:thmstat}
    \begin{split}
        &\B(t) = \U \S(t) \V^\top + \bR(t), \\ 
        &\opn{\S(t)\S(t)^\top - \I} \leq C \exp\left(-c \eta t\right), \\
        &\lim_{d \to \infty} \sup_{t \in [0, T_{\rm max}/\eta]} \opn{\bR(t)} = 0,
    \end{split}
    \end{equation}
    where $C, c$ are universal constants depending only on $p, r$ and $T_{\rm max}$.
    Moreover, we have that, almost surely,
\vspace{-.4em}
    \begin{align}
        \lim_{t\to \infty}&\lim_{d \to \infty}\mathcal{R}(\A(t), \B(t)) = \mathcal{R}_{\rm Gauss},\label{eq:body-appG2}\\
    \lim_{d \to \infty}& \sup_{t \in [0, T_{\rm max}/\eta]} \opn{\B(t)-\B_{\rm Gauss}(t)}=0,\label{eq:body-appG1}
    \end{align}
where  $\mathcal{R}_{\rm Gauss}$ is defined in \eqref{eq:gaussian_val} and $\B_{\rm Gauss}(t)$ is obtained by running \eqref{eq:body-GDmin-formulas} with $\x\sim \mathcal N(\0, \I)$.
\end{theorem}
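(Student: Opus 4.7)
The strategy is to reduce the sparse case to the fully-Gaussian analysis of \cite{shevchenko2023fundamental} by proving that, in the proportional regime, the gradient descent trajectory on $\x\sim{\rm SG}_d(p)$ is asymptotically indistinguishable from the trajectory on $\x\sim\mathcal{N}(\0,\I)$. The mechanism is a central-limit theorem: for a $\B$ with delocalized unit-norm rows, each coordinate $(\B\x)_i=\b_i^\top\x$ is a sum of $d$ independent unit-variance entries and is thus close in distribution to $\mathcal{N}(0,1)$, so moments involving $\mathrm{sign}(\B\x)$ match their Gaussian-data values in the $d\to\infty$ limit.

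\textbf{Closing the dynamics in $\B$.} Since $\mathcal{R}$ is convex quadratic in $\A$, the first line of \eqref{eq:body-GDmin-formulas} yields the closed form $\A(\B)=\M_1(\B)\M_2(\B)^{-1}$, with $\M_1(\B)=\E[\x\,\mathrm{sign}(\B\x)^\top]$ and $\M_2(\B)=\E[\mathrm{sign}(\B\x)\mathrm{sign}(\B\x)^\top]$. After substitution, the gradient $\nabla_\B\mathcal{R}(\A(\B),\B)$ with the $\G(t)$-smoothed sign is a smooth functional of $\M_1,\M_2$ and their first-order derivatives in the rows of $\B$. The core technical claim is that, for any $\B$ with unit-norm delocalized rows and $\opn{\B}=O(1)$, both matrices evaluated under ${\rm SG}_d(p)$ converge at a polynomial-in-$d$ rate to their values under $\mathcal{N}(\0,\I)$; I would obtain this via a Berry--Esseen bound on $\b_i^\top\x$ for $\M_2$, and via conditioning on the support of $\x$ followed by Gaussian integration by parts for $\M_1$.

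\textbf{Trajectory comparison and conclusion.} Row delocalization holds at $t=0$ (Gaussian rows) and is propagated inductively through the $\eta=\Theta(1/\sqrt d)$ update by standard $\ell^\infty$ bounds on the smoothed gradient followed by the projection step. Once the gradient map is uniformly $o_d(1)$-close to its Gaussian counterpart on the relevant compact domain, a discrete Grönwall inequality over the $T_{\max}/\eta$ iterations yields $\sup_{t\le T_{\max}/\eta}\opn{\B(t)-\B_{\rm Gauss}(t)}=o_d(1)$ with probability at least $1-Cd^{-3/2}$, upgrading to almost sure via Borel--Cantelli since $\sum_d d^{-3/2}<\infty$. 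Combining with \cite{shevchenko2023fundamental}---which gives $\B_{\rm Gauss}(t)=\U\S(t)\V^\top$ exactly, with $\opn{\S(t)\S(t)^\top-\I}\le C\exp(-c\eta t)$ and the MSE limit $\mathcal{R}_{\rm Gauss}$ as $t\to\infty$---then delivers \eqref{eq:thmstat}, \eqref{eq:body-appG2} and \eqref{eq:body-appG1}.

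\textbf{Main obstacle.} The hardest part will be propagating a quantitative delocalization of the rows of $\B(t)$ over the whole horizon, jointly with a CLT error that beats Grönwall's exponential amplification. The smoothing regime $d^{-\gamma_g}\le\sigma\le C/d$ is delicate: $\sigma$ must be small enough that the smoothed and unsmoothed MSEs coincide asymptotically, yet large enough that the Berry--Esseen error (whose size scales roughly as $\|\b_i\|_\infty/\sigma$) remains controlled. A secondary point is keeping $\M_2(\B(t))$ invertible along the trajectory, which I would obtain by inductively maintaining $\B\B^\top\approx\I$ from the exponential contraction of the singular values inherited through the trajectory comparison.
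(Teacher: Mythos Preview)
Your high-level architecture---show that the population gradient under ${\rm SG}_d(p)$ is $o_d(1)$-close in operator norm to its Gaussian-data counterpart, then run a discrete Gr\"onwall over $T_{\max}/\eta$ steps and import the singular-value contraction from \cite{shevchenko2023fundamental}---is exactly the paper's strategy. Two points in your execution diverge from the paper, and the second is a genuine gap.

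First, you misread the role of $\G(t)$. In \eqref{eq:body-GDmin-formulas} the noise is added \emph{to the gradient}, not to the argument of $\mathrm{sign}$; there is no smoothed sign anywhere. The gradient of $\mathcal R$ is computed exactly by first conditioning on the Bernoulli support $\m$ of $\x$---which, for sparse Gaussian, makes $\B\x$ \emph{exactly} Gaussian with covariance proportional to $\B_\m\B_\m^\top$---and then using the closed form $\E_g[\mathrm{sign}(\g)\mathrm{sign}(\g)^\top]=(2/\pi)\arcsin(\hat\B_\m\hat\B_\m^\top)$. The only job of $\G(t)$ in the proof is to guarantee $\min_{i,j}|\B(t)_{ij}|\ge d^{-O(1)}$ with high probability, so that masked row norms $\|\b_i\circ\m\|$ stay bounded away from zero and the $\arcsin$-based gradient formula is well defined. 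Your ``main obstacle'' paragraph---balancing $\sigma$ against a Berry--Esseen error of order $\|\b_i\|_\infty/\sigma$---is therefore addressing a constraint that is not present.

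Second, and more substantively: a two-dimensional Berry--Esseen bound on $(\b_i^\top\x,\b_j^\top\x)$ gives only \emph{entrywise} control of $\M_2(\B)-\M_2^{\rm Gauss}(\B)$, of order roughly $\|\b_i\|_\infty\sim d^{-1/2}$ per entry. Passing na\"ively to operator norm costs a factor $n$ and gives nothing. The paper avoids this by conditioning on $\m$ for $\M_2$ as well (the same move you already propose for $\M_1$), obtaining $\M_2(\B)=\E_\m\arcsin(\hat\B_\m\hat\B_\m^\top)$ exactly. The $\arcsin$ is then Taylor-expanded: the linear term $\E_\m[\hat\B_\m\hat\B_\m^\top]$ concentrates to $\B\B^\top$ in operator norm via Hoeffding-type bounds on the mask, while for $\ell\ge 3$ the off-diagonal entries of $\hat\B_\m\hat\B_\m^\top$ are $O(\mathrm{polylog}(d)/\sqrt d)$, so their Schur powers decay fast enough that the crude $\opn{\cdot}\le n\cdot\maxn{\cdot}$ bound already gives $O(\mathrm{polylog}(d)/\sqrt d)$ in operator norm. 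This structured decomposition is precisely where the $p$-dependence cancels and the sparse gradient collapses onto the Gaussian one; a raw Berry--Esseen comparison does not expose that cancellation and would not, as stated, deliver the required operator-norm rate.
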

In words, \eqref{eq:thmstat} gives a precise characterization of the gradient descent trajectory: throughout the dynamics, 
the eigenbasis of $\B(t)$ does not change significantly (i.e., it remains close to that of $\B(0)$) and, as $t$ grows, all the singular values of $\B(t)$ approach $1$. As a consequence, \eqref{eq:body-appG2} gives that, at convergence, the MSE achieved by \eqref{eq:body-GDmin-formulas} with $\x\sim {\rm SG}_d(p)$ approaches $\mathcal{R}_{\rm Gauss}$, which corresponds to the compression of standard Gaussian data $\x\sim \mathcal N(\0, \I)$.
In fact, a stronger result holds: \eqref{eq:body-appG1} gives that the whole trajectory of \eqref{eq:body-GDmin-formulas} for $\x\sim {\rm SG}_d(p)$ is the same as that obtained for 
$\x\sim \mathcal N(\0, \I)$.

\paragraph{Proof sketch.} 
The sparse Gaussian distribution can be seen as the component-wise product between a standard Gaussian vector and a mask $\m\in \{0, 1\}^d$ with i.i.d.\ Bernoulli($p$) entries.
The key idea is to approximate the randomness in the mask $\m$ with its average, which heuristically corresponds to having again Gaussian data. This is done formally by pushing the mask into the network parameters, and then using high-dimensional concentration tools to bound the deviation from the average. 

We now go into the 
details. As a starting point, 
Lemma \ref{lem:mmse-to-matrix-obj} shows 
that, up to an error 
exponentially small in $d$, instead of the MSE in \eqref{eq:mse} we can consider the objective 
\vspace{-.5em}
\begin{equation}\label{eq:objreg}
\E_{\m}\left[\tr{\A^\top \A \cdot \arcsin(\hat{\B}_\m\hat{\B}_\m^\top)} - \frac{2}{\sqrt{p}} \cdot \tr{\A \hat{\B}_\m}\right].    
\end{equation}
Here, $\B_\m$ denotes a masked version of $\B$, i.e., the columns of $\B$ are set to zero according to the Bernoulli mask $\m$, and $\hat{\B}_\m$ is obtained by normalizing the rows of $\B_\m$, i.e., $(\hat{\B}_\m)_{i,:} = (\B_\m)_{i,:}/\|(\B_\m)_{i,:}\|_2$. 

Next, we provide a number of concentration bounds for quantities to which the Bernoulli mask $\m$ is applied. We start with random vectors (Lemma \ref{lem:D-concentration}), random matrices (Lemmas \ref{lem:lip-conc} and \ref{lem:lip-conc-conditional}), and quantities that appear when optimizing the objective \eqref{eq:objreg} via gradient descent (Lemma \ref{lem:lip-conc-application}). We note that both the largest entry and the operator norm of the error matrix have to be controlled. Then, we take care of the row normalization in the definition of $\hat{\B}_\m$. To do so, Lemma \ref{lem:master-Bh} is a general result showing that 
\vspace{-.3em}
\begin{equation}\label{eq:Fap}
  \mathbb E_\m  F\left(\hat{\B}_\m\right) \approx \mathbb E_\m F\left(\frac{1}{\sqrt{p}}\B_\m\right),
\vspace{-1mm}\end{equation}
for a class of sufficiently regular matrix-valued functions $F$. In words, 
\eqref{eq:Fap} gives that, on average over $\m$, the row normalization can be replaced by the multiplication with 
$1/\sqrt{p}$. This result is instantiated in Lemma \ref{lem:master-Bh-explicit} for three choices of $F$ useful for the analysis of gradient descent. 

Armed with these technical tools, we are able to 
remove the effect of the masking from the gradient descent dynamics. First, Lemma \ref{lem:A-concentration} focuses on the optimization of the matrix $\A$, which has a closed form due to the convexity of the objective \eqref{eq:objreg} in $\A$. Next, Lemmas \ref{lem:grad-concentration-part1} and \ref{lem:grad-concentration-part2} estimate the gradient $\nabla_{\B(t)}$ as 
\vspace{-.5em}
$$
 \opn{\nabla_{\B(t)}-  \U\tilde{\S}(t)\V^\top} \le C(T_{\rm max}) \cdot \frac{\log^{10}(d)}{\sqrt{d}},
$$
where $\U, \V$ come from the SVD of $\B(0)$, $\S(t)$ is a diagonal matrix containing the singular values of $\B(t)$, and $\tilde{\S}(t) = G(\S(t))$ for a deterministic function $G$. This shows that, up to the leading order in the approximation, the singular vectors of $\B(t)$ are fixed along the gradient trajectory.
Crucially, the function $G$ does not depend on the sparsity $p$ of the data. Thus, for any $p \in (0, 1)$, the gradient update for the masked objective \eqref{eq:objreg} is close to the update for the same objective without the masking (i.e., corresponding to the compression of Gaussian data with $p=1$). 

Finally, Lemma \ref{lem:B-specevo} derives an a-priori Gr\"onwall-type estimate, which bootstraps the bounds to the whole gradient descent trajectory \eqref{eq:body-GDmin-formulas} and concludes the proof. The complete argument is deferred to Appendix \ref{app:GD-min-Proof}. \qed

\begin{figure*}[t!]
    \centering
   \subfloat{\includegraphics[width=0.815\columnwidth]{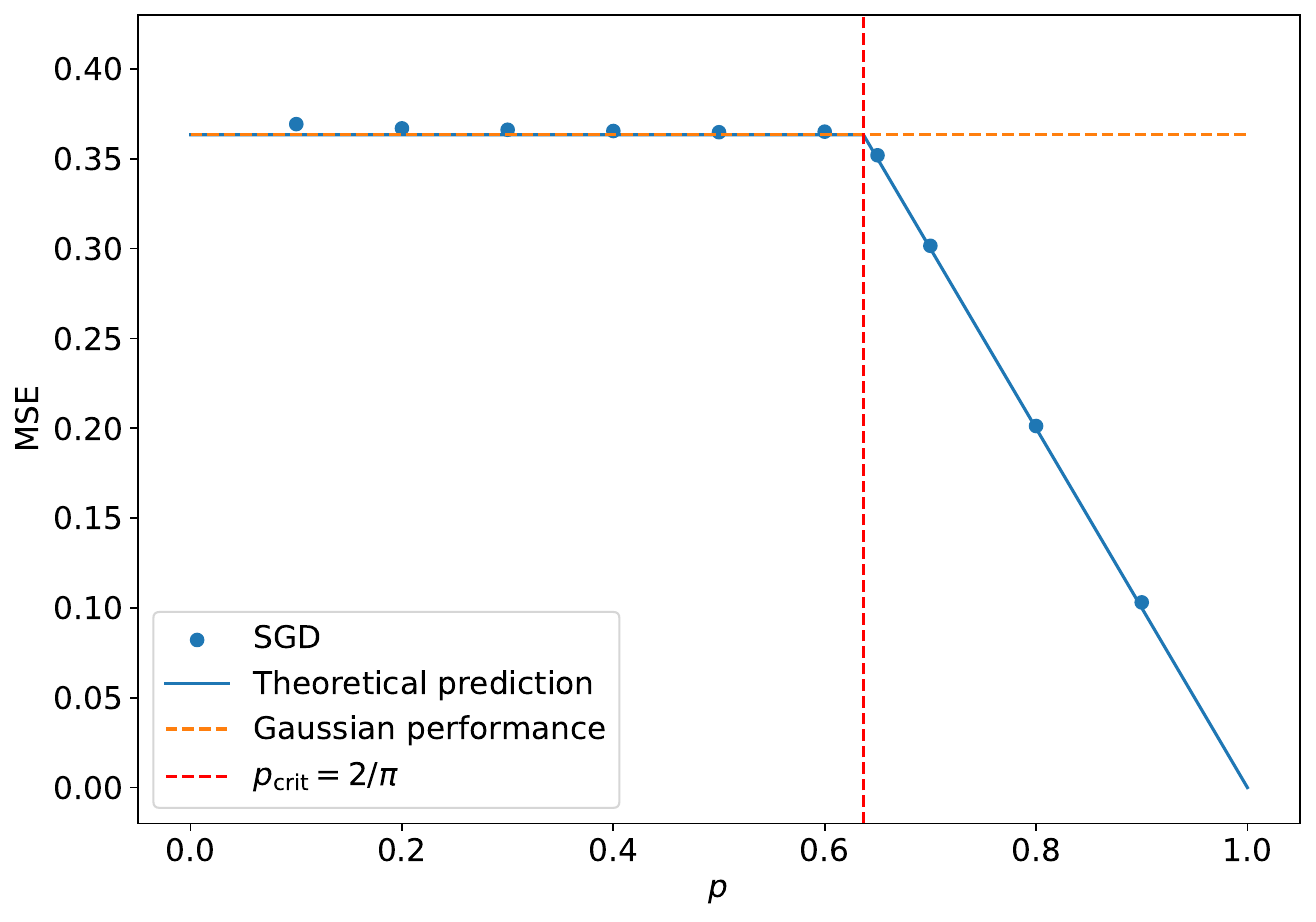}}
    \hspace{-0.em}
    \subfloat{\includegraphics[width=0.58\columnwidth]{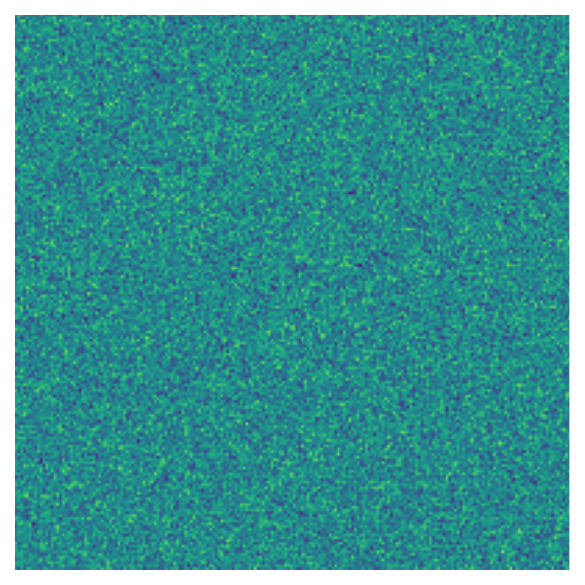}}
    \subfloat{\includegraphics[width=0.58\columnwidth]{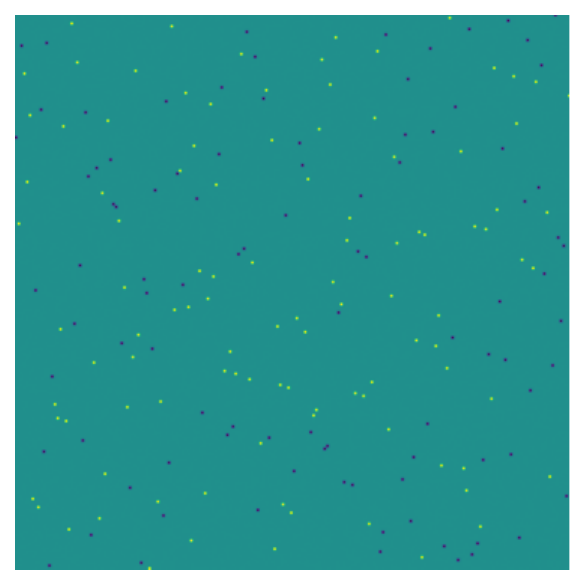}}\hspace{0.2em}
\vspace{-3mm}\caption{Compression of sparse Rademacher data via the two-layer autoencoder in \eqref{eq:linear_decoding}. We set $d=200$ and $r=1$. \emph{Left.} MSE achieved by SGD at convergence, as a function of the sparsity level $p$. The empirical values (dots) match our theoretical prediction (blue line): for $p<p_{\mathrm{crit}}$, the loss is equal to the value obtained for Gaussian data, i.e., $\mathcal R_{\rm Gauss}=1-2r/\pi$; for $p\ge p_{\mathrm{crit}}$, the loss is smaller, and it is equal to $1 - r \cdot \left(\E |x_1|\right)^2=1-r\cdot p$. \emph{Center.} Encoder matrix $\B$ at convergence of SGD when $p=0.3<p_{\mathrm{crit}}$: the matrix is a random rotation. \emph{Right.} Encoder matrix $\B$ at convergence of SGD when $p=0.7\ge p_{\mathrm{crit}}$: the negative sign in part of the entries of $\B$ is cancelled by the corresponding sign in the entries of $\A$; hence, $\B$ is equivalent to a permutation of the identity.  \vspace{-3mm}}\label{fig:rademacher_phase_transition}
\end{figure*}

\vspace{-.5em}

\paragraph{Beyond Gaussian data: Phase transitions, staircases in the learning dynamics, and image data.}

For general distributions of the data $\x$, we empirically observe that the minimizers of the model in \eqref{eq:linear_decoding} found by stochastic gradient descent (SGD) either \emph{(i)} coincide with those obtained for standard Gaussian data, or  \emph{(ii)} are equivalent to (suitably sub-sampled) permutations of the identity. Up to a permutation of the neurons, these two candidates can be expressed as:
\vspace{-.5em}
\begin{align}\label{eq:two_nets}
        &\hat{\x}_{\mathrm{Haar}}(\x) = \alpha_{\mathrm{Haar}} \cdot \B^\top \mathrm{sign}(\B\x), \\ &\hat{\x}_{\mathrm{Id}}(\x) = \alpha_{\mathrm{Id}} \cdot \begin{bmatrix}
        \I_n\\
        \boldsymbol{0}_{(d-n)\times n} 
    \end{bmatrix}\mathrm{sign}([\I_{n}, \boldsymbol{0}_{n \times (d-n)}]\x)\nonumber,
\end{align}
where $\B$ is obtained by subsampling a Haar matrix (i.e., a matrix taken uniformly from the group of rotations), $\boldsymbol{0}_{(d-n)\times n}$ is a $(d-n)\times n$ matrix of zeros, and $(\alpha_{\mathrm{Haar}},\alpha_{\mathrm{Id}})$ are scalar coefficients. The losses of these two candidates can be expressed in a closed form as derived below.

\begin{proposition}[Candidate comparison]\label{proposition:identity_is_better} Let $r\leq1$ and let $\x$ have i.i.d.\ components with zero mean and unit variance. Then, 
we have that, almost surely, the MSE of $\hat{\x}_{\mathrm{Haar}}(\cdot)$ coincides with the Gaussian performance $\mathcal R_{\rm Gauss}$ in \eqref{eq:gaussian_val}, 
i.e., 
\vspace{-.5em}
\begin{equation}\label{eq:Haarloss}    
\min_{\alpha_{\mathrm{Haar}\in\mathbb{R}}}\lim_{d\rightarrow\infty} \frac{1}{d}\cdot\E_{\x 
} \left[\|\hat{\x}_{\mathrm{Haar}}(\x)-\x\|_2^2\right] = 1 - \frac{2}{\pi} \cdot r\ .
\end{equation}
Furthermore, we have that, for any $d$, 
\vspace{-.5em}
\begin{equation}\label{eq:idloss}
\min_{\alpha_{\mathrm{Id}\in\mathbb{R}}} \frac{1}{d}\cdot\E_{\x 
} \left[\|\hat{\x}_{\mathrm{Id}}(\x)-\x\|_2^2\right] =1 - r \cdot \left(\E |x_1|\right)^2,
\end{equation}
where $x_1$ is the first component of $\x$. This implies that $\hat{\x}_{\mathrm{Id}}(\cdot)$ is superior to $\hat{\x}_{\mathrm{Haar}}(\cdot)$ in terms of MSE whenever
\vspace{-.5em}
\begin{equation}\label{eq:phase_transition}
\E |x_1| > \sqrt{2/\pi} =\E_{g\sim\mathcal N(0, 1)}|g|.
\end{equation}
\end{proposition}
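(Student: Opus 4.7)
The identity bound \eqref{eq:idloss} follows directly from the component-wise action of $\hat{\x}_{\mathrm{Id}}$: the first $n$ coordinates each contribute $\E[(x_1 - \alpha_{\mathrm{Id}}\mathrm{sign}(x_1))^2] = 1 - 2\alpha_{\mathrm{Id}}\E|x_1| + \alpha_{\mathrm{Id}}^2$, while the remaining $d-n$ coordinates each contribute $\E[x_1^2] = 1$, so the MSE equals $1 + r\alpha_{\mathrm{Id}}^2 - 2r\alpha_{\mathrm{Id}}\E|x_1|$ exactly; minimizing yields $\alpha_{\mathrm{Id}}^* = \E|x_1|$ with value $1 - r(\E|x_1|)^2$. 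The comparison \eqref{eq:phase_transition} then follows by direct subtraction once \eqref{eq:Haarloss} is established, so the bulk of the work is the Haar bound.

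\textbf{Paragraph 2 (Reduction for the Haar candidate).} Since subsampling a Haar matrix preserves row orthonormality, $\B\B^\top = \I_n$. This gives $\|\B^\top \mathrm{sign}(\B\x)\|_2^2 = \mathrm{sign}(\B\x)^\top \B\B^\top \mathrm{sign}(\B\x) = n$ (almost surely, since $\B\x$ has no zero coordinate for a.e.\ $\B$), together with $\x^\top \B^\top \mathrm{sign}(\B\x) = (\B\x)^\top \mathrm{sign}(\B\x) = \|\B\x\|_1$. Combining these with $\E\|\x\|_2^2 = d$ yields
\begin{equation*}
\frac{1}{d}\,\E_\x\|\alpha\B^\top \mathrm{sign}(\B\x) - \x\|_2^2 = 1 + \alpha^2 r - \frac{2\alpha}{d}\,\E_\x\|\B\x\|_1.
\end{equation*}
The claim \eqref{eq:Haarloss} thus reduces to showing that $\frac{1}{n}\E_\x\|\B\x\|_1 \to \sqrt{2/\pi}$ almost surely over $\B$; minimizing the resulting limiting quadratic in $\alpha$ then gives $\alpha^* = \sqrt{2/\pi}$ with value $1 - 2r/\pi$.

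\textbf{Paragraph 3 (Main technical step: almost-sure convergence).} Let $\bb_i$ denote the $i$-th row of $\B$ and set $\phi(\bb) := \E_\x|\bb^\top \x|$, so the target is $\frac{1}{n}\sum_{i=1}^n \phi(\bb_i)$. By Jensen's inequality, $\phi(\bb) \le \sqrt{\E_\x(\bb^\top \x)^2} = \|\bb\|_2 = 1$ on $S^{d-1}$, and by the triangle inequality plus Jensen $|\phi(\bb) - \phi(\bb')| \le \E_\x|(\bb-\bb')^\top \x| \le \|\bb-\bb'\|_2$, so $\phi$ is $1$-Lipschitz. Lévy's concentration on the sphere then gives $\mathbb{P}(|\phi(\bb_i) - m_d| > t) \le 2\exp(-cdt^2)$, where $m_d := \E\phi(\bb_1)$; a union bound over the $n = rd$ rows combined with Borel--Cantelli applied at $t_d = d^{-1/4}$ yields $\max_i|\phi(\bb_i) - m_d| \to 0$, and hence $\frac{1}{n}\sum_i \phi(\bb_i) - m_d \to 0$, almost surely. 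To identify $m_d \to \sqrt{2/\pi}$, I use that a uniform vector on $S^{d-1}$ satisfies $\|\bb\|_\infty = O(\sqrt{\log d/d})$ with high probability, which verifies Lindeberg's condition for the triangular array $\{b_j x_j\}_{j=1}^d$; hence $\bb^\top \x \to \mathcal{N}(0,1)$ in distribution, and the uniform bound $\phi(\bb) \le 1$ upgrades this to $\phi(\bb) \to \sqrt{2/\pi}$ in mean, giving $m_d \to \sqrt{2/\pi}$. The main obstacle is precisely this upgrade from in-expectation behavior to the claimed \emph{almost sure} statement in \eqref{eq:Haarloss}, which is exactly what the spherical-concentration-plus-CLT combination above delivers.
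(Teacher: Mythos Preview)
Your proof is correct, and the identity computation in Paragraph~1 matches the paper's direct calculation exactly. For the Haar candidate, however, you take a genuinely different and more elementary route. The paper identifies $\B^\top\mathrm{sign}(\B\x)$ with the first iterate $\x^1$ of the RI-GAMP algorithm of \citet{venkataramanan2022estimation} and invokes their state evolution theorem, which gives $\frac{1}{d}\sum_i \psi((\x^1)_i,(\x)_i)\to \E[\psi(\mu x_1+\sigma g,x_1)]$ almost surely for pseudo-Lipschitz $\psi$; taking $\psi(a,b)=(b-\alpha a)^2$ and reading off $(\mu,\sigma)$ from the free cumulants yields \eqref{eq:Haarloss}. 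You instead exploit the exact algebraic identity $\|\B^\top\mathrm{sign}(\B\x)\|_2^2=n$ from row-orthonormality to reduce the whole problem to the single scalar $\frac{1}{n}\E_\x\|\B\x\|_1$, and control that via L\'evy concentration on the sphere plus a Lindeberg CLT. Your argument is self-contained and avoids the AMP machinery entirely, which is attractive here; the paper's RI-GAMP route, on the other hand, is what lets them extend seamlessly to the nonlinear decoder of Proposition~\ref{proposition:1}, where your $\ell_1$ reduction breaks down because one needs the full joint empirical law of $(\x,\B^\top\mathrm{sign}(\B\x))$, not just one inner product. One small point of compression in Paragraph~3: the step from $\bb^\top\x\Rightarrow\mathcal N(0,1)$ to $\phi(\bb)\to\sqrt{2/\pi}$ uses uniform integrability of $|\bb^\top\x|$ in $\x$ (from the unit second moment), while the subsequent $m_d\to\sqrt{2/\pi}$ uses bounded convergence in $\bb$ (from $\phi\le 1$); you have merged two distinct dominated-convergence arguments, but both are valid.
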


The MSE of $\hat{\x}_{\mathrm{Id}}(\cdot)$ in \eqref{eq:idloss} is obtained via a direct calculation. To evaluate the MSE of $\hat{\x}_{\mathrm{Haar}}(\cdot)$ in \eqref{eq:Haarloss}, we relate this estimator to the first iterate of the RI-GAMP algorithm proposed by \citet{venkataramanan2022estimation}. Then, the high dimensional limit of $\| \B^\top\mathrm{sign}(\B\x)-\x\|_2^2$ follows from the state evolution analysis of RI-GAMP. A similar strategy will be used also in Section 
\ref{sec:5} to analyze different decoding architectures. The complete proof is in Appendix \ref{subsec:idbetter}.

\begin{figure}[t]
    \centering
    \subfloat{\includegraphics[width=0.84\columnwidth]{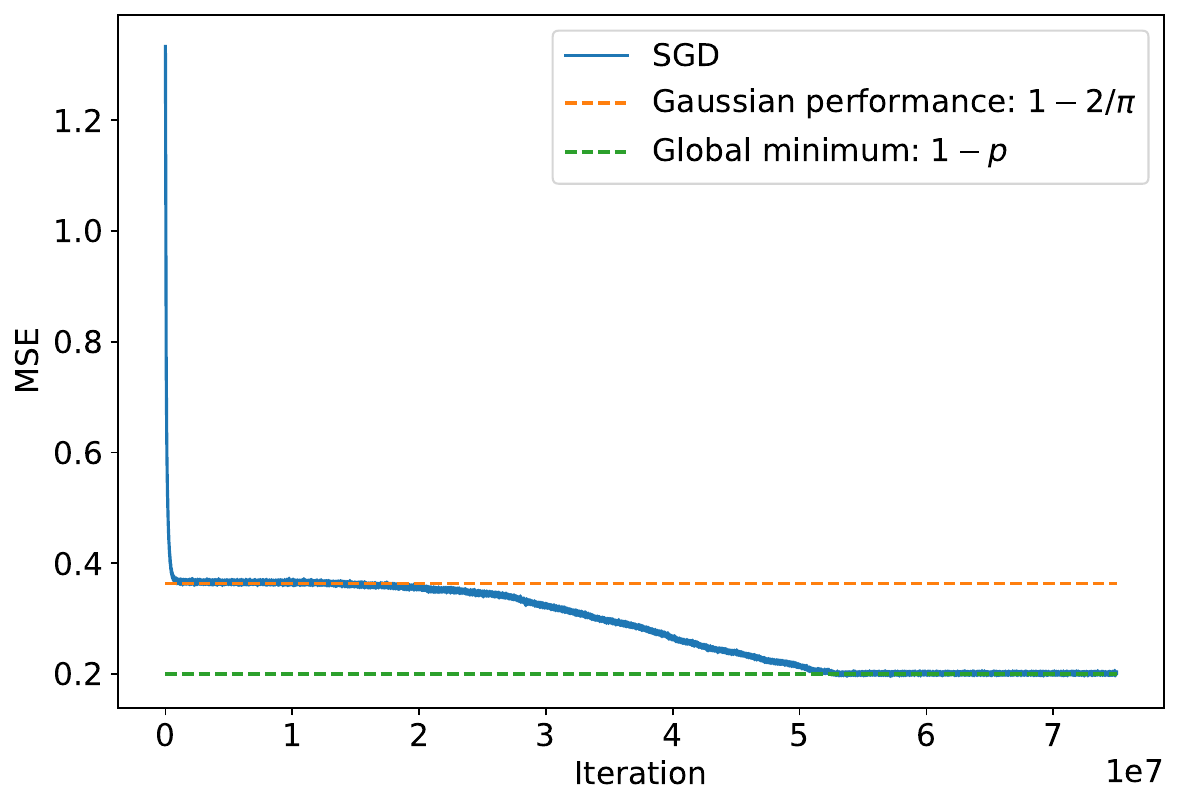}}

\vspace{-4mm}\caption{Compression of sparse Rademacher data via the two-layer autoencoder in \eqref{eq:linear_decoding}. We set $d=200$, $r=1$ and $p=0.8$. The MSE is plotted as a function of the number of iterations and, as $p>p_{\mathrm{crit}}$, it displays a staircase behavior. 
}\vspace{-4.5mm}\label{fig:sgd_rademacher}
\end{figure}

As mentioned above, our numerical results lead us to conjecture that SGD recovers either of the candidates in \eqref{eq:two_nets}, depending on which achieves a smaller loss. Specifically, if condition \eqref{eq:phase_transition} is met, the SGD predictor converges to $\hat{\x}_{\mathrm{Id}}(\cdot)$ and improves upon the Gaussian loss $\mathcal R_{\rm Gauss}$; otherwise, it converges to $\hat{\x}_{\mathrm{Haar}}(\cdot)$ and its MSE is equal to $\mathcal R_{\rm Gauss}$.

For sparse Gaussian data, condition \eqref{eq:phase_transition} is never satisfied, as $\E_{x_1\sim {\rm SG}_1(p)}|x_1| = \sqrt{2p/\pi} \leq \sqrt{2/\pi}$. In fact, as proved in Theorem \ref{thm:GD-min-sparse-body}, the SGD solution approaches $\hat{\x}_{\mathrm{Haar}}(\cdot)$ and its MSE matches $\mathcal R_{\rm Gauss}$.

For sparse Rademacher data\footnote{Each i.i.d.\ component is equal to $0$ w.p.\ $1-p$ and 
to $\pm 1/\sqrt{p}$ w.p.\ $p/2$, which 
ensures a unit second moment for all $p\in [0, 1]$.}, condition \eqref{eq:phase_transition}
reduces to

$
p > p_{\mathrm{crit}} := 2/\pi \approx 0.64,
$

and Figure \ref{fig:rademacher_phase_transition} shows a \emph{phase transition} in the structure of the minimizers found by SGD:
\vspace{-1.25em}

\begin{itemize}
    \item For $p<p_{\mathrm{crit}}$, SGD converges to a solution s.t.\ $\B$ is a uniform rotation (central heatmap) and the MSE is close to $\mathcal R_{\rm Gauss}=1-2r/\pi$, see \eqref{eq:Haarloss}.
    \item For $p > p_{\mathrm{crit}}$, SGD converges to a solution s.t.\ $\B$ is equivalent to a permutation of the identity (right heatmap) and the MSE is close to $1 - r \cdot \left(\E |x_1|\right)^2=1-r\cdot p$, see \eqref{eq:idloss}.  In both cases, $\A \propto \B^\top$. 
\end{itemize}

\begin{figure}[t]
\vspace{-.75em}
\begin{tabular}{@{}cc@{}}
    \raisebox{-\height}{\includegraphics[width=0.35\textwidth]{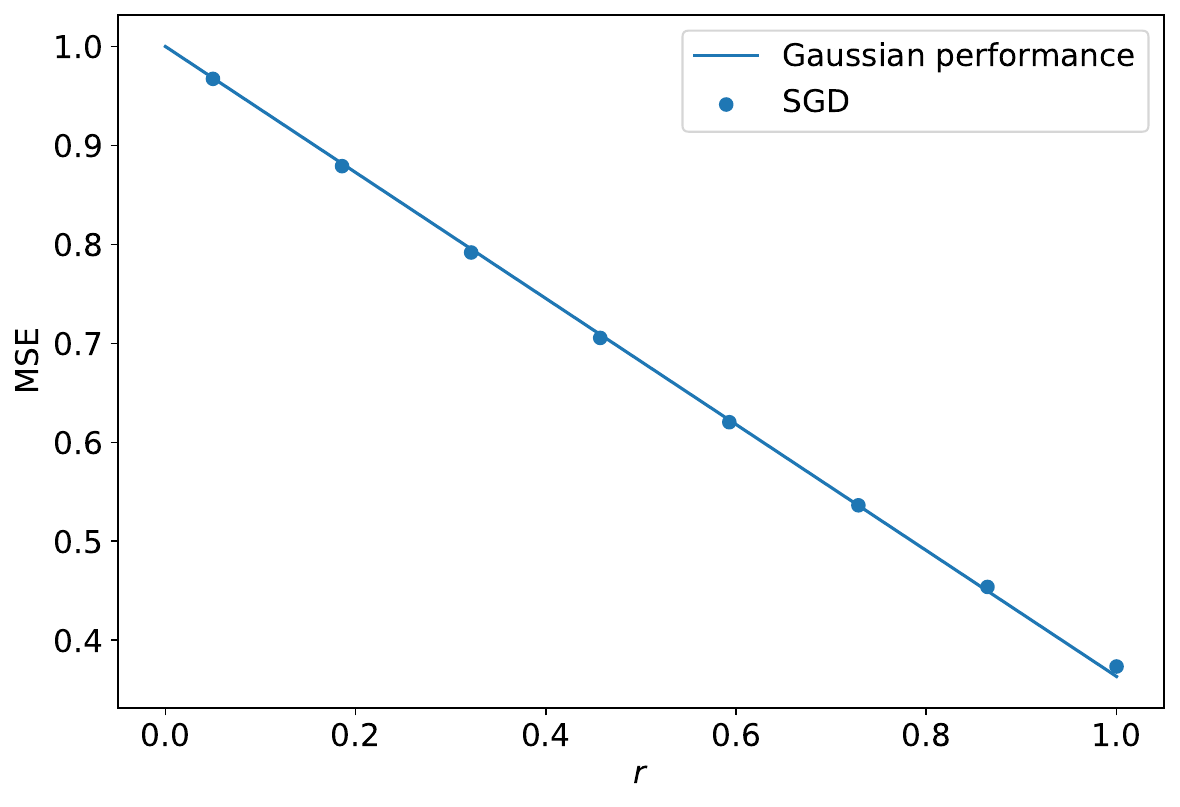}} & 
    \begin{tabular}[t]{@{}cc@{}}
        \raisebox{-\height}{\includegraphics[width=0.11\textwidth]{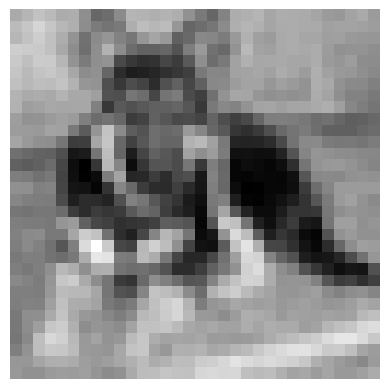}} &  \\[1.8cm]
        \raisebox{-\height}{\includegraphics[width=0.11\textwidth]{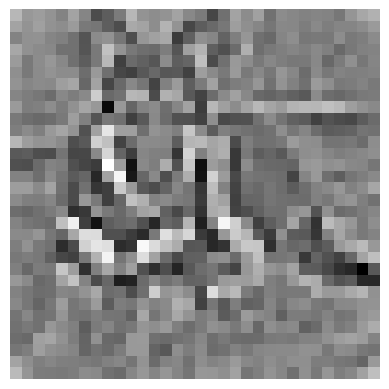}} & 
    \end{tabular}
\end{tabular}
\vspace{-1em}\caption{Compression of masked and whitened CIFAR-10 images of the class ``dog'' via the two-layer autoencoder in \eqref{eq:linear_decoding}. First, the data is whitened so that it has identity covariance (as in the setting of Theorem \ref{thm:GD-min-sparse-body}). Then, the data is masked by setting each pixel independently to $0$ with probability $p=0.7$. An example of an original image is on the top right, and the corresponding masked and whitened image is on the bottom right. The SGD loss at convergence (dots) matches the solid line, which corresponds to the prediction in \eqref{eq:gaussian_val} for the compression of standard Gaussian data (with no sparsity). 
}\vspace{-4mm}\label{fig:noniso_exps} 
\end{figure}

\begin{figure*}[t]
  \begin{center}
    \subfloat{\includegraphics[width=0.8\columnwidth]{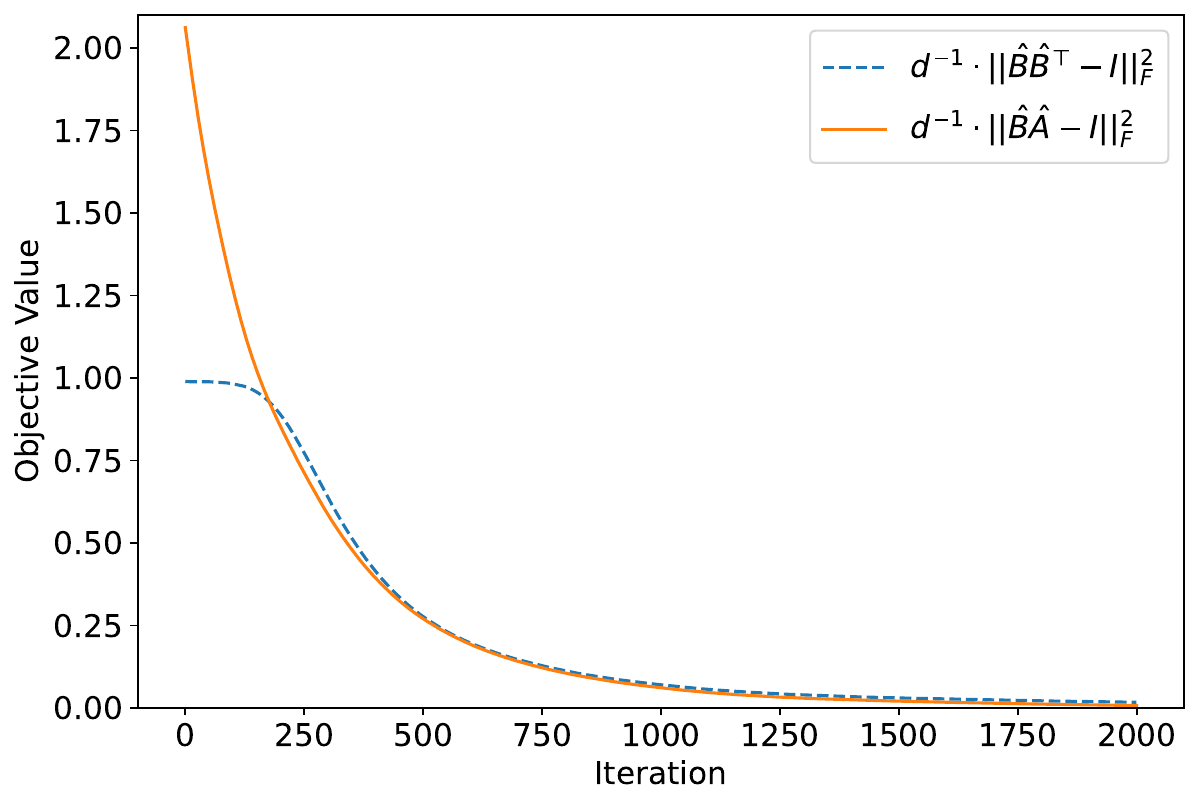}}\hspace{2.5em}
    \subfloat{\includegraphics[width=0.8\columnwidth]{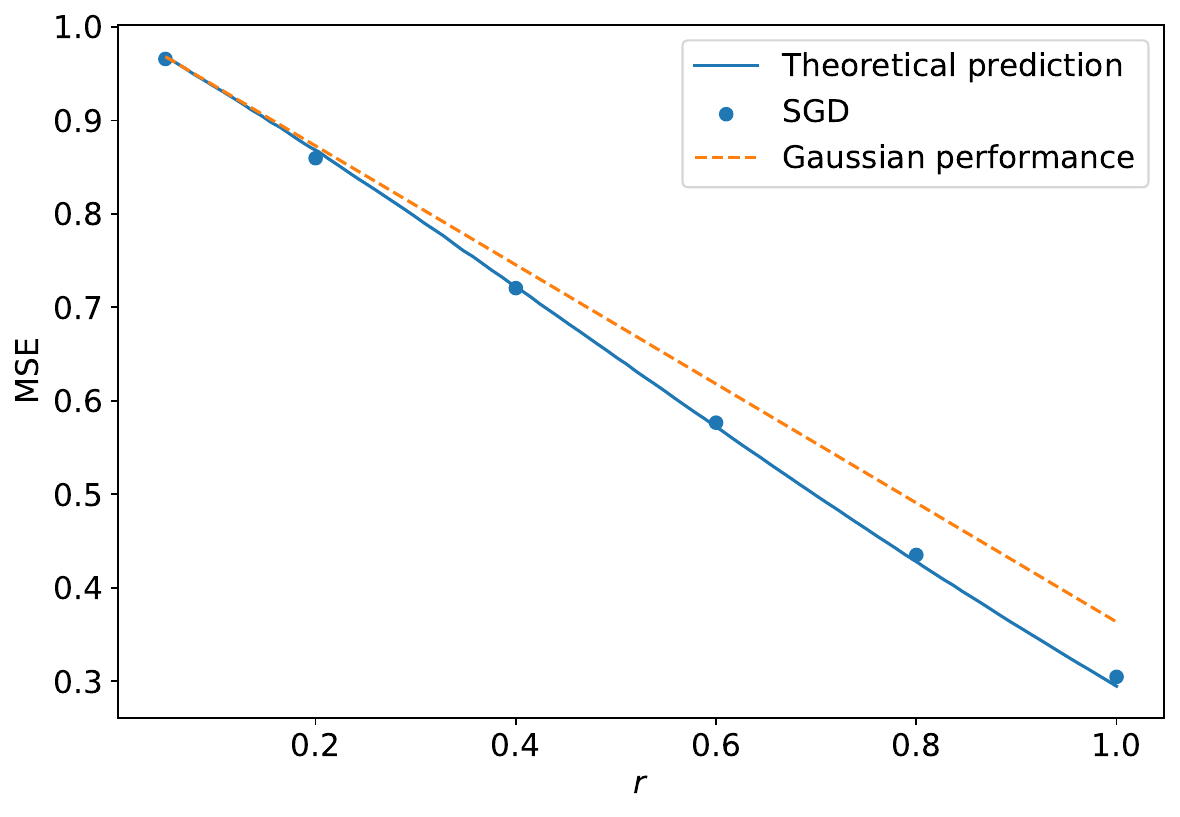}}
    
  \end{center}
  \vspace{-6mm}\caption{Compression of sparse Gaussian data via the autoencoder in \eqref{eq:linear_decoding_denoising}, where $f$ has the form in \eqref{eq:parametric_denoiser} and its parameters $(\alpha_1, \alpha_2, \alpha_3)$ are optimized via SGD. We set $d=100$ and $p=0.4$. \emph{Left.} Distance between $\hat\B\hat\B^\top$, $\hat\B\skew{5}\hat\A$ and the identity, as a function of the number of iterations, where $\hat\B$, $\skew{5}\hat\A$ denote the row-normalized versions of $\B$, $\A$. $\|\hat\B\hat\B^\top-\I\|_F$ and $\|\hat\B\skew{5}\hat\A-\I\|_F$ decrease and tend to $0$, meaning that (up to a rescaling of the rows) $\B\A$ and $\B\B^\top$ approach the identity. Here, we take $r=1$. \emph{Right.} MSE achieved by SGD at convergence, as a function of the compression rate $r$. The empirical values (dots) match the characterization of Proposition \ref{proposition:1} for $f=f^*$ in \eqref{eq:pmean} (blue line), and they outperform the MSE \eqref{eq:gaussian_val} obtained by compressing standard Gaussian data (orange dashed line). }
  \label{fig:BBT_BA}
  \vspace{-1em}
\end{figure*}

\vspace{-1em}

If there is an improvement upon $\mathcal R_{\rm Gauss}$ (i.e., $p > p_{\mathrm{crit}}$), the SGD 
dynamics exhibits a \emph{staircase} behavior. This phenomenon is 
displayed in Figure \ref{fig:sgd_rademacher}, which plots the error as a function of the number of SGD iterations for $p=0.8>p_{\mathrm{crit}}$: first, the MSE rapidly converges to $\mathcal R_{\rm Gauss}$; then, there is a plateau; finally,  the global minimum $1-r\cdot p$ is reached. 

We also remark that, as $p$ approaches $p_{\mathrm{crit}}$, the time needed by SGD to escape the plateau increases. A possible explanation is that, as $p$ decreases, the 
noise due to masking 
increases, which
increases the variance of the gradient. This makes it harder for $\B$ to find a direction 
towards a permutation of the identity (i.e., the global minimum).
Additional evidence of both the phase transition and the staircase behavior of SGD is 
in Appendix \ref{appendix:pt_sprase_gaussian_mixture}, where Figure \ref{fig:sgd_rademacher_2} considers Rademacher data and Figures \ref{fig:spare_mixture_of_gaussians_phase_transition}-\ref{fig:sgd_pt_sparse_gauss_2} data coming from a sparse mixture of Gaussians.

The proof strategy of Theorem \ref{thm:GD-min-sparse-body} could be useful to track SGD until it reaches the plateau. However, characterizing the time-scale needed to escape the 
plateau
likely requires new tools, and it provides an exciting research direction.

Finally, Figure \ref{fig:noniso_exps} shows that our theory predicts well the behavior of the compression of \emph{CIFAR-10 images} via the two-layer autoencoder in \eqref{eq:linear_decoding}. We let $x_1$ be the empirical distribution of the image pixels after whitening and masking\footnote{The whitening makes the data have isotropic covariance, as required by our theory; the masking makes the data sparse.}, and we verify that condition \eqref{eq:phase_transition} does not hold. Then, as expected, the autoencoder 
is unable to capture the structure coming from masking part of the pixels, and the loss at the end of SGD training equals  $\mathcal R_{\rm Gauss}$. Similar results hold for MNIST, see Figure \ref{fig:noniso_exps_2} in Appendix \ref{appendix:masked_mnist}.

\section{Provable benefit of nonlinearities and depth}\label{sec:5}

In this section, we prove that more expressive decoders than the linear one in \eqref{eq:linear_decoding} capture the sparsity of the data and, therefore, improve upon the Gaussian loss $\mathcal R_{\rm Gauss}$.

\subsection{Provable benefit of nonlinearities}

First, we apply a nonlinearity at the output of the linear decoding layer, as in \eqref{eq:linear_decoding_denoising}. Specifically, we take 
\vspace{-.4em}
\begin{equation}\label{eq:parametric_denoiser}
\vspace{-2mm}f(x) = \alpha_1 x + \alpha_2 \mathrm{tanh}(\alpha_3 x),
\end{equation}
and run SGD on the weight matrices $(\A, \B)$ and on the trainable parameters $(\alpha_1, \alpha_2, \alpha_3)$ in $f$. Figure \ref{fig:BBT_BA} shows that, at convergence, the minimizers have the same weight-tied orthogonal structure as obtained for Gaussian data ($\B\B^\top = \I$, $\A \propto \B^\top$), see the left plot. However, in sharp contrast with Gaussian data, the loss is \emph{smaller} than $\mathcal R_{\rm Gauss}$, see the blue dots on the right plot and compare them with the orange dashed curve. This empirical evidence motivates us to analyze the performance of autoencoders of the form \eqref{eq:linear_decoding_denoising}, where $\B$ is obtained by subsampling a Haar matrix of appropriate dimensions and $\A=\B^\top$.

\begin{proposition}[MSE characterization]\label{proposition:1} 
Let $r\leq 1$ and $\x$ have i.i.d.\ components with zero mean and unit variance. Consider the autoencoder $\hat\x(\x)$ in \eqref{eq:linear_decoding_denoising}, where $\B$ is obtained by subsampling a Haar matrix, $\A=\B^\top$, and $f$ is a Lipschitz function. Then, we have that, almost surely,
\vspace{-.4em}
\begin{equation} \label{eq:1RIGAMP}   
\lim_{d\to\infty}\frac{1}{d} \cdot \E_{\x}\|\x - \hat{\x}(\x)\|_2^2 = \E_{x_1,g} |x_1 - f (\mu x_1 + \sigma g)|_2^2,
\end{equation}
where $x_1$ is the first entry of $\bx$, $g\sim\mathcal N(0, 1)$ and independent of $x_1$, and the parameters $(\mu,\sigma)$ are given by
\vspace{-.4em}
\begin{equation}\label{eq:mu_sigma}
    \mu = r\sqrt{\frac{2}{\pi}}, \quad \sigma^2 = r \left(1 -r\cdot \frac{2}{\pi}\right) >0.
\end{equation}
\end{proposition}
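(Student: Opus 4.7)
The plan is to reduce the MSE computation to the state evolution of one step of the RI-GAMP algorithm of \citet{venkataramanan2022estimation}, following the same high-level strategy hinted at for Proposition \ref{proposition:identity_is_better}. Setting $\bv := \B^\top \mathrm{sign}(\B\x)$, so that $\hat{\x}(\x) = f(\bv)$, the key observation is that $\bv$ is, up to trivial rescalings, exactly the output of the first linear update of RI-GAMP applied to the generalized linear model $\bz = \mathrm{sign}(\B\x)$ with the rotationally-invariant sensing matrix $\B$; at the first iterate the Onsager correction is deterministic and easy to handle since $\E \x = \0$. The state-evolution theorem of Venkataramanan et al.\ then yields that, almost surely over the draw of the Haar matrix, the joint empirical distribution $d^{-1}\sum_{i=1}^d \delta_{(x_i, v_i)}$ converges in Wasserstein-$2$ to the law of $(x_1, \mu x_1 + \sigma g)$ with $g \sim \mathcal N(0,1)$ independent of $x_1$, for some scalars $\mu, \sigma$ fixed by the state evolution.

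The second step is to identify $(\mu,\sigma)$ directly rather than unpacking the abstract recursion. Using the first-Hermite expansion $\mathrm{sign}(u) = \sqrt{2/\pi}\, u + \xi(u)$ for $u \sim \mathcal N(0,1)$, with $\xi(u)$ orthogonal to $u$ in $L^2$, I get
\begin{equation*}
\bv \;=\; \sqrt{2/\pi}\,(\B^\top \B)\,\x \;+\; \B^\top \xi(\B\x).
\end{equation*}
Since $\B^\top \B$ is a rank-$n$ uniform orthogonal projection of $\R^d$, one has $\B^\top \B\,\x = r\,\x + \g_1$ with $\g_1 \perp \x$ and $\|\g_1\|_2^2 \approx r(1-r)\,d$; and $\|\xi(\B\x)\|_2^2 \approx (1-2/\pi)\,n$ by Plancherel, with $\B^\top \xi(\B\x)$ living in an $n$-dimensional subspace rotated independently of $\x$ and hence asymptotically orthogonal to $\x$. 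Collecting these contributions yields $\mu = r\sqrt{2/\pi}$ and $\sigma^2 = (2/\pi)\,r(1-r) + r(1-2/\pi) = r(1-2r/\pi)$, matching \eqref{eq:mu_sigma}.

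To finish, the map $(x,v)\mapsto (x-f(v))^2$ is pseudo-Lipschitz of order $2$ whenever $f$ is Lipschitz, so the Wasserstein-$2$ convergence above together with the finite second moment of $x_1$ immediately gives
\begin{equation*}
\tfrac{1}{d}\,\|\x - f(\bv)\|_2^2 \;\longrightarrow\; \E\,|x_1 - f(\mu x_1 + \sigma g)|^2 \qquad \text{a.s.}
\end{equation*}
To pass from this empirical MSE to the conditional MSE $d^{-1}\,\E_\x\|\x - f(\bv)\|_2^2$ that appears in \eqref{eq:1RIGAMP}, I would use a standard bounded-differences inequality (each coordinate of $\x$ influences the loss by $O(1/d)$ since $\B^\top \B$ is a contraction and $f$ is Lipschitz) to show $\x$-concentration of the loss around its mean, and conclude via Borel--Cantelli.

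The main obstacle I expect is invoking the RI-GAMP state evolution rigorously with the discontinuous $\mathrm{sign}$ nonlinearity, which lies outside the cleanest Lipschitz hypotheses of \citet{venkataramanan2022estimation}. My plan is to approximate $\mathrm{sign}$ by a sequence of smooth Lipschitz functions (e.g.\ $\tanh(k\,\cdot)$ as $k \to \infty$), apply state evolution to each approximant, and pass to the limit using the fact that the $L^2$-gap between $\mathrm{sign}$ and its approximants vanishes against a $\mathcal N(0,1)$ input; this controls both the difference of the asymptotic $(\mu,\sigma)$ and, once composed with the Lipschitz $f$, the difference of the final MSEs. A secondary subtlety is the Haar \emph{subsampling}: although $\B$ is still rotationally invariant, the spectrum of $\B^\top\B$ is the atomic measure $r\delta_1 + (1-r)\delta_0$, which should be verified to meet the regularity assumptions on the spectral law required by the state-evolution theorem (this is typically the case, but worth a careful check).
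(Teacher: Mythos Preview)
Your proposal is correct and follows essentially the same route as the paper: identify $f(\B^\top\mathrm{sign}(\B\x))$ with the first RI-GAMP iterate $\hat\x^1$ of \citet{venkataramanan2022estimation} and read off the limit from state evolution. The only notable difference is in how $(\mu,\sigma)$ are obtained: the paper plugs the rectangular free cumulants of the constant spectral law (all singular values equal to $1$) into equation~(11) of \citet{venkataramanan2022estimation}, whereas you derive the same values by a direct Hermite decomposition of $\mathrm{sign}$ and the identity $\B^\top\B = $ rank-$n$ projection. Both give $\mu=r\sqrt{2/\pi}$, $\sigma^2=r(1-2r/\pi)$. The paper does not discuss the $\mathrm{sign}$ discontinuity or the passage from the almost-sure limit to $\E_\x$; your extra care on these points is reasonable but not something the paper treats.
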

\vspace{-2mm}Proposition \ref{proposition:1} is a generalization of Proposition \ref{proposition:identity_is_better}, which corresponds to taking a linear $f$. The idea is to relate $f(\B^\top\mathrm{sign}(\B\x))$ to the first iterate of a suitable RI-GAMP algorithm, so that the characterization in \eqref{eq:1RIGAMP} follows from state evolution. The details are in Appendix \ref{subsec:MSEden}. 

Armed with Proposition \ref{proposition:1}, one can readily establish the function $f$ that minimizes the MSE for large $d$. This in fact corresponds to the $f$ that minimizes the RHS of \eqref{eq:1RIGAMP}, i.e.,
\begin{equation}\label{eq:pmean}
f^{*}(y) = \E [x_1|\mu x_1 + \sigma g = y],    
\end{equation}
as long as the latter is Lipschitz (so that Proposition \ref{proposition:1} can be applied). Sufficient conditions for $f^{*}$ to be Lipschitz are that either \emph{(i)} $x_1$ has a log-concave density, or \emph{(ii)} there exist independent random variables $u_0, v_0$ s.t.\ $u_0$ is Gaussian, $v_0$ is compactly supported and $x_1$ is equal in distribution to $u_0+v_0$, see Lemma 3.8 of \cite{feng2022unifying}. The expression of $f^*$ for distributions of $x_1$ considered in the experiments (sparse Gaussian, Laplace, and Rademacher) is derived in Appendix \ref{appendix:denoiser_computations}.

The blue curve in the right plot of Figure \ref{fig:BBT_BA} evaluates the RHS of \eqref{eq:1RIGAMP} for the optimal $f=f^*$, when $x_1\sim {\rm SG}_1(p)$. Two observations are in order:

\vspace{-.75em}

\begin{enumerate}
    \item The blue curve matches the blue dots, obtained by optimizing via SGD the matrices $\A, \B$ and $f$ in the parametric form \eqref{eq:parametric_denoiser}. This means that the SGD performance is accurately tracked by plugging the optimal function \eqref{eq:pmean} into the prediction of Proposition \ref{proposition:1}.

    \item The blue curve improves upon the Gaussian loss $\mathcal R_{\rm Gauss}$ (orange dashed line). 
This means that, while the two-layer autoencoder in \eqref{eq:linear_decoding} is stuck at the MSE in orange (as proved by Theorem \ref{thm:GD-min-sparse-body}), by incorporating a nonlinearity, the autoencoder in 
\eqref{eq:linear_decoding_denoising} does better. In fact, as shown in Figure \ref{fig:MSEcomp} in Appendix \ref{app:prov}, the MSE achieved by the autoencoder in \eqref{eq:linear_decoding_denoising} with the optimal choice of $f$ (namely, the RHS of \eqref{eq:1RIGAMP} with $f=f^*$) is strictly lower than $\mathcal R_{\rm Gauss}$ for any $p\in (0, 1)$.
\end{enumerate}

\paragraph{Beyond Gaussian data: Phase transitions, staircases in the learning dynamics, and image data.} For general data $\x$ with i.i.d.\ zero-mean unit-variance components, the autoencoder in \eqref{eq:linear_decoding_denoising} displays a behavior similar to that described in Section \ref{sec:4} for the autoencoder in \eqref{eq:linear_decoding}: the SGD minimizers of the weight matrices $\A, \B$ either exhibit a weight-tied orthogonal structure ($\B\B^\top = \I$, $\A \propto \B^\top$), or 
come from permutations of the identity. This leads to a \emph{phase transition}  
in the structure of the minimizer (and in the  
MSE expression), as the sparsity 
$p$ varies. To quantify the critical value of $p$ at which the minimizer changes, one can compare the MSE when $\B$ is subsampled \emph{(i)} from a Haar matrix, and \emph{(ii)} from the identity. The former is readily obtained from Proposition \ref{proposition:1} where $f$ is given by \eqref{eq:pmean}, and the latter is given by the result below, which is proved in Appendix \ref{subsec:MSEid}.  

\begin{proposition}\label{proposition:sparse_rademacher_id_denoising} 
Let $\x$ have i.i.d.\ components with zero mean, unit variance and a symmetric distribution (i.e., the law of $x_1$ is the same as that of $-x_1$). Define  $\hat{\x}_{\mathrm{Id}}(\x)$ as in \eqref{eq:two_nets}, and fix $r\le 1$. Then, we have that, for any $d$, 
\vspace{-.5em}
\begin{equation}\label{eq:idden}    
\min_{f}\frac{1}{d}\cdot\E_{\x} \left[\|f(\hat{\x}_{\mathrm{Id}}(\x))-\x\|_2^2\right] = 1 - r \cdot (\E |x_1|)^2.
\end{equation}
\end{proposition}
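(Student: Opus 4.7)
The plan is a direct componentwise calculation, exploiting that $f$ is applied componentwise while $\hat{\x}_{\mathrm{Id}}(\x)$ takes only a three-point support per coordinate. First I would observe that $(\hat{\x}_{\mathrm{Id}}(\x))_i = \alpha_{\mathrm{Id}} \mathrm{sign}(x_i) \in \{-\alpha_{\mathrm{Id}}, 0, +\alpha_{\mathrm{Id}}\}$ for $i \leq n$ and $(\hat{\x}_{\mathrm{Id}}(\x))_i = 0$ for $i > n$, so the minimization over $f$ reduces to choosing only three real numbers $a_+ := f(\alpha_{\mathrm{Id}})$, $a_- := f(-\alpha_{\mathrm{Id}})$, and $a_0 := f(0)$. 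The degenerate case $\alpha_{\mathrm{Id}} = 0$ forces $\hat{\x}_{\mathrm{Id}}$ to be identically zero, giving MSE $\geq 1 > 1 - r(\E|x_1|)^2$, so it can be discarded.

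Next I would split the MSE into the ``active'' block $(i \leq n)$ and the ``inactive'' block $(i > n)$. For the inactive block, $\E[(a_0 - x_i)^2] = a_0^2 + 1$, minimized at $a_0 = 0$ with value $1$. For the active block, the symmetry of the law of $x_1$ gives $P(\mathrm{sign}(x_i) = +1) = P(\mathrm{sign}(x_i) = -1) = 1/2$, $\E[x_i \mid \mathrm{sign}(x_i) = \pm 1] = \pm \E|x_1|$, and $\E[x_i^2 \mid \mathrm{sign}(x_i) = \pm 1] = \E x_1^2 = 1$. The per-coordinate loss is then a quadratic in $(a_+, a_-)$ whose minimum equals $1 - (\E|x_1|)^2$, attained at $a_\pm = \pm \E|x_1|$. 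Since the three points $\pm \alpha_{\mathrm{Id}}, 0$ are distinct, the optimal choice $a_0 = 0$ is compatible with the optimal $a_\pm$, and a single $f$ realizes all three values simultaneously.

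Averaging the two blocks yields $\tfrac{n}{d}(1 - (\E|x_1|)^2) + \tfrac{d-n}{d} = 1 - r(\E|x_1|)^2$, as claimed. There is no substantive obstacle: the argument is bookkeeping on a three-point support. The only mild subtlety is the convention for $\mathrm{sign}(0)$ when the law of $x_1$ has a nontrivial atom at zero (as in the sparse Rademacher case), which must be set consistently with the convention already used in Proposition \ref{proposition:identity_is_better}, ensuring $\E[\mathrm{sign}(x_1)^2] = 1$ and $\E[x_1 \mathrm{sign}(x_1)] = \E|x_1|$.
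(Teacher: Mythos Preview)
Your proposal is correct and follows essentially the same approach as the paper: both split the loss into the active block $i\le n$ and the inactive block $i>n$, then minimize over the three values $f(\alpha_{\mathrm{Id}})$, $f(-\alpha_{\mathrm{Id}})$, $f(0)$ using the symmetry of $x_1$ and the Rademacher convention for $\mathrm{sign}(0)$. The paper phrases the active-block step as reducing (via $f(1)=-f(-1)$) to $\min_u\E[(x_1-u\,\mathrm{sign}(x_1))^2]$, whereas you compute conditional moments directly; this is a cosmetic difference, and your explicit handling of the degenerate case $\alpha_{\mathrm{Id}}=0$ is slightly more careful than the paper's writeup.
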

\vspace{-1mm}

Figure \ref{fig:rademacher_denoising_phase_transition-body} displays the phase transition for the compression of sparse Rademacher data:

\vspace{-.75em}

\begin{itemize}
    \item For $p<\tilde p_{\mathrm{crit}}\approx 0.67$, SGD converges to a solution with MSE given by the RHS of \eqref{eq:1RIGAMP} with $f=f^*$. Furthermore, 
    $\B$ is a uniform rotation (see the central heatmap in Figure \ref{fig:rademacher_denoising_phase_transition} of Appendix \ref{appendix:denoiser_pt}).

\vspace{-.5em}
    
    \item For $p > \tilde p_{\mathrm{crit}}$,
    SGD converges to a solution with MSE given by the RHS of  
    \eqref{eq:idden}. Furthermore, 
    $\B$ is equivalent to a permutation of the identity (see the right heatmap in Figure \ref{fig:rademacher_denoising_phase_transition} of Appendix \ref{appendix:denoiser_pt}). 
\end{itemize}

\vspace{-.75em}

By comparing the blue dots/curve with the orange dashed line in Figure \ref{fig:rademacher_denoising_phase_transition-body}, we also conclude that, for all $p$, the MSE of the autoencoder in \eqref{eq:linear_decoding_denoising} improves upon the Gaussian performance $\mathcal R_{\rm Gauss}$. This is in contrast with the behavior of the autoencoder in \eqref{eq:linear_decoding} which remains stuck at $\mathcal R_{\rm Gauss}$ for $p<2/\pi$ (see Figure \ref{fig:rademacher_phase_transition}), and it demonstrates the benefit of adding the nonlinearity $f$.

\begin{figure}[t]
\centering
\includegraphics[width=0.44\textwidth]{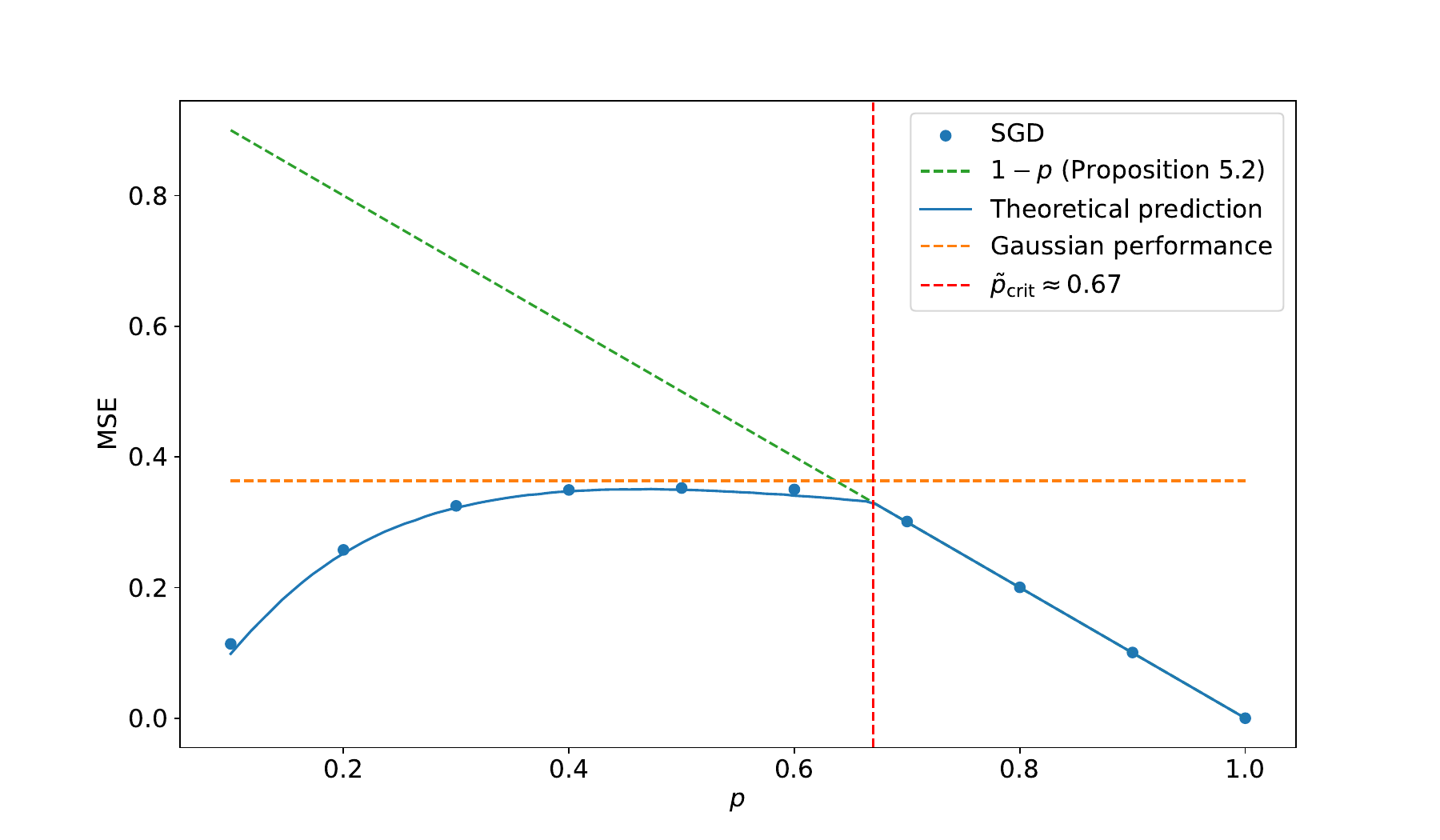}
\vspace{-1.25em}\caption{Compression of sparse Rademacher data via the autoencoder in \eqref{eq:linear_decoding_denoising}. We set $d=200$ and $r=1$. The MSE achieved by SGD at convergence is plotted as a function of the sparsity level $p$. The empirical values (blue dots) match our theoretical prediction (blue line). For $p<\tilde p_{\mathrm{crit}}$, the MSE is given by Proposition \ref{proposition:1} for $\B$ sampled from the Haar distribution; for $p\ge \tilde p_{\mathrm{crit}}$, the MSE is given by Proposition \ref{proposition:sparse_rademacher_id_denoising} for $\B$ equal to the identity.}\label{fig:rademacher_denoising_phase_transition-body}
\vspace{-4mm}
\end{figure}

For $p> \tilde p_{\mathrm{crit}}$, the learning dynamics exhibits again a \emph{staircase} behavior in which the MSE first gets stuck at the value given by the RHS of \eqref{eq:1RIGAMP} with $f=f^*$, and then reaches the optimal value of $1 - r \cdot (\E |x_1|)^2$. This is reported for $p=0.9> \tilde p_{\mathrm{crit}}\approx 0.67$ in Figure \ref{fig:sgd_pt_sparse_rademacher_transition} of Appendix \ref{appendix:denoiser_pt}.

\begin{figure}[t!]
    \centering
    \subfloat{\includegraphics[width=.9\columnwidth]{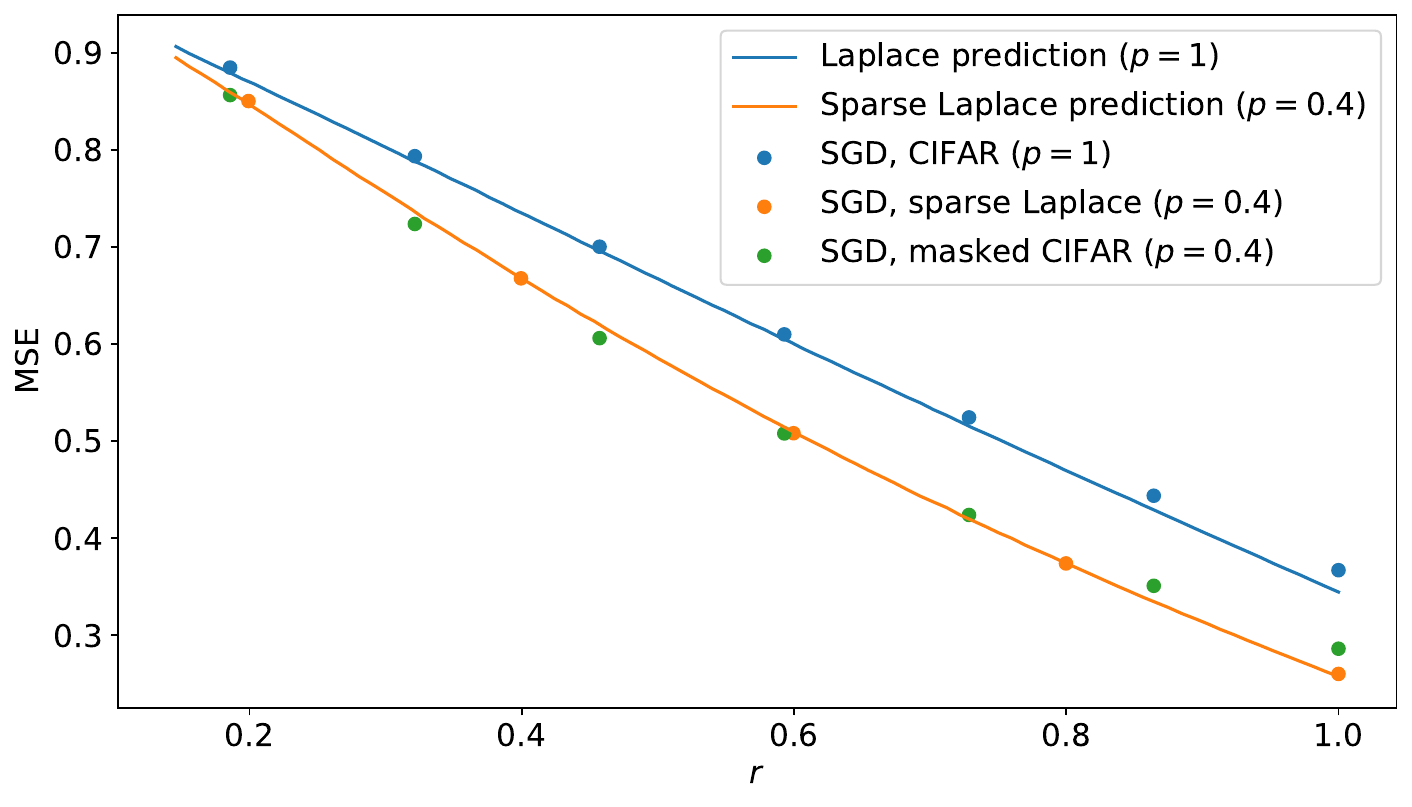}}\hspace{2.4em}
\vspace{-8mm}\caption{Compression of masked and whitened CIFAR-10 images of the class ``dog'' via the autoencoder in \eqref{eq:linear_decoding_denoising}. We plot the MSE as a function of the compression rate $r$. Dots are obtained by training the decoder matrix $\A$ and the parameters $(\alpha_1, \alpha_2, \alpha_3)$ via SGD on masked ($p=0.4$, green) or original ($p=1$, blue) CIFAR-10 images. Continuous lines refer to the predictions of Proposition \ref{proposition:1} for the optimal $f=f^*$ in \eqref{eq:pmean}, where $x_1$ has a Laplace distribution ($p=1$, blue) or a sparse Laplace distribution ($p=0.4$, orange). These curves match well the corresponding values obtained via SGD. Orange dots are obtained by training the matrices $\A, \B$ and the parameters $(\alpha_1, \alpha_2, \alpha_3)$ via SGD when $\x$ has i.i.d.\ sparse Laplace entries with $p=0.4$.\vspace{-3.5mm}}\label{fig:noniso_exps_l}
\end{figure}

Finally, Figure \ref{fig:noniso_exps_l} shows that the key features we unveiled for the autoencoder in \eqref{eq:linear_decoding_denoising} are still present when compressing \emph{sparse CIFAR-10 data}. The empirical distribution of the image pixels after whitening is well approximated by a Laplace random variable (see Figure \ref{fig:nlaplace_approx_appendix} in Appendix \ref{appendix:laplace_approx}), thus we denote by $x_1$ the corresponding sparse Laplace distribution (see \eqref{eq:sLap} in Appendix \ref{appendix:denoiser_computations} for a formal definition). The encoder matrix $\B$ is obtained by subsampling a Haar matrix, and it is fixed; the decoder matrix $\A$ and the parameters $(\alpha_1, \alpha_2, \alpha_3)$ in the definition \eqref{eq:parametric_denoiser} of $f$ are obtained via SGD training.
Two observations are in order:

\vspace{-.75em}

\begin{enumerate}
    \item The autoencoder in \eqref{eq:linear_decoding_denoising} captures the sparsity: 
    the MSE achieved on sparse data ($p=0.4$, green dots) is lower than the MSE on non-sparse data ($p=1$, blue dots).

\vspace{-.25em}

    \item For both values of $p$, the SGD performance matches the RHS of \eqref{eq:1RIGAMP} (continuous lines) with $f=f^*$. As expected, this MSE is smaller than $1 - r \cdot (\E |x_1|)^2$, and it coincides with that obtained for compressing synthetic data with i.i.d.\ Laplace entries (orange dots).
\end{enumerate}

\subsection{Provable benefit of depth}\label{sec:multilayer}

We conclude by showing that the MSE can be further reduced by considering a multi-layer decoder. Our design of the decoding architecture is inspired by the RI-GAMP algorithm \cite{venkataramanan2022estimation}, which iteratively estimates $\x$ from an observation of the form $\sigma(\B\x)$ via 
\vspace{-2.5mm}
\begin{align}\label{eq:RIGAMPt}
        &\x^t = \B^\top \hat{\bz}^{t} - \sum_{i=1}^{t-1} \beta_{t,i} \hat{\bx}^i, \quad \hat{\x}^t = f_t(\x^1,\cdots,\x^t), \\
        &\bz^{t} = \B \hat{\x}^{t} - \sum_{i=1}^{t} \alpha_{t,i} \hat{\bz}^i, \quad \hat{\bz}^{t+1} = g_{t}(\bz^1,\cdots,\bz^{t}, \hat\bz^1).\nonumber
\vspace{-2mm}\end{align}
Here, $f_t, g_t$ are Lipschitz and applied component-wise, and the initialization is $\hat{\bz}^{1}=\mathrm{sign}(\B\x)$. The coefficients $\{\beta_{t,i}\}$ and $\{\alpha_{t,i}\}$ are chosen so that, under suitable assumptions on $\B$,\footnote{$\B$ has to be bi-rotationally invariant in law, namely, the matrices appearing in its SVD are sampled from the Haar distribution.} the empirical distribution of the iterates is tracked via a low-dimensional recursion, known as \emph{state evolution}. This in turn allows to evaluate the MSE $\lim_{d\to\infty}\frac{1}{d}\|\bx-\hat\bx^t\|_2^2$.

\begin{figure}[t]
    \centering
    \subfloat{\includegraphics[width=0.85\columnwidth]{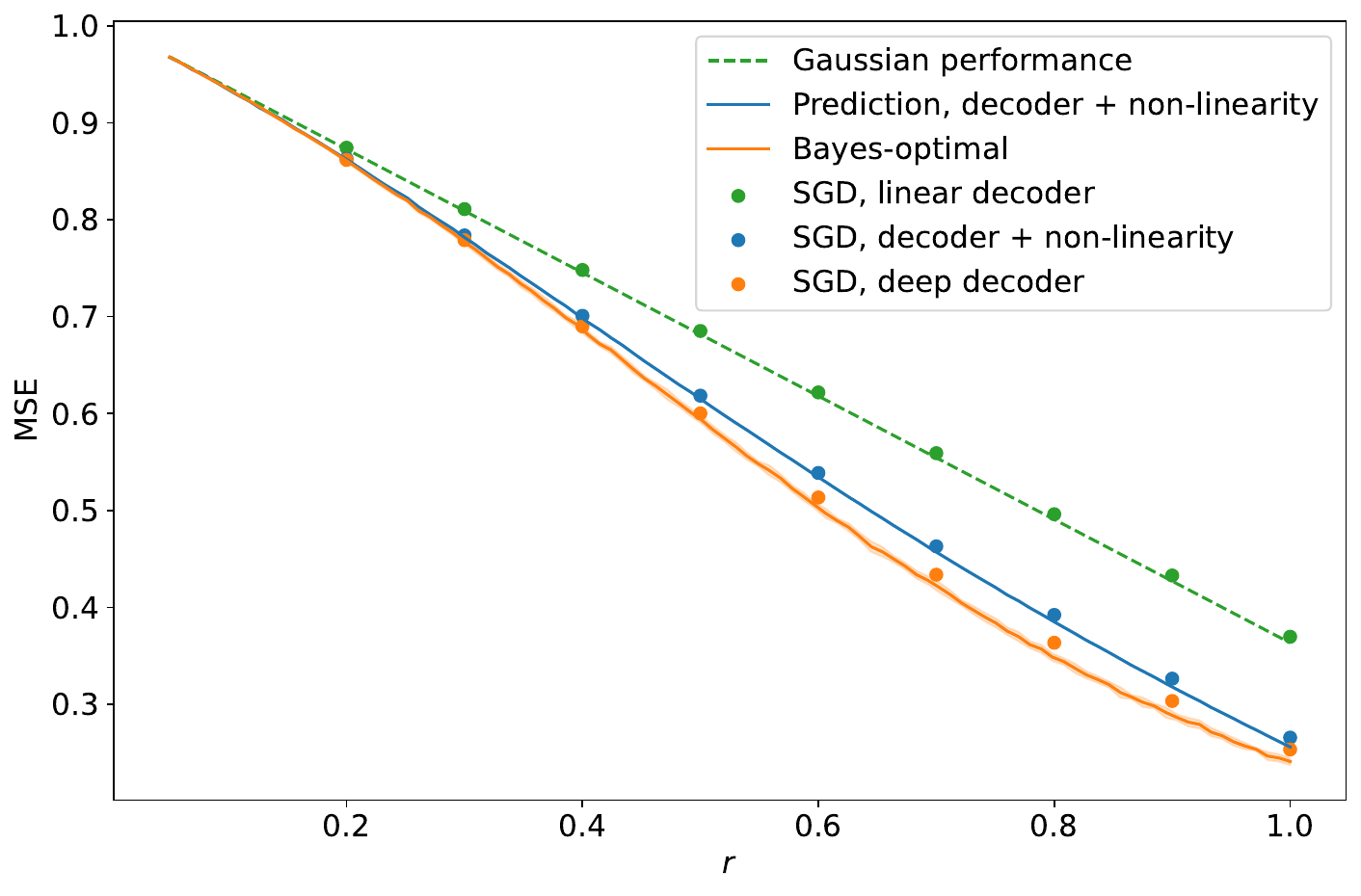}}\hspace{0.2em}
\vspace{-4mm}\caption{Compression of sparse Gaussian data $\x\sim {\rm SG}_d(p)$ 
for $p=0.3$ and $d=500$. We plot the MSE as a function of the compression rate $r$ for various autoencoder architectures. The architecture in \eqref{eq:rigamp_like_decoder} (orange dots) outperforms the autoencoders in \eqref{eq:linear_decoding} (green dots) and in \eqref{eq:linear_decoding_denoising} (blue dots), and it approaches the Bayes-optimal MSE (orange line).}\label{fig:rigamp_plots}\vspace{-4mm}
\end{figure}

The results of Proposition \ref{proposition:identity_is_better} and \ref{proposition:1} follow from relating the autoencoders in \eqref{eq:linear_decoding}-\eqref{eq:linear_decoding_denoising} to RI-GAMP iterates in \eqref{eq:RIGAMPt}. More generally, $\hat\x^t$ is obtained by multiplications with $\B, \B^\top$, linear combinations of previous iterates, and component-wise applications of Lipschitz functions. As such, it can be expressed via a multi-layer decoder with residual connections. The numerical results in \cite{venkataramanan2022estimation} show that taking $f_t, g_t$ as posterior means (as in \eqref{eq:pmean}) leads to Bayes-optimal performance, having fixed the encoder matrix $\B$. Thus, this provides a proof-of-concept of the optimality of multi-layer decoders.

In fact, Figure \ref{fig:rigamp_plots} shows that an architecture with three decoding layers is already near-optimal when $\x\sim {\rm SG}_d(p)$. The decoder output is 
$\hat{\bx}^2$ computed as (see also the block diagram in Figure \ref{fig:rigamp_decoder_block_diagram} in Appendix \ref{appendix:rigamp_like_nn})
\vspace{-.2em}
\begin{equation}\label{eq:rigamp_like_decoder}
    \begin{split}
    &\hat{\bz}^1 = \mathrm{sign}(\B\bx), \quad \x^1 = \W_1\hat{\bz}^1, \quad \hat{\bx}^1 = f_1(\bx^1),\\
    &\hat{\bz}^2 = g_1(\boldsymbol{V}_1 \hat{\bx}^1 \oplus_1 \hat{\bz}^1),\\
    &\x^2 = \hat{\bx}^1 \oplus_2 \boldsymbol{W}_2 \hat{\bz}^2, \quad \hat{\bx}^2 = f_2(\bx^1 \oplus_3 \bx^2).
\end{split}
\vspace{-.5em}\end{equation}

Here, $f_1(\cdot), f_2(\cdot), g_1(\cdot)$ are trainable parametric functions of the form in \eqref{eq:parametric_denoiser} and, for $i\in\{1, 2, 3\}$,  $\a\oplus_{i}\b=\beta_i\a+\gamma_i\b$, where $\{\beta_i$, $\gamma_i\}$ are also trained. The plot demonstrates the benefit of employing more expressive decoders:

\vspace{-.75em}

\begin{enumerate}
    \item The green dots are obtained via SGD training of the autoencoder in \eqref{eq:linear_decoding} and, as proved in Theorem \ref{thm:GD-min-sparse-body}, they match the Gaussian performance $\mathcal R_{\rm Gauss}$.
\vspace{-.25em}

    \item The blue dots are obtained via SGD training of the autoencoder in \eqref{eq:linear_decoding_denoising} and they match the prediction of Proposition \ref{proposition:1} with $f=f^*$ in \eqref{eq:pmean}.
\vspace{-.25em}

    \item The orange dots are obtained by using the decoder in \eqref{eq:rigamp_like_decoder} where $\W_1=\W_2=\B^\top$, $\V_1=\B$ are subsampled Haar matrices and the parameters in the functions $f_1, f_2, g_1, \{\oplus_i\}_{i=1}^3$ are trained via SGD. Similar results are obtained by training also $\W_1,\W_2,\V_1$, although at the cost of a slower convergence.
\end{enumerate}

\vspace{-.75em}

In summary, the architecture in \eqref{eq:rigamp_like_decoder} improves upon those in \eqref{eq:linear_decoding}-\eqref{eq:linear_decoding_denoising}, and it approaches the orange curve which gives the Bayes-optimal MSE achievable by fixing a rotationally invariant encoder matrix $\B$ \cite{ma2021analysis}. Additional details are deferred to Appendix \ref{appendix:rigamp_like_nn}.

We also note that considering a deep fully-connected decoder in place of the architecture in \eqref{eq:rigamp_like_decoder} does not improve upon the autoencoder in \eqref{eq:linear_decoding_denoising}. In fact, while sufficiently wide and deep models have high expressivity, their SGD training is notoriously difficult, due to e.g.\ vanishing/exploding gradients \cite{glorot2010understanding, He_2016_CVPR}.

\section*{Acknowledgements}

Kevin Kögler, Alexander Shevchenko and Marco Mondelli are supported by the 2019 Lopez-Loreta Prize. Hamed Hassani acknowledges the support by the NSF CIF award (1910056) and the NSF Institute for CORE Emerging Methods in Data Science (EnCORE).

\bibliography{ref}
\bibliographystyle{arxiv}

\newpage
\appendix
\onecolumn

\section{Proof of Theorem \ref{thm:GD-min-sparse-body}}\label{app:GD-min-Proof}

\subsection{Additional notation}

Given two matrices $\M_1$ and $\M_2$ of the same shape, their element-wise Schur product is $\M_1 \circ \M_2$ and the $\ell$-th element-wise power is $\M_1^{\circ \ell}$. The same notation is adopted for the element-wise product of vectors, i.e., $\v \circ \u$.
By convention, if $\B_{i,:}$ is a vector of zeroes, its normalization $\hat{\B}_{i,:}$ is also a vector of zeroes. We fix the evaluation of $\mathrm{sign}(\cdot)$ at the origin to be a Rademacher random variable, i.e., $\mathrm{sign}(0)$ takes values in the set $\{-1,1\}$ with equal probability. Note that this is a technical assumption with no bearing on the proof of the result.

For a matrix $\B$, we denote its $i$-th row by $\b_i = \B_{i, :}$, the exception being  that by convention $\a_k$ denotes the $k$-th column of $\A$.
$\B_\m$ is the masked version of a matrix $\B$, where masking is defined as $\bm_i = \b_i \circ \m$ and $\m$ has i.i.d.\ Bernoulli($p$) components. For convenience of notation, we define $\Bm = \B_\m$ \emph{only} for the matrix $\B$. By convention, masking has priority over transposing, i.e., $\B_\m^\top = (\B_\m)^\top$.
For $\B$ (and only $\B$), we define
$\Bh = \Bh_\m = \Dh \Bm_\m,$ where $\Dh$ is a diagonal matrix with entries $\Dh_{i,i} =1/\norm{\bm_i}$, as the masked and re-normalized version of $\B$.
We define $\norm{\B}_{\max} := \max_{i, j} \abs{\B_{i,j}}$.

We use the following convention for the constants.
All constants are independent of $d$ including those that are dependent on the quantities $p,r, f(x)=\arcsin(x), \alpha = f(1)-1$ and the dependence on these quantities will be suppressed most of the time. Uppercase constants like $C, C_X, C_R$ should be thought of as being much larger than $1$, whereas lowercase constants should be thought of as being smaller than $1$.

For a vector, the norm $\norm{\cdot}$ without subscript always refers to the 2-norm $\norm{\cdot}_2$.
Unless stated otherwise, we consider the space of matrices $\mathcal{M}_{n\times d}$ to be endowed with $\opn{\cdot}$. 
For a matrix $\bR$, we denote by $\Om{\bR}$ a matrix of the same dimensions as $\bR$ with $\opn{\Om{\bR}}\leq C \opn{\bR}$. This is a way to extend the big $O$ notation to matrices. Similarly, we will use the notation $O_{max}$ which functions as $O$ except that $\opn{\cdot} $ is replaced by $\maxn{\cdot}$.
We will often use that $n = \Om{d}$, since $r = \frac{n}{d}$ is fixed.

\subsection{Outline}\label{appendix:gd_min_section}
The start of our analysis is the following lemma.
\begin{lemma}\label{lem:mmse-to-matrix-obj}
  Let $\mathcal{R}(\cdot,\cdot)$ be the MSE defined in \eqref{eq:mse}, with $\x\sim {\rm SG}_d(p)$. Assume that all entries of $\B$ are not zero. Then, up to a multiplicative scaling and an additive constant, $\mathcal{R}(\A, \B)$ is given by
    \begin{equation}\label{eq:mmse=matrix-obj}
      \E_{\m\neq\0}\left[\tr{\A^\top \A f(\hat{\B}\hat{\B}^\top)} - \frac{2}{\sqrt{p}} \tr{\A \hat{\B}}\right] + O\left((1-p)^d\opn{\A}^2\right),
    \end{equation}
    where $f = \arcsin$ is applied \emph{component-wise} and the second term on the RHS is independent of $\B$.
\end{lemma}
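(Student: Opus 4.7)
The plan is to expand the squared loss and evaluate the two non-constant expectations using two classical Gaussian identities: Stein's lemma (for the cross term) and the arcsine / Grothendieck identity (for the quadratic term). The first preparatory move is to write $\x = (1/\sqrt{p})\,\g \circ \m$ with $\g \sim \mathcal N(\0, \I_d)$ and $\m$ an independent Bernoulli($p$) mask, and then use the scale-invariance of $\mathrm{sign}$ to push the mask from the data onto the encoder, obtaining $\mathrm{sign}(\B\x) = \mathrm{sign}(\B_\m \g)$. Expanding the squared norm,
\begin{equation*}
d\,\mathcal R(\A, \B) = d \;-\; 2\,\E_{\m,\g}[\x^\top \A\,\mathrm{sign}(\B_\m \g)] \;+\; \E_{\m,\g}[\mathrm{sign}(\B_\m \g)^\top \A^\top \A\,\mathrm{sign}(\B_\m \g)],
\end{equation*}
where $\E\|\x\|_2^2 = d$ enters the additive constant.

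Conditioning on $\m$, I would then compute each Gaussian expectation. For the cross term, Stein's lemma gives $\E_\g[\g\,\mathrm{sign}(\v^\top \g)] = \sqrt{2/\pi}\,\v/\|\v\|_2$ for every nonzero $\v$, and because the $i$-th row of $\B_\m$ is supported on $\{j : m_j = 1\}$ one can replace $\g$ by $\g \circ \m = \sqrt{p}\,\x$ without change; summing over rows yields $\E_\g[\x^\top \A\,\mathrm{sign}(\B_\m \g) \mid \m] = \sqrt{2/(p\pi)}\,\tr{\A \hat\B_\m}$. For the quadratic term, the Grothendieck identity $\E[\mathrm{sign}(U)\mathrm{sign}(V)] = (2/\pi)\arcsin(\rho)$ applied to each pair of coordinates of $\B_\m \g$ (jointly Gaussian with correlation given by the normalized rows of $\B_\m$) gives $\E_\g[\mathrm{sign}(\B_\m \g)\mathrm{sign}(\B_\m \g)^\top \mid \m] = (2/\pi)\,\arcsin(\hat\B_\m \hat\B_\m^\top)$, the diagonal identity $\arcsin 1 = \pi/2$ correctly recovering the unit variance.

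The main subtlety, and the only nontrivial obstacle, is the degenerate event on which some row of $\B_\m$ vanishes, since both identities break there. This is precisely where the entrywise-nonzero hypothesis on $\B$ is essential: under it, a row of $\B_\m$ vanishes only when $\m = \0$, an event of probability $(1-p)^d$. On $\{\m = \0\}$, the Rademacher convention for $\mathrm{sign}(0)$ makes the cross term vanish by symmetry, while the quadratic contribution is controlled uniformly by $\|\A\,\mathrm{sign}(\B_\m \g)\|_2^2 \le n\,\opn{\A}^2$. Thus conditioning on $\{\m \neq \0\}$ introduces only the advertised additive $O((1-p)^d\,\opn{\A}^2)$ error, which is independent of $\B$.

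Collecting contributions and multiplying by $\pi/2$ yields
\begin{equation*}
\tfrac{\pi}{2}\bigl(d\,\mathcal R(\A, \B) - d\bigr) \;=\; \E_{\m\neq\0}\!\left[\tr{\A^\top \A\,\arcsin(\hat\B_\m \hat\B_\m^\top)} - \sqrt{2\pi/p}\,\tr{\A \hat\B_\m}\right] \;+\; O((1-p)^d\,\opn{\A}^2),
\end{equation*}
which matches the claimed form, with the scalar ratio between the two trace terms absorbed into the stated ``multiplicative scaling'' (equivalently, by redefining $\A$ up to a constant factor, which leaves the minimization invariant). Once the entrywise hypothesis on $\B$ confines the degenerate case to the exponentially rare event $\{\m = \0\}$, the remainder is purely bookkeeping around the two Gaussian identities.
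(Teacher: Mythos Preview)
Your argument is correct and is essentially what the paper does: condition on the mask, apply the Gaussian identities (Stein's lemma for the cross term and the arcsine/Grothendieck identity for the quadratic term) to the masked encoder $\B_\m$, and isolate the degenerate event $\{\m=\0\}$ using the entrywise-nonzero hypothesis on $\B$. The paper's own proof is a one-line appeal to Lemma~4.1 of \cite{shevchenko2023fundamental} for each fixed $\m\neq\0$, which encapsulates exactly the two Gaussian computations you wrote out.

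One small correction: the coefficient you obtain on $\tr{\A\hat\B_\m}$ is $\sqrt{2\pi/p}$, not $2/\sqrt{p}$, and this cannot be fixed by a multiplicative scaling of the objective alone, since such a scaling leaves the ratio of the two trace terms unchanged. Your parenthetical remedy is the right one: replacing $\A$ by $\sqrt{\pi/2}\,\A$ (a bijective reparameterization of the free variable) converts your coefficients into the paper's, and this is harmless for all downstream optimization statements.
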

\begin{proof}
    For any $\m \neq \0$ we can fix $\m$ and apply Lemma 4.1 in \cite{shevchenko2023fundamental}. The second term on the RHS corresponds to $\m = \0$.
\end{proof}
We now briefly elaborate on some technical details. First, by convention, all expectations over $\m$ are understood to be over $\m \neq \0$. Second, as the last term on the RHS in \eqref{eq:mmse=matrix-obj} does not depend on $\B$, it suffices to take the gradient of the objective without it. Lastly, the term $O\left((1-p)^d\opn{\A}^2\right)$ has a negligible effect when running the gradient descent algorithm in \eqref{eq:body-GDmin-formulas}. In fact, a by-product of our analysis is that $\A$ has bounded norm throughout the training trajectory, see Lemma \ref{lem:A-concentration}. Hence, the quantity $O\left((1-p)^d\opn{\A}^2\right)$ is exponentially small in $d$ and, therefore, it can be incorporated in the error of order $C\frac{\pl{}}{\sqrt{d}}$ being tracked during the recursion.

As a result, we can consider
the objective
\begin{equation}\label{eq:GDmin-obj}
    \E_{\m}\left[\tr{\A^\top \A f(\hat{\B}\hat{\B}^\top)} - \frac{2}{\sqrt{p}} \tr{\A \hat{\B}}\right],
\end{equation}
where $\m \in \{0, 1\}^d$ denotes a mask with i.i.d.\ Bernoulli($p$) entries, $\hat{\b}_i = \m \circ \b_i / \norm{\m \circ \b_i}_2$ and $(1-p)$ is the sparsity. Thus, the Riemannian gradient descent algorithm \eqref{eq:body-GDmin-formulas} applied to the objective \eqref{eq:GDmin-obj} can be rewritten as
\begin{equation}\label{eq:GDmin-formulas}
    \begin{split}
\A(t) &= \frac{1}{\sqrt{p}}\Em{\Bh(t)^\top} \left(\Em{f(\Bh(t)\Bh(t)^{\top})}\right)^{-1},\\
\B'(t) \hspace{-.15em}&:= \hspace{-.1em}\B(t) - \eta \left(\nabla_{\B(t)}+ \G(t) \right), \B(t+1) \hspace{-.15em}:= \hspace{-.1em}\mathrm{proj}(\B'(t)),        
    \end{split}
\end{equation}
where $\A(t)$ is the optimal matrix for a fixed $\B(t)$, $\nabla_{\B(t)}$ is defined below in \eqref{eq:grad-formula} and $(\G(t))_{i,j} \sim \mathcal N(0, \sigma^2)$ with $d^{-\gamma_g}\leq \sigma \leq C/d$ for some fixed $1<\gamma_g<\infty$.

The goal of this Appendix is to show the following theorem.
\begin{theorem}\label{thm:GD-min-sparse}
    Consider the gradient descent \eqref{eq:GDmin-formulas} with $\x\sim {\rm SG}_d(p)$. Initialize the algorithm with $\B(0)$ equal to a row-normalized Gaussian, i.e., $\B'_{i, j}(0)\sim \mathcal N(0, 1/d)$, $\B(0)=\mathrm{proj}(\B'(0))$, and let $\B(0)=\U\S(0)\V^\top$ be its SVD. Let the step size $\eta$ be $\Theta(1/\sqrt{d})$. Then, for any fixed $r<1$ and $T_{\rm max} \in (0, \infty)$, with probability at least $1-Cd^{-3/2}$, the following holds for all $t\le T_{\rm max}/\eta$
    \begin{equation}\label{eq:app-thm1}
    \begin{split}
        &\B(t) = \U \S(t) \V^\top + \bR(t), \\ 
        &\opn{\S(t)\S(t)^\top - \I} \leq C \exp\left(-c \eta t\right), \\
        &\lim_{d \to \infty} \sup_{t \in [0, T_{\rm max}/\eta]} \opn{\bR(t)} = 0,
    \end{split}
    \end{equation}
    where $C, c$ are universal constants depending only on $p, r$ and $T_{\rm max}$.
       Moreover, we have that, almost surely,
    \begin{equation}
    \begin{split}
        \lim_{t\to \infty}&\lim_{d \to \infty}\mathcal{R}(\A(t), \B(t)) = \mathcal{R}_{\rm Gauss},\\
    \lim_{d \to \infty}& \sup_{t \in [0, T_{\rm max}/\eta]} \opn{\B(t)-\B_{\rm Gauss}(t)}=0,\label{eq:appG1-app}
    \end{split}
    \end{equation}
where  $\mathcal{R}_{\rm Gauss}$ is defined in \eqref{eq:gaussian_val} and $\B_{\rm Gauss}(t)$ is obtained by running \eqref{eq:GDmin-formulas} with $\x\sim \mathcal N(\0, \I)$.

\end{theorem}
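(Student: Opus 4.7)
The plan is to reduce the stochastic MSE to a deterministic matrix objective, and then track the gradient descent trajectory iterate-by-iterate using concentration of measure to remove the effect of the Bernoulli masking. The starting point is Lemma \ref{lem:mmse-to-matrix-obj}, which rewrites the MSE, up to an exponentially small term and an additive $\B$-independent constant, as
\[
\E_{\m}\left[\tr{\A^\top \A \arcsin(\hat{\B}_\m\hat{\B}_\m^\top)} - \frac{2}{\sqrt{p}} \tr{\A \hat{\B}_\m}\right],
\]
where the sparsity has been pushed into the parameters via the mask $\m$. The conceptual first step is to replace the row-normalized masked matrix $\hat{\B}_\m$ by its ``averaged'' proxy $\B_\m/\sqrt{p}$, which heuristically corresponds to replacing $\m$ by its mean; this will be done through a general result of the form $\E_\m F(\hat{\B}_\m) \approx \E_\m F(\B_\m/\sqrt{p})$ for sufficiently regular matrix-valued functions $F$, instantiated at the three specific choices of $F$ appearing when computing the optimal $\A$ and the gradient in $\B$.

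Next, I would assemble a bank of concentration bounds for quantities involving $\m$: concentration of masked random vectors (to handle the diagonal normalizer $\Dh$), concentration of masked random matrices in both operator and max norms, and Lipschitz-type concentration for the quantities appearing in the gradient computation. All bounds must hold at polylogarithmic rather than constant scale, because they have to survive $\Theta(\sqrt{d})$ iterations. Armed with these, I would compute the closed form of $\A(t) = \argmin_\A \mathcal{R}(\A,\B(t))$ (using convexity in $\A$) and expand the gradient with respect to $\B$, proving
\[
\opn{\nabla_{\B(t)} - \U\tilde{\S}(t)\V^\top} \le C(T_{\mathrm{max}}) \cdot \frac{\pl{}}{\sqrt{d}},
\]
where $\U,\V$ come from the SVD of $\B(0)$ and $\tilde{\S}(t) = G(\S(t))$ for an explicit deterministic scalar function $G$ that \emph{does not depend on} $p$. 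This $p$-independence is the conceptual core of the theorem: on the approximate dynamics, sparse Gaussian data and standard Gaussian data produce identical updates.

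From here I would set up a one-step inductive argument. Assuming $\B(t)$ has the claimed SVD structure with small residual $\bR(t)$, the gradient step yields $\B'(t) = \U\S'(t)\V^\top + \bR'(t)$ where the singular values evolve through the scalar recursion $\S(t+1)\approx \S(t)-\eta G(\S(t))$ (after projection), which drives $\S(t)\S(t)^\top\to\I$ exponentially, while the new residual is controlled by $(1+C\eta)\opn{\bR(t)} + C\eta \cdot \pl{}/\sqrt{d}$. Crucially, the Riemannian row-projection must be shown not to disrupt the SVD structure to leading order, which is possible once $\S(t)$ is close to $\I$ and $\bR(t)$ is small in operator norm: projection then acts as a multiplicative rescaling of rows by $1+o(1)$. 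A discrete Gr\"onwall-type estimate then propagates the bound over the entire horizon $t \le T_{\mathrm{max}}/\eta$, giving $\sup_t \opn{\bR(t)} \to 0$ as $d\to\infty$. The MSE claim \eqref{eq:body-appG2} follows by plugging $\S(t)\to\I$ and the closed form for $\A(t)$ into the reduced objective, while the trajectory-matching claim \eqref{eq:body-appG1} follows because the leading-order gradient is the same function $G(\S(t))$ for every $p\in(0,1]$, including $p=1$.

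The main obstacle is the uniform-in-$t$ control of the masking-induced fluctuations: a polylogarithmic pointwise bound must be shown to remain polylogarithmic over $\Theta(\sqrt{d})$ steps, which requires carefully decoupling the randomness in $\m$ from the state of the algorithm (since $\B(t)$ itself depends on $\m$ through past gradient steps). This is where the dual operator/max-norm concentration and the injected smoothing $\G(t)$ earn their keep, the latter ensuring that $\nabla_\B$ is well-defined and locally Lipschitz in a neighborhood of every iterate despite the discontinuity of $\mathrm{sign}$ at zero. A secondary subtlety is that the scalar ODE for $\S(t)$ must be shown to enjoy exponential contraction toward $\I$ uniformly in the (random) initial spectrum of $\B(0)$, which is handled by a standard linearization around the fixed point of $G$ once the Gaussian row-normalized initialization is shown to place all singular values in a compact set bounded away from $0$ with high probability.
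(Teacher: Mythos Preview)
Your proposal is correct and follows essentially the same approach as the paper: reduce to the matrix objective via Lemma~\ref{lem:mmse-to-matrix-obj}, establish concentration tools to replace $\hat\B_\m$ by $\B_\m/\sqrt{p}$ (the paper's Lemma~\ref{lem:master-Bh}), show the gradient concentrates to a $p$-independent function of the singular values (Lemmas~\ref{lem:A-concentration}--\ref{lem:grad-concentration-part2}), and bootstrap via a Gr\"onwall-type bound (Lemma~\ref{lem:B-specevo}). One minor clarification: $\B(t)$ does \emph{not} depend on $\m$ through past steps, since the objective already contains $\E_\m$ and the gradient is therefore a deterministic function of $\B$; the genuine subtlety is rather that the concentration bounds over $\m$ must hold \emph{conditionally} on the fixed Haar matrices $\U,\V$ from the SVD of $\B(0)$, with high probability over the draw of $\U,\V$---this is precisely what Lemma~\ref{lem:lip-conc-conditional} delivers.
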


Let us provide a high-level overview of the proof strategy.
Using high-dimensional probability tools, we will show that with high probability
\begin{align}
\begin{split}\label{general-assumptions}
    &\B(t) = \X(t) + \bR(t), \\
    & \opn{\X(t)} \leq C_X, \quad \maxn{\X(t)}\leq C_X \frac{\log(d)}{\sqrt{d}}, \quad \X(t) = \U \S(t) \V, \mbox{ with } \U, \V \mbox{ Haar},  \\
    & \opn{\bR(t)} \leq C_R \frac{\plr}{\sqrt{d}},\\
    &\A(t) =  \X(t)^\top \left( \X(t)\X(t)^\top + \alpha\I \right)^{-1} + \Om{ C_X^{10} \frac{\log^{10}(d)}{\sqrt{d}}}  + \Om{\bR},
\end{split}
\end{align}
 where $\alpha=f(1)-1=\arcsin(1)-1$. This implies that the gradient in \eqref{eq:grad-formula} concentrates to the Gaussian one, namely, to the gradient obtained for $\x\sim\mathcal N(\0, \I)$. Then, an a-priori Gr\"onwall-type inequality will extend these bounds for all times $t \in [0, T_{\rm max}]$.
It is essential that the constants $C_X, C_R$  in \eqref{general-assumptions} can be chosen to only depend on $T_{\rm max}$, as otherwise the gradient dynamics could diverge in finite time. Thus, it is crucial that in all our lemmas we keep track of these constants explicitly and that in the error estimates they do not depend on each other.
While the analysis is quite technical, the high-level idea is simple: if each term that depends on $\m$ were replaced by its mean, then we would immediately recover the Gaussian case $p=1$ which was studied in \cite{shevchenko2023fundamental}.
By showing that each of the terms concentrates well enough, we can make this intuition rigorous. The main technical difficulty lies in controlling the additional error terms, which requires a more nuanced approach compared to the Gaussian case considered in \cite{shevchenko2023fundamental}.

The rest of this appendix is structured as follows. Section
\ref{app:aux-results} contains a collection of auxiliary results that are simple applications of standard results. In Section \ref{app:conc-tools}, we develop our high-dimensional concentration tools. In Section \ref{app:conc-grad}, we use these tools to show that $\A, \Bh, \nabla_\B$ all concentrate. Finally in Section \ref{app:gauss-reduction}, we combine these approximations with an a-priori Gr\"onwall bound in Lemma \ref{lem:B-specevo}, which allows us to bound the difference between the gradient trajectory and that obtained with Gaussian data.
\subsection{Auxiliary results}\label{app:aux-results}

A straightforward computation gives:
\begin{lemma}[Gradient formulas]\label{lem:grad-formula}
The derivative of \eqref{eq:GDmin-obj} w.r.t. $\B$ is given by
    \begin{equation}\label{eq:grad-formula}
\begin{split}
    (\nabla_{\B})_{k, :} &= 
\Em  \left[- 2 \frac{1}{\sqrt{p}} \m \circ\Jh_k \a_k + 2 \sum_{\ell=1}^\infty \ell c_\ell^2 \sum_{j \neq k} \langle \a_k, \a_j \rangle \langle \bh_k, \bh_j \rangle^{\ell-1} \Jh_k \bh_j \right],
\end{split}
\end{equation}
where $\Jh_k = \frac{1}{\norm{\bm_k}}\left(\I - \bh_k \bh_k^\top\right)$, $\a_k = \A_{:,k}$ and $c_\ell^2$ are the Taylor coefficients of $\arcsin(x)$.
\end{lemma}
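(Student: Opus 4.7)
The plan is to differentiate the objective in \eqref{eq:GDmin-obj} row by row, exchanging differentiation with the (finite) expectation over $\m$. Writing the two pieces entrywise as $\sum_{i,j}(\A^\top\A)_{i,j}\arcsin(\langle\bh_i,\bh_j\rangle)$ and $-\frac{2}{\sqrt p}\sum_i\langle\a_i,\bh_i\rangle$, only the summands with $i=k$ or $j=k$ contribute to $(\nabla_\B)_{k,:}$, so the calculation reduces to differentiating a scalar-valued function of $\b_k$ alone (with $\m$ held fixed).

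The key geometric input is the Jacobian of the masked-and-renormalized map $\b_k\mapsto\bh_k=(\m\circ\b_k)/\norm{\m\circ\b_k}$. Using the quotient/chain rule together with the identity $m_i^2=m_i$, a short calculation gives, for any perturbation $\delta\b_k$,
\[
\delta\bh_k \;=\; \Jh_k\bigl(\m\circ\delta\b_k\bigr),\qquad \Jh_k=\frac{1}{\norm{\bm_k}}\bigl(\I-\bh_k\bh_k^\top\bigr),
\]
so that for any differentiable scalar $F$ of $\bh_k$ the chain rule and symmetry of $\Jh_k$ yield $\nabla_{\b_k}F(\bh_k)=\m\circ\Jh_k\nabla_{\bh_k}F$. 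Applying this to the linear piece gives $\nabla_{\b_k}\bigl[-\tfrac{2}{\sqrt p}\langle\a_k,\bh_k\rangle\bigr]=-\tfrac{2}{\sqrt p}\,\m\circ\Jh_k\a_k$, producing the first term of \eqref{eq:grad-formula}.

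For the nonlinear piece I expand $\arcsin(x)=\sum_{\ell\ge 1}c_\ell^2 x^\ell$, absolutely convergent on $[-1,1]$, combine the $i=k$ and $j=k$ summands via the symmetry of $\A^\top\A$ into an overall factor of $2$, and observe that the diagonal contribution $\arcsin(\langle\bh_k,\bh_k\rangle)=\arcsin(1)$ is constant in $\b_k$ (conditional on $\m\neq\0$, so the row is normalized to unit length). Differentiating term by term in the power series and applying the chain-rule identity to $\nabla_{\b_k}\langle\bh_k,\bh_j\rangle=\m\circ\Jh_k\bh_j$ produces
\[
2\sum_{\ell\ge 1}\ell c_\ell^2\sum_{j\neq k}\langle\a_k,\a_j\rangle\langle\bh_k,\bh_j\rangle^{\ell-1}\,\m\circ\Jh_k\bh_j.
\]
The final clean-up is the support observation that $\bh_j\circ\m=\bh_j$ by definition of $\bh_j$, and likewise $\bh_k\circ\m=\bh_k$, so $\Jh_k\bh_j$ is already supported on $\{i:m_i=1\}$ and $\m\circ\Jh_k\bh_j=\Jh_k\bh_j$. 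This strips the mask off the second term and matches \eqref{eq:grad-formula} exactly.

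The calculation is essentially routine; the only care points are computing the Jacobian of the masked-and-renormalized map correctly, and tracking why the mask survives on the linear contribution but disappears from the nonlinear one (the asymmetry is that $\a_k$ need not be supported on $\m$, whereas $\bh_j$ always is). Exchange of differentiation and the finite mask expectation is harmless, and absolute convergence of the $\arcsin$ series on $[-1,1]$ justifies term-by-term differentiation off the diagonal, where $\lvert\langle\bh_k,\bh_j\rangle\rvert<1$ almost surely.
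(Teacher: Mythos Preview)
Your proposal is correct and follows the same direct-computation approach as the paper, which simply states ``A straightforward computation gives'' without further detail. Your careful tracking of the Jacobian $\Jh_k$ of the masked-and-renormalized map, the symmetry argument collapsing the $i=k$ and $j=k$ contributions, and the support observation explaining why the mask disappears from the nonlinear term but not the linear one are exactly the steps one needs to flesh out that computation.
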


We will make extensive use of the following linear algebra results.
\begin{lemma}[Linear algebra results]\label{lem:linalg-aux}

The following results hold:

\begin{enumerate}
    \item $\opn{\Bm} \leq \opn{\B}$.
    \item  For any $\M \in \R^{n \times d}$, we have $\opn{\M} \leq \sqrt{n} \max_k \norm{\M_{k,:}}$. In particular, $\opn{\Bh} \leq \sqrt{n}$.
    \item   For square matrices $\M_1, \M_2 \in \R^{n\times n}$,  \begin{equation}
        \opn{\M_1 \circ \M_2} \leq \sqrt{n} \opn{\M_1}\maxn{\M_2}.
    \end{equation}

    \item For any square matrix $\M$, we have $\opn{(\1\1^\top - \I) \circ \M} \leq 2 \opn {\M}$.

    \item For any square matrix $\M \in \R^{n\times n}$, we have $\opn{\M} \leq n \maxn{\M}$.
\end{enumerate}
\end{lemma}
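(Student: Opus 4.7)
The plan is to dispatch each of the five claims by a single short chain of standard inequalities: submultiplicativity of the operator norm, the comparison $\opn{\M}\le\frn{\M}$, the elementary conversion $\frn{\M}^2\le n\,\opn{\M}^2$ for a square $n\times n$ matrix $\M$, and the basic fact that $|\M_{ii}|\le\opn{\M}$. No heavy machinery is required; the organizational decision is simply to sequence the items so that each reuses the tools introduced in the previous one.

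For item 1, I would write $\Bm = \B\,\Diag(\m)$; since $\m\in\{0,1\}^d$ we have $\opn{\Diag(\m)}\le 1$, and submultiplicativity gives $\opn{\Bm}\le\opn{\B}$. For item 2, I would chain
\begin{equation*}
\opn{\M}^2 \;\le\; \frn{\M}^2 \;=\; \sum_{k=1}^{n} \norm{\M_{k,:}}^2 \;\le\; n \max_k \norm{\M_{k,:}}^2,
\end{equation*}
and deduce $\opn{\Bh}\le\sqrt{n}$ from the fact that each row of $\Bh$ has norm either $0$ or $1$. For item 3, the cleanest route I see is a crude Frobenius estimate:
\begin{equation*}
\opn{\M_1\circ\M_2}^2 \;\le\; \frn{\M_1\circ\M_2}^2 \;=\; \sum_{i,j}(\M_1)_{ij}^2(\M_2)_{ij}^2 \;\le\; \maxn{\M_2}^2 \, \frn{\M_1}^2 \;\le\; n\,\maxn{\M_2}^2\opn{\M_1}^2,
\end{equation*}
where the last step uses that the square matrix $\M_1$ has at most $n$ singular values, each bounded by $\opn{\M_1}$.

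For item 4, I would observe the pointwise identity $(\1\1^\top - \I)\circ\M = \M - \Diag(\M)$ and estimate $\opn{\Diag(\M)} = \max_i|\M_{ii}| = \max_i |\e_i^\top \M \e_i| \le \opn{\M}$, so that the triangle inequality produces the factor $2$. Item 5 is just the chain $\opn{\M} \le \frn{\M} \le n\,\maxn{\M}$. I do not anticipate any genuine obstacle: the only place where one might be tempted to overcomplicate is item 3, where sharper Schur-product bounds exist, but the naive Frobenius estimate already yields exactly the $\sqrt{n}$ factor appearing in the statement.
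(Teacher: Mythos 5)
Your proof is correct and follows essentially the same elementary route as the paper: item 1 via masking as multiplication by a norm-one diagonal matrix (the paper uses the variational characterization, which is equivalent), item 4 via the identity $(\1\1^\top-\I)\circ\M=\M-\Diag{\M}$ and $\opn{\Diag{\M}}\le\opn{\M}$, and items 2, 3, 5 via norm comparisons that lose exactly one factor of $\sqrt{n}$. The only cosmetic difference is that for items 2, 3 and 5 you route everything through the Frobenius norm ($\opn{\cdot}\le\frn{\cdot}\le\sqrt{n}\,\opn{\cdot}$), whereas the paper argues row-by-row and, for the Schur product, column-by-column with a triangle inequality; both give the identical constants, so no gap.
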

\begin{proof}
    \begin{enumerate}
    
        \item This follows directly from the variational characterization of the operator norm, i.e.,
        $$\opn{\Bm} = \sup_{\norm{\u} \leq 1} \norm{\Bm \u} = \sup_{\norm{\u} \leq 1} \norm{\B  (\m \circ \u)} \leq \sup_{\norm{\u} \leq 1} \norm{\B \u},$$
        where the last step follows from $\norm{\m \circ \u} \leq \norm{\u}$.

        \item For $\norm{\v} = 1$, we have
        $$\norm{\M \v} = \sqrt{\sum_{k=1} ^n\langle \M_{k,:}, \v \rangle^2 } \leq \sqrt{\sum_{k=1}^n \norm{\M_{k,:}}^2} \leq \sqrt{n \max_k \norm{\M_{k,:}}^2} = \sqrt{n}\max_k \norm{\M_{k,:}}.$$
          
        \item For a unit vector $\e_i$, we have
        $$\norm{\M_1 \circ \M_2 \e_i} \leq \maxn{\M_2} \norm{\M_1 \e_i} \leq \maxn{\M_2}\opn{\M_1} .$$
        For a general vector $\v$, we can use the triangle inequality to obtain
        $$\norm{\M_1 \circ \M_2 \v} \leq \sum_i \abs{v_i} \maxn{\M_2} \norm{\M_1 \e_i} \leq \maxn{\M_2}\opn{\M_1} \sum_{i} \abs{v_i}.$$
        By using $$ \sum_{i} \abs{v_i} \leq \sqrt{n} \norm{\v},$$
        we obtain the desired bound.
        
        \item Note that $(\1\1^\top - \I) \circ \M = \M - \Diag{\M}$, so
        $$\opn{(\1\1^\top - \I) \circ \M} \leq \opn{\M} + \opn{\Diag{\M}} \leq 2 \opn{\M},$$
        where we have used that 
        $$\opn{\Diag{\M}} = \max_i |\M_{i,i}| \leq \opn{\M}.$$

        \item Note that $\norm{\M_{k, :}}\le \sqrt{n}\maxn{\M}$. Thus, the result follows from the point 2. above.    
    \end{enumerate}
\end{proof}

\begin{lemma}\label{lem:cl-geometric}
    Denote by $c_\ell^2$ the $\ell$-th Taylor coefficient of the function $\arcsin(x)$.
    Then, for $x \in [0, 1)$, $\ell_0 \in \mathbb{N}_{+}$, we have
    \begin{equation}\label{eq:lcl-geometric}
        \sum_{\ell=\ell_0}^\infty \ell c_\ell^2 x^{\ell-1}  \leq C(\ell_0)x^{\ell_0-1}\frac{1}{\sqrt{1-x}}.
    \end{equation}
\end{lemma}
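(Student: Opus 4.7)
The approach is to factor out the explicit power $x^{\ell_0-1}$ from the sum and then bound the remaining series by $(1-x)^{-1/2}$ through a term-by-term comparison of Taylor coefficients, with both sides controlled via Stirling's formula.

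First, I would recall the explicit expansion
\begin{equation*}
\arcsin(x) \;=\; \sum_{k=0}^\infty \frac{1}{4^k(2k+1)}\binom{2k}{k}\, x^{2k+1},
\end{equation*}
so that $c_\ell^2 = \binom{2k}{k}/(4^k(2k+1))$ for $\ell = 2k+1$ and $c_\ell^2 = 0$ for even $\ell$. By Stirling, $\binom{2k}{k}/4^k \asymp 1/\sqrt{k}$; combined with the $(2k+1)$ in the denominator, this gives a universal constant $C_0$ such that $\ell\, c_\ell^2 \le C_0/\sqrt{\ell+1}$ for every $\ell \ge 1$ (the bound is trivial for even $\ell$, where both sides vanish on the left).

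Next, I would shift the summation index via $k=\ell-\ell_0$ to write
\begin{equation*}
\sum_{\ell=\ell_0}^\infty \ell c_\ell^2 x^{\ell-1} \;=\; x^{\ell_0-1}\sum_{k=0}^\infty (k+\ell_0)\, c_{k+\ell_0}^2\, x^k \;\le\; C_0\, x^{\ell_0-1} \sum_{k=0}^\infty \frac{x^k}{\sqrt{k+\ell_0+1}},
\end{equation*}
where I used the bound from the previous paragraph and the fact that all terms are nonnegative. I would then invoke the classical generating identity $1/\sqrt{1-x} = \sum_{k=0}^\infty \binom{2k}{k} x^k/4^k$, and, via Stirling once more, pick a constant $c_0>0$ with $1/\sqrt{k+1}\le (1/c_0)\binom{2k}{k}/4^k$ for all $k \ge 0$. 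Since $\sqrt{k+\ell_0+1}\ge \sqrt{k+1}$, this yields
\begin{equation*}
\sum_{k=0}^\infty \frac{x^k}{\sqrt{k+\ell_0+1}} \;\le\; \sum_{k=0}^\infty \frac{x^k}{\sqrt{k+1}} \;\le\; \frac{1}{c_0}\cdot \frac{1}{\sqrt{1-x}},
\end{equation*}
and collecting constants gives \eqref{eq:lcl-geometric} with $C(\ell_0)=C_0/c_0$ (which in fact can be taken independent of $\ell_0$).

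The argument is essentially mechanical, so there is no substantive obstacle; the only piece of bookkeeping to watch is that the Stirling comparisons $\binom{2k}{k}/4^k \asymp 1/\sqrt{k}$ and $1/\sqrt{k+1}\lesssim \binom{2k}{k}/4^k$ hold uniformly in $k \ge 0$ (including the small-$k$ regime where asymptotic estimates degenerate), so that a single universal constant suffices for all terms of the series.
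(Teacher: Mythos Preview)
Your proof is correct and follows essentially the same approach as the paper: factor out $x^{\ell_0-1}$, use Stirling to control the shifted Taylor coefficients, and compare term-by-term to a known power series summing to $1/\sqrt{1-x}$. The only cosmetic difference is that the paper compares $(\ell+\ell_0-1)c_{\ell+\ell_0-1}^2$ directly back to $\ell c_\ell^2$ and then uses $\sum_{\ell\ge 1}\ell c_\ell^2 x^{\ell-1}=1/\sqrt{1-x^2}\le 1/\sqrt{1-x}$, whereas you pass through the intermediate bound $C_0/\sqrt{k+1}$ and the binomial series for $(1-x)^{-1/2}$; your version has the minor bonus of giving a constant independent of $\ell_0$ and avoiding the odd/even parity bookkeeping.
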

\begin{proof}
    Recall that  
    $$ \frac{d}{dx}\arcsin(x) =\frac{1}{\sqrt{1-x}} =  \sum_{\ell=1}^\infty \ell c_\ell^2 x^{\ell-1},$$
with
    $$c_{2k}=0,\qquad c_{2k+1}^2 =\frac{(2k)!}{4^k(k!)^2(2k+1)}.$$
    By Stirling's approximation we have
    $$c_{2k+1}^2 = \Theta\left(\frac{1}{k^\frac{3}{2}}\right),$$
    which implies that, for odd $\ell, \ell_0 >0$ 
    $$\frac{\ell c_{\ell}^2}{(\ell+\ell_0-1)c_{\ell + \ell_0-1}^2} \leq C \frac{\ell(\ell+\ell_0-1)^\frac{3}{2}}{(\ell+\ell_0-1)\ell^\frac{3}{2}} \leq C(\ell_0).$$  
    Thus we have
    \begin{equation*}
             \sum_{\ell=\ell_0}^\infty \ell c_\ell^2 x^{\ell-1} 
             = x^{\ell_0-1}\sum_{\ell=\ell_0}^\infty  \ell c_\ell^2 x^{\ell-\ell_0}
            = x^{\ell_0-1}\sum_{\ell=1}^\infty (\ell+\ell_0 -1)c_{\ell + \ell_0 -1}^2 x^{\ell-1} 
             \leq x^{\ell_0-1}\sum_{\ell=1}^\infty C(\ell_0)\ell c_{\ell}^2 x^{\ell-1} 
             = C(\ell_0)x^{\ell_0-1} \frac{1}{\sqrt{1-x}},
    \end{equation*}
    which finishes the proof.
\end{proof}

\begin{lemma}\label{lem:l4-bound}
   Assume that $\B = \X + \bR$ with $\opn{\X} \leq C_X$, $\norm{\X}_{max} \leq C_X \frac{\log(d)}{\sqrt{d}}$, $\opn{\bR} \leq C_R\frac{\plr}{\sqrt(d)}$. Then, for large enough $d$, we have
\begin{equation}\label{eq:l4-bound}
    \norm{\b_i}_4^2 \leq CC_X^2\frac{\log(d)}{\sqrt{d}}. 
\end{equation}
\end{lemma}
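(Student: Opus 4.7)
The plan is to split $\b_i = \X_{i,:} + \bR_{i,:}$ via the triangle inequality $\norm{\b_i}_4 \leq \norm{\X_{i,:}}_4 + \norm{\bR_{i,:}}_4$ and bound each row separately, using the standard interpolation inequality
\[
\norm{\v}_4^2 \leq \norm{\v}_\infty \norm{\v}_2,
\]
which follows from $\sum_j |v_j|^4 \leq \norm{\v}_\infty^2 \sum_j |v_j|^2$.

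For the $\X_{i,:}$ part, I will combine the row-wise consequences of the two hypotheses on $\X$: the entry-wise bound $\norm{\X_{i,:}}_\infty \leq \maxn{\X} \leq C_X \log(d)/\sqrt{d}$, and the operator-norm bound $\norm{\X_{i,:}}_2 = \norm{\e_i^\top \X}_2 \leq \opn{\X} \leq C_X$. Plugging these into the interpolation inequality yields
\[
\norm{\X_{i,:}}_4^2 \leq \maxn{\X} \cdot \opn{\X} \leq C_X^2 \frac{\log(d)}{\sqrt{d}},
\]
which is exactly the target order.

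For the $\bR_{i,:}$ part, I do not have a good entry-wise bound, so I will simply use the trivial $\ell_4 \leq \ell_2$ inequality together with $\norm{\bR_{i,:}}_2 \leq \opn{\bR} \leq C_R \plr/\sqrt{d}$, obtaining
\[
\norm{\bR_{i,:}}_4^2 \leq \norm{\bR_{i,:}}_2^2 \leq C_R^2 \frac{\log^{2\alphr}(d)}{d}.
\]
Finally, I combine the two row estimates via $(a+b)^2 \leq 2a^2 + 2b^2$ and observe that the $\bR$ contribution is of lower order: $\log^{2\alphr}(d)/d = o(\log(d)/\sqrt{d})$ as $d\to\infty$, so for $d$ large enough it is absorbed into a constant multiple of the $\X$-bound, yielding $\norm{\b_i}_4^2 \leq CC_X^2 \log(d)/\sqrt{d}$.

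There is no real obstacle here; the only point worth noting is that the hypotheses are stated precisely so that both the operator-norm and max-norm controls on $\X$ can be combined multiplicatively through the interpolation inequality, which is exactly what produces the $\sqrt{d}$ improvement over the naive $\norm{\v}_4 \leq d^{1/4}\norm{\v}_\infty$ bound. Using that naive estimate instead would give $\norm{\X_{i,:}}_4^2 \leq C_X^2 \log^2(d)/\sqrt{d}$, losing a $\log(d)$ factor, so the interpolation step is the key idea.
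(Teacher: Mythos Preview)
Your proof is correct and essentially identical to the paper's: both split $\b_i=\X_{i,:}+\bR_{i,:}$, apply the interpolation/H\"older bound $\norm{\v}_4^2\le\norm{\v}_\infty\norm{\v}_2$ to the $\X$-row, and bound the $\bR$-row by $\opn{\bR}$ (the paper uses H\"older again with $\norm{\r_i}_\infty\le\opn{\bR}$, you use $\norm{\r_i}_4\le\norm{\r_i}_2$; either way the result is $\norm{\bR_{i,:}}_4\le\opn{\bR}$). The absorption of the lower-order $\bR$ term for large $d$ is also the same.
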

\begin{proof}
    By H\"older, we have 
    $$\norm{\r_i}_4 \leq\norm{\r_i}_2^{\frac{1}{2}}\norm{\r_i}_\infty^{\frac{1}{2}} \leq \opn{\bR}^{\frac{1}{2}}\opn{\bR}^{\frac{1}{2}} = \opn{\bR}, $$
    and
    $$\norm{\x_i}_4 \leq\norm{\x_i}_2^\frac{1}{2} \norm{\x_i}_\infty^\frac{1}{2} \leq \opn{\X}^\frac{1}{2} \norm{\X}_{max}^\frac{1}{2},$$
    so
$$\norm{\b_i}_4^2 \leq (\norm{\x_i}_4 + \norm{\r_i}_4)^2 \leq C (\norm{\x_i}_4 ^2+\norm{\r_i}_4^2) \leq C C_X^2\frac{\log(d)}{\sqrt{d}} + C C_R^2 \frac{(\plr)^2}{d},
$$
which implies \eqref{eq:l4-bound}.
\end{proof}

\
\begin{lemma}[Concentration of $\Dh$]\label{lem:D-concentration}
    For $\b \in \R^d, \norm{\b}=1$ and $\m \sim \ber$ i.i.d.,
    we have 
    \begin{equation}\label{eq:D-concentration-aux}
        \P {\abs{\norm{\m \circ \b}_2^2 - p \norm{\b}_2^2} > \lambda} \leq C \expbr{-c \frac{\lambda^2}{\norm{\b}_4^4}},
    \end{equation}
    which implies 
    \begin{equation}\label{eq:D-concentration}
        \P{\abs{\norm{\m \circ \b}_2-\sqrt{p}\norm{\b}_2} > \lambda }\leq C \expbr{-c \frac{\lambda^2}{\norm{\b}_4^4}}.
    \end{equation}

    \begin{proof}
       Equation \eqref{eq:D-concentration-aux} is an immediate consequence of Hoeffding's inequality applied to the random variables $(\m_i-p)b_i^2$ (cf.\ Theorem 2.6.2 in \cite{vershynin2018high}).
       
       To obtain \eqref{eq:D-concentration} we note that $\abs{\norm{\m \circ \b}_2-\sqrt{p}\norm{\b}_2} > \lambda $ implies 
       $$\abs{\norm{\m \circ \b}_2^2-p\norm{\b}_2^2} = \abs{\norm{\m \circ \b}_2-\sqrt{p}\norm{\b}_2}\abs{\norm{\m \circ \b}_2+\sqrt{p}\norm{\b}_2}  > \sqrt{p}\norm{\b}_2\lambda .$$
       Since by assumption $\norm{\b} = 1$, the above implies \eqref{eq:D-concentration}.
    \end{proof}
\end{lemma}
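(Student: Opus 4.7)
The plan is to prove the two displayed inequalities in sequence, with the second deduced from the first via a squaring trick. The setup is that $\|\m\circ\b\|_2^2=\sum_i m_i b_i^2$ is a sum of independent non-negative random variables with $m_i b_i^2\in[0,b_i^2]$ and $\E[m_i b_i^2]=p b_i^2$, so $\E\|\m\circ\b\|_2^2=p\|\b\|_2^2$. This is the natural setting for a scalar Hoeffding-type bound.

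For the first inequality \eqref{eq:D-concentration-aux}, I would apply Hoeffding's inequality for bounded independent summands to $X_i:=(m_i-p)b_i^2$, which are independent, mean zero, and satisfy $|X_i|\le b_i^2$. Hoeffding yields
\begin{equation*}
\P{\Bigl|\sum_i X_i\Bigr|>\lambda}\le 2\exp\!\left(-\frac{2\lambda^2}{\sum_i b_i^4}\right)=2\exp\!\left(-\frac{2\lambda^2}{\|\b\|_4^4}\right),
\end{equation*}
which is exactly the stated bound after adjusting $C,c$. (Alternatively, one can invoke the sub-Gaussian norm bound from Theorem 2.6.2 in Vershynin, since each $X_i$ is bounded hence sub-Gaussian with $\psi_2$-norm $\lesssim b_i^2$, and $\sum_i b_i^4=\|\b\|_4^4$ governs the sub-Gaussian parameter of the sum.)

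For the second inequality \eqref{eq:D-concentration}, I would use the identity $a^2-b^2=(a-b)(a+b)$ with $a=\|\m\circ\b\|_2$ and $b=\sqrt{p}\|\b\|_2$. If $|a-b|>\lambda$, then
\begin{equation*}
|a^2-b^2|=|a-b|(a+b)\ge |a-b|\cdot\sqrt{p}\|\b\|_2>\sqrt{p}\|\b\|_2\,\lambda=\sqrt{p}\,\lambda,
\end{equation*}
using $\|\b\|_2=1$. Applying the first bound with threshold $\sqrt{p}\lambda$ gives a bound of the form $C\exp(-cp\lambda^2/\|\b\|_4^4)$, and the factor $p$ is absorbed into the universal constant $c$ (since $p$ is treated as fixed, independent of $d$).

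The obstacle here is essentially non-existent; the only minor care needed is keeping the dependence on $p$ out of the final constant. In the broader context of the paper this lemma is a building block for concentration of masked matrix quantities, so its real content is the explicit appearance of $\|\b\|_4^4$ in the denominator (rather than the looser $\|\b\|_2^2$), which is what ultimately lets us get $\log(d)/\sqrt{d}$-type control in Lemma \ref{lem:l4-bound}; Hoeffding gives this $\ell_4$ scaling for free.
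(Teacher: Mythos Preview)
Your proposal is correct and follows essentially the same approach as the paper: Hoeffding applied to the bounded, centered variables $(m_i-p)b_i^2$ for \eqref{eq:D-concentration-aux}, and the factorization $|a^2-b^2|=|a-b|(a+b)\ge \sqrt{p}\,\lambda$ (using $\|\b\|_2=1$) to deduce \eqref{eq:D-concentration}. Your remark about absorbing the $p$-factor into the constant $c$ is exactly what the paper does implicitly.
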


\begin{lemma}\label{lem:lip-conc} Let $\U\in \R^{n \times n}$ be a Haar matrix and $\M \in \R^{n \times n} $ an independent random matrix with $\opn{\M} \leq C_M$.
Then, for $\Y := \U\M\U^\top$, we have 
\begin{equation}\label{eq:lip-UMU}
            \P{\norm{\Y-\frac{1}{n}\tr{\Y}\I}_{max} > \lambda} \leq Cd^2 \expbr{-\frac{c}{C_M^2}d\lambda^2}.
\end{equation}
If instead we have $\M \in \R^{n \times d}$ with $\frac{n}{d} \leq C$ and $\Y:= \U \M$, then
    \begin{equation}\label{eq:lip-UM}
            \P{\norm{\Y}_{max} > \lambda} \leq Cd^2 \expbr{-\frac{c}{C_M^2}d\lambda^2}.
\end{equation}
\end{lemma}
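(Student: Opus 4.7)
The plan is to reduce both statements to a standard concentration inequality on the orthogonal group: for any $L$-Lipschitz function $f:O(n)\to\R$ (with respect to the Frobenius metric), the Haar distribution satisfies
\[
\P{\abs{f(\U)-\E f(\U)}>t}\;\le\;C\exp\!\left(-\frac{c\, n\, t^2}{L^2}\right).
\]
Since $\M$ is independent of $\U$, I would condition on $\M$ throughout (so that $\M$ may be treated as deterministic with $\opn{\M}\le C_M$) and analyze a single entry $Y_{ij}$ as a scalar function of $\U$. A union bound over the at most $Cd^2$ entries then yields the desired max-norm bounds, absorbing the factor $n\le Cd$ into the prefactor.

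For the first statement $\Y=\U\M\U^\top$, I would first compute the mean. Using $\E[U_{ik}U_{jl}]=\delta_{ij}\delta_{kl}/n$, one gets $\E Y_{ij}=\delta_{ij}\,\tr{\M}/n$. Moreover, by cyclicity $\tr{\Y}=\tr{\M}$ is deterministic, so $\E Y_{ij}=\delta_{ij}\,\tr{\Y}/n$ and hence $(\Y-\frac1n\tr{\Y}\I)_{ij}=Y_{ij}-\E Y_{ij}$. For the Lipschitz constant, write $\U=\U_2+\Delta$ and expand
\[
Y_{ij}(\U)-Y_{ij}(\U_2)=\e_i^\top\Delta\M\U_2^\top\e_j+\e_i^\top\U_2\M\Delta^\top\e_j+\e_i^\top\Delta\M\Delta^\top\e_j,
\]
and bound each term using Cauchy--Schwarz together with $\opn{\Delta}\le\norm{\Delta}_F$, giving an $L=O(C_M)$ bound on a neighborhood of the group (a local Lipschitz bound, which is enough because Haar concentration only needs it on the support). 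Applying the concentration inequality and a union bound over $(i,j)\in[n]\times[n]$ yields \eqref{eq:lip-UMU}.

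For the second statement $\Y=\U\M$ with $\M\in\R^{n\times d}$, by the symmetry $\U\stackrel{d}{=}-\U$ on $O(n)$ one has $\E Y_{ij}=0$. The Lipschitz constant of $\U\mapsto\e_i^\top\U\M\e_j=\langle\e_i,\U\M\e_j\rangle$ is controlled by
\[
\bigl|\e_i^\top(\U_1-\U_2)\M\e_j\bigr|\;\le\;\opn{\U_1-\U_2}\opn{\M}\;\le\;C_M\norm{\U_1-\U_2}_F,
\]
so $L=O(C_M)$. Haar concentration gives $\P{\abs{Y_{ij}}>\lambda}\le C\exp(-c n\lambda^2/C_M^2)$, and a union bound over the $nd\le Cd^2$ entries finishes the proof of \eqref{eq:lip-UM}.

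There is no serious obstacle; the only mild subtleties are (i)~verifying that $\tr{\Y}$ is indeed a deterministic function of $\M$ so that subtracting $\tr{\Y}\I/n$ is the same as centering each entry, and (ii)~checking that the quadratic-in-$\U$ remainder $\e_i^\top\Delta\M\Delta^\top\e_j$ does not inflate the Lipschitz constant (it is $O(C_M\norm{\Delta}_F)$ on the compact group $O(n)$ since $\norm{\Delta}_F\le 2\sqrt n$, but this only contributes lower-order terms to the concentration argument, or alternatively one invokes the standard Haar log-Sobolev inequality which only requires locally Lipschitz behavior).
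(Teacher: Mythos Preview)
Your proposal is correct and follows essentially the same approach as the paper: condition on $\M$, apply Lipschitz concentration on the orthogonal group (the paper cites Theorem~5.2.7 in Vershynin) to each entry $Y_{ij}$, identify the mean as $\frac{1}{n}\tr{\Y}\I$ (respectively $0$), and take a union bound over the $O(d^2)$ entries. One minor simplification for your concern~(ii): rather than expanding $(\U_2+\Delta)\M(\U_2+\Delta)^\top$ and worrying about the quadratic remainder, telescope as $\U_1\M\U_1^\top-\U_2\M\U_2^\top=\U_1\M\Delta^\top+\Delta\M\U_2^\top$ and use $\opn{\U_i}=1$ to get Lipschitz constant $2C_M$ directly---this is what the paper means by ``bounded bilinear form on a bounded set.''
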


\begin{proof}
    We first fix $\M$, and note that since $\M$ and $\U$ are independent, the distribution of $\U$ does not change if we condition on $\M$.
For both inequalities, for fixed $\M$, the map $(SO_n, \norm{\cdot}_{F}) \to (\mathcal{M}_{n\times d}, \opn{\cdot})$, $\U \to \Y$ is Lipschitz as it is a bounded (bi-)linear form on a bounded set. The composition with the projection on the $(i,j)$-th  component of a matrix is also Lipschitz, so we can apply Theorem 5.2.7 in \cite{vershynin2018high} to obtain that $\Y_{i, j} $ is subgaussian in $\U$ with subgaussian norm $\frac{C C_M}{\sqrt{d}}$. Formally this means that 
$$\Psub{\U}{\abs{\Y_{i, j} - \E \Y_{i, j}} > \lambda | \M } \leq  C \expbr{-\frac{c}{C_M^2}d\lambda^2}.$$
Since the RHS is independent of $\M$ (i.e., it only depends on $C_M$) we have 
\begin{align*}
    \P{{\abs{\Y_{i, j} - \E \Y_{i, j}} > \lambda } } &= \Psub{\M}{\Psub{\U, \V}{\abs{\Y_{i, j} - \E \Y_{i, j}} > \lambda | \M }} \\
    &\leq \Psub{\M}{C \expbr{-\frac{c}{C_M^2d}\lambda^2}} \\
    &= C \expbr{-\frac{c}{C_M^2d}\lambda^2}.
\end{align*}
Now, \eqref{eq:lip-UMU} follows by noting that $\E \U \M \U^\top = \frac{1}{n}\tr{\U \M \U^\top} \I$ and using a simple union bound over $(i, j)$.
The proof of \eqref{eq:lip-UM} is the same, with the only difference being $\E\Y_{i,j} = 0$.
\end{proof}

\begin{lemma}\label{lem:lip-conc-conditional}
    Let $\U \in \R^{n\times n}, \V \in \R^{d \times d}$ be Haar matrices, and $\S_1, \S_2^\top \in \R^{n \times d}$  be deterministic diagonal matrices. Define $\bar{\M} = \S_1 (\V^\top)_\m (\V^\top)_\m^\top \S_2, \bar{\Y} = \U \bar{\M}\U^\top, \Y = p\U \S_1 \S_2  \U^\top$, $C_M := \opn{\S_1}\opn{\S_2}$. Then, for any $\gamma > 0$ and $d > d_0(\gamma)$, we have with probability at least $1 - C/d^2$ (in $\U, \V)$ \begin{equation}\label{eq:lip-conc-applic}
        \Psub{\m}{\maxn{\bar{\Y}-\frac{1}{n}\tr{\Y}\I} > CC_M\frac{\log^3(d)}{\sqrt{d}} \bigg| \U, \V}\leq  C \frac{1}{d^\gamma}.
    \end{equation}
    
\end{lemma}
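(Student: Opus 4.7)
The starting point is the identity $\E_\m \Diag{\m} = p\I$, which, combined with orthogonality of $\V$ and with $\Diag{\m}^2 = \Diag{\m}$, yields $\E_\m \bar{\M} = \S_1 \V^\top(p\I)\V\S_2 = p\S_1\S_2$ and hence $\E_\m \bar{\Y} = \Y$. This motivates the decomposition
$$
\bar{\Y} - \frac{1}{n}\tr{\Y}\I \;=\; \underbrace{(\bar{\Y} - \Y)}_{T_1} \;+\; \underbrace{\left(\Y - \frac{1}{n}\tr{\Y}\I\right)}_{T_2},
$$
which isolates the $\m$-fluctuations (first term, for frozen $\U,\V$) from the $\U$-fluctuations (second term). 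The plan is to bound each piece in $\maxn{\cdot}$ by $CC_M \log^3(d)/\sqrt{d}$ on a high-probability event.

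The bound on $T_2$ is immediate from Lemma~\ref{lem:lip-conc}, equation~\eqref{eq:lip-UMU}, applied with $\M = p\S_1\S_2$ (whose operator norm is at most $C_M$). Choosing $\lambda \approx C_M \log(d)/\sqrt{d}$ produces $\maxn{T_2} \le C C_M \log(d)/\sqrt{d}$ with probability $\geq 1 - C/d^3$ over $\U$, which is easily absorbed into the outer $1 - C/d^2$ event.

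The bound on $T_1$ is the main work. Freezing $\U,\V$, I would expand the entries as
$$
(T_1)_{i,j} = \sum_{k=1}^d (m_k - p)\, a_k^{(i)} b_k^{(j)}, \quad a_k^{(i)} := (\U\S_1\V^\top)_{i,k}, \quad b_k^{(j)} := (\V\S_2\U^\top)_{k,j},
$$
a sum of independent centered Bernoullis, and apply Bernstein's inequality. The two Bernstein parameters $\sigma_{ij}^2 := \sum_k (a_k^{(i)}b_k^{(j)})^2$ and $M_{ij} := \max_k |a_k^{(i)}b_k^{(j)}|$ are controlled by pairing the deterministic $\ell_2$ bounds $\sum_k (a_k^{(i)})^2 \leq \opn{\S_1}^2$ and $\sum_k (b_k^{(j)})^2 \leq \opn{\S_2}^2$ with max-norm estimates for $\U\S_1\V^\top$ and $\V\S_2\U^\top$ coming from Lemma~\ref{lem:lip-conc}, equation~\eqref{eq:lip-UM} (applied conditionally on $\V$ and on $\U$, respectively, treating whichever matrix is fixed as the deterministic factor). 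On a $(\U,\V)$-event of probability $\geq 1 - C/d^2$ this gives $\sigma_{ij}^2, M_{ij} \leq C C_M^2 \log^2(d)/d$, after which Bernstein with threshold $t \approx C_M \log^{5/2}(d)/\sqrt{d}$ and a union bound over the $\Om{d^2}$ entries deliver $\Psub{\m}{\maxn{T_1} > t \mid \U,\V} \leq C/d^\gamma$ on that event.

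Combining the two estimates and using $\log^{5/2}(d) \leq \log^3(d)$ for $d$ large completes the argument. The main obstacle is the orchestration of three independent randomness sources ($\m,\U,\V$): one must first extract a good $(\U,\V)$-event on which both $\maxn{\U\S_1\V^\top}$ and $\maxn{\V\S_2\U^\top}$ are small and $T_2$ is well-controlled, and only then perform the Bernstein step over $\m$ conditionally. The clean split $T_1+T_2$ decouples the randomness enough to make this triple conditioning tractable while still matching the required $\log^3(d)$ exponent.
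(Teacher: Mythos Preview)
Your argument is correct but follows a genuinely different route from the paper. You split
\[
\bar{\Y} - \tfrac{1}{n}\tr{\Y}\I = (\bar{\Y}-\Y) + (\Y - \tfrac{1}{n}\tr{\Y}\I),
\]
so that the $\m$-randomness sits entirely in $T_1$ and $T_2$ depends on $\U$ alone; you then handle $T_1$ by an entrywise Bernstein inequality after freezing $(\U,\V)$ on a good event. The paper instead splits at $\tfrac{1}{n}\tr{\bar{\Y}}\I$ (which still depends on $\m$): for the piece $\bar{\Y}-\tfrac{1}{n}\tr{\bar{\Y}}\I$ it avoids any entrywise Bernoulli computation by observing that $\opn{\bar{\M}}\le C_M$ uniformly in $\m$, applying Lemma~\ref{lem:lip-conc} over $\U$ to get an \emph{unconditional} bound, and then using Markov's inequality to convert this into a conditional-on-$(\U,\V)$ statement; the remaining scalar $\tfrac{1}{n}(\tr{\bar{\Y}}-\tr{\Y})$ reduces, via diagonality of $\S_1,\S_2$, to Hoeffding on $\Diag{(\V^\top)_\m(\V^\top)_\m^\top}$ (Lemma~\ref{lem:D-concentration}). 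Your approach is more transparent about where the good $(\U,\V)$-event enters and even yields the slightly sharper threshold $\log^{5/2}(d)/\sqrt{d}$, while the paper's Markov trick is shorter and reuses Lemma~\ref{lem:lip-conc} as a black box without opening up the Bernoulli sum.
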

\begin{proof}
    In the first step we will show that with probability at least $1- C/d^2$ in $\U, \V$
    \begin{equation}
        \Psub{\m}{\maxn{\bar{\Y}-\frac{1}{n}\tr{\bar{\Y}}\I} > C_M\frac{\log^3(d)}{\sqrt{d}} \bigg| \U, \V}\leq  C \frac{1}{d^\gamma}.
    \end{equation}
    The key observation is that
    \begin{align*}
        \Psub{\U, \V}{\Psub{\m}{\maxn{\bar{\Y}-\frac{1}{n}\tr{\bar{\Y}}\I} > C_M\lambda \bigg| \U, \V}> \alpha } &\leq\frac{1}{\alpha} \E_{\U, \V} \Psub{\m}{\maxn{\bar{\Y}-\frac{1}{n}\tr{\bar{\Y}}\I} > C_M\lambda \bigg| \U, \V}\\
        &=  \frac{1}{\alpha} \P{\norm{\bar{\Y}-\frac{1}{n}\tr{\bar{\Y}}\I}_{max} > C_M\lambda} \\
        & \leq C \frac{1}{\alpha}d^2\expbr{-\frac{c}{C_M^2}dC_M^2\lambda^2}\\
        & =  C \frac{1}{\alpha}d^2\expbr{-cd\lambda^2},
    \end{align*}
where the first passage follows from Markov's inequality and the last inequality is due to Lemma \ref{lem:lip-conc} and $C_M = \opn{\S_1}\opn{\S_2}\geq \opn{\bar{\M}}$.

    By choosing $\alpha = \frac{1}{d^\gamma}$ and $\lambda = \frac{\log^3(d)}{\sqrt{d}}$, we obtain that, with probability at least $1 - C/d^2$ in $\U, \V$,

    \begin{equation}\label{eq:lip-conc-cond-aux1}    
    \Psub{\m}{\maxn{\bar{\Y}-\frac{1}{n}\tr{\bar{\Y}}\I} > C_M\frac{\log^3(d)}{\sqrt{d}} \bigg| \U, \V} \leq \frac{1}{d^\gamma}.
    \end{equation}

    Now, in the second step, we will show that, with probability at least $1 - C/d^2$ over $\V$,
    \begin{equation}\label{eq:lip-conc-cond-aux2}
         \Psub{\m}{\maxn{\frac{1}{n}\tr{\bar{\Y}}\I-\frac{1}{n}\tr{\Y}\I} > C_M\frac{\log^3(d)}{\sqrt{d}} \bigg| \U, \V} \leq \frac{1}{d^\gamma}.
    \end{equation}
   
    First, note that by Lemma \ref{lem:lip-conc} with probability at least $1 - C/d^2$ (in $\V$) we have that $\maxn{\V} \leq C \frac{\log(d)}{\sqrt{d}}$, so $\norm{\V_{:,i}}_4^4 \leq C\frac{\log^4(d)}{d}$.     By Lemma \ref{lem:D-concentration}, we have that 
    $$\P{\abs{\left((\V^\top)_\m (\V^\top)_\m^\top\right)_{i,i} - p} > \lambda } \leq C \expbr{-c \frac{d\lambda^2}{\log^4(d)}}.$$
    Choosing $\lambda = \frac{\log^3(d)}{\sqrt{d}}$ we obtain for large $d$
    $$\P{\abs{\left((\V^\top)_\m (\V^\top)_\m^\top\right)_{i,i} - p} > \frac{\log^3(d)}{\sqrt{d}} } \leq C\frac{1}{d^{\gamma +1}}.$$
    By a simple union bound we obtain
    $$\P{\opn{\Diag{(\V^\top)_\m (\V^\top)_\m^\top} - p\I} 
    > \frac{\log^3(d)}{\sqrt{d}} } \leq C\frac{1}{d^\gamma}.$$

    Note that since $\S_1, \S_2$ are diagonal we have 
    $$\tr{\S_1(\V^\top)_\m (\V^\top)_\m^\top \S_2 } = \tr{\S_1\Diag{(\V^\top)_\m (\V^\top)_\m^\top} \S_2 },$$
    so $\opn{\Diag{(\V^\top)_\m (\V^\top)_\m^\top} - p\I} 
    \leq \frac{\log^3(d)}{\sqrt{d}}$ implies
    \begin{equation*}
        \begin{split}
            \abs{\tr{\S_1(\V^\top)_\m (\V^\top)_\m^\top \S_2 }- p\tr{\S_1 \S_2}} &=\abs{\tr{\S_1\left(\Diag{(\V^\top)_\m (\V^\top)_\m^\top} -p\I \right) \S_2 }}\\
            &\leq n\opn{\S_1\left(\Diag{(\V^\top)_\m (\V^\top)_\m^\top} -p\I \right) \S_2 } \\
            & \leq C_M n\frac{\log^3(d)}{\sqrt{d}}.
        \end{split}
    \end{equation*}
    Thus, 
    $$\P{\abs{\frac{1}{n}\tr{\S_1(\V^\top)_\m (\V^\top)_\m^\top \S_2 } - p\frac{1}{n}\tr{\S_1 \S_2} }> C_M\frac{\log^3(d)}{\sqrt{d}}  } \leq C\frac{1}{d^\gamma}.$$
    Noting that, by definition, $\tr{\bar{\Y}} = \tr{\S_1(\V^\top)_\m (\V^\top)_\m^\top \S_2 }$ and $\tr{\Y} = p\tr{\S_1 \S_2} $, we obtain \eqref{eq:lip-conc-cond-aux2}.

    Finally, combining \eqref{eq:lip-conc-cond-aux1} and \eqref{eq:lip-conc-cond-aux2} finishes the proof.
\end{proof}

\begin{lemma}\label{lem:G-bound}
Let $\B \in \R^{n\times d}$ be an arbitrary matrix, $\G\in \R^{n\times d}$ with $\G_{ij} \sim \mathcal{N}(0,\sigma^2)$, and assume $n = O(d)$. Then, for any $\delta >0$, we have
$$\Psub{\G}{\min_{i,j} \abs{\B_{ij} + \G_{ij} }\leq \delta}\leq Cd^2 \frac{\delta}{\sigma}.$$
\end{lemma}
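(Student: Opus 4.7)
The lemma is a Gaussian anti-concentration (``small-ball'') estimate for the entries of the perturbed matrix $\B + \G$. Its purpose in the broader proof is clear from the structure of the gradient formula in Lemma \ref{lem:grad-formula}: the row-normalized masked matrix $\hat{\B}$ appearing there becomes ill-behaved if some entry of $\B$ is so close to zero that after masking the renormalization factor $1/\|\bm_i\|$ blows up. Adding Gaussian noise of variance $\sigma^2 \geq d^{-2\gamma_g}$ rules this out with high probability, and this lemma is exactly the quantitative form of that fact.

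The plan is a straightforward two-line argument: a pointwise bound on the Gaussian density, followed by a union bound over the $nd$ entries. First, fix $(i,j)$. Conditioned on $\B_{ij}$, the random variable $\B_{ij} + \G_{ij}$ is Gaussian with mean $\B_{ij}$ and variance $\sigma^2$, so its Lebesgue density is pointwise bounded by $1/(\sqrt{2\pi}\,\sigma)$. Integrating this bound over the interval $[-\delta, \delta]$ gives
\begin{equation*}
\Psub{\G}{|\B_{ij} + \G_{ij}| \leq \delta} \;\leq\; \int_{-\delta}^{\delta} \frac{1}{\sqrt{2\pi}\,\sigma}\,dt \;=\; \frac{2\delta}{\sqrt{2\pi}\,\sigma} \;\leq\; C\,\frac{\delta}{\sigma},
\end{equation*}
uniformly in the value of $\B_{ij}$, which is the key point: worst-case in $\B$ is attained at $\B_{ij}=0$, but the density bound does not see this.

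Second, apply a union bound over all $nd$ index pairs $(i,j)$:
\begin{equation*}
\Psub{\G}{\min_{i,j}|\B_{ij}+\G_{ij}| \leq \delta} \;\leq\; \sum_{i,j}\Psub{\G}{|\B_{ij}+\G_{ij}|\le\delta}\;\leq\; nd \cdot C\,\frac{\delta}{\sigma}.
\end{equation*}
Since $n = O(d)$, the right-hand side is bounded by $C d^{2}\,\delta/\sigma$, which is the claimed inequality.

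I do not expect any obstacle here; the estimate is routine and sharp (up to constants) already for $\B = \0$. The only point worth recording explicitly is that the per-coordinate bound is independent of $\B_{ij}$, so no information about $\B$ (beyond its dimensions) enters the final estimate -- which is precisely why the statement applies for an arbitrary deterministic, or indeed $\G$-independent, matrix $\B$.
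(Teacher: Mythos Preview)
Your proof is correct and essentially identical to the paper's: both bound the Gaussian density pointwise by $1/(\sqrt{2\pi}\,\sigma)$ to get the per-entry small-ball estimate $\Psub{\G}{|\B_{ij}+\G_{ij}|\le\delta}\le C\delta/\sigma$, then union-bound over the $nd=O(d^2)$ entries. The only cosmetic difference is that the paper first rescales to $\sigma=1$, while you keep $\sigma$ explicit throughout.
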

\begin{proof}
    By the scale invariance of the problem, we may assume that $\sigma = 1$. 
    Let $g$ be a standard normal variable and $b \in \R$.
    Then, 
    $$\P{\abs{b+g} \leq \delta} = \P{g \in [b-\delta, b+\delta]} \leq C\delta,$$
    where the second step holds since the pdf of $g$ is bounded by a universal constant.
    The result of the lemma now follows by a simple union bound over all $(i,j)$ (and using $n = O(d))$.
\end{proof}

\subsection{Concentration tools}\label{app:conc-tools}
In this section, we provide the matrix concentration results needed for the proof. We recall that we use the shorthand notation  $\Bm = \B_\m$ and $\Bh = \Bh_\m = \Dh \Bm_\m, \Dh_{i,i} = 1/\norm{\bm_i}$ \emph{only} for the matrix $\B$. Here, the masking $\B_\m$ was defined 
as $(\B_\m)_{i,j} = \B_{i,j}\m_j$.

\begin{lemma}\label{lem:lip-conc-application}
    Let $\B = \X + \bR = \U \S \V^\top + \bR$, $\A = \X^\top (\X\X^\top+\alpha \I)^{-1} + \Om{\bR} +\Om{C_X^7\frac{\log^{10} (d)}{\sqrt{d}}}$, where $\U,\V$ are Haar matrices, $\opn{\bR} = o(1)$, $\S$ is a diagonal matrix s.t.\ $\opn{\S} \leq C_X, \frac{1}{n}\tr{\S\S^\top} = 1$, and $\alpha >0$ fixed. 
    Then, for any $\gamma >0, d > d_0(\gamma)$, with probability at least $1 - 
 C/d^2$ in $\U, \V$ and at least  $1 - 
 C/d^\gamma$ in $\m$, 
    \begin{enumerate}
        \item $\maxn{\B}  \leq C_X\frac{\log(d)}{\sqrt{d}}+ \Om{\opn{\bR}}$.
        
        \item 
        $\Diag {\left(\B\B^\top + \alpha \I\right)^{-2}\B\B^\top} = \frac{1}{n} \tr{\left(\B\B^\top + \alpha \I\right)^{-2}\B\B^\top}\I + \Om{\frac{\log(d)}{\sqrt{d}}}$.

        \item $\maxn{\A^\top \A - \frac{1}{n}\tr{\left(\X\X^\top + \alpha \I\right)^{-2}\X\X^\top} \I } \leq C_X^7\frac{\log^{10}(d)}{\sqrt{d}}+ \Om{\opn{\bR}}$.  
        
        \item $\frac{1}{p}\Bm \Bm^\top - \frac{1}{n}\tr{\B\B^\top}\I  = O_{max}(CC_X^2\frac{\log^3(d)}{\sqrt{d}})+ \Om{C_X\opn{\bR}}$. 

        \item $\Diag{\frac{1}{p}\Bm\A}=\frac{1}{n}\tr{\B\A} \I + \Om{C_X^8\frac{\log^{10}(d)}{\sqrt{d}}}+ \Om{C_X\bR}$.
        
        \item $\Diag{\frac{1}{p}\A^\top \A \Bm \Bm^\top}  = \frac{1}{n}\tr{\A^\top \A \B\B^\top} \I + \Om{ C_X^9\frac{\log^{10}(d)}{\sqrt{d}}}+ \Om{C_X^2\bR} $.
    \end{enumerate}
\end{lemma}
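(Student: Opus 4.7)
The plan is to exploit the decomposition $\B = \U\S\V^\top + \bR$ and write every matrix appearing in items 1--6 as a Haar conjugation of a structured matrix, plus a residual controlled by $\opn{\bR}=o(1)$. Concentration then reduces to Lemma \ref{lem:lip-conc} or Lemma \ref{lem:lip-conc-conditional}, applied with operator-norm parameters chosen so as to produce the correct power of $C_X$ on the right-hand side.

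For items 1--3 the Bernoulli mask plays no role and only Lemma \ref{lem:lip-conc} is needed. Item 1 follows by applying \eqref{eq:lip-UM} to $\X = \U(\S\V^\top)$, whose operator norm is at most $C_X$, and using $\maxn{\bR}\leq \opn{\bR}$ for the residual. For item 2 one writes $(\B\B^\top+\alpha\I)^{-2}\B\B^\top = \U\,g(\S\S^\top)\U^\top + \Om{\opn{\bR}}$ with the bounded function $g(x)=x/(x+\alpha)^2$, then applies \eqref{eq:lip-UMU}. Item 3 expands $\A^\top\A$ using the hypothesized form of $\A$: its leading term is $(\X\X^\top+\alpha\I)^{-2}\X\X^\top = \U g(\S\S^\top)\U^\top$, which again concentrates by \eqref{eq:lip-UMU}; the cross terms produce the stated errors once one bounds $\opn{\X^\top(\X\X^\top+\alpha\I)^{-1}}$ by a constant depending on $\alpha$.

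Items 4--6 couple the Haar randomness of $\U,\V$ with the Bernoulli randomness of $\m$, so Lemma \ref{lem:lip-conc-conditional} is the key tool. The algebraic observation $\Bm\Bm^\top = \B\,\Diag{\m}\,\B^\top$, combined with the SVD of $\B$, gives $\U\,\S\,(\V^\top)_\m (\V^\top)_\m^\top\,\S^\top\,\U^\top$ as the leading term of item 4, of exactly the form covered by \eqref{eq:lip-conc-applic} with $\S_1=\S$, $\S_2=\S^\top$ and $C_M\leq C_X^2$. For item 5 one expands $\Bm\A = \B\,\Diag{\m}\,\A$ using the formula for $\A$; the leading term $\U\,\S(\V^\top)_\m(\V^\top)_\m^\top\S^\top(\S\S^\top+\alpha\I)^{-1}\,\U^\top$ fits Lemma \ref{lem:lip-conc-conditional} with $\S_2 = \S^\top(\S\S^\top+\alpha\I)^{-1}$. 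For item 6 one further left-multiplies by $\A^\top$, producing an extra $(\X\X^\top+\alpha\I)^{-1}\S$ factor in $\S_1$. In each case, cyclicity of the trace identifies $\frac{1}{n}\tr{\cdot}$ of the concentrated matrix with the scalar prescribed on the right-hand side (up to $\opn{\bR}$ errors).

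The main obstacle will be the careful accounting of the $C_X$ and $\opn{\bR}$ factors. Each appearance of $\bR$ in the expansion of $\B$ or $\A$ must be paired with an operator-norm bound involving $\opn{\S}\leq C_X$ or $\opn{(\X\X^\top+\alpha\I)^{-1}}\leq 1/\alpha$, and the resulting powers of $C_X$ must match the exponents $C_X^7$, $C_X^8$, $C_X^9$ appearing in items 3, 5 and 6. The $\log^{10}(d)/\sqrt{d}$ factor arises by combining the $\log^3(d)/\sqrt{d}$ error of Lemma \ref{lem:lip-conc-conditional} with logarithmic factors inherited from the max-norm bound of item 1 once it propagates into the later products. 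The overall probability bound follows from a union bound over the finitely many applications of Lemmas \ref{lem:lip-conc} and \ref{lem:lip-conc-conditional}, each of which already contributes $C/d^2$ in $(\U,\V)$ and $C/d^\gamma$ in $\m$.
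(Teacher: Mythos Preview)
Your proposal is correct and follows essentially the same route as the paper: items 1--3 via Lemma \ref{lem:lip-conc} applied to $\U g(\S\S^\top)\U^\top$-type expressions, and items 4--6 via Lemma \ref{lem:lip-conc-conditional} after extracting the $\U\S_1(\V^\top)_\m(\V^\top)_\m^\top\S_2\U^\top$ leading term and peeling off $\Om{\bR}$-cross terms by continuity. One small correction to your last paragraph: the $\log^{10}(d)/\sqrt{d}$ factors in items 3, 5 and 6 are not generated by compounding the $\log^3(d)$ of Lemma \ref{lem:lip-conc-conditional} with the $\log(d)$ of item 1; they are inherited directly from the hypothesized error $\Om{C_X^7\frac{\log^{10}(d)}{\sqrt{d}}}$ in $\A$, which picks up one extra $C_X$ (resp.\ $C_X^2$) when multiplied by $\Bm$ (resp.\ $\Bm\Bm^\top$) of operator norm $\leq CC_X$.
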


\begin{proof}
    The $\Om{\bR},\Om{\opn{\bR}}$ terms are always extracted by using that the LHS is a continuous function in $\bR$ (w.r.t. $\opn{\cdot}$). We carry this out explicitly for the first item and skip the details for the other items.
\begin{enumerate}
    \item By Lemma \ref{lem:lip-conc}, with probability at least $1-C/d^\gamma$, $$\maxn{\B} = \maxn{\U\S\V^\top + \bR} \leq \maxn{\U\S\V^\top} + \maxn{\bR} \leq C_X\frac{\log(d)}{\sqrt{d}} + \opn{\bR} = C_X\frac{\log(d)}{\sqrt{d}} +\Om{\opn{\bR}}, $$
    where we have used \eqref{eq:lip-UM} with $\lambda = C_X\frac{\log(d)}{\sqrt{d}}$ in the third step. 
    
    \item  This follows from \eqref{eq:lip-UMU} with $\lambda = \frac{\log(d)}{\sqrt{d}}$ by noting that  for any matrix $\B$ we have $\opn{\left(\B\B^\top + \alpha \I\right)^{-2}\B\B^\top} \leq \frac{1}{\alpha}$.

    \item As in the previous item, we have $ \opn{\left(\X\X^\top + \alpha \I\right)^{-2}\X\X^\top} \leq \frac{1}{\alpha}$
    so the result follows again from \eqref{eq:lip-UMU} with $\lambda = \frac{\log(d)}{\sqrt{d}}$.

    \item 
    First note that by 1. in Lemma \ref{lem:linalg-aux} we have $\opn{\Bm} \leq \opn{\B} \leq CC_X$, where we have used the assumption $\Om{\opn{\bR}}=o(1)$.
    Thus, we have 
    $$\Bm \Bm^\top = \U \S \Vtm  \Vtmt \S \U^\top + \Om{C_X\bR},$$
    so the result follows from Lemma \ref{lem:lip-conc-conditional}.

    \item  Note that $\opn{\X^\top\left(\X\X^\top + \alpha \I\right)^{-2} }\leq \frac{1}{2\sqrt{\alpha}}$ and by Lemma \ref{lem:linalg-aux} we have $\opn{\Bm} \leq CC_X$.
    Thus, we have
    $$\Bm\A = \U \S \Vtm \Vtmt \tilde{\S} \U^\top+ \Om{C_X\bR},$$
    where $\tilde{\S}$ is a diagonal matrix,
    so the result follows from Lemma \ref{lem:lip-conc-conditional}.

    \item By using again that $\opn{\X^\top\left(\X\X^\top + \alpha \I\right)^{-2} }\leq \frac{1}{2\sqrt{\alpha}}$ and $\opn{\Bm} \leq CC_X$, we have
    $$\A^\top \A \Bm \Bm^\top = \U \tilde{\S}^2\S \Vtm \Vtmt \S \U^\top+ \Om{C_X^2\bR},$$  
    where $\tilde{\S}$ is a diagonal matrix,
    so the result follows from Lemma \ref{lem:lip-conc-conditional}.
\end{enumerate} 
\end{proof}

\begin{lemma}[Master concentration for $\Bh$]\label{lem:master-Bh}
     Consider a fixed $\B = \X + \bR$ with unit norm rows and $\opn{\X} \leq C_X$, $\norm{\X}_{max} \leq C_X \frac{\log(d)}{\sqrt{d}}$, $\opn{\bR} \leq C_R \frac{\plr}{\sqrt{d}}$.  Let $2C_X\frac{1}{\sqrt{p}}>C_b > (1+c)C_X\frac{1}{\sqrt{p}}>0$, for some small constant $c>0$. Let $F: \ball_{\sqrt{d}}(\0) \cup (\Omega \cap \ball_{C_b}(\0))\subset \mathcal{M}_{n \times d} \to \mathcal{M}_{\tilde{n} \times \tilde{d}} $, for arbitrary $\tilde n, \tilde d$. Assume that $\Omega$ is s.t. with probability at least $1-Cd^{-k_F/2 -1/2}$ in $\m$ we have for $\Dh^b_{i, i} = \min\{\frac{1}{\norm{\bm_i}}, \frac{C_b}{C_X}\} $  that $\Dh^b\Bm,\frac{1}{\sqrt{p}}\Bm \in \Omega$. Further assume that $F$ satisfies the following properties: 
    \begin{enumerate}
      \item $\opn{F(\M)} \leq C_F d^\frac{k_F}{2}$  for every $\M = \Bh$;\
      
      \item $\opn{F(\M)} \leq C_F $ for every $\M \in \Omega \cap \ball_{C_b}(\0)$;
      
    \item $F$ is Lipschitz with constant $ C_{F'}$ on $\Omega \cap \ball_{C_b}(\0)$ (w.r.t. $\opn{\cdot}$ on both spaces).
 
    \end{enumerate}
    Then, for large enough $d>d_0(C_F, C_{F'}, C_b, C_R)$,
        \begin{equation}\label{eq:master-Bh-main}
        \opn{\Em F(\Bh) - \Em \mathbbm{1}_{\{\Dh^b\Bm,\frac{1}{\sqrt{p}}\Bm \in \Omega \}}F(\frac{1}{\sqrt{p}}\Bm) }\leq CC_{F'} C_X^3 \frac{\log^3(d)}{\sqrt{d}},
    \end{equation}

    where crucially the RHS is independent of $C_R$.

\begin{proof}
Define $\mathbbm{1}_{\bar{\Omega}} := \mathbbm{1}_{\{\Dh^b\Bm, \frac{1}{\sqrt{p}} \Bm \in \Omega \}}$ and $\mathbbm{1}_{\bar{\Omega}^c}:=1 - \mathbbm{1}_{\bar{\Omega}}$.
We will actually show the slightly stronger statement 
\begin{equation}\label{eq:master-Bh-proof}
    \Em\opn{ F(\Bh) - \mathbbm{1}_{\bar{\Omega}}F\left(\frac{1}{\sqrt{p}}\Bm\right)} \leq CC_{F'} C_X^3 \frac{\log^3(d)}{\sqrt{d}},
\end{equation}
which immediately implies \eqref{eq:master-Bh-main}.
First, by a simple triangle estimate we have
$$ \Em\opn{ F(\Bh) - \mathbbm{1}_{\bar{\Omega}}F\left(\frac{1}{\sqrt{p}}\Bm\right)} \leq \Em\opn{\mathbbm{1}_{\bar{\Omega}}\left( F(\Bh) - F\left(\frac{1}{\sqrt{p}}\Bm\right)\right)} +  \E_{\m}{\opn{\mathbbm{1}_{\bar{\Omega}^c}F(\Bh)}}.$$
  By our assumptions on $\Omega$ and assumption 1. we have 
   $$ \E_{\m}{\opn{\mathbbm{1}_{\bar{\Omega}^c}F(\Bh)}} \leq CC_Fd^{k_F/2} d^{-k_F/2 -1/2} = 2C_F\frac{1}{\sqrt{d}},$$
so w.l.o.g. we may assume that $\mathbbm{1}_{\bar{\Omega}} \equiv 1$. 

We now show that we can truncate $\Dh$ to $\Dh^b$  by applying the truncation function $\min\{x, \frac{C_b}{C_X}\}$ to each entry. Note that by definition of $C_b$ we have $\frac{1+c}{\sqrt{p}} \leq \frac{C_b}{C_X} \leq \frac{2}{\sqrt{p}}$. 
    By 2. in Lemma \ref{lem:linalg-aux}, we have that   $\opn{\Bh},\opn{\Dh^b\Bm} \leq \sqrt{n}$. Thus, by assumption, we obtain
    \begin{equation}\label{eq:bh-master-aux1}
        \Em{\opn{F(\Bh)-{F(\Dh^b \Bm)}}} \leq C_F d^{k_F/2} \P{\opn{\Dh} > \frac{C_b}{C_X}}.
    \end{equation}
    We also have the trivial bound
    
    $$\norm{\b_i}_4^4 \leq d \left(C_X \frac{\log(d)}{\sqrt{d}} +  C_R\frac{\plr}{\sqrt{d}}\right)^4 \leq \frac{1}{\sqrt{d}}.$$
    By a simple union bound, it follows from Lemma \ref{lem:D-concentration} that, for large enough $d$,
    \begin{align*}
         \P{\opn{\Dh} > \frac{C_b}{C_X}} &\leq Cd \cdot \expbr{-c\left(\frac{C_X}{C_b}-\sqrt{p}\right)^2\sqrt{d}}\\
         &\leq C \cdot \frac{1}{d^{1+k_F/2}},
    \end{align*}
    where we have used that $\frac{1}{\norm{\bm}} \geq \frac{C_b}{C_X}$ implies $\norm{\bm} \leq \frac{C_X}{C_b} \leq (1-c)\sqrt{p}$ (here, $c$ can indeed be treated as a universal constant by assumption).
    Together with \eqref{eq:bh-master-aux1}, we have
$$ \Em{\opn{F(\Bh)-{F(\Dh^b \Bm)}}}  \leq C_{F} \frac{1}{d}. $$
        
    We now need to bound
    $$\Em{\opn{F(\Dh^b\Bm ) - F\left(\frac{1}{\sqrt{p}}\Bm\right)}}.$$
    Since by 3. in the assumptions
    \begin{align*}
        \Em{\opn{F(\Dh^b\Bm ) - F\left(\frac{1}{\sqrt{p}}\Bm\right)}}&\leq C_{F'}\Em{\opn{\Dh^b\Bm  - \frac{1}{\sqrt{p}}\Bm}},
    \end{align*}
    we will only need to show concentration for $\Dh^b$.

    Recall that $\Dh^b_{i, i} =  \min\{\norm{\bm_i}_2^{-1}, C_b/C_X\}$. We now show that $\abs{\Dh^b_{i, i} - \frac{1}{\sqrt{p}}} > \lambda$ implies $\abs{\norm{\bm_i}_2-\sqrt{p}} >c\lambda$. To do so, we distinguish two cases. If $\norm{\bm_i} \geq \frac{C_X}{C_b}$, we have
     $$\abs{\Dh^b_{i, i} - \frac{1}{\sqrt{p}}} = \abs{\frac{\norm{\bm_i}-\sqrt{p}}{\norm{\bm_i}\sqrt{p}}}.$$
     Thus,  $\abs{\Dh^b_{i, i} - \frac{1}{\sqrt{p}}} > \lambda$ implies  $\abs{\norm{\bm_i}_2-\sqrt{p}} > \norm{\bm_i}\sqrt{p} \lambda \ge \sqrt{p}\frac{C_X}{C_b} \lambda = c \lambda $. 
     
     Next, if $\norm{\bm_i} \leq \frac{C_X}{C_b} < \sqrt{p}$, then $\abs{\Dh^b_{i, i} - \frac{1}{\sqrt{p}}} = \frac{C_b}{C_X}-\frac{1}{\sqrt{p}} $
     so necessarily $\lambda \leq \frac{C_b}{C_X}-\frac{1}{\sqrt{p}}$. But then $$\abs{\norm{\bm}_2-\sqrt{p}} \geq \abs{\frac{C_X}{C_b}-\sqrt{p}} \geq c \lambda,$$ where the last step is just the previous case for $\norm{\bm_i} = \frac{C_X}{C_b}$.
     
     This completes the proof that $\abs{\Dh^b_{i, i} - \frac{1}{\sqrt{p}}} > \lambda$ implies $\abs{\norm{\bm}_2-\sqrt{p}} >c\lambda$.
   Now, by Lemma \ref{lem:l4-bound} we have $\norm{\b_i}_4^2 \leq CC_X^2\frac{\log(d)}{\sqrt{d}}$. 
   So, we can use Lemma \ref{lem:D-concentration} and a union bound to obtain
    \begin{equation*}
        \P{\opn{\Dh^b - \frac{1}{\sqrt{p}}\I}>\lambda} \leq  Cd \cdot \expbr{-c \frac{d\lambda^2}{C_X^4\log(d)^2}}.
    \end{equation*}
    For $\lambda=C_X^2\frac{\log^3(d)}{\sqrt{d}}$ and large enough $d$, we can bound the RHS by $d^{-2}$.
    By 1. in Lemma \ref{lem:linalg-aux}, we have that, for large $d$, $\opn{\Bm} \leq \opn{\B} \leq C_X + C_R\frac{\log^3(d)}{\sqrt{d}} \leq 2C_X$. Note also that $\Dh^b\Bm  - \frac{1}{\sqrt{p}}\Bm = \left(\Dh^b - \frac{1}{\sqrt{p}}\I\right)\Bm$. Hence,
$$\P{\opn{\Dh^b\Bm  - \frac{1}{\sqrt{p}}\Bm}>2C_X^3\frac{\log^3(d)}{\sqrt{d}}} \leq \P{\opn{\Dh^b - \frac{1}{\sqrt{p}}\I}>C_X^2\frac{\log^3(d)}{\sqrt{d}}} \leq \frac{1}{d^2}.$$
    Thus, by fixing $\lambda = 2C_X^3\frac{\log^3(d)}{\sqrt{d}}$ and using that $\Dh^b$ is bounded, we conclude
    \begin{align*}
       \Em{\opn{\Dh^b\Bm  - \frac{1}{\sqrt{p}}\Bm}} &\leq \lambda\P{\opn{\Dh^b\Bm  - \frac{1}{\sqrt{p}}\Bm}\leq\lambda}  +\max_\m\opn{\Dh^b\Bm  - \frac{1}{\sqrt{p}}\Bm}\P{\opn{\Dh^b\Bm  - \frac{1}{\sqrt{p}}\Bm}>\lambda}\\
       &\leq \lambda + CC_X\P{\opn{\Dh^b\Bm  - \frac{1}{\sqrt{p}}\Bm}>\lambda}\\
       &\leq CC_X^3 \frac{\log^3(d)}{\sqrt{d}} + CC_X \frac{1}{d^2} \\
       &\leq CC_X^3 \frac{\log^3(d)}{\sqrt{d}}.
    \end{align*}  
\end{proof}
\end{lemma}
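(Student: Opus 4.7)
The comparison $\Em F(\Bh) \approx \Em \mathbbm{1}_{\bar\Omega} F(\frac{1}{\sqrt{p}}\Bm)$ should reduce, via Lipschitz continuity of $F$, to the operator-norm comparison $\Bh \approx \frac{1}{\sqrt{p}}\Bm$, i.e.\ to concentration of the row-normalizers $\Dh_{ii} = 1/\|\bm_i\|$ around $1/\sqrt{p}$. Two obstructions complicate the naive execution: (i) $\Dh$ is not uniformly bounded, so a Lipschitz estimate on the compact set $\Omega \cap \ball_{C_b}(\0)$ cannot be applied directly to $\Bh$; and (ii) the target bound must be independent of $C_R$, whereas $\opn{\Bm} \le \opn{\B}$ naively inflates to $C_X + C_R \plr/\sqrt{d}$, which would pollute the final constant if used carelessly.

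My plan has four steps. \emph{Step 1 (truncation).} Introduce $\Dh^b$ with entries $\min\{\Dh_{ii}, C_b/C_X\}$ and replace $\Bh = \Dh\Bm$ by $\Dh^b \Bm$. Since $C_b/C_X > 1/\sqrt{p}$, the event $\Dh \neq \Dh^b$ requires some $\|\bm_i\|$ to deviate by a positive constant from its mean $\sqrt{p}$; Lemma \ref{lem:D-concentration} with the crude $\ell_4$-bound $\|\b_i\|_4^4 \le 1/\sqrt{d}$ yields probability $\le \exp(-c\sqrt{d})$ per row, and a union bound combined with the polynomial growth $\opn{F(\Bh)} \le C_F d^{k_F/2}$ absorbs this into an $O(1/\sqrt{d})$ error in expectation. \emph{Step 2 (restrict to $\Omega$).} By hypothesis the event $\bar\Omega = \{\Dh^b \Bm,\ \tfrac{1}{\sqrt{p}}\Bm \in \Omega\}$ has probability $\ge 1 - C d^{-k_F/2-1/2}$, and again the polynomial bound on $F$ makes its complement contribute only $O(1/\sqrt{d})$. \emph{Step 3 (Lipschitz).} On $\bar\Omega$ the map $F$ is $C_{F'}$-Lipschitz, and
\[ \opn{F(\Dh^b\Bm) - F(\tfrac{1}{\sqrt{p}}\Bm)} \le C_{F'}\, \opn{\Dh^b - \tfrac{1}{\sqrt{p}}\I}\, \opn{\Bm}, \qquad \opn{\Bm} \le \opn{\B} \le 2C_X, \]
by Lemma \ref{lem:linalg-aux}, so that no $C_R$ factor appears. \emph{Step 4 (concentration of the truncated normalizer).} A brief case split on whether $\|\bm_i\| \gtrless C_X/C_b$ shows that $|\Dh^b_{ii} - 1/\sqrt{p}| > \lambda$ forces $|\|\bm_i\| - \sqrt{p}| > c\lambda$. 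Using Lemma \ref{lem:l4-bound} to obtain the sharper $\|\b_i\|_4^4 \le C C_X^4 \log(d)/\sqrt{d}$ and applying Lemma \ref{lem:D-concentration} plus a union bound yields $\opn{\Dh^b - (1/\sqrt{p})\I} \le C C_X^2 \log^3(d)/\sqrt{d}$ outside an event of probability $d^{-2}$. Multiplying through recovers the target rate $C C_{F'} C_X^3 \log^3(d)/\sqrt{d}$.

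The main obstacle is the bookkeeping of polynomial growth against tail probabilities in Steps 1 and 2: the hypotheses $\opn{F(\Bh)} \le C_F d^{k_F/2}$, $\Psu(\Omega^c) \le C d^{-k_F/2-1/2}$, and the $\exp(-c\sqrt{d})$ truncation tail are matched so that each bad-event contribution is $O(1/\sqrt{d})$; any weakening of these exponents would introduce a factor $d^{\epsilon}$ that cannot be absorbed into the target $\log^3(d)/\sqrt{d}$. A secondary subtlety — preserving $C_R$-independence on the RHS — is resolved precisely by the truncation in Step 1, since the Lipschitz step then only needs $\opn{\B}$ bounded at leading order, not tightly controlled in terms of the remainder $\bR$.
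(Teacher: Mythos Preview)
Your proposal is correct and follows essentially the same approach as the paper's proof: the same four ingredients (truncation of $\Dh$ to $\Dh^b$ via the crude $\ell_4$ bound and exponential tail, restriction to $\bar\Omega$ using the polynomial growth hypothesis against the $d^{-k_F/2-1/2}$ tail, the Lipschitz estimate on $\Omega\cap\ball_{C_b}(\0)$, and the concentration of $\Dh^b$ around $(1/\sqrt{p})\I$ via the case split and Lemma~\ref{lem:D-concentration}) appear in both, with only an inconsequential reordering of the first two. Your observation that $C_R$-independence is secured precisely by bounding $\opn{\Bm}\le\opn{\B}\le 2C_X$ (rather than tracking $\bR$ separately) is exactly the mechanism the paper uses.
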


\begin{lemma}[Explicit approximations]\label{lem:master-Bh-explicit}
Assume that $\Psub{\m}{\Bm \in \Omega}\geq 1 - C\frac{1}{d^2}$,  where 
\begin{align*}
\Omega &= \{\M| (\1\1^\top-\I)\circ(\M\M^\top) = \Y + \Z, \maxn{\Y} \leq CC_X^2 \frac{\log^3(d)}{\sqrt{d}},\opn{\Z}\leq CC_XC_R\frac{\plr}{\sqrt{d}} \}\\ &\subset \{\M |  \maxn{(\1\1^\top -\I)\circ\M \M^\top} \leq C (C_X^2 + C_X C_R)  \frac{\plr)}{\sqrt{d}}\}.  
\end{align*}
Then, with probability at least $1 - C\frac{1}{d^2}$ in $\U, \V$, the following functions satisfy the assumption of Lemma \ref{lem:master-Bh} (with $C_F, C_{F'}$ independent of $d$)

    \begin{enumerate}
        \item $F(\B) =  -\A^\top + \frac{1}{\sqrt{p}}\A^\top \A\B +  \frac{1}{p}\Diag{\B \A} \B   - \frac{1}{\sqrt{p}}\mathrm{Diag}(\A^\top \A\B\B^\top) \B$,
        \item $F(\B) = \sum_{\ell \geq 3} c_{\ell}^2(\1\1^\top - \I) \circ(\B\B^\top )^{\circ \ell}, \sum_{\ell \geq 3} c_{\ell}^2 < \infty $,
        \item $F(\B) =  \sum_{\ell \geq 3}
        \ell c_{\ell}^2 \left( (\A^\top \A)\circ (\1\1^\top - \I) \circ(\B\B^\top)^{\circ (\ell-1)} - \mathrm{Diag} \left(\A^\top \A(\1\1^\top - \I) \circ(\B\B^\top)^{\circ \ell}\right) \right)\B, \\\sum_{\ell \geq 3} c_{\ell}^2 < \infty$,
    \end{enumerate} 
    where for 3. we need to additionally assume that the conclusion of Lemma \ref{lem:A-concentration} holds. (Note that this is not an issue as the proof of Lemma \ref{lem:A-concentration} uses only 2. in the current lemma).
    
\end{lemma}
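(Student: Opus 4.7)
The plan is to verify, for each of the three functions $F$, the three properties required by Lemma \ref{lem:master-Bh}: a polynomial bound $\opn{F(\M)} \leq C_F d^{k_F/2}$ for $\M = \Bh$, a uniform $O(1)$ bound on $\Omega \cap \ball_{C_b}(\0)$, and a $d$-independent Lipschitz bound on the latter set. Throughout, the main inputs are: $\opn{\A} = O(1)$ from the assumed closed-form expression for $\A$ (and, for function 3, the conclusion of Lemma \ref{lem:A-concentration}); the structural identity $(\1\1^\top - \I)\circ(\M\M^\top) = \Y + \Z$ for $\M \in \Omega$, which yields $|(\M\M^\top)_{ij}| \leq \maxn{\Y} + \opn{\Z} \leq C(C_X^2 + C_X C_R)\plr/\sqrt{d}$ for $i \neq j$; and the Taylor-tail inequality of Lemma \ref{lem:cl-geometric}.

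For function 1 the verification is routine: each of the four summands is a short matrix product, bounded using $\opn{\Bh} \leq \sqrt{n}$ (from point 2 of Lemma \ref{lem:linalg-aux}) on $\{\Bh\}$ and $\opn{\M} \leq C_b$ on $\Omega \cap \ball_{C_b}$, together with $\opn{\Diag{\M\A}} \leq \opn{\M\A} \leq \opn{\M}\opn{\A}$ for the diagonal terms. The resulting polynomial bound gives $k_F = 2$. The Lipschitz estimate is obtained by expanding the Fr\'echet derivative in $\M$ as a sum of bilinear forms and bounding each term analogously.

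For functions 2 and 3, controlling the $O(1)$ and Lipschitz bounds on $\Omega \cap \ball_{C_b}$ hinges on the entrywise smallness of $\M\M^\top$ off the diagonal. Combining $|(\M\M^\top)_{ij}|\le C\plr/\sqrt{d}$ with point 5 of Lemma \ref{lem:linalg-aux} yields $\opn{(\1\1^\top-\I)\circ(\M\M^\top)^{\circ k}} \le n(C\plr/\sqrt{d})^k$. Summing against $c_\ell^2$ (for function 2) or $\ell c_\ell^2$ (for function 3) via Lemma \ref{lem:cl-geometric} at $\ell_0 = 3$ produces an $O((\plr)^3/\sqrt{d})$ bound, establishing the $O(1)$ estimate with $C_F = O(1)$. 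Differentiating the series entrywise adds an extra factor of $\ell$; a second application of Lemma \ref{lem:cl-geometric}, now with $\ell_0 = 2$, and the same smallness of off-diagonals on $\Omega$ produce the Lipschitz bound.

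I expect the main obstacle to be condition (i) for functions 2 and 3 on $\{\Bh\}$, where off-diagonals of $\Bh\Bh^\top$ may lie arbitrarily close to $\pm 1$. For function 2 the obstacle is easily bypassed: $\sum_{\ell\ge 3}c_\ell^2 x^\ell = \arcsin(x) - x$ is uniformly bounded on $[-1,1]$, so $\opn{F(\Bh)}\le Cn$ by point 5 of Lemma \ref{lem:linalg-aux}, giving $k_F = 2$. For function 3 the derivative series $\sum_{\ell\ge 3}\ell c_\ell^2 x^{\ell-1}$ genuinely diverges as $|x|\to 1$ (it equals $1/\sqrt{1-x^2}-1$), so the pointwise bound requires care: one can invoke Schur's inequality $\opn{(\A^\top\A)\circ(\Bh\Bh^\top)^{\circ(\ell-1)}}\le \opn{\A^\top\A}\cdot \max_i((\Bh\Bh^\top)^{\circ(\ell-1)})_{ii} = \opn{\A^\top\A}$ (valid since both matrices are psd and $(\Bh\Bh^\top)^{\circ(\ell-1)}$ has unit diagonal), together with an analogous bound on the $\Diag{\cdot}$ term, and then truncate the series at a polynomial level in $d$ to obtain a polynomial bound. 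The freedom in $k_F$ in Lemma \ref{lem:master-Bh} (matched by the bad-set probability $d^{-k_F/2-1/2}$) absorbs the resulting polynomial factor. The final bookkeeping ensures uniformity of the constants in $C_X$ and $C_R$, which is essential for the Gr\"onwall bootstrap in Lemma \ref{lem:B-specevo} to close.
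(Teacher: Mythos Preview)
Your plan for function 1 and function 2, and for conditions (ii)/(iii) of function 3, matches the paper's argument: multi-linear bounds plus off-diagonal smallness on $\Omega\cap\ball_{C_b}$, with Lemma \ref{lem:cl-geometric} handling the tail sum.

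There is, however, a real gap in your treatment of condition (i) for function 3. The Schur bound gives, uniformly in $\ell$, $\opn{(\A^\top\A)\circ(\Bh\Bh^\top)^{\circ(\ell-1)}}\le\opn{\A^\top\A}$, but since $\ell c_\ell^2=\Theta(\ell^{-1/2})$ the coefficient series $\sum_{\ell\ge 3}\ell c_\ell^2$ diverges; a term-wise bound does not even establish convergence. Truncation does not help: for masks $\m$ with $|\langle\bh_i,\bh_j\rangle|$ at (or arbitrarily close to) $1$, every retained term in the tail is still of order one, so no finite $L$ controls $\sum_{\ell>L}$. In fact, each of the two pieces of $F$ taken separately can diverge at such $\Bh$; only their combination is finite. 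Your Schur-based decomposition bounds the pieces separately and therefore cannot close.

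The paper's key step is to rewrite the $k$-th row of $F(\Bh)$ as
\[
(F(\Bh))_{k,:}=\sum_{\ell\ge 3}\ell c_\ell^2\sum_{j\ne k}\langle\a_k,\a_j\rangle\,\langle\bh_k,\bh_j\rangle^{\ell-1}\bigl(\bh_j-\langle\bh_k,\bh_j\rangle\bh_k\bigr),
\]
so the two pieces of $F$ merge into a single factor $\Jh_k'\bh_j=\bh_j-\langle\bh_k,\bh_j\rangle\bh_k$ with $\norm{\Jh_k'\bh_j}=\sqrt{1-\langle\bh_k,\bh_j\rangle^2}\le\sqrt{2}\,\sqrt{1-|\langle\bh_k,\bh_j\rangle|}$. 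This exactly cancels the $(1-|\langle\bh_k,\bh_j\rangle|)^{-1/2}$ singularity from Lemma \ref{lem:cl-geometric}, yielding $\norm{(F(\Bh))_{k,:}}\le Cn$ and hence $\opn{F(\Bh)}\le Cd^{3/2}$, i.e.\ $k_F=3$, which matches the probability budget $1-C/d^2$ on $\Omega$. Without this cancellation, condition (i) cannot be verified.
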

    \begin{proof}
        First note that, for fixed $C_b,C_X, p$, by scaling the constant in the definition of $\Omega$ by $\frac{1}{p}\frac{C_b^2}{C_X^2}$, we may w.l.o.g.\ assume that $\P{\D^b\Bm, \frac{1}{\sqrt{p}}\Bm \in \Omega}\geq 1 - C\frac{1}{d^2}$. 
        
        We now show the claim for each of the functions separately.

        \begin{enumerate}
        
        \item 
        The first function is the sum of of multi-linear functions and thus is polynomially bounded and Lipschitz on bounded sets. Since we have a dimension independent bound on $\A$, the bounds are dimension-independent as well.

        \item 
    
        We will show condition 1., 2. and 3. in Lemma \ref{lem:master-Bh} separately.
        For 1., note that since $\abs{(\Bh\Bh^\top)_{i,j}}\leq 1$ we have $\abs{F(\Bh)_{i, j}} \leq \sum_{\ell} c_{\ell}^2 = C<\infty$. 
        Thus from 5. in Lemma \ref{lem:linalg-aux}, we obtain
        $\opn{F(\Bh)} \leq C d $ as desired.
        
        Next we show condition 2. We have
         the following estimate for $\M \in \Omega$ and $\ell \geq 3$
        \begin{equation}\label{eq:11-I-bound}
        \begin{split}\ell\opn{(\1\1^\top - \I) \circ\left(\M \M^\top\right)^{\circ \ell}} &\leq n\ell \maxn{(\1\1^\top - \I) \circ\left(\M\M^\top\right)^{\circ \ell}}\\ 
        &\leq n\ell \left(C(C_M^2+ C_MC_R)\frac{\plr}{\sqrt{d}}\right)^\ell\\
        &\leq C(C_M^2+C_MC_R)^3\frac{(\plr)^3}{\sqrt{d}}\\
        &\leq C\frac{1}{d^\frac{1}{4}},
        \end{split}
        \end{equation}
        where the first step follows from 5.\ in Lemma \ref{lem:linalg-aux}.
        Since the above bound is independent of $\ell$ it also holds for $F$ since $\sum_{\ell\geq 3} c_{\ell}^2 < \infty$. This gives us the desired bound for 2.
    
        To show 3. we write $F(\M) = \sum_\ell c_\ell^2 F_2^\ell(F_1(\M))$ where $F_1(\M):= (\1\1^\top - \I) \circ(\M\M^\top )$ and $F_2^\ell(\bQ) = \bQ^{\circ \ell}$.  We will show that $F_1, F_2^\ell$ are Lipschitz.
        By 4. in Lemma \ref{lem:linalg-aux}, we have that $\opn{(\1\1^\top - \I) \circ \bQ} \leq 2 \opn {\bQ}$. Thus, for $\B_1, \B_2 \in \ball_{C_b}(\0)$,
        \begin{align*}
            \opn{F_1(\B_1) - F_1(\B_2)}&= \opn{(\1\1^\top - \I) \circ (\B_1 \B_1^\top -\B_2\B_2^\top)} \\
            &\leq 2 \opn{\B_1 \B_1^\top -\B_2\B_2^\top} \\
            &= 2 \opn{\B_1 ( \B_1^\top - \B_2^\top) + ( \B_1 -\B_2) \B_2^\top}\\
            &\leq 4 C_b \opn{\B_1- \B_2}.
        \end{align*}
        Hence,  $F_1$ is Lipschitz with constant $4C_b$.       
        
        For $F_2^\ell$, we will show that $\norm{DF_2^\ell(\bQ)} \leq C d^{-\frac{1}{4}}$ if $\bQ = (\1\1^\top - \I) \circ(\M\M^\top ), \M \in \Omega \cap \ball_{C_b}(\0)$. Note that $\maxn{\bQ} \leq C (C_M^2+C_MC_R)\frac{\plr}{\sqrt{d}}$ and $\opn{\bQ} \leq 2 C_b^2$.
    Furthermore,  $F_2^\ell$ is a symmetric $\ell$-linear function, so the derivative in the direction $\Z$ is given by $DF_2^\ell(\bQ)\Z = \ell \Z \circ \bQ^{\circ \ell-1}$.
        From 3. in Lemma \ref{lem:linalg-aux} we have
        \begin{equation}
        \begin{split}\label{eq:YlcircZ-bound}\opn{\ell \Z \circ \bQ^{\circ \ell-1}} &\leq \ell\sqrt{n} \opn{\Z}\left(C(C_M^2+C_MC_R)\frac{\plr}{\sqrt{d}}\right)^{\ell-1}\\ 
        &\leq  C (C_M^2+C_MC_R)^2\opn{\Z}\frac{(\plr)^2}{\sqrt{d}}\\
        & \leq C\opn{\Z} \frac{1}{d^\frac{1}{4}},
        \end{split}
        \end{equation}
        so $\norm{DF_2^\ell(F_1(\B))} \leq C d^{-\frac{1}{4}}$.
        Now, note that since 
        $$\{ \bQ | \maxn{(\1\1^\top -\I) \circ\bQ} \leq C(C_M^2+C_MC_R) \frac{\plr}{\sqrt{d}}, \Diag{\bQ} = 0  \}$$
        is convex, the line segment between any two points lies in the set, so a bound on the derivative implies that the $\bQ^{\circ \ell}$ is Lipschitz with the same constant.
        Multiplying the two Lipschitz constants of $F_1, F_2^\ell$ we obtain that their composition is Lipschitz with constant $4CC_b d^{-\frac{1}{4}} $.
        
        Since none of the bounds depends on $\ell$, this immediately implies that $F(\M) = \sum_\ell c_\ell^2 F_2^\ell(F_1(\M))$ is Lipschitz as well, up to an additional constant $\sum_{\ell \geq 3} c_{\ell}^2$.

        \item  
        Again we will first show that condition 1. holds for $\B = \Bh$.
        First note that we can write (as in Lemma \ref{lem:grad-concentration-part1} below)
        \begin{equation*}
             (F(\Bh))_{k, :} =  \sum_{\ell=3}^\infty \ell c_\ell^2 \sum_{j \neq k} \langle \a_k, \a_j \rangle \langle \bh_k, \bh_j \rangle^{\ell-1} \Jh_k' \bh_j,
        \end{equation*}
        where $\Jh_k' =\I - \bh_k \bh_k^\top$. 
        Observe that
        \begin{equation}\label{eq:Jk'-copy}
    \norm{\Jh_k'\bh_j}^2 =  \norm{\bh_j - \bh_k \langle \bh_k, \bh_j \rangle }^2 = 1 -\langle \bh_k, \bh_j \rangle^2  \leq 2 \left(1 -\abs{\langle \bh_k, \bh_j \rangle}\right).
\end{equation}
Thus, using Lemma \ref{lem:cl-geometric}, we have
\begin{equation*}
\begin{split}
    \norm{ \sum_{\ell=3}^\infty \ell c_\ell^2 \sum_{j \neq k} \langle \a_k, \a_j \rangle \langle \bh_k, \bh_j \rangle^{\ell-1} \Jh_k' \bh_j }
   &\leq C \maxn{(\1\1^\top - \I)\circ\A^\top \A}\sum_{\ell=3}^\infty  \ell c_\ell^2\sum_{j\neq k }\abs{\langle \bh_k, \bh_j \rangle^{\ell-1}}\norm{ \Jh_k' \bh_j }  \\
   &\leq C \maxn{(\1\1^\top - \I)\circ\A^\top \A}  \maxn{\Bh\Bh^\top}^2\sum_{j\neq k} \frac{1}{\sqrt{1 - \abs{ \langle \bh_k, \bh_j \rangle}}} \norm{ \Jh_k' \bh_j } \\
   &\leq Cn \maxn{(\1\1^\top - \I)\circ\A^\top \A}\maxn{\Bh\Bh^\top}^2\\
   &\leq Cd,
   \end{split}
\end{equation*}
where we have used \eqref{eq:Jk'-copy} in the third step .
Using 2. in Lemma \ref{lem:linalg-aux}, the above implies
$$\opn{F(\Bh)} \leq  CC_R d^\frac{3}{2}.$$ 
This shows that condition 1. in Lemma \ref{lem:master-Bh} holds for $\Bh$.

To show the rest of the conditions, we may now assume that $\B \in \Omega \cap \ball_{C_b}(\0)$.
Note that, if we show that $F$ is Lipschitz on this set, condition 2. holds since
\begin{equation}\label{eq:bh-app-aux1}
    \opn{F(\B)} = \opn{F(\B) - F(\0)} \leq C_F' \opn{\B},
\end{equation}
where we have used $F(\0)=\0$.
Thus we only need to show the third condition.

Similarly to the previous case in 3., we define 
$F(\M) = F_2(F_1(\M))\M$ where $F_1(\M):= (\1\1^\top - \I) \circ(\M\M^\top )$ and 
\begin{equation*}
    F_2(\bQ) = \sum_{\ell \geq 3}
    \ell c_{\ell}^2 \left( (\A^\top \A)\circ \bQ^{\circ (\ell-1)} - \mathrm{Diag} \left(\A^\top \A\bQ^{\circ \ell}\right)\right).
\end{equation*}
Note that it is enough to show that $F_2(F_1(\M))$ is Lipschitz, as then by \eqref{eq:bh-app-aux1} and $\M \in \ball_{C_b}(\0)$, $F$ is the product of two bounded Lipschitz functions, and thus Lipschitz.
As for the previous function, we have that $F_1$ is Lipschitz with constant $4C_b$.
We will now derive a uniform bound for all $\ell \geq 3$.
Define
$$F_2^\ell(\bQ):=\ell c_{\ell}^2 \left((\A^\top \A)\circ \bQ^{\circ (\ell-1)} - \mathrm{Diag} \left(\A^\top \A \bQ^{\circ \ell}\right) \right).$$
As in the previous case, since $\bQ^{\circ \ell } $ is a symmetric $\ell$-linear function, we have
$$DF_2^\ell(\bQ)\Z = \ell c_{\ell}^2 \left( (\ell -1)(\A^\top \A)\circ \bQ^{\circ (\ell-2) }\circ \Z - \ell\mathrm{Diag} \left(\A^\top \A\bQ^{\circ (\ell-1)}\circ \Z\right) \right).$$
Recall we assume the conclusion of Lemma \ref{lem:A-concentration} to hold, so
\begin{equation}\label{eq:bh-app-aux2}
    \A = \X^\top \left( \X\X^\top + \alpha\I \right)^{-1} + \Om{\bR} + \Om{C_X^7  \frac{\log^{10}(d)}{\sqrt{d}}} = O_{max}\left(\frac{\log(d)}{\sqrt{d}}\right) + \Om{C_X^7C_R  \frac{\plr}{\sqrt{d}}} ,
\end{equation}
where we have used Lemma \ref{lem:lip-conc} in the second step.
Similarly, we have
\begin{equation}\label{eq:bh-app-aux3}
    \A^\top \A = O_{max}\left(\frac{\log(d)}{\sqrt{d}}\right) + \Om{C_X^7C_R  \frac{\plr}{\sqrt{d}}} .
\end{equation}
Thus, by using that $\opn{\boldsymbol{R}\circ\S}\le\opn{\boldsymbol{R}}\opn{\S}$ for any square matrices $\bR, \S$ (see Theorem 1 in \cite{visick2000quantitative}), we obtain
$$\opn{(\A^\top \A)\circ \bQ^{\circ (\ell-2)} \circ \Z} \leq\opn{O_{max}\left(\frac{\log(d)}{\sqrt{d}}\right) \circ \bQ^{\circ (\ell-2)} \circ \Z} + \Om{C_X^7C_R  \frac{\plr}{\sqrt{d}}} \opn{\bQ^{\circ (\ell-2)} \circ \Z},$$
Using the same estimate as in \eqref{eq:YlcircZ-bound},  3. in Lemma \ref{lem:linalg-aux} and recalling that $\ell\ge 3$,
\begin{equation}
\begin{split}\label{eq:DF2-lip}
    &\opn{DF_2^\ell(\bQ)\Z} 
    \leq C \ell^2c_{\ell}^2 \left(\sqrt{n}\opn{\Z} \frac{\log(d)}{\sqrt{d}} \maxn{\bQ}^{\ell-2} +C_X^7C_R \sqrt{n}\opn{\Z} \frac{\plr}{\sqrt{d}}  \maxn{\bQ}^{\ell-2}+\sqrt{n}\opn{\Z} \maxn{\bQ}^{\ell-1} \right)\\
    &\leq C\ell^2  \sqrt{d}\left( \left(C_X^7C_R\frac{\plr}{\sqrt{d}}\right)\left(C(C_X^2+C_XC_R)\frac{\log^{\alpha_R}(d)}{\sqrt{d}}\right)^{\ell-2} + \left(C(C_X^2+C_XC_R)\frac{\log^{\alpha_R}(d)}{\sqrt{d}}\right)^{\ell-1}\right)\opn{\Z}\\
    &\leq C d^{- \frac{\ell-2}{4}}\opn{\Z}.
\end{split}
\end{equation}
Since $\{ \bQ | \maxn{\bQ} \leq C(C_M^2+C_MC_R) \frac{\log^3(d)}{\sqrt{d}}, \Diag{\bQ} = 0 \}$ is convex, the line segment between any two points lies in the set, so a bound on the derivative implies that the $F_2^\ell$ is Lipschitz with the same constant.
As $F_2 = \sum_{\ell \geq 3} F_2^\ell$, we have that $F_2$ is Lipschitz with constant $\sum_{\ell \geq 3}C d^{- \frac{\ell-2}{4}} \leq C d^{- \frac{1}{4}}$. Finally the composition $F(\B) = F_2(F_1(\B))$ is Lipschitz with constant $C C_b d^{- \frac{1}{4}}$, so condition 3. holds, which concludes the proof.
\end{enumerate}
\end{proof}

\subsection{Concentration of the gradient}\label{app:conc-grad}
\begin{lemma}[Error analysis of $\A$]\label{lem:A-concentration}
Assume that $\B = \X + \bR$ with unit norm rows, $\X=\U \S \V^\top, \U, \V $ Haar, $\opn{\X} \leq C_X$, $\norm{\X}_{max} \leq C_X \frac{\log(d)}{\sqrt{d}}$, $\opn{\bR} \leq C_R \frac{\plr}{\sqrt{d}}$. Then, for $d >d_0(C_R)$ with probability at least $1 - 
 C\frac{1}{d^2 }$ (in $\U, \V$)  we have
        \begin{align}\label{eq:A-approx}
        \begin{split}
            \A = \frac{1}{\sqrt{p}}\Em{\Bh}^\top \left(\Em{f\left(\Bh \Bh^\top\right)}\right)^{-1} &= \B^\top \left( \B\B^\top + \alpha\I \right)^{-1} +\Om{ C_X^7 \frac{\log^{10}(d)}{\sqrt{d}}}\\
            &= \X^\top \left( \X\X^\top + \alpha\I \right)^{-1} + \Om{\bR} + \Om{C_X^7  \frac{\log^{10}(d)}{\sqrt{d}}}.
            \end{split}
    \end{align}
\end{lemma}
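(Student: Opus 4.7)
The proof proceeds by separately approximating the two factors $\Em[\Bh]^\top$ and $\big(\Em f(\Bh\Bh^\top)\big)^{-1}$ via the Master concentration Lemma \ref{lem:master-Bh}, then multiplying the estimates and controlling the cross-term errors. The guiding idea is that $\Em F(\Bh)\approx \Em F\big(\tfrac{1}{\sqrt p}\Bm\big)$ for sufficiently regular $F$, which reduces the computation to explicit Bernoulli expectations: $\Em\big(\tfrac{1}{\sqrt p}\Bm\big)=\sqrt{p}\,\B$ and $\Em\big(\tfrac{1}{p}\Bm\Bm^\top\big)=\B\B^\top$, naturally producing the matrices $\sqrt p\,\B$ and $\B\B^\top+\alpha \I$ that appear on the right-hand side of \eqref{eq:A-approx}.

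The first step exploits that $\Bh$ has unit-norm rows almost surely (the exceptional event $\m=\0$ has probability $(1-p)^d$ and is absorbed into the error), so that by the Taylor expansion of $\arcsin$,
\begin{equation*}
f(\Bh\Bh^\top) \;=\; \Bh\Bh^\top + \alpha \I + \sum_{\ell\ge 3} c_\ell^2\,(\1\1^\top - \I)\circ(\Bh\Bh^\top)^{\circ \ell}.
\end{equation*}
I would then apply Lemma \ref{lem:master-Bh} to three functions: the identity $F(\B)=\B$, the quadratic $F(\B)=\B\B^\top$ (both Lipschitz on $\ball_{C_b}(\0)$ with constants of order $1$ and $C_X$ respectively), and the tail $F_2(\B)=\sum_{\ell\ge 3}c_\ell^2(\1\1^\top-\I)\circ(\B\B^\top)^{\circ \ell}$, whose hypotheses are verified in item 2 of Lemma \ref{lem:master-Bh-explicit}. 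The first two give $\Em\Bh=\sqrt p\,\B + \bE_1$ and $\Em\Bh\Bh^\top=\B\B^\top+\bE_1'$, while for $F_2$ combining the master-lemma error with the operator-norm estimate \eqref{eq:11-I-bound} bounds $\opn{\Em F_2(\Bh)}$ itself by a $\mathrm{polylog}(d)/\sqrt d$ term. Summing yields $\Em f(\Bh\Bh^\top)=\B\B^\top+\alpha\I+\bE_2$ with $\opn{\bE_1}$ and $\opn{\bE_2}$ of order $\mathrm{polylog}(d)/\sqrt d$ times a polynomial in $C_X$, and crucially with constants independent of $C_R$.

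Since $\alpha=f(1)-1>0$ and $\B\B^\top\succeq 0$, the inverse $(\B\B^\top+\alpha\I)^{-1}$ has operator norm at most $1/\alpha$ and is Lipschitz in its argument via the resolvent identity, so $\big(\Em f(\Bh\Bh^\top)\big)^{-1}=(\B\B^\top+\alpha\I)^{-1}+\Om{\alpha^{-2}\opn{\bE_2}}$. Substituting this into $\A=\tfrac{1}{\sqrt p}\Em[\Bh]^\top\big(\Em f(\Bh\Bh^\top)\big)^{-1}$, expanding the product, and using $\opn{\B}\le 2C_X$ to bound cross terms, yields the first equality $\A=\B^\top(\B\B^\top+\alpha\I)^{-1}+\Om{C_X^7\log^{10}(d)/\sqrt d}$. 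The second equality follows from $\B=\X+\bR$ by expanding $\B\B^\top=\X\X^\top+\Om{C_X\bR}$ (using $\opn{\bR}=o(1)$), applying the resolvent identity once more to produce $(\B\B^\top+\alpha\I)^{-1}=(\X\X^\top+\alpha\I)^{-1}+\Om{C_X\bR}$, and collecting the additional $\Om{\bR}$ contribution from the outer $\B^\top$ factor. The main technical obstacle is to keep every error independent of $C_R$ (which can grow arbitrarily along the outer gradient-descent trajectory): this is exactly what the master lemma delivers, provided one truncates to the high-probability event $\Omega$ from Lemma \ref{lem:master-Bh-explicit} and exploits $\alpha>0$ to keep the perturbation of the inverse quantitatively controlled.
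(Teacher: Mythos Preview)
Your proposal is correct and follows essentially the same approach as the paper: apply the master lemma (Lemma \ref{lem:master-Bh}) to the identity and to the tail $\sum_{\ell\ge 3}c_\ell^2(\1\1^\top-\I)\circ(\cdot)^{\circ\ell}$ via item 2 of Lemma \ref{lem:master-Bh-explicit}, reduce to the explicit Bernoulli expectations $\Em\tfrac{1}{p}\Bm\Bm^\top=\B\B^\top$, and then invert using the resolvent identity and the uniform lower bound $\alpha\I$. One small caveat: the estimate \eqref{eq:11-I-bound} you cite for the tail carries a prefactor $(C_X^2+C_XC_R)^3$, so it does not by itself give the $C_R$-independent $C_X^6\log^{10}(d)/\sqrt d$ bound you claim; the paper obtains this via the finer $\Y+\Z$ splitting on $\Omega$ (equations \eqref{eq:BBT3-bound}--\eqref{eq:BmBmT-entrywise-bound}), which isolates the $\bR$-contribution as a higher-order term.
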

\begin{proof}
By a straightforward application of Lemma \ref{lem:master-Bh}, we have
    \begin{align}\label{eq:fBh}
    \frac{1}{\sqrt{p}}\Em{\Bh} &= \Em{\frac{1}{p}\Bm} + \Om{C_X^3  \frac{\log^3(d)}{\sqrt{d}}} \nonumber \\
    &= \B  + \Om{C_X^3 \frac{\log^3(d)}{\sqrt{d}}}.
    \end{align}

    Next, we will estimate $\Em{f\left(\Bh \Bh^\top\right)}$. Recall that $f(x) = \sum_{\ell} c_{\ell}^2 x^\ell$. As $f(1) < \infty$, we can define $\alpha = \sum_{\ell \geq 3} c_{\ell}^2$. As $c_1=1$, we have
    $$f(\Bh\Bh^\top) := \Bh \Bh^\top +\sum_{\ell \geq 3} c_{\ell}^2 \left(\Bh\Bh^\top\right)^{\circ \ell}  = \Bh\Bh^\top +\alpha  \I + \sum_{\ell \geq 3} c_{\ell}^2 \left(\Bh\Bh^\top - \I\right)^{\circ \ell} .$$
    Let $\Omega = \{\M| (\1\1^\top-\I)\circ(\M\M^\top) = \Y + \Z, \maxn{\Y} \leq CC_X^2 \frac{\log^3(d)}{\sqrt{d}},\opn{\Z}\leq CC_XC_R\frac{\plr}{\sqrt{d}} \} \subset \{\M |  \maxn{(\1\1^\top -\I)\circ\M \M^\top} \leq C (C_X^2 + C_X C_R)  \frac{\plr)}     {\sqrt{d}}\}$, then by using 4. in lemma \ref{lem:lip-conc-application}  with $\gamma=2$ we have $\Psub{\m}{\Bm \in \Omega}\geq 1 - C\frac{1}{d^2}$ (with probability at least $1-C/d^2$ in $\U, \V$).
    By noting that $\Bh\Bh^\top$ satisfies the assumptions of Lemma \ref{lem:master-Bh} and using 2. in Lemma \ref{lem:master-Bh-explicit} we have
    \begin{equation}\label{eq:Aapprox-Bm}
        \Bh\Bh^\top+ \sum_{\ell \geq 3} c_{\ell}^2 \left(\Bh\Bh^\top\right)^{\circ \ell} = \frac{1}{p}\Bm\Bm^\top + \alpha \I + \mathbbm{1}_{\Bm \in \Omega} \sum_{\ell \geq 3} c_{\ell}^2 (\1\1^\top - \I )\circ\left(\frac{1}{p}\Bm\Bm^\top \right)^{\circ \ell} + \Om{C C_X^3 \frac{\log^3(d)}{\sqrt{d}}}.
    \end{equation}
    By linearity, we have $\Em{\frac{1}{p}\Bm\Bm^\top} = \B\B^\top $.
     We will now show that
    \begin{equation}\label{eq:BBT3-bound-lsum}
        \Em\sum_{\ell\geq 3}\mathbbm{1}_{\Bm \in \Omega}  c_{\ell}^2  (\1\1^\top - \I )\circ\left(\frac{1}{p}\Bm\Bm^\top \right)^{\circ \ell} = \Om{C_X^6\frac{\log^{10}(d)}{\sqrt{d}}}.
    \end{equation}
    For now, let $(\1\1^\top -\I)\circ\Bm\Bm^\top =\Y +\Z$, $\maxn{\Y} \leq CC_X^2 \frac{\log^3(d)}{\sqrt{d}},\opn{\Z}\leq CC_XC_R\frac{\plr}{\sqrt{d}}$, as in the definition of $\Omega$ above. By the definition of $\Omega$, we have that, for ${\Bm \in \Omega} $ and $\ell\geq 3$, 
    \begin{equation}\label{eq:BBT3-bound}
    \left(\frac{1}{p}(\1\1^\top -\I)\circ\Bm\Bm^\top \right)^{\circ \ell}  = \frac{1}{p^\ell}\Y^{\circ \ell} + \frac{1}{p^\ell}\ell\Y^{\circ (\ell-1)}\circ \left(\Z+ e^\ell\Om{\Z^2}\right).
    \end{equation}

   Thus, by 3. in Lemma \ref{lem:linalg-aux}, we have that for $\ell \geq 3$
   $$\frac{1}{p^\ell}\opn{\Y^{\circ \ell}} \leq C\frac{1}{p^\ell} \sqrt{d}C_X^2 \left(C_X^2 \frac{\log^3(d)}{\sqrt{d}}\right)^{\ell-1}  \leq CC_X^6\frac{\log^{10}(d)}{\sqrt{d}}$$
   and 
   $$\frac{1}{p^\ell}e^\ell\ell\opn{\Y^{\circ (\ell-1)}\circ \left(\Z + e^\ell\Om{\Z^2}\right)} \leq C\frac{1}{p^\ell}e^\ell\ell\sqrt{d}C_XC_R\frac{\plr}{\sqrt{d}}\left(C_X^2 \frac{\log^3(d)}{\sqrt{d}}\right)^{\ell-1}\leq d^{-3/4}.$$
    Thus, we can further estimate \eqref{eq:BBT3-bound} by
    \begin{equation}\label{eq:BmBmT-entrywise-bound}
        \E_{\m}\opn{\mathbbm{1}_{\Bm \in \Omega} (\1\1^\top -\I) \circ \left(\frac{1}{p}\Bm\Bm^\top \right)^{\circ \ell}}\leq CC_X^6\frac{\log^{10}(d)}{\sqrt{d}}.
    \end{equation}
    Since the bound is independent of $\ell$, this shows \eqref{eq:BBT3-bound-lsum}.

    Combining \eqref{eq:Aapprox-Bm} and \eqref{eq:BBT3-bound-lsum} we now have
    \begin{equation}\label{eq:fBmBm}
    \Em f(\Bh\Bh^\top)  = \B \B^\top + \alpha \I + \Om{ C_X^6 \frac{\log^{10}(d)}{\sqrt{d}}}.
    \end{equation}
    
    From \eqref{eq:fBmBm} it also immediately follows that 
    \begin{equation}\label{eq:fBmBm-inv}\left(\Em {f\left(\Bh\Bh^\top\right)}\right)^{-1}  = \left(\B \B^\top + \alpha \I\right)^{-1} + \Om{C_X^6 \frac{\log^{10}(d)}{\sqrt{d}}},
    \end{equation}
    since for any psd matrix $\X > \alpha \I$, the map from $(\mathcal{M}_{n, n},\opn{\cdot}) \to (\mathcal{M}_{n, n},\opn{\cdot}) $,
    $\bR \to (\X + \bR)^{-1}$ is locally continuously differentiable at $0$.
    Combining \eqref{eq:fBh} and \eqref{eq:fBmBm-inv} yields the first equality in \eqref{eq:A-approx}. To see the second equality in \eqref{eq:A-approx}, it suffices to 
use the fact that the function $\B \to \B^\top \left( \B\B^\top + \alpha\I \right)$ is Lipschitz on bounded sets w.r.t $\opn{\cdot}$.
\end{proof}

\begin{lemma}[Gradient concentration, Part 1]\label{lem:grad-concentration-part1}

Assume that $\B = \X + \bR$ with unit norm rows, $\X=\U \S \V^\top, \U, \V $ Haar, $\opn{\X} \leq C_X$, $\norm{\X}_{max} \leq  CC_X\frac{\log(d)}{\sqrt{d}}$, $\opn{\bR} \leq C_R \frac{\plr}{\sqrt{d}}, \opn{\A} \leq C$. Further assume that $\min_{i, j} \abs{\B_{i,j}} \geq \delta>d^{-\gamma_\delta}$ for some $\gamma_\delta>0$. Then, for $d >d_0(C_X, C_R, \gamma_\delta)$ with probability at least $1 - C\frac{1}{d^2}$ in $\U, \V$,
the gradient of \eqref{eq:GDmin-obj} w.r.t.\ $\B$ can be written as
    \begin{equation}\label{eq:gradconc-formula}
        \nabla_{\B} = \Em\nabla_{\Bh}^1 + \sum_{\ell=3}^\infty \ell c_{\ell}^2\E_\m\nabla_{\Bh}^{\ell} + \Om{C_X^7\frac{\log^2(d)}{\sqrt{d}}},
    \end{equation}
    where
    \begin{equation}
    \begin{split}\label{eq:grad1conc-formula}
        &\frac{1}{2}(\nabla^1_{\Bh})_{k,:}
        =  -\a_k + \frac{1}{p}\langle \a_k, \bh_k \rangle \bh_k  + \frac{1}{\sqrt{p}}\sum_{j } \langle \a_k, \a_j \rangle \Jh_k' \bh_j, \\
        &\frac{1}{2}\nabla^1_{\Bh} =-\A^\top + \frac{1}{\sqrt{p}}\A^\top \A\Bh +  \frac{1}{p}\mathrm{Diag}(\Bh \A) \Bh   - \frac{1}{\sqrt{p}}\mathrm{Diag}(\A^\top \A\Bh\Bh^\top) \Bh,
        \end{split}
    \end{equation}
    
    \begin{equation}
    \begin{split}\label{eq:grad2conc-formula}
        &\frac{1}{2}(\nabla^\ell_{\Bh})_{k,:} =   \frac{1}{\sqrt{p}}\sum_{j } \langle \a_k, \a_j \rangle \langle \bh_k, \bh_j \rangle^{\ell-1} \Jh_k' \bh_j, \\
        &\frac{1}{2}\nabla^\ell_{\Bh}=\frac{1}{\sqrt{p}}(\A^\top \A)\circ (\Bh\Bh^\top - \I)^{\circ (\ell-1)}\Bh - \frac{1}{\sqrt{p}}\mathrm{Diag}(\A^\top \A(\Bh\Bh^\top-\I)^{\circ \ell})\Bh,
        \end{split}
   \end{equation}
   and
   $$\Jh_k' :=\frac{1}{\sqrt{p}}\left( \I - \bh_k \bh_k^\top\right).$$
\end{lemma}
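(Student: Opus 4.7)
My plan is to start from the formula for $\nabla_\B$ in Lemma \ref{lem:grad-formula} and massage it into the claimed sum $\Em\nabla^1_{\Bh} + \sum_{\ell\ge 3}\ell c_\ell^2\Em\nabla^\ell_{\Bh}$ via three algebraic steps: (i) drop the $\ell=2$ term, since $c_2=0$ for $\arcsin$; (ii) enlarge each inner sum from $j\ne k$ to all $j$, which is exact because $\Jh_k\bh_k=0$; and (iii) replace $\Jh_k=\frac{1}{\norm{\bm_k}}(\I-\bh_k\bh_k^\top)$ by its Gaussian counterpart obtained by substituting $1/\norm{\bm_k}$ with its limit $1/\sqrt p$. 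The factorization $\Jh_k = \frac{\sqrt p}{\norm{\bm_k}}\cdot\frac{1}{\sqrt p}(\I-\bh_k\bh_k^\top)$ makes the error in (iii) a diagonal multiplicative correction that can be tracked row by row.

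For the masked first term $-\tfrac{2}{\sqrt p}\m\circ\Jh_k\a_k$, I would exploit that $\bh_k$ has support in $\m$, so $\m\circ\Jh_k\a_k=\tfrac{1}{\norm{\bm_k}}(\m\circ\a_k-\bh_k\langle\bh_k,\a_k\rangle)$. After step (iii) and the identity $\E_\m[\m\circ\a_k]=p\a_k$, the deterministic $-2\a_k$ piece of $\nabla^1_{\Bh}$ emerges, while the $\bh_k\langle\bh_k,\a_k\rangle$ piece yields the $\tfrac{2}{p}\langle\a_k,\bh_k\rangle\bh_k$ term. The remaining $j$-sums at $\ell=1$ and $\ell\ge 3$ reproduce respectively the $\A^\top\A\Bh$ contribution of $\nabla^1_{\Bh}$ and the full $\nabla^\ell_{\Bh}$. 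The Hadamard matrix expressions in \eqref{eq:grad1conc-formula}--\eqref{eq:grad2conc-formula} follow by stacking rows and writing $\langle\bh_k,\bh_j\rangle^{\ell-1}$ as the $(k,j)$-entry of $(\Bh\Bh^\top)^{\circ(\ell-1)}$, together with the identity $(\Bh\Bh^\top-\I)^{\circ(\ell-1)}=(\1\1^\top-\I)\circ(\Bh\Bh^\top)^{\circ(\ell-1)}$ for $\ell\ge 2$, which holds because $(\Bh\Bh^\top)_{k,k}=1$.

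The key analytic step is a uniform high-probability bound on $\max_k\abs{\norm{\bm_k}-\sqrt p}$. Combining Lemma \ref{lem:D-concentration} with the $\ell_4$ estimate $\norm{\b_k}_4^2\le C C_X^2\log(d)/\sqrt d$ from Lemma \ref{lem:l4-bound} and a union bound over $k\in\{1,\ldots,n\}$, one obtains $\max_k\abs{\norm{\bm_k}-\sqrt p}\le C C_X^2\log^{3/2}(d)/\sqrt d$ with probability at least $1-Cd^{-2}$, and the same order bound for $\max_k\abs{\sqrt p/\norm{\bm_k}-1}$ (since $\norm{\bm_k}$ is bounded below by concentration). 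The main obstacle is to lift this row-wise scalar bound to an operator-norm bound on the full gradient matrix, uniformly in $\ell$, so that the infinite tail $\sum_{\ell\ge 3}\ell c_\ell^2\Em\nabla^\ell_{\Bh}$ can be summed. The tail estimate combines: the off-diagonal decay $\maxn{(\1\1^\top-\I)\circ\Bh\Bh^\top}\le C C_X^2\log^3(d)/\sqrt d$ (item 4 of Lemma \ref{lem:lip-conc-application}); the uniform-in-$\ell$ sum $\sum_{\ell\ge\ell_0}\ell c_\ell^2 x^{\ell-1}\le C(\ell_0) x^{\ell_0-1}/\sqrt{1-x}$ from Lemma \ref{lem:cl-geometric}; and the max-to-operator-norm conversion $\opn{\M_1\circ\M_2}\le \sqrt n\,\opn{\M_1}\maxn{\M_2}$ from Lemma \ref{lem:linalg-aux}. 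Tracking all prefactors in $C_X$, $\log d$, and $\opn{\A}\le C$, while ensuring that the leading-order error does not depend on $C_R$, then yields the claimed bound $\Om{C_X^7\log^2(d)/\sqrt d}$. The assumption $\min_{i,j}\abs{\B_{ij}}\ge\delta>d^{-\gamma_\delta}$ enters only to guarantee that $\bh_k$, and hence the gradient formula \eqref{eq:grad-formula} derived via Lemma \ref{lem:mmse-to-matrix-obj}, are well-defined along the trajectory.
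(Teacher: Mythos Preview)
Your algebraic reduction is on the right track and essentially matches the paper's: replace $\Jh_k=\tfrac{1}{\norm{\bm_k}}(\I-\bh_k\bh_k^\top)$ by $\Jh_k'=\tfrac{1}{\sqrt p}(\I-\bh_k\bh_k^\top)$, use $\Jh_k\bh_k=0$ to extend the sum to all $j$, drop $\ell=2$ since $c_2=0$, and then read off the matrix formulas. The identity $\E_\m[\m\circ\a_k]=p\a_k$ you invoke is exactly what makes the $-\a_k$ piece of $\nabla^1_{\Bh}$ emerge after taking $\E_\m$.

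The gap is in the analytic step, and it is tied to your remark about $\delta$. The target $\nabla_\B$ is an \emph{expectation} over $\m$, not a high-probability statement in $\m$. Your concentration bound $\max_k|\norm{\bm_k}-\sqrt p|\le C C_X^2\log^{3/2}(d)/\sqrt d$ holds on a good $\m$-event of probability $1-Cd^{-2}$, but to bound $\E_\m[\cdot]$ you must also control the contribution of the complementary bad event. On that event the multiplicative error $\bigl|\tfrac{1}{\norm{\bm_k}}-\tfrac{1}{\sqrt p}\bigr|$ is not small; it is only a priori bounded by $O(1/\norm{\bm_k})\le O(1/\delta)$, using that for $\m\neq\0$ at least one coordinate survives so $\norm{\bm_k}\ge\min_j|\B_{k,j}|\ge\delta$. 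This is precisely how the paper uses $\delta$: it gives a crude per-row bound of order $d(1+1/\delta)$ on the bad event (after the $\sum_{\ell}\ell c_\ell^2$ tail sum via Lemma~\ref{lem:cl-geometric} and \eqref{eq:Jk'}), which is then killed by choosing the bad-event probability to be $d^{-\gamma}$ with $\gamma=3+\gamma_\delta$. Your fixed exponent $d^{-2}$ is not enough once $\gamma_\delta>1/2$, and more importantly your claim that $\delta$ ``enters only to guarantee that $\bh_k$ \ldots are well-defined'' misses this role entirely. Without the bad-event split and the $1/\delta$ crude bound, the argument does not close.

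Once you add that split, the good-event analysis in the paper is more delicate than just ``row-wise scalar times bounded matrix'': the factor $\opn{\Bh}$ is random in $\m$ and must itself be controlled in expectation (the paper does this via Lemma~\ref{lem:master-Bh} for the low-$\ell$ pieces $\M_1,\M_2$, and via a separate fourth-moment estimate for the high-$\ell$ tail $\M_3$). Your sketch gestures at the right ingredients (Lemmas~\ref{lem:cl-geometric}, \ref{lem:linalg-aux}, and item~4 of Lemma~\ref{lem:lip-conc-application}) but does not explain how to keep $\E_\m\opn{\Bh}^2$ bounded or why the leading error is independent of $C_R$.
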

\begin{proof}
Recall from \eqref{eq:grad-formula} that the gradient is given by
\begin{equation}\label{eq:grad-formula-prelim}
\begin{split}
    (\nabla_{\B})_{k, :} &= 
\Em  - 2 \frac{1}{\sqrt{p}} \m \circ\Jh_k \a_k + 2 \sum_{\ell=1}^\infty \ell c_\ell^2 \sum_{j \neq k} \langle \a_k, \a_j \rangle \langle \bh_k, \bh_j \rangle^{\ell-1} \Jh_k \bh_j, \\
\end{split}
\end{equation}
where $\Jh_k = \frac{1}{\norm{\bm_k}}\left(\I - \bh_k \bh_k^\top\right)$.

We will approximate $\Jh_k$ by $\Jh_k'$. This will make the gradient have the same functional form (for fixed $\m$) as in the Gaussian case. This follows from the fact that the gradient inside the expectation is the same as the gradient of the Gaussian objective  (86) in \cite{shevchenko2023fundamental} evaluated at $\B = \Bm$. 
We denote the new gradient with $\Jh_k$ replaced by $\Jh_k'$ as $\nabla_\B'$. 
We proceed by decomposing the error
 $\norm{(\nabla_\B)_{k, :} - (\nabla_\B')_{k, :}}$
into multiple parts and analysing them individually.

First, we need to decompose the error. Combining Lemma \ref{lem:D-concentration} and Lemma \ref{lem:l4-bound},  we have
\begin{equation}
\begin{split}\label{eq:norm-bm-bound}
       \Psub{\m}{\abs{\norm{\bm_k} - \sqrt{p}} > C_X^2\frac{\log^2(d)}{\sqrt{d}}}
       & =\Psub{\m}{\abs{\norm{\bm_k}^2 - p} > C_X^2(\norm{\bm_k}
       + \sqrt{p})^{-1}\frac{\log^2(d)}{\sqrt{d}}} \\ 
       &\leq \Psub{\m}{\abs{\norm{\bm_k}^2 - p} > C_X^2\frac{1}{1 + \sqrt{p}}\frac{\log^2(d)}{\sqrt{d}}}\\
       &\leq C \frac{1}{d^\gamma}.
\end{split}
\end{equation}

Denoting by $A$ the event that $\abs{\norm{\bm_k} - \sqrt{p}} > C_X^2\frac{\log^2(d)}{\sqrt{d}}$ jointly for all $k$, we have
\begin{equation}
\begin{split}\label{eq:grad-initial-approx}
    \Em(\nabla_\B)_{k, :} - (\nabla_\B')_{k, :} &=
    \E_\m \1_{A}((\nabla_\B)_{k, :} - (\nabla_\B')_{k, :}) +  \Em\1_{A^c} (\nabla_\B)_{k, :} - (\nabla_\B')_{k, :}  \\
    &=  \E_\m \1_{A}((\nabla_\B)_{k, :} - (\nabla_\B')_{k, :}) + \Em  - 2 \frac{1}{\sqrt{p}} \m \circ \epsilon_\m^k\Jh_k' \a_k + 2 \sum_{\ell=1}^\infty \ell c_\ell^2 \sum_{j \neq k} \langle \a_k, \a_j \rangle \langle \bh_k, \bh_j \rangle^{\ell-1} \epsilon_\m^k\Jh_k' \bh_j \\
    &=: (\nabla^1_{err})_{k, :} + (\nabla^2_{err})_{k, :},
    \end{split}
\end{equation}
where $\abs{\epsilon_\m^k} \leq C_X^2\frac{\log^2(d)}{\sqrt{d}}$ and $\nabla^1_{err}, \nabla^2_{err}$ are the matrices corresponding to the first and second expectation, respectively.
Using this notation, proving the lemma is equivalent to showing that 
\begin{equation}\label{eq:grad-conc-part1-err}
    \nabla^1_{err} + \nabla^2_{err} = \Om{C_X^7\frac{\log^2(d)}{\sqrt{d}}}.
\end{equation}

We will start with bounding $\opn{(\nabla^1_{err})_{k, :}}$.
By the definition of $(\nabla_\B)_{k, :} - (\nabla_\B')_{k, :})$, we have the following simple bound:
\begin{equation}\label{eq:grad-part1-rhs1}
     \E_\m\norm{ \1_{A}((\nabla_\B)_{k, :} - (\nabla_\B')_{k, :})} 
     \leq  C \frac{1}{d^\gamma} \max_\m\norm{ - 2 \frac{1}{\sqrt{p}} \m \circ(\Jh_k - \Jh_k') \a_k + 2 \sum_{l=1}^\infty \ell c_\ell^2 \sum_{j \neq k} \langle \a_k, \a_j \rangle \langle \bh_k, \bh_j \rangle^{\ell-1} (\Jh_k - \Jh_k') \bh_j}.
\end{equation}
Note that
\begin{equation}\label{eq:jhk-jhk'}
    \opn{\Jh_k - \Jh_k'} = \opn{\left(\frac{1}{\norm{\bm_k}} - \frac{1}{\sqrt{p}}\right)\left(\I - \bh_k \bh_k^\top\right)} \leq \left(\frac{\sqrt{p}}{\delta} + 1\right) \opn{\Jh_k'}.
\end{equation}
Furthermore, since by definition $\norm{\bh_j}= \norm{\bh_k} = 1$,
\begin{equation}\label{eq:Jk'}
    p\norm{\Jh_k'\bh_j}^2 =  \norm{\bh_j - \bh_k \langle \bh_k, \bh_j \rangle }^2 = 1 -\langle \bh_k, \bh_j \rangle^2  \leq 2 \left(1 -\abs{\langle \bh_k, \bh_j \rangle}\right).
\end{equation}
We clearly have
\begin{equation}\label{eq:aux1}
    \norm{\m \circ(\Jh_k - \Jh_k') \a_k} \leq  \left(\frac{\sqrt{p}}{\delta}+1\right) \opn{\Jh_k'}\norm{\a_k} \leq C(1 + \frac{1}{\delta}) ,
\end{equation}
where we have used \eqref{eq:jhk-jhk'} and the fact that masking reduces the norm.

By Lemma \ref{lem:cl-geometric}, we have
   \begin{equation}
   \begin{split}\label{eq:BBTl-powerseries}
   \norm{ \sum_{\ell=1}^\infty \ell c_\ell^2 \sum_{j\neq k } \langle \a_k, \a_j \rangle \langle \bh_k, \bh_j \rangle^{\ell-1} (\Jh_k -\Jh_k') \bh_j }
   &\leq C(1 + \frac{1}{\delta}) \maxn{(\1\1^\top - \I)\circ\A^\top \A} \sum_{\ell=1}^\infty \ell c_\ell^2  \sum_{j  \neq k}\abs{\langle \bh_k, \bh_j \rangle^{\ell-1}}\norm{ \Jh_k' \bh_j }  \\
   &\leq C(1 + \frac{1}{\delta}) \opn{\A}^2 \sum_{j \neq k} \frac{1}{\sqrt{1 - \abs{ \langle \bh_k, \bh_j \rangle}}} \norm{ \Jh_k' \bh_j } \\
   &\leq C(1 + \frac{1}{\delta}) d,
   \end{split}
   \end{equation}
   where the last step follows from \eqref{eq:Jk'}.
Now combining \eqref{eq:aux1} and \eqref{eq:BBTl-powerseries} we can bound the RHS of \eqref{eq:grad-part1-rhs1} by
\begin{equation}\label{eq:grad-est1}
    C\frac{1}{d^\gamma} \max_\m\norm{ - 2 \frac{1}{\sqrt{p}} \m \circ(\Jh_k - \Jh_k') \a_k + 2 \sum_{l=1}^\infty \ell c_\ell^2 \sum_{j \neq k} \langle \a_k, \a_j \rangle \langle \bh_k, \bh_j \rangle^{l-1} (\Jh_k - \Jh_k') \bh_j} \leq CC_X^2 (1 + \frac{1}{\delta}) d^{-(\gamma-1)}.
\end{equation}
From this and 2. in Lemma \ref{lem:linalg-aux}, it follows that
\begin{equation}
\begin{split}\label{eq:gradconc0}
  \opn{\nabla^1_{err}} &\leq C (1 + \frac{1}{\delta}) d^{-(\gamma-1)} \sqrt{d},
    \end{split}
\end{equation}
Now by choosing $\gamma = 3+\gamma_\delta$ the RHS is of of order than  $\Om{d^{-3/2}}$, which finishes bounding $\opn{\nabla^1_{err}}$.

For $\opn{\nabla^2_{err}}$ we need a more nuanced approach. We will break this term in three different parts, $\nabla^2_{err} = -\frac{2}{\sqrt{p}}\M_1+ 2\M_2+ 2\M_3$, in \eqref{eq:M1-definition}, \eqref{eq:M2-definition}, \eqref{eq:M3-definition} below. 
First we consider
\begin{equation}\label{eq:M1-definition}
    (\M_1)_{k, :} :=\m \circ \epsilon_\m^k\Jh_k' \a_k = \epsilon_\m^k \m \circ (\a_k - \bh_k \langle \bh_k , \a_k \rangle),
\end{equation}
so defining $(\D_\epsilon)_{k,k} := \epsilon_\m^k$ can write 
$$\M_1 = \D_\epsilon (\A_\m - \Diag{\Bh \A} \Bh_\m).$$
By 1. in Lemma \ref{lem:linalg-aux}, we can bound 
$$\opn{\M_1} \leq \opn{\D_\epsilon} (\opn{\A} + \opn{\Bh}^2 \opn{\A}) \leq C C_X^2\frac{\log^2(d)}{\sqrt{d}} \left(1+\opn{\Bh}^2\right).$$
By Lemma \ref{lem:master-Bh}, we have 
$$\Em \opn{\Bh}^2 \leq \frac{1}{p}\opn{\Bm}^2 + CC_X^3 \frac{\log^3(d)}{\sqrt{d}} \leq C C_X^2 + CC_X^3 \frac{\log^3(d)}{\sqrt{d}},$$
which gives us 
\begin{equation}\label{eq:gradconc1}
    \Em \opn{\M_1} \leq C C_X^7 \frac{\log^2(d)}{\sqrt{d}}.
\end{equation}
Next, we consider the term 
\begin{equation}\label{eq:M2-definition}
(\M_2)_{k, :} := \epsilon_\m^k \sum_j \langle \a_k, \a_j \rangle \Jh_k'\bh_j +  3c_3^2\langle \a_k, \a_j \rangle \langle \bh_k, \bh_j \rangle^2\Jh_k'\bh_j,
\end{equation}
which we can write as 
$$\M_2 = \D_\epsilon \left(\A^\top \A \Bh - \Diag{\A^\top \A \Bh \Bh^\top }\Bh + 3c_3^2(\A^\top \A)\circ (\Bh\Bh^\top - \I)^{\circ 2}\Bh -  3c_3^2\text{Diag}(\A^\top \A(\Bh\Bh^\top-\I)^{\circ 3})\Bh  \right).$$
One can verify that the RHS satisfies the assumption of Lemma \ref{lem:master-Bh}. Hence, the same reasoning as for $\M_1$ gives that
\begin{equation}\label{eq:gradconc2}
    \Em\opn{\M_2} \leq C C_X^7 \frac{\log^2(d)}{\sqrt{d}}.
\end{equation}

Lastly, define 
\begin{equation}\label{eq:M3-definition}
(\M_3)_{k,:} =  \epsilon_\m^k\sum_{\ell=5}^\infty \ell c_\ell^2 \sum_{j } \langle \a_k, \a_j \rangle \langle \bh_k, \bh_j \rangle^{\ell-1} \Jh_k' \bh_j  .
\end{equation}
Using Lemma \ref{lem:cl-geometric}, we have
   \begin{align}
   \begin{split}\label{eq:M3-1}
   \norm{\epsilon_\m^k \sum_{\ell=5}^\infty \ell c_\ell^2 \sum_{j\neq k } \langle \a_k, \a_j \rangle \langle \bh_k, \bh_j \rangle^{\ell-1} \Jh_k' \bh_j } 
   &\leq CC_X^2 \frac{\log^2(d)}{\sqrt{d}}\maxn{(\1\1^\top - \I)\circ\A^\top \A}\sum_{j\neq k}\sum_{\ell=5}^\infty \ell c_\ell^2  \abs{\langle \bh_k, \bh_j \rangle^{\ell-1}}\norm{ \Jh_k' \bh_j }  \\
   &\leq CC_X^2\frac{\log^2(d)}{\sqrt{d}} \maxn{(\1\1^\top - \I)\circ\A^\top \A}\sum_{j\neq k} \frac{\langle \bh_k, \bh_j \rangle^4}{\sqrt{1 - \abs{ \langle \bh_k, \bh_j \rangle}}} \norm{ \Jh_k' \bh_j } \\
   &\leq C C_X^2 \frac{\log^2(d)}{\sqrt{d}}d\maxn{(\1\1^\top - \I)\circ\A^\top \A}\maxn{(\1\1^\top - \I)\circ\Bh\Bh^\top }^4.
   \end{split}
   \end{align}
Note that
$$\maxn{(\1\1^\top - \I)\circ\Bh\Bh^\top }^4 =\maxn{(\1\1^\top - \I)\circ\Dh\Bm\Bm^\top \Dh}^4
\leq\min\{\opn{\Dh}^8\maxn{(\1\1^\top - \I) \circ\Bm\Bm^\top}^4, 1\}.
$$ 
Thus, by  using 4. in Lemma \ref{lem:lip-conc-application}, with probability at least $1- C\frac{1}{d^2}$ in $\U, \V$, we have
\begin{equation}\label{eq:M3-2}
    \begin{split}
        \Em \maxn{(\1\1^\top - \I)\circ\Bh\Bh^\top }^4 &\leq \P{\opn{\Dh}>\frac{2}{\sqrt{p}}} + \left( \frac{2}{\sqrt{p}}\right)^8\left(  C C_X^2\frac{\log^3(d)}{\sqrt{d}}  +\Om{C_X\opn{\bR}} \right)^4 \\ &\leq Cd^{-\gamma} + C \left(C_X^2\frac{\log^3(d)}{\sqrt{d}}  +C_XC_R \frac{\log^3(d)}{\sqrt{d}}\right)^4\\
        &\leq C \frac{1}{d^{\frac{3}{2}}}.
    \end{split}
\end{equation}
Next, note that under the assumptions of the current lemma we can apply both Lemma \ref{lem:A-concentration} and 3. in Lemma \ref{lem:lip-conc-application} to obtain
\begin{equation}\label{eq:M3-3}
    \maxn{(\1\1^\top - \I)\circ\A^\top \A} \leq C_X^7 \frac{\log^{10}(d)}{\sqrt{d}}  +C_R \frac{\plr}{\sqrt{d}}.
\end{equation}

Combining \eqref{eq:M3-1}, \eqref{eq:M3-2}, \eqref{eq:M3-3}, we obtain

$$\Em\norm{(\M_3)_{k,:}}^2 \leq  CC_X^4 \frac{\log(d)^4}{d} d^2 (C_X^7 \log^{10}(d) + C_R \plr)^2 \frac{1}{d} \frac{1}{d^{3}} \leq C \frac{1}{d^{\frac{5}{2}}}.$$
This now gives us
\begin{equation}\label{eq:gradconc3}
    \Em\opn{\M_3} \leq \Em \sqrt{\sum_k\opn{(\M_3)_{k,:}}^2  } \leq\sqrt{\sum_k\Em \opn{(\M_3)_{k,:}}^2  } \leq C\frac{1}{d^\frac{3}{4}},
\end{equation}
where we have used Jensen's inequality in the second step.
Finally combining \eqref{eq:gradconc0}, \eqref{eq:gradconc1}, \eqref{eq:gradconc2}, \eqref{eq:gradconc3} we can conclude.
\end{proof}

\begin{lemma}[Gradient concentration, Part 2]\label{lem:grad-concentration-part2}
Assume we have $\B = \X + \bR$ with unit norm rows, $\X=\U \S \V^\top, \U, \V $ Haar,  $\tr{\S\S^\top} = n,\opn{\X} \leq C_X$, $\norm{\X}_{max} \leq  CC_X\frac{\log(d)}{\sqrt{d}}$, $\opn{\bR} \leq C_R \frac{\plr}{\sqrt{d}}$. 
Further assume that $\min_{i, j} \abs{\B_{i,j}} \geq \delta>d^{-\gamma_\delta}$ for some $\gamma_\delta>0$. Then, for $d >d_0(C_X, C_R, \gamma_\delta)$ with probability at least $1 - C\frac{1}{d^2}$ in $\U, \V$
\begin{align}
    \frac{1}{2}\nabla_{\B} &=       - \alpha \left(\B\B^\top + \alpha \I\right)^{-2}\B + \alpha\frac{1}{n} \tr{\left(\B\B^\top + \alpha \I\right)^{-2}\B\B^\top} \B  + \Om{C_X^{10}\frac{\log^{10}(d)}{\sqrt{d}}} \label{eq:nablaB-approx-1}\\
    & = - \alpha \left(\X\X^\top + \alpha \I\right)^{-2}\X + \alpha\frac{1}{n} \tr{\left(\X\X^\top + \alpha \I\right)^{-2}\X\X^\top} \X  +\Om{\bR} + \Om{C_X^{10}\frac{\log^{10}(d)}{\sqrt{d}}}\label{eq:nablaB-approx-2},
\end{align}
where $\nabla_{\B}$ was defined in \eqref{eq:grad-formula}.
\end{lemma}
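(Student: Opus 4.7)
The plan is to start from the decomposition of $\nabla_\B$ provided by Lemma \ref{lem:grad-concentration-part1}, namely
$$
\nabla_{\B} = \Em\nabla_{\Bh}^1 + \sum_{\ell=3}^\infty \ell c_{\ell}^2\E_\m\nabla_{\Bh}^{\ell} + \Om{C_X^7 \tfrac{\log^2(d)}{\sqrt{d}}},
$$
and then push the expectation over $\m$ through each piece using Lemma \ref{lem:master-Bh}. The functions $F_1, F_3$ in items 1 and 3 of Lemma \ref{lem:master-Bh-explicit} are designed precisely so that $F_1(\Bh)=\tfrac{1}{2}\nabla_{\Bh}^1$ and $\sum_{\ell\ge 3}\ell c_\ell^2 F_3^\ell(\Bh) = \tfrac{1}{2}\sum_{\ell\ge 3}\ell c_\ell^2 \nabla_{\Bh}^\ell$, so applying the master concentration lemma replaces every occurrence of $\Bh$ by $\tfrac{1}{\sqrt{p}}\Bm$ at the cost of an additive $\Om{C_X^3\log^3(d)/\sqrt{d}}$ error.

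Next I would compute $\E_\m$ of the resulting expressions. Writing $F_1(\tfrac{1}{\sqrt{p}}\Bm) = -\A^\top + \A^\top\A\B + \tfrac{1}{p}\Diag{\B_\m\A}\B - \Diag{\A^\top\A\tfrac{1}{p}\Bm\Bm^\top}\B$ after one instance of $\tfrac{1}{\sqrt{p}}\Bm$ is averaged to $\B$, I use items 5 and 6 of Lemma \ref{lem:lip-conc-application} to turn the two diagonal terms into $\tfrac{1}{n}\tr{\B\A}\I\cdot\B$ and $\tfrac{1}{n}\tr{\A^\top\A\B\B^\top}\I\cdot\B$ respectively, picking up a $\Om{C_X^9\log^{10}(d)/\sqrt{d}}+\Om{C_X^2\bR}$ error. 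The high-order sum $\sum_{\ell\ge 3}$ is handled the same way, but the bound in \eqref{eq:BBT3-bound-lsum} (already obtained in the proof of Lemma \ref{lem:A-concentration}) shows that $(\1\1^\top-\I)\circ(\tfrac{1}{p}\Bm\Bm^\top)^{\circ(\ell-1)}$ is already $\Om{C_X^6\log^{10}(d)/\sqrt{d}}$ in operator norm, so after multiplication by $\A^\top\A\B$ and the diagonal subtraction this whole contribution is absorbed into $\Om{C_X^{10}\log^{10}(d)/\sqrt{d}}$.

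At this point I would substitute the closed-form $\A = \B^\top(\B\B^\top+\alpha\I)^{-1}+\Om{C_X^7\log^{10}(d)/\sqrt{d}}$ given by Lemma \ref{lem:A-concentration}. After this substitution the four remaining ``low-order'' terms combine algebraically: writing $\M:=(\B\B^\top+\alpha\I)^{-1}$ one has $\A^\top = \M\B$, $\A^\top\A\B = \M\B\B^\top\M\B$, $\tfrac{1}{n}\tr{\B\A}\I\cdot\B = \tfrac{1}{n}\tr{\B\B^\top\M}\B$, and $\tfrac{1}{n}\tr{\A^\top\A\B\B^\top}\I\cdot\B = \tfrac{1}{n}\tr{\M\B\B^\top\M\B\B^\top}\B$. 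Using $\M\B\B^\top=\I-\alpha\M$ repeatedly collapses the combination of these four quantities into $-\alpha\M^2\B + \alpha \tfrac{1}{n}\tr{\M^2\B\B^\top}\B$, which is exactly the RHS of \eqref{eq:nablaB-approx-1}. Finally, \eqref{eq:nablaB-approx-2} follows from \eqref{eq:nablaB-approx-1} because $\B\mapsto(\B\B^\top+\alpha\I)^{-2}\B$ and $\B\mapsto\tfrac{1}{n}\tr{(\B\B^\top+\alpha\I)^{-2}\B\B^\top}\B$ are Lipschitz on $\{\B:\opn{\B}\le 2C_X\}$ w.r.t.\ $\opn{\cdot}$, so replacing $\B$ by $\X$ costs an additional $\Om{\bR}$.

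The main obstacle I anticipate is bookkeeping: several error terms scale with different powers of $C_X$ and $\log(d)$, and one must verify that none of them ever depends on $C_R$ multiplicatively with a factor that would spoil the Gr\"onwall-type bootstrap in Lemma \ref{lem:B-specevo}. The bound in \eqref{eq:BBT3-bound-lsum} is a crucial ingredient here, as it guarantees that the high-order $\ell\ge 3$ contributions vanish at rate $1/\sqrt{d}$ with only a polynomial-in-$C_X$ prefactor, independent of $C_R$; this is the reason the overall error is $\Om{C_X^{10}\log^{10}(d)/\sqrt{d}}$ rather than involving $C_R$.
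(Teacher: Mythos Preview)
Your overall strategy matches the paper's: reduce via Lemma~\ref{lem:grad-concentration-part1}, push $\Bh\to\tfrac{1}{\sqrt p}\Bm$ through the master concentration lemma, replace the two diagonal terms by scalar multiples of $\B$ via items~5--6 of Lemma~\ref{lem:lip-conc-application}, substitute the closed form for $\A$, and collapse the algebra using $\M\B\B^\top=\I-\alpha\M$. The algebraic collapse and the passage from \eqref{eq:nablaB-approx-1} to \eqref{eq:nablaB-approx-2} by Lipschitzness are both correct.

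There is, however, a real gap in your treatment of the $\ell\ge 3$ contributions. You claim that \eqref{eq:BBT3-bound-lsum} gives $\opn{(\1\1^\top-\I)\circ(\tfrac1p\Bm\Bm^\top)^{\circ(\ell-1)}}=\Om{C_X^6\log^{10}(d)/\sqrt d}$, and then treat the Schur product with $\A^\top\A$ as a harmless bounded multiplication. But \eqref{eq:BBT3-bound-lsum} and \eqref{eq:BmBmT-entrywise-bound} are proved only for exponents $\ge 3$; for the leading gradient term at $\ell=3$ you need exponent $\ell-1=2$, and there the bare Hadamard square is \emph{not} small: with $\maxn{\Y}\lesssim C_X^2\log^3(d)/\sqrt d$ one only gets $\opn{\Y^{\circ 2}}\le n\maxn{\Y}^2=O(\log^6(d))$, which is $O(1)$ rather than $O(1/\sqrt d)$. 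The paper resolves this by \emph{not} bounding $(\ldots)^{\circ(\ell-1)}$ in isolation: it first decomposes $\A^\top\A=O_{\max}(\log(d)/\sqrt d)+\Om{C_X^7 C_R\,\plr/\sqrt d}$ (via Lemma~\ref{lem:A-concentration} and Lemma~\ref{lem:lip-conc}), and then the Schur factor $O_{\max}(\log(d)/\sqrt d)$ supplies exactly the missing $1/\sqrt d$ in the $\ell=3$ estimate (see \eqref{eq:gradpart2-aux1}--\eqref{eq:gradpart2-aux2}). In short, the $\A^\top\A$ is not a post-hoc ordinary multiplication but an essential Schur factor whose entrywise smallness you must use; without it the $\ell=3$ term does not fall below the required $1/\sqrt d$ threshold and the whole error budget (in particular the $C_R$-independence you correctly flag as critical) would be lost.
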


\begin{proof}
    By Lemma \ref{lem:grad-concentration-part1}, we may assume that, up to an error of order $\Om{C_X^7\frac{\log^2(d)}{\sqrt{d}}}$, the gradient is given by \eqref{eq:gradconc-formula}, \eqref{eq:grad1conc-formula} and \eqref{eq:grad2conc-formula}.

    We will start by analysing the first part of the gradient in \eqref{eq:grad1conc-formula} which we restate here for convenience:
    \begin{equation}\label{eq:grad1-formula-copy}
        \frac{1}{2}\nabla^1_{\Bh}
        = -\A^\top + \frac{1}{\sqrt{p}}\A^\top \A\Bh +  \frac{1}{p}\textrm{Diag}(\Bh \A) \Bh   - \frac{1}{\sqrt{p}}\mathrm{Diag}(\A^\top \A\Bh\Bh^\top) \Bh.
    \end{equation}
     By Lemma \eqref{lem:A-concentration},  we have with probability at least $1 - C \frac{1}{d^2}$ in $\U, \V$
            \begin{align*}
            \A &=  \B^\top \left( \B\B^\top + \alpha\I \right)^{-1} +\Om{ C_X^7 \frac{\log^{10}(d)}{\sqrt{d}}} \\
            &= \X^\top \left( \X\X^\top + \alpha\I \right)^{-1} + \Om{ C_X^7 \frac{\log^{10}(d)}{\sqrt{d}}} + \Om{\bR},
        \end{align*}
    where the expectation over $\m$ has not been taken yet.
    Using 1. in Lemma \ref{lem:master-Bh-explicit}, we see that the RHS  in \eqref{eq:grad1-formula-copy} satisfies the assumptions of Lemma \ref{lem:master-Bh} (noting that $\Omega$ is the entire space for 1.), so we have 
    $$
   \Em \frac{1}{2}\nabla^1_{\Bh}
        = \Em -\A^\top + \frac{1}{p}\A^\top \A\Bm 
        +  \frac{1}{p}\textrm{Diag}(\frac{1}{p}\Bm \A) \Bm   
        - \frac{1}{p}\mathrm{Diag}(\A^\top \A\frac{1}{p}\Bm\Bm^\top) \Bm  + \Om{C_X^{10}\frac{\log^{10}(d)}{\sqrt{d}}}.
    $$

    We now estimate $\Em\frac{1}{2}\nabla^1_{\Bh}$. 
    We clearly have $$\Em -\A^\top + \frac{1}{p}\A^\top \A\Bm = -\A^\top + \A^\top \A\B.$$

    For the third term we have by 5. in Lemma \ref{lem:lip-conc-application} that, with probability at least $1-C\frac{1}{d^2}$ in $\m $, 
    $$\textrm{Diag}(\frac{1}{p}\Bm \A) = \beta \I + \Om{C_X^8\frac{\log^{10}(d)}{\sqrt{d}}} + \Om{C_X\bR}, $$
    where $\beta = \frac{1}{n}\tr{\B\A}$, which implies that
    $$\Em \frac{1}{p}\textrm{Diag}(\frac{1}{p}\Bm \A) \Bm = \beta \B + \Om{C_X^9\frac{\log^{10}(d)}{\sqrt{d}}}+ \Om{C_X^2\bR}.$$
    By exactly the same argument, we can use 6. in Lemma \ref{lem:lip-conc-application} and obtain
    $$\Em  \mathrm{Diag}(\A^\top \A\frac{1}{p}\Bm\Bm^\top) \Bm = \tilde{\beta} \B + \Om{C_X^{10}\frac{\log^{10}(d)}{\sqrt{d}}}+ \Om{C_X^3\bR},$$
    where $\tilde{\beta} = \frac{1}{n}\tr{\A^\top\A\B\B^\top}$.

    In total we have
    \begin{align*}   \Em \frac{1}{2}\nabla^1_{\Bh}
        &= -\A^\top + \A^\top \A\B
        +  \beta \B 
        -  \tilde{\beta} \B + \Om{C_X^{10}\frac{\log^{10}(d)}{\sqrt{d}}}+\Om{C_X^3\bR} \\
        &= \left( -\A^\top + \A^\top \A\X
        +   \beta \X 
        -  \tilde{\beta} \X \right) + \Om{C_X^{10}\frac{\log^{10}(d)}{\sqrt{d}}} + \Om{C_X^3\bR}.
        \end{align*}
   Plugging in  $\A =  \B^\top \left( \B\B^\top + \alpha\I \right)^{-1} +\Om{ C_X^7 \frac{\log^{10}(d)}{\sqrt{d}}}$ in the second term and $\A  = \X^\top \left( \X\X^\top + \alpha\I \right)^{-1} + \Om{ C_X^7 \frac{\log^{10}(d)}{\sqrt{d}}} + \Om{\bR}$ in the third term, we obtain the leading order terms for \eqref{eq:nablaB-approx-1}. 

  To see that this also implies \eqref{eq:nablaB-approx-2} note that $\beta = \frac{1}{n}\tr{\B\A} = \frac{1}{n}\tr{\X\A} + \Om{\bR}$ and $\tilde{\beta} = \frac{1}{n}\tr{\A^\top\A\B\B^\top} =  \frac{1}{n}\tr{\A^\top\A\X\X^\top} + \Om{C_X \bR}$ .
           
    It remains to show that the higher order terms are small. Here we will not need to distinguish between the two approximations of $\A$.
    The remaining part of the gradient in \eqref{eq:grad2conc-formula} is given by
    $$\sum_{\ell \geq 3} c_{\ell}^2\ell\nabla^\ell_{\Bh},$$
    where
    \begin{equation}\label{lem:grad2-formula-copy}
           \frac{1}{2}\nabla^\ell_{\Bh} = \frac{1}{\sqrt{p}}(\A^\top \A)\circ (\Bh\Bh^\top - \I)^{\circ (\ell-1)}\Bh - \frac{1}{\sqrt{p}}\text{Diag}(\A^\top \A(\Bh\Bh^\top-\I)^{\circ \ell})\Bh.
    \end{equation}
       Let $\Omega = \{\M| (\1\1^\top-\I)\circ(\M\M^\top) = \Y + \Z, \maxn{\Y} \leq CC_X^2 \frac{\log^3(d)}{\sqrt{d}},\opn{\Z}\leq CC_XC_R\frac{\plr}{\sqrt{d}} \} $. Then, by 4. in Lemma \ref{lem:lip-conc-application},
       $\P{\Bm \in \Omega}\geq 1 - C\frac{1}{d^2}$.
    Thus, by Lemma \ref{lem:master-Bh-explicit}, we have
    \begin{equation}
        \begin{split}\label{eq:gradpart2-lgeq3}
                     \frac{1}{2}&\sum_{\ell\ge 3} c_\ell^2 \ell\nabla^\ell_{\Bh} = \frac{1}{\sqrt{p}}\sum_{\ell\ge 3} c_\ell^2\ell\mathbbm{1}_{\{\Bm \in \Omega \}}  \\
         &\cdot \left((\A^\top \A)\circ \left(\left(\1\1^\top - \I \right) \circ \frac{1}{p}\Bm\Bm^\top\right)^{\circ (\ell-1)} 
         - \text{Diag}\left(\A^\top \A \left( \left(\1\1^\top - \I \right) \circ\frac{1}{p}\Bm\Bm^\top\right)^{\circ \ell}\right)\right)\frac{1}{\sqrt{p}} \Bm+ \Om{C_X^3 \frac{\log^3(d)}{\sqrt{d}}}.
        \end{split}
    \end{equation}

    We will now individually bound the different terms. In the following we always assume $\ell \geq 3$.
    We first analyse the term 
    $$(\A^\top \A)\circ \left(\left(\1\1^\top - \I \right) \circ \frac{1}{p}\Bm\Bm^\top\right)^{\circ (\ell-1)}. $$
    We had previously derived the following in \eqref{eq:BBT3-bound} 
    \begin{equation}\label{eq:BBT3-bound-copy}
    \left(\frac{1}{p}(\1\1^\top -\I)\circ\Bm\Bm^\top \right)^{\circ \ell-1}  = \frac{1}{p^{\ell-1}}\Y^{\circ (\ell-1)} + \frac{1}{p^{\ell -1}}\ell\Y^{\circ (\ell-2)}\circ \left(\Z+ e^\ell\Om{\Z^2}\right).
    \end{equation}
     Thus, as in 3. of Lemma \ref{lem:master-Bh-explicit} we obtain from Lemma \ref{lem:A-concentration}
     $$    \A^\top \A = O_{max}\left(\frac{\log(d)}{\sqrt{d}}\right) + \Om{C_X^7C_R  \frac{\plr}{\sqrt{d}}},$$
     so
\begin{equation}
    \begin{split}
    &\opn{(\A^\top \A)\circ \left(\frac{1}{p}(\1\1^\top -\I)\circ\Bm\Bm^\top \right)^{\circ (\ell-1)} } \leq\opn{O_{max}\left(\frac{\log(d)}{\sqrt{d}}\right) \circ \left(\frac{1}{p}(\1\1^\top -\I)\circ\Bm\Bm^\top \right)^{\circ (\ell-1)}} \\
    &+  \opn{\Om{C_X^7C_R  \frac{\plr}{\sqrt{d}}}\circ\left(\frac{1}{p}(\1\1^\top -\I)\circ\Bm\Bm^\top \right)^{\circ (\ell-1)}}.
    \end{split}
\end{equation}

    Plugging in \eqref{eq:BBT3-bound-copy} and using 3. and 5. in Lemma \ref{lem:linalg-aux} we obtain
    \begin{equation}
    \begin{split}\label{eq:gradpart2-aux1}
        \ell&\opn{O_{max}\left(\frac{\log(d)}{\sqrt{d}}\right) \circ \left(\frac{1}{p}(\1\1^\top -\I)\circ\Bm\Bm^\top \right)^{\circ (\ell-1)}} 
        \leq C\frac{\ell}{p^{\ell-1}}\left(n  \frac{\log(d)}{\sqrt{d}}\maxn{\Y^{\circ (\ell-1)}} + \ell e^\ell\sqrt{n} \opn{\Z} \frac{\log(d)}{\sqrt{d}}\maxn{ \Y^{\circ(\ell-2)}}   \right) \\
        &\leq C \frac{\ell}{p^{\ell-1}} \left(C_X^{2(\ell-1)}\frac{\log^{1 + 3  (\ell-1)}(d)}{d^{(\ell-2)/2}} +CC_X^{1+2(\ell-2)}C_Re^\ell\ell\frac{\plr \log^{1+ 3(\ell-2)}(d)}{d^{(\ell-1)/2}}\right)\\
        &\leq CC_X^4 \frac{\log^7(d)}{\sqrt{d}}.
    \end{split}
    \end{equation}
    Similarly, we have 
    \begin{equation}
        \begin{split}\label{eq:gradpart2-aux2}
            &\ell\opn{\Om{C_X^7C_R  \frac{\plr}{\sqrt{d}}}\circ\left(\frac{1}{p}(\1\1^\top -\I)\circ\Bm\Bm^\top \right)^{\circ (\ell-1)}} \\
            &\leq C\frac{\ell}{p^{\ell-1}} \left(\sqrt{n}C_X^7C_R  \frac{\plr}{\sqrt{d}}\maxn{\Y^{\circ (\ell-1)}} +\ell\sqrt{n}C_X^7C_R  \frac{\plr}{\sqrt{d}} \maxn{\Z \circ\Y^{\circ (\ell-2)}} \right)\\
            & \leq  C \frac{\ell}{p^{\ell-1}} \left(C_X^{7+2(\ell-1)}C_R\frac{\plr\log^{3  (\ell-1)}(d)}{d^{(\ell-1)/2}} +CC_X^{7+2(\ell-1)}C_R\ell\frac{(\plr)^2 \log^{3(\ell-1)}(d)}{d^{(\ell-1)/2}}\right)\\
            & \leq \frac{1}{\sqrt{d}}.
        \end{split}
    \end{equation}

Next we have
$$\opn{\text{Diag}\left(\A^\top \A \left( \left(\1\1^\top - \I \right) \circ\frac{1}{p}\Bm\Bm^\top\right)^{\circ \ell}\right)} \leq C \opn{\left( \left(\1\1^\top - \I \right) \circ\frac{1}{p}\Bm\Bm^\top\right)^{\circ \ell}}.$$
Now exactly as in the proof of \eqref{eq:BmBmT-entrywise-bound} we obtain
\begin{equation}\label{eq:gradpart2-aux3}
  C \ell\opn{\left( \left(\1\1^\top - \I \right) \circ\frac{1}{p}\Bm\Bm^\top\right)^{\circ \ell}}  \leq CC_X^6\frac{\log^{10}(d)}{\sqrt{d}}.
\end{equation}
(Note that when writing out the proof, the $\ell$ factor is trivially absorbed for $\ell\geq 4$.)

Finally, we can combine \eqref{eq:gradpart2-aux1}, \eqref{eq:gradpart2-aux2}, \eqref{eq:gradpart2-aux3} to obtain that the RHS of \eqref{eq:gradpart2-lgeq3} is of order  $O(C_X^7\frac{\log^{10}(d)}{\sqrt{d}})$, where we get an extra $C_X$ from bounding the operator norm of $\Bm$. Thus, using that $\sum_{\ell \geq 3} c_\ell^2 < \infty$, we conclude
$$\sum_{\ell \geq 3} c_{\ell}^2\ell\nabla^\ell_{\Bh} = \Om{C_X^7 \frac{\log^{10}(d)}{\sqrt{d}}},$$
which finishes the proof.

\end{proof}

\subsection{GD-analysis and reduction to Gaussian}\label{app:gauss-reduction}
To simplify the notation we will push the time dependence in the subscript, i.e. $\B_t=\B(t)$.
\begin{theorem}[Gaussian recursion]\label{thm:Gauss-GD-min}
If the entries $(\B_0')_{i,j} \sim \mathcal{N}(0, \frac{1}{d}) $ are i.i.d., $\B_0 = \mathrm{proj}(\B_0')$
and
\begin{equation}\label{eq:app1}
\nabla_{\B} \B_t^\top = - \alpha \left(\B_t\B_t^\top + \alpha \I\right)^{-2}\B_t\B_t^\top +\alpha \Diag {\left(\B_t\B_t^\top + \alpha \I\right)^{-2}\B_t\B_t^\top} \B_t\B_t^\top + \tilde{\bE}_t,
\end{equation}
\begin{equation}\label{eq:app2}
\B_t\B_t^\top = \I + \Z_t + \bE_t,
\end{equation}
with $\Z_t = \U (\boldsymbol{\Lambda}_t-\I) \U^\top$, $\U$ a Haar matrix and 
$$    \opn{\tilde{\bE}^t  }\le C_E \left(\frac{\pl{E}}{\sqrt{d}} \cdot \opn{\Z_t}^{1/2}+\opn{\bE_t}^2+\opn{\bE_t}\opn{\Z_t}^{1/2}\right).$$
Consider the GD-min algorithm in \eqref{eq:GDmin-formulas} without noise ($\G_t=\0$ for all $t$) and on the Gaussian objective (i.e., $p=1$). Pick a learning rate $\eta = C/\sqrt{d}$. Then, with probability at least $1- C\expbr{-cd}$, we have that, jointly for all $t \geq 0$,
    \begin{align}\label{eq:assumptionZX}
    \begin{split}
        & \opn{\bE_t} \leq C_{E}e^{-c \eta t}.\frac{\pl{E}}{\sqrt{d}},\\
        &\opn{\Z_t} \leq C_Ze^{-c \eta t}.
    \end{split}
    \end{align}
\end{theorem}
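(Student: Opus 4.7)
The plan is to prove Theorem \ref{thm:Gauss-GD-min} by induction on $t$, maintaining both bounds in \eqref{eq:assumptionZX} simultaneously. At $t=0$, standard Marchenko--Pastur bounds give $\opn{\Lam_0 - \I} \le C$ with probability $1 - Ce^{-cd}$, and taking $\U$ to be the left singular vectors of $\B_0$ makes $\bE_0 = 0$. The rotational invariance of the Gaussian initialization ensures $\U$ is Haar-distributed, which I would exploit throughout so that $\Diag{\U\phi(\Lam_t)\U^\top}$ concentrates at $\tfrac{1}{n}\tr{\phi(\Lam_t)}\,\I$ up to $O(\log d / \sqrt{d})$ for any smooth $\phi$.

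For the inductive step, I would expand
\begin{equation*}
\B'_{t+1}(\B'_{t+1})^\top = \M_t - \eta\bigl(\nabla_{\B_t}\B_t^\top + \B_t\nabla_{\B_t}^\top\bigr) + \eta^2\,\nabla_{\B_t}\nabla_{\B_t}^\top,
\end{equation*}
insert the structural form \eqref{eq:app1} of $\nabla_\B\B^\top$, and then apply row-normalization. Writing $f(\M) = (\M + \alpha\I)^{-2}\M$, $g(\lambda) = \lambda/(\lambda+\alpha)^2$, and $D_t = \Diag{f(\M_t)}$, the leading symmetric piece of $\nabla_{\B_t}\B_t^\top + \B_t\nabla_{\B_t}^\top$ is $-2\alpha f(\M_t) + \alpha(D_t\M_t + \M_t D_t)$. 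Because $(\M_t)_{ii} = 1$ after normalization, the diagonal of $\nabla_{\B_t}\B_t^\top$ equals $(\tilde{\bE}_t)_{ii}$, so the projection $\mathrm{proj}$ only adds an $O(\eta\opn{\tilde{\bE}_t})$ correction.

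Setting $\M_t = \I + \delta\M_t$ and linearizing at $\delta\M = 0$, Haar concentration replaces $D_t$ by $g(1)\I + g'(1)\Diag{\delta\M_t} + O(\log d / \sqrt{d})$, and a direct Fr\'echet-derivative computation yields
\begin{equation*}
\delta\M_{t+1} = \delta\M_t\bigl(1 + 2\eta\alpha(g'(1) - g(1))\bigr) - 2\eta\alpha g'(1)\,\Diag{\delta\M_t} + O(\eta\opn{\tilde{\bE}_t}) + O(\eta^2).
\end{equation*}
The identity $g'(1) - g(1) = -2/(1+\alpha)^3 < 0$ produces a strict per-step contraction factor $1 - c\eta$; moreover, $(\M_t)_{ii} \equiv 1$ kills the $\Diag{\delta\M_t}$ term, so the linearized map acts as scalar multiplication on the whole perturbation and does not mix the $\U$-diagonal part $\Z_t$ with the off-$\U$-diagonal part $\bE_t$. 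Consequently both quantities inherit the same per-step contraction.

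The main obstacle is closing the bootstrap, since the hypothesis couples $\opn{\bE_{t+1}}$ to $\opn{\bE_t}^2$ and $\opn{\Z_t}^{1/2}$ through $\tilde{\bE}_t$, so the induction constants $C_Z, C_E$ have to propagate without blowing up. Plugging the inductive bound into the assumed inequality for $\tilde{\bE}_t$ gives $\opn{\tilde{\bE}_t} \le C\tfrac{\pl{E}}{\sqrt{d}}\,e^{-c\eta t/2}$, which is small enough that the resulting discrete Gr\"onwall recursions
\begin{equation*}
\opn{\Z_{t+1}} \le (1 - c\eta)\opn{\Z_t} + C\eta\opn{\tilde{\bE}_t} + O(\eta^2),\qquad \opn{\bE_{t+1}} \le (1 - c\eta)\opn{\bE_t} + C\eta\opn{\tilde{\bE}_t},
\end{equation*}
preserve the claimed decay rates once $C_Z, C_E$ are chosen large enough relative to the constants in $\tilde{\bE}_t$ and in the Haar-concentration errors. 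Summing the failure probabilities of the Haar-concentration events used at each step, together with the initialization estimate, yields the overall $1 - Ce^{-cd}$ probability stated in the theorem.
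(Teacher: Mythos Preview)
The paper's own proof is a one-line citation to Appendix~E of \cite{shevchenko2023fundamental} (Lemmas~E.4--E.7), so there is no detailed argument in the paper to compare against; your sketch is essentially reconstructing what those cited lemmas do. The ingredients you list --- Haar concentration for $\Diag{\U\phi(\Lam_t)\U^\top}$, the observation that $(\M_t)_{ii}\equiv 1$ kills the $\Diag{\delta\M_t}$ term, the computation $g'(1)-g(1)=-2/(1+\alpha)^3<0$, and the discrete Gr\"onwall bootstrap --- are all correct and match the structure of the cited analysis.

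The gap is that your contraction argument rests entirely on linearizing at $\delta\M=0$, yet you yourself note that $\opn{\Lam_0-\I}\le C$ is only $O(1)$: Marchenko--Pastur puts the eigenvalues of $\B_0\B_0^\top$ in $[(1-\sqrt r)^2,(1+\sqrt r)^2]$, which for $r$ near $1$ is most of $[0,4]$. At that scale the quadratic remainder in your Fr\'echet expansion is $O(\eta\opn{\Z_t}^2)=O(\eta)$, the same order as the linear contraction term $c\eta\opn{\Z_t}$, so the inequality $\opn{\Z_{t+1}}\le(1-c\eta)\opn{\Z_t}+C\eta\opn{\tilde\bE_t}+O(\eta^2)$ is not justified. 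What is actually required --- and is precisely the content of Lemma~E.7 (and G.3) in the cited paper --- is a \emph{global} analysis of the coupled scalar recursion
\[
\lambda^{(i)}_{t+1}=\lambda^{(i)}_t+2\eta\alpha\Bigl(g(\lambda^{(i)}_t)-\bar g_t\,\lambda^{(i)}_t\Bigr),\qquad \bar g_t=\tfrac{1}{n}\sum_j g(\lambda^{(j)}_t),
\]
showing exponential convergence of every eigenvalue to $1$ from anywhere in the Marchenko--Pastur support, not just from a neighborhood of $1$. Monotonicity ($\lambda_{\max}$ decreases, $\lambda_{\min}$ increases) is easy, but extracting a uniform exponential rate from this nonlinear, mean-field-coupled system is the substantive step your sketch omits. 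Once the spectrum has entered a small neighborhood of $1$, your linearized argument is fine and takes over.
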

\begin{proof}
   The claim follows from the analysis in Appendix E of \cite{shevchenko2023fundamental}. First, note that here $\tilde{\bE}_t$ and $\bE_t$ respectively correspond to $\bE_t$ and $\X_t$ in (90) in \cite{shevchenko2023fundamental}. Then, the assumptions of our theorem correspond to the conclusion of Lemma E.4 and Lemma E.5. The projection step is handled in Lemma E.6 and the recursion is analysed in Lemma E.7.
\end{proof}

\begin{lemma}[Reduction to Gaussian recursion]\label{lem:reduction-togaussian}
    Fix $t_{\rm max} = T_{\rm max}/\eta, T_{\rm max} \in (0, \infty)$, let $(\B_0')_{i,j} \sim \mathcal{N}(0, \frac{1}{\sqrt{d}}) $ i.i.d., $\B_0 = \mathrm{proj}(\B_0')$ and assume that for $t \leq t_{\rm max}$ we have 
    \begin{equation}\label{eq:reduction-BBT}
    \nabla_\B \B_t^\top +\G_t = - \alpha \left(\B_t\B_t^\top + \alpha \I\right)^{-2}\B_t\B_t^\top +\alpha \Diag {\left(\B_t\B_t^\top + \alpha \I\right)^{-2}\B_t\B_t^\top} \B_t\B_t^\top  + \Om{C_R(T_{\rm max})\frac{\plr}{\sqrt{d}}}.
    \end{equation}
    Consider the GD-min algorithm in \eqref{eq:GDmin-formulas} for any $p\in (0, 1)$. Pick a learning rate $\eta = C/\sqrt{d}$. Then, with probability at least $1-C\expbr{-cd}$, we have that, jointly for all $t\ge 0$, \eqref{eq:app2} holds with 
        \begin{align}\label{eq:assumptionZXreduction}
    \begin{split}
        & \opn{\bE_t} \leq C_Ee^{-c \eta t}.\frac{\pl{E}}{\sqrt{d}},\\
        &\opn{\Z_t } \leq C_Z e^{-c \eta t},
    \end{split}
    \end{align}
    where crucially $C_E, \mathrm{poly}_E , C_Z$ are independent of $d$, and $C_Z$ is independent of $T_{\rm max}$.
\end{lemma}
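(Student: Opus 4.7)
The plan is to reduce this lemma directly to Theorem \ref{thm:Gauss-GD-min} by verifying, via a bootstrap induction on $t$, that the hypothesis \eqref{eq:reduction-BBT} implies the hypotheses \eqref{eq:app1}--\eqref{eq:app2} of that theorem with constants depending only on $T_{\rm max}$. Concretely, one must absorb the uniform remainder $\Om{C_R(T_{\rm max})\plr/\sqrt{d}}$ into the structured error $\tilde{\bE}_t$ required there, namely $\opn{\tilde{\bE}_t}\le C_E(\pl{E}/\sqrt{d}\cdot\opn{\Z_t}^{1/2}+\opn{\bE_t}^2+\opn{\bE_t}\opn{\Z_t}^{1/2})$.

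At initialization, standard random matrix estimates for $\B_0'$ with i.i.d.\ Gaussian entries give $\opn{\Z_0}=\opn{\B_0\B_0^\top-\I}=\Theta(1)$ with high probability, and one takes $\U$ in \eqref{eq:app2} to be the left singular basis of $\B_0$, so that $\bE_0=\0$. Assuming the bounds \eqref{eq:assumptionZXreduction} up to time $t\le T_{\rm max}/\eta-1$, I would expand $\B_{t+1}\B_{t+1}^\top=(\B_t-\eta(\nabla_\B+\G_t))(\B_t-\eta(\nabla_\B+\G_t))^\top$, substitute \eqref{eq:reduction-BBT} to extract the structured leading term (which drives the exponential contraction of $\Z_t$, exactly as in Theorem \ref{thm:Gauss-GD-min}), and collect all remaining contributions into $\bE_{t+1}$. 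The row-normalization step is handled analogously to Lemma E.6 of \cite{shevchenko2023fundamental}, contributing only corrections of order $\opn{\bE_t}^2+\opn{\bE_t}\opn{\Z_t}$, which already fit the required error form.

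The main obstacle is ensuring that the uniform floor $\plr/\sqrt{d}$ does not compound destructively over $T_{\rm max}/\eta=\Theta(T_{\rm max}\sqrt{d})$ iterations. A Gronwall-type estimate in the spirit of Lemma E.7 of \cite{shevchenko2023fundamental} is needed: each step adds at most $\eta\cdot C_R(T_{\rm max})\plr/\sqrt{d}$ to $\opn{\bE_{t+1}}$, but the contraction from the leading term in \eqref{eq:reduction-BBT} supplies a multiplicative factor $(1-c\eta)$, so iterating produces a geometric series bounded uniformly by $C(T_{\rm max})\plr/\sqrt{d}$. Choosing $\pl{E}$ to be a slightly larger polynomial than $\plr$ and letting $C_E$ depend on $T_{\rm max}$, this bound can be absorbed into $C_E e^{-c\eta t}\pl{E}/\sqrt{d}$ throughout the window, since the exponential factor varies only by a constant depending on $T_{\rm max}$ for $t\le T_{\rm max}/\eta$. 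Crucially, the contraction of $\Z_t$ itself is driven purely by the intrinsic leading-order term and is unaffected by the size of the error floor, so the bound $\opn{\Z_t}\le C_Z e^{-c\eta t}$ holds with $C_Z$ independent of $T_{\rm max}$, yielding the required conclusion.
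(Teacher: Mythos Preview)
Your proposal takes a genuinely different route than the paper. The paper's argument is a one-step reduction to Theorem~\ref{thm:Gauss-GD-min}: to fit the uniform floor $C_R(T_{\rm max})\plr/\sqrt{d}$ into the required structured form $\opn{\tilde{\bE}_t}\le C_E\bigl(\frac{\pl{E}}{\sqrt{d}}\opn{\Z_t}^{1/2}+\ldots\bigr)$, it suffices that $\opn{\Z_t}\ge c_Z(T_{\rm max})>0$ throughout the window. This \emph{lower} bound on $\opn{\Z_t}$ is the key observation you do not use---it follows from the deterministic spectral recursion (Lemma~G.3 of \cite{shevchenko2023fundamental}), whose linearization near the fixed point shows $\opn{\Z_t}$ cannot vanish in finite time. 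Once that is in hand, Theorem~\ref{thm:Gauss-GD-min} applies as a black box, and $C_Z$ (which depends only on the leading-order spectral dynamics) is automatically independent of $T_{\rm max}$.

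Your direct bootstrap instead hinges on a claimed multiplicative $(1-c\eta)$ contraction for $\bE_t$ coming from the leading term of \eqref{eq:reduction-BBT}. This is the shaky step: that leading term is a function of $\B_t\B_t^\top$ alone, hence it evolves the spectrum while preserving the \emph{current} eigenbasis of $\B_t\B_t^\top$, not the initial basis $\U$; it does not by itself contract the eigenbasis-drift component $\bE_t$. In Theorem~\ref{thm:Gauss-GD-min} the exponential decay of $\bE_t$ is inherited from the $\opn{\Z_t}^{1/2}$ prefactor in the structured error, not from an intrinsic contraction. Without that contraction, a Gr\"onwall over $T_{\rm max}/\eta$ steps gives only the uniform bound $\opn{\bE_t}\le C(T_{\rm max})\plr/\sqrt{d}$; your finite-window trick can then recast this as $C_E(T_{\rm max})e^{-c\eta t}\pl{E}/\sqrt{d}$, so the route is likely salvageable---but it requires reopening the internals of Lemmas~E.6--E.7, whereas the paper's lower-bound argument avoids this entirely.
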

\begin{proof}
The claim follows from Theorem \ref{thm:Gauss-GD-min} after showing that
\begin{equation}\label{eq:cond1}    
C_R(T_{\rm max})\frac{\plr}{\sqrt{d}}\le C_E \left(\frac{\pl{E}}{\sqrt{d}} \cdot \opn{\Z_t}^{1/2}+\opn{\bE_t}^2+\opn{\bE_t}\opn{\Z_t}^{1/2}\right),
\end{equation}
where $\opn{\bE_t},\opn{\Z_t}$ satisfy \eqref{eq:assumptionZX}. Now, \eqref{eq:cond1} trivially holds if $\opn{\Z_t} \geq c_Z(T_{\rm max})>0$, where $c_Z(T_{\rm max})$ is independent of $d$.

It remains to show the lower bound on $\opn{\Z_t}$. This can be readily seen by analyzing the deterministic recursion of the spectrum of $\Z$ as in Lemma G.3 in \cite{shevchenko2023fundamental}.
First, for sufficiently large $d$, $\eta$ gets arbitrarily small, hence we can approximate such discrete recursion with its continouous analogue. Next, we linearize the continuous evolution since $\Z$ is small (otherwise we already have the desired lower bound). Since the coefficient of the linearization is strictly negative (and, hence, bounded away from $0$), we readily have that $\opn{\Z_t}$ cannot reach $0$ in finite time. 
\end{proof}

For technical reasons, we need the following lemma that shows that the spectrum of $\B$ a priori cannot grow faster than exponentially in the effective time of the dynamics. The proof is a non-tight analog of the analysis done in Lemma E.7 and G.3 in \cite{shevchenko2023fundamental} for $\B$ instead of $\B\B^\top$.

\begin{lemma}[Spectrum evolution of $\B$]\label{lem:B-specevo}
Consider the GD-min algorithm in \eqref{eq:GDmin-formulas} for any $p\in (0, 1)$. Pick a learning rate $\eta = C/\sqrt{d}$. Under the gradient approximation given in \eqref{eq:nablaB-approx-2} with $C_X(t) := \expbr{C \eta t}\opn{\B_0}$, 
we have that, for $t \leq t_{\rm max}$ and $d > d_0(C_X(t_{\max}))$,
$$\B_t = \X_t + \bR_t,$$ where $\X_t$ has the same singular vectors as $\B_0$,
$$\opn{\X_t} \leq C_X(t), \quad \mathrm{and} \quad \opn{\bR_t} \leq CC_X^7(t_{\rm max}) \expbr {C T_{\rm max}}\frac{\log^{10}(d)}{\sqrt{d}},$$
with probability at least $1- C(\eta t_{\rm max}) \frac{1}{d^{3/2}}$.
\end{lemma}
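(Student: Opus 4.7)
The approach is an inductive bootstrap argument with a discrete Gr\"onwall bound. We maintain the inductive hypothesis that at time $t$, $\B_t = \X_t + \bR_t$ where $\X_t$ shares the singular vector basis $\U,\V$ of $\B_0$ (so $\X_t = \U\S(t)\V^\top$), with $\opn{\X_t}\le C_X(t):=\expbr{C\eta t}\opn{\B_0}$ and $\opn{\bR_t}\le C_R(t)$ to be specified. The base case is immediate with $\X_0=\B_0$ and $\bR_0=\0$.

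For the inductive step, Lemma \ref{lem:grad-concentration-part2} applied to $\B_t$ (whose rows have unit norm thanks to the projection) yields
\begin{equation*}
\tfrac{1}{2}\nabla_{\B_t} = \M_t + \Om{\bR_t} + \Om{C_X(t)^{10}\log^{10}(d)/\sqrt{d}},
\end{equation*}
where
\begin{equation*}
\M_t := -\alpha(\X_t\X_t^\top+\alpha\I)^{-2}\X_t + \alpha n^{-1}\tr{(\X_t\X_t^\top+\alpha\I)^{-2}\X_t\X_t^\top}\X_t.
\end{equation*}
Since $\X_t\X_t^\top = \U\S(t)\S(t)^\top\U^\top$, both summands of $\M_t$ are of the form $\U\cdot(\text{diagonal})\cdot\V^\top$, so $\M_t$ preserves the SVD basis $\U,\V$. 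We therefore set $\X_{t+1}':=\X_t-2\eta\M_t$ (still with basis $\U,\V$) and collect the $\Om{\cdot}$ error terms together with the Gaussian noise $\eta\G_t$ into a residual $\bR_{t+1}'$. The projection $\mathrm{proj}$ then acts row-wise; the row norms of $\X_{t+1}'$ concentrate around a common value, as shown in Lemma E.6 of \cite{shevchenko2023fundamental}, so the projection is well-approximated by a scalar rescaling that preserves the SVD basis, with the deviation absorbed into $\bR_{t+1}$.

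The spectral bound follows from $\opn{\M_t}\le C\opn{\X_t}$ (using that $\alpha(\X_t\X_t^\top+\alpha\I)^{-2}$ and the trace normalization are both bounded operators), giving $\opn{\X_{t+1}}\le(1+C\eta)\opn{\X_t}$ and hence $\opn{\X_t}\le\expbr{C\eta t}\opn{\B_0}=C_X(t)$. For the residual, we obtain the discrete recursion
\begin{equation*}
\opn{\bR_{t+1}} \le (1+C\eta)\opn{\bR_t} + C\eta\,C_X(t)^{10}\log^{10}(d)/\sqrt{d} + \eta\opn{\G_t},
\end{equation*}
and since $\opn{\G_t}=O(1)$ with probability $1-C\exp(-cd)$ by standard Gaussian matrix concentration (using $\sigma\le C/d$), iterating this over $t_{\rm max}=T_{\rm max}/\eta$ steps yields $\opn{\bR_t}\le CC_X(t_{\rm max})^{10}\expbr{CT_{\rm max}}\log^{10}(d)/\sqrt{d}$, matching the claimed bound.

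The main obstacle is the internal consistency of the bootstrap: at each step we invoke Lemma \ref{lem:grad-concentration-part2}, whose hypotheses demand $\opn{\bR_t}=o(1)$, while the Gr\"onwall recursion's output depends (through $C_X(t_{\rm max})$) on the growth rate assumed in the induction itself. The loop closes because $C_X(t_{\rm max})=\expbr{CT_{\rm max}}\opn{\B_0}$ is independent of $d$, so the per-step error $C\eta\,C_X(t)^{10}\log^{10}(d)/\sqrt{d}=O(\log^{10}(d)/d)$ is small enough that the accumulated bound $O(\log^{10}(d)/\sqrt{d})$ remains $o(1)$ for $d$ large enough, confirming the hypothesis for the next iteration. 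Finally, we union-bound the high-probability events underlying Lemmas \ref{lem:A-concentration}, \ref{lem:grad-concentration-part1}, \ref{lem:grad-concentration-part2}, and \ref{lem:G-bound} over all $t\le t_{\rm max}=O(\sqrt{d})$ steps, producing the stated failure probability $C(\eta t_{\rm max})/d^{3/2}$.
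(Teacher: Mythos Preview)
Your proposal is correct and follows essentially the same approach as the paper: a discrete Gr\"onwall bootstrap in which the ``idealized'' part $\X_t$ evolves only in its singular values under the leading-order gradient $\M_t$, while all approximation errors (from \eqref{eq:nablaB-approx-2}, the noise $\G_t$, and the projection) are accumulated in $\bR_t$ via a linear recursion $r_{t+1}\le(1+C\eta)r_t+C\eta\,\epsilon_d$. The paper's treatment of the projection is slightly cruder than yours (it simply bounds $\opn{\mathrm{proj}(\B')-\B'}\le C\eta^2\opn{\nabla_\B+\G}^2$ and absorbs this wholesale into the residual rather than arguing row-norm concentration), but both routes land on the same recursion.

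One small imprecision: you write $\opn{\G_t}=O(1)$, but the argument actually needs the sharper $\opn{\G_t}=O(\sigma\sqrt{d})=O(1/\sqrt{d})$ (which is what ``using $\sigma\le C/d$'' gives via standard Gaussian matrix concentration). With only $O(1)$, the accumulated noise contribution over $t_{\rm max}=\Theta(\sqrt{d})$ steps would be $\Theta(1)$ rather than $o(1)$, and the residual bound would not close. This is an easy fix and does not affect the structure of your argument.
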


\begin{proof}
        Consider the recursion where the gradient is given below:

\begin{equation}\label{eq:recfake}
        \frac{1}{2}\nabla_{\B} := 
      - \alpha \left(\X\X^\top + \alpha \I\right)^{-2}\X + \alpha\frac{1}{n} \tr{\left(\X\X^\top + \alpha \I\right)^{-2}\X\X^\top} \X .\end{equation}    
      It is evident that this recursion only updates the singular values $s^i$ of $\B$ as the RHS has the same singular vectors as $\B$.
      Furthermore, the update equation for the $s^i$ is given by
      $$s^i_{t+1} = s^i_t - \eta \left(-\alpha \frac{s^i}{(\alpha + (s^i_t)^2)^2} + s^i_t\frac{1}{n} \sum_{i=1}^n\frac{(s^i_t)^2}{(\alpha + (s^i_t)^2)^2} \right).$$
      Note that 
      $$\abs{s^i_{t+1} - s^i_t}\leq C \eta \abs{s^i_t}.$$
      Thus, letting $b_t  = \opn{\B_t}$, the above implies that
        \begin{equation}\label{eq:bt-ode-prelim}
            b_{t+1} \le (1 + C\eta) b_t, 
        \end{equation}
        which by monotonicity gives that $b_{t} \le (1 + C\eta)^{t} b_0$. Hence, if the recursion of the gradient was actually given by \eqref{eq:recfake}, the claim would immediately follow. 

Now, the recursion of the gradient is given by \eqref{eq:nablaB-approx-2}.  Thus, to deal with the error, we can follow the strategy of Lemma E.7 in \cite{shevchenko2023fundamental}. In particular,
    denoting $r_t := \opn{\bR_t}, \epsilon_d :=C_X^{10} \frac{\log^{10}(d)}{\sqrt{d}}$, the evolution of the error is given by 
    $$r_{t+1} \leq (1+C_1\eta ) r_t + C_2\epsilon_d.$$
    By monotonicity, this recursion is upper bounded by the solution of
    $$r_{t+1} = (1+C_1\eta ) r_t + C_2\epsilon_d .$$
    Since the recursion is initialized with $r_0 = 0$, we can unroll it as
    \begin{align*}
    r_{t+1}& = C_2\sum_{i=1}^t (1 + C_1\eta)^i \eta  \epsilon_d \\
        &\leq C_2\sum_{i=1}^t \expbr{C_1\eta i} \eta \epsilon_d\\
        &= C_2\expbr{C_1\eta t} \sum_{i=1}^t \expbr{-C_1 \eta (t - i)}\eta \epsilon_d \\
        &\leq  C_2 \expbr {C_1 \eta t} \frac{1}{1 - \expbr{-C_1 \eta}} \eta \epsilon_d, 
    \end{align*}
    where we have used $1 + x \leq \expbr{x}$. For small enough $\eta$, we have $\frac{1}{1 - \expbr{-C_1 \eta}}  \leq \frac{1}{C_1 \eta} $. Hence,
    \begin{equation}\label{eq:rt-ode}
         r_{t+1} \leq \frac{C_2}{C_1} \expbr {C_1 \eta t} \epsilon_d,
    \end{equation}
which gives that $\opn{\bR_t} \leq C_X^{10}\frac{C_2}{C_1} \expbr {C_1 \eta t}\frac{\log^{10}(d)}{\sqrt{d}}$, as required. Hence, by \eqref{eq:nablaB-approx-2}, $b_t$ is upper bounded by the solution to the recursion    
      $$b_{t+1} = (1 + C_1\eta) b_t + C_2 \eta \expbr {C_3 \eta t} \epsilon_d.$$
      As $\tr{\B\B^\top} = n$, we have that $b_t \geq 1$. Thus,
      $$b_{t+1} \leq \left(1 + \left( C_1 +  C_2 \expbr {C_3 \eta t} \epsilon_d \right) \eta\right) b_t \leq \left(1 + \left( C_1 +  C_2 \expbr {C_3 T_{\rm max}} \epsilon_d \right) \eta\right) b_t.$$
        Taking a sufficiently large $d$ gives that $ C_2 \expbr {C_3 T_{\rm max}} \epsilon_d \leq C_1$, which leads to
        $$b_{t+1} \leq (1 + 2C_1\eta ) b_t. $$
            Using again monotonicity and $1 + x \leq \expbr{x}$, we conclude that
      $C_X:=\opn{\B_{t_{\rm max}}} \leq \expbr{2C_1 \eta t}\opn{\B_0}$. 

      This proves the claim of the lemma for a gradient recursion given exactly by \eqref{eq:nablaB-approx-2}. We note that the GD-min algorithm in \eqref{eq:GDmin-formulas} has two additional steps: 
\emph{(i)} adding noise $\G_t$ at each step, and \emph{(ii)} the projections step, which normalizes the rows of $\B_t$ after the gradient update.

As for \emph{(i)}, let $\G$ be an $n \times d$ matrix with i.i.d.\ $\mathcal{N}(0, \sigma^2)$ entries. Then, by Theorem 4.4.5 in \cite{vershynin2018high} (with $t = \sqrt{d}$), we have that $\opn{\G} \leq C\sigma (\sqrt{n} + \sqrt{d}) \leq C \sigma \sqrt{d}$ with probability at least $1-C\frac{1}{d^2}$. Recall that in \eqref{eq:GDmin-formulas}
we assume that $\sigma \leq C\frac{1}{d}$. Hence, the additional error from the noise is of higher order than all the other error terms and can be neglected.
    By a union bound over $T_{\rm max}/\eta$ steps, the above bound holds for all time steps with probability at least $1- C\frac{1}{d^{3/2}}$.

 As for \emph{(ii)}, a straightforward analysis shows that $\opn{\mathrm{proj}(\B') - \B'} \leq C \eta^2 \opn{\nabla_{\B} + \G}^2$, which is also of higher order. We skip the details here and refer to Lemma E.6 in \cite{shevchenko2023fundamental}. This concludes the proof.
\end{proof}

We are now ready to give the proof of Theorem \ref{thm:GD-min-sparse} by combining the previous results and carrying out an induction over the time steps.

\begin{proof}[Proof of Theorem \ref{thm:GD-min-sparse}]\label{proof:Gd-min-sparse}
We fix $p \in (0, 1)$ and $t_{\rm max} = T_{\rm max}/\eta, T_{\rm max} \in (0, \infty)$. 
We want to show that the assumptions of Lemma \ref{lem:reduction-togaussian} are satisfied, as the conclusion of Lemma \ref{lem:reduction-togaussian} is precisely \eqref{eq:app-thm1}.

By Lemma \ref{lem:B-specevo}, we have that, with probability at least $1 -C(T_{\rm max})\frac{1}{d^{3/2}}$, for all $t \leq t_{\rm max}$, $C_X(t) := \expbr{C \eta t}\opn{\B_0}$, and $C_R(t)= CC_X^{10}(t_{\rm max}) \expbr {C \eta t}\frac{\plr}{\sqrt{d}}.$
Furthermore, by choosing $\delta = d^{-(4 + \gamma_g)}$, we can apply Lemma \ref{lem:G-bound} for each step so that, with probability at least $1- C\frac{1}{d^{3/2}}$, $\min_{i,j} \abs{\B'_{ij}} \geq 2\delta$. Note that the projection step does not change the scale of any entry by more than a factor that converges to $1$ as $d$ grows large (see Lemma E.6 in \cite{shevchenko2023fundamental} for details), so in particular $\min_{i,j} \abs{\B'_{ij}} \geq 2\delta$ implies $\min_{i,j} \abs{\B_{ij}} \geq \delta$.
This gives that, with probability at least $1- C(T_{\rm max}) \frac{1}{d^{3/2}}$, for all $t \leq t_{\rm max}$, the assumptions of Lemma \ref{lem:grad-concentration-part2} are satisfied, hence \eqref{eq:nablaB-approx-1} and \eqref{eq:nablaB-approx-2} hold.

By 2. in Lemma  \ref{lem:lip-conc-application}, at each step with probability $1-C\frac{1}{d^2}$, we have that  $$ \Diag {\left(\B\B^\top + \alpha \I\right)^{-2}\B\B^\top} = \frac{1}{n} \tr{\left(\B\B^\top + \alpha \I\right)^{-2}\B\B^\top}\I + C\expbr{C T_{\rm max}}\opn{\B_0} \frac{\log(d)}{\sqrt{d}},$$ so this holds jointly for all $t \leq t_{\rm max}$ with probability at least $1- C\frac{1}{d^{\frac{3}{2}}}$
Combining this with \eqref{eq:nablaB-approx-1}, we conclude that, with probability at least $1- C(T_{\rm max})\frac{1}{d^{\frac{3}{2}}}$, the assumptions of lemma \ref{lem:reduction-togaussian} hold, which immediately implies
 $$\lim_{d \to \infty} \sup_{t \in [0, t_{\rm max}]} \norm{\bR_t} = 0$$
 and
 $$\opn{\S_t\S_t^\top - \I} \leq C \expbr{-c T_{\rm max}}.$$
 This proves \eqref{eq:app-thm1}.

To prove \eqref{eq:appG1-app}, we note that the combination of \eqref{eq:A-approx}, \eqref{eq:fBh} and \eqref{eq:fBmBm} gives
 \begin{equation}
    \E_{\m}\left[\tr{\A^\top \A f(\hat{\B}\hat{\B}^\top)} - \frac{2}{\sqrt{p}} \tr{\A \hat{\B}}\right] = \tr{\A^\top \A f(\B\B^\top)} - 2 \tr{\A \B^\top} + \Om{C(T_{\rm max})d\frac{\mathrm{poly}(\log(d))}{\sqrt{d}}}.
 \end{equation}
 Since \eqref{eq:GDmin-obj} and \eqref{eq:mse} differ only by a constant and a factor $1/d$, the above implies that, for any $p \in (0,1)$, \eqref{eq:mse}  is close to the Gaussian objective up to an error $C(T_{\rm max})\frac{\mathrm{poly}(\log(d))}{\sqrt{d}}$.
The fact that the 
evolution of $
\B$ matches the Gaussian case is also clear, since the gradient approximation in Lemma \ref{lem:grad-concentration-part2} coincides with the Gaussian recursion in Theorem \ref{thm:Gauss-GD-min}.
\end{proof}

\section{MSE characterizations} \label{appendix:mse_amp_comp}

\subsection{Proof of Proposition \ref{proposition:identity_is_better}}\label{subsec:idbetter}

Denote by $\x^1$ the first iterate of the RI-GAMP algorithm \cite{venkataramanan2022estimation}, as in \eqref{eq:RIGAMPt}. Then, by taking $\sigma$ to be the sign, one can readily verify that
$$
\x^1 = \B^\top \mathrm{sign}(\B\x).
$$
Note that $\B$ is bi-rotationally invariant in law and, as $\x$ has i.i.d.\ components, its empirical distribution converges in Wasserstein-2 distance to a random variable whose law is that of the first component of $\x$, denoted by $x_1$. Therefore, the assumptions of Theorem 3.1 in \cite{venkataramanan2022estimation} are satisfied. Hence, for any $\psi$ pseudo-Lipschitz of order $2$,\footnote{We recall that $\psi:\mathbb R^2\to\mathbb R$ is pseudo-Lipschitz of order $2$ if, for all $\a, \b\in \mathbb R^2$, $|\psi(\a)-\psi(\b)|\le L\|\a-\b\|_2(1+\|\a\|_2+\|\b\|_2)$ for some constant $L>0$.} we have that, almost surely,   
$$
 \lim_{d\rightarrow\infty}\frac{1}{d} 
 \sum_{i=1}^d \psi((\x^1)_i, (\x)_i)=\E[\psi(\mu x_1 + \sigma g, x_1)],
$$ 
where $g\sim\mathcal N(0, 1)$ is independent of $x_1$ and the state evolution parameters $(\mu, \sigma)$ for $r\leq 1$ can be computed as
\begin{equation}\label{eq:state_evol_params_appendix}
    \mu = r\cdot\sqrt{\frac{2\kappa_2}{\pi}} = r \cdot \sqrt{\frac{2}{\pi}}, \quad \sigma^2 = r \cdot \left(\kappa_2 + \kappa_4 \cdot \frac{2}{\pi\kappa_2}\right) = r \cdot \left(1 - r \cdot \frac{2}{\pi}\right),
\end{equation}
that is equation ({\color{mydarkblue} 11}) in \cite{venkataramanan2022estimation}. Here, $\{\kappa_{2k}\}_{k\in\mathbb N}$ denote the rectangular free cumulants of the constant random variable equal to $1$ (since all the singular values of $\B$ are equal to $1$ by assumption).
 Noting that $\psi(x, y)=(x-\alpha\cdot y)^2$ is pseudo-Lipschitz of order $2$, we get
that, almost surely,
$$
 \lim_{d\rightarrow\infty}\frac{1}{d} \cdot \|\x - \alpha\cdot\B^\top \mathrm{sign}(\B^\top \x)\|_2^2 = \E_{x_1, g} [|x_1 - \alpha (\mu x_1 + \sigma g)|_2^2],
$$
 which implies that
$$
 \lim_{d\rightarrow\infty}\frac{1}{d} \cdot \E_{\x}\|\x - \alpha\cdot\B^\top \mathrm{sign}(\B^\top \x)\|_2^2 = \E_{x_1, g}[ |x_1 - \alpha (\mu x_1 + \sigma g)|_2^2].
$$
By expanding the RHS of the last equation and using that $x_1$ has unit second moment by assumption, we get
\begin{align*}
    \E_{x_1, g}[ |x_1 - \alpha (\mu x_1 + \sigma g)|_2^2] &= 
(1-\alpha\mu)^2 \cdot \E [x_1^2] + \alpha^2\sigma^2 \cdot \E [g^2] = (1-\alpha\mu)^2 + \alpha^2\sigma^2\\
&= 1 - 2\alpha\mu + \alpha^2(\mu^2+\sigma^2) = 1 - 2\alpha \cdot r\sqrt{\frac{2}{\pi}} + \alpha^2 r.
\end{align*}
Thus, by minimizing over $\alpha$, we have
$$
\min_{\alpha}\E_{x_1, g} [|x - \alpha (\mu x + \sigma g)|_2^2] = 1 - \frac{2}{\pi} \cdot r,
$$
which concludes the proof of \eqref{eq:Haarloss}.

To prove \eqref{eq:idloss}, a direct calculation gives
\begin{align*}
    \frac{1}{d} \cdot \E_\x\left\|\x - \alpha \cdot \begin{bmatrix}
        \I_n\\
        \boldsymbol{0}_{(d-n)\times n} 
    \end{bmatrix}\mathrm{sign}([\I_{n}, \boldsymbol{0}_{n \times (d-n)}]\x)\right\|_2^2 &= 1 - r + r \cdot \E[ (x_1 - \alpha \mathrm{sign}(x_1))^2] \\
    &= 1 - r + r \cdot (\E[x_1^2] - 2\alpha \cdot \E[|x_1|] + \alpha^2 \cdot \E [\mathrm{sign}^2(x_1)]) \\
    &= 1-r+r\cdot (1 - 2\alpha \cdot \E[|x_1|] + \alpha^2) \\
    &= 1 + r\cdot(\alpha^2-2\alpha \cdot \E[|x_1|]).
\end{align*}
The RHS is minimized by $\alpha=\E[|x_1|]$, which gives
\begin{align*}
   \min_\alpha \frac{1}{d} \cdot \E_\x\left\|\x - \alpha \cdot \begin{bmatrix}
        \I_n\\
        \boldsymbol{0}_{(d-n)\times n} 
    \end{bmatrix}\mathrm{sign}([\I_{n}, \boldsymbol{0}_{n \times (d-n)}]\x)\right\|_2^2  = 1 - r \cdot (\E [|x_1|])^2,
\end{align*}
and the proof is complete. \qed

\subsection{Proof of Proposition \ref{proposition:1}}\label{subsec:MSEden}

Let $\hat \x^1$ be an iterate of the RI-GAMP algorithm \cite{venkataramanan2022estimation}, as in \eqref{eq:RIGAMPt}. Then, by taking $\sigma$ to be the sign and $f_t=f$, one can readily verify that
$$
\hat\x^1 =f( \B^\top \mathrm{sign}(\B\x)),
$$
which is exactly the form of the autoencoder in \eqref{eq:linear_decoding_denoising} that we wish to analyze. Thus, as $f$ is Lipschitz, the assumptions of 
Theorem 3.1 in \cite{venkataramanan2022estimation} are satisfied and, following the same passages as in the proof of Proposition \ref{proposition:identity_is_better}, we have  
\begin{equation}\label{eq:appendix_highdim_limit}
    \lim_{d\rightarrow\infty}\frac{1}{d} \cdot \E_{\x}\|\x - f( \B^\top \mathrm{sign}(\B\x))\|_2^2 = \E_{x_1, g} [|x_1 - f(\mu x_1 + \sigma g)|_2^2],
\end{equation}
where $x_1$ is the first entry of $\x$, $g\sim\mathcal N(0, 1)$ is independent of $x_1$, and $(\mu, \sigma)$ are given by \eqref{eq:state_evol_params_appendix} (which coincides with \eqref{eq:mu_sigma}). This concludes the proof. \qed

\subsection{Proof of Proposition \ref{proposition:sparse_rademacher_id_denoising}}\label{subsec:MSEid}

A direct calculation gives
\begin{align}\label{eq:prop5.3_1}
    \frac{1}{d} \cdot \E_\x\left\|\x - f\left(  \begin{bmatrix}
        \I_n\\
        \boldsymbol{0}_{(d-n)\times n} 
    \end{bmatrix}\mathrm{sign}([\I_{n}, \boldsymbol{0}_{n \times (d-n)}]\x)\right)\right\|_2^2 &= (1-r) \cdot \E\left[(x_1-f(0))^2\right] + r \cdot \E \left[(x_1 - f(\mathrm{sign}(x_1)))^2\right],
\end{align}
where $x_1$ is the first entry of $\x$.
The first term in \eqref{eq:prop5.3_1} is minimized when $f(0) = \E [x] = 0$. Hence, we obtain that, at the optimum,
$$
(1-r) \cdot \E\left[(x_1-f(0))^2\right] = 1 - r,
$$
as $\E [x^2] = 1$. As for the second term in \eqref{eq:prop5.3_1}, we rewrite
\begin{equation}\label{eq:prop5.3_2}
    \E \left[(x_1 - f(\mathrm{sign}(x_1)))^2\right] = \mu_{x_1}(\{0\}) \cdot \frac{1}{2} \cdot (f(1)^2 + f(-1)^2) + \E [\mathbbm{1}_{x_1>0}(x_1 - f(1))^2] + \E [\mathbbm{1}_{x_1<0}(x_1 - f(-1))^2],
\end{equation}
where $\mu_{x_1}$ stands for the measure that corresponds to the distribution of $x_1$, and we use that $\mathrm{sign}(0)$ is a Rademacher random variable by convention. As the distribution of $x_1$ is the same as that of $-x_1$, \eqref{eq:prop5.3_2} is minimized by taking $f(1)=-f(-1)$. Thus, we have that
$$
\min_{f} \eqref{eq:prop5.3_2} = \min_{u \in \mathbb{R}} \E [(x_1 - u \cdot \mathrm{sign}(x_1))^2].
$$
The RHS of this last expression can be further rewritten as
$$
\min_{u \in \mathbb{R}} \E [(x_1 - u\cdot \mathrm{sign}(x_1))^2] =  \E [x_1^2] + \min_{u \in \mathbb{R}} \left\{ u^2 - 2 u \cdot \E |x_1| \right\} = 1 - (\E |x_1|)^2,
$$
which concludes the proof. \qed

\subsection{Computation of $f^*$}\label{appendix:denoiser_computations}

\paragraph{Sparse Gaussian.} Using Bayes rule, the conditional expectation can be expressed as
\begin{equation}\label{eq:appendix_sg_comp_1}
    \E[x|\mu x + \sigma g = y] = \frac{\E_x\left[ x \cdot P(\mu x + \sigma g = y|x)\right]}{\E_x\left[ P(\mu x + \sigma g = y|x)\right]} = \frac{\E_x\left[ x \cdot P(\mu x + \sigma g=y|x)\right]}{P(\mu x + \sigma g=y)}.
\end{equation}
Given that $x\sim {\rm SG}_1(p)$, with probability $p$ we have that $\mu x + \sigma g \sim \mathcal{N}(0, \mu^2/p + \sigma^2)$ as $x \sim \mathcal{N}(0,1/p)$, and with probability $(1-p)$ we have that $x=0$, and, hence, $\mu x + \sigma g = \sigma g \sim \mathcal{N}(0,\sigma^2)$. Combining gives
$$
P(\mu x + \sigma g = y) = p \cdot \frac{\sqrt{p}}{\sqrt{2\pi(\mu^2+p\sigma^2)}} \cdot \exp\left(-\frac{py^2}{2(\mu^2+p\sigma^2)}\right) + (1-p) \cdot \frac{1}{\sqrt{2\pi\sigma^2}} \cdot \exp\left(-\frac{y^2}{2\sigma^2}\right).
$$
Note that due to sparsity, we have that
\begin{equation}\label{eq:appendix_sg_comp_2}
    \E_x\left[ x \cdot P(\mu x + \sigma g = y|x)\right] = p\cdot \E_{x\sim\mathcal{N}(0,1/p)}\left[x \cdot P(\mu x + \sigma g = y|x)\right],
\end{equation}
and, in this case, we conclude that
$$
\mu x + \sigma g | x \sim \mathcal{N}(\mu x, \sigma^2).
$$
Thus, the RHS of \eqref{eq:appendix_sg_comp_2} is a Gaussian integral, which is straight-forward to calculate by ``completing a square''. The computation gives
$$
\E_{x\sim\mathcal{N}(0,1/p)}\left[x \cdot P(\mu x + \sigma g = y|x)\right] = \sqrt{\frac{p}{2\pi}}
\cdot \mu y \cdot \exp\left(-\frac{py^2}{2(\mu^2+p\sigma^2)}\right) \cdot \frac{1}{(\mu^2 + p\sigma^2)^{3/2}}.
$$
Note that, when $p=1$, i.e., $\x$ is an isotropic Gaussian vector, $f^*$ is just a rescaling by a constant factor, i.e., $f^*(y) = \mathrm{const(\mu, \sigma)} \cdot y$.

\paragraph{Sparse Laplace.} The sparse Laplace distribution with sparsity level $(1-p)$ has the following law 
\begin{equation}\label{eq:sLap}
    (1 - p) \cdot \delta_0 + p \cdot \sqrt{\frac{p}{2}} \cdot \exp\left(-\sqrt{2p} \cdot |x|\right),
\end{equation}
where $\delta_0$ stands for the delta distribution centered at $0$. The scaling for different $p$ is chosen to ensure a unit second moment.

First, we derive the expression for the conditional expectation for $p = 1$. For $p\neq 1$ we elaborate later how a simple change of variables allows to obtain closed-form expressions of the corresponding expectations via the case $p=1$. For $p=1$, the denominator in \eqref{eq:appendix_sg_comp_1} is equivalent to
\begin{align}\label{eq:appendix_sg_comp_3}
    \int_{\mathbb{R}} p(x) p(\mu x + \sigma g = y|x) \mathrm{d}x = \frac{1}{\sqrt{4\pi\sigma^2}}\int_{\mathbb{R}} \exp\left(-\sqrt{2}\cdot |x|\right) \exp\left(-\frac{(y-\mu x)^2}{2\sigma^2}\right)\mathrm{d}x.
\end{align}
By considering two cases, i.e., $x < 0$ and $x \geq 0$, for the limits of integration and for each of them ``completing a square'', we obtain
\begin{align*}
    &\int_{\mathbb{R}_{+}} \exp\left(-\sqrt{2}\cdot x\right) \exp\left(-\frac{(y-\mu x)^2}{2\sigma^2}\right)\mathrm{d}x = \left(1 + \mathrm{erf}\left(\frac{\sqrt{2}\mu y -2\sigma^2}{2\mu\sigma}\right)\right) \cdot \exp\left(\frac{\sigma^2-\sqrt{2}\mu y }{\mu^2}\right) \cdot \sqrt{\frac{\pi}{2}} \cdot \frac{\sigma}{\mu},
    \\
    &\int_{\mathbb{R}_{-}} \exp\left(\sqrt{2}\cdot x\right) \exp\left(-\frac{(y-\mu x)^2}{2\sigma^2}\right)\mathrm{d}x = \mathrm{erfc}\left(\frac{\sqrt{2}\mu y +2\sigma^2}{2\mu\sigma}\right) \cdot \exp\left(\frac{\sigma^2+\sqrt{2}\mu y }{\mu^2}\right) \cdot \sqrt{\frac{\pi}{2}} \cdot \frac{\sigma}{\mu},
\end{align*}
where $\mathrm{erf}(\cdot)$ stands for the Gaussian error function, and $\mathrm{erfc}(\cdot)$ for its complement. For the case of $p\neq1$, we get that the RHS of \eqref{eq:appendix_sg_comp_3} becomes
$$
(1-p) \cdot \frac{1}{\sqrt{2\pi\sigma^2}}\cdot\exp\left(-\frac{y^2}{2\sigma^2}\right) + p\cdot \sqrt{\frac{p}{{4\pi\sigma^2}}} \cdot \int_{\mathbb{R}} \exp\left(-\sqrt{2p}\cdot |x|\right) \exp\left(-\frac{(y-\mu x)^2}{2\sigma^2}\right)\mathrm{d}x.
$$
The change in normalization constant of the second term is then trivial. For the integral itself, consider the change of variables $\tilde{x} = x \cdot \sqrt{p}$:
\begin{align*}
    \int_{\mathbb{R}} \exp\left(-\sqrt{2p}\cdot |x|\right) \exp\left(-\frac{(y-\mu x)^2}{2\sigma^2}\right)\mathrm{d}x &=  \frac{1}{\sqrt{p}} \cdot \int_{\mathbb{R}} \exp\left(-\sqrt{2}\cdot |\tilde{x}|\right) \exp\left(-\frac{(y-\frac{\mu}{\sqrt{p}} \cdot\tilde{x})^2}{2\sigma^2}\right)\mathrm{d}\tilde{x} \\
    &= \frac{1}{\sqrt{p}} \cdot \int_{\mathbb{R}} \exp\left(-\sqrt{2}\cdot |\tilde{x}|\right) \exp\left(-\frac{(y-\tilde{\mu} \cdot\tilde{x})^2}{2\sigma^2}\right)\mathrm{d}\tilde{x},
\end{align*}
which is exactly the previous integral in \eqref{eq:appendix_sg_comp_3} but with $\tilde{\mu} = \mu / \sqrt{p}$ and an additional scaling factor in front.

Consider the numerator of \eqref{eq:appendix_sg_comp_1} for $p = 1$. For this case, the computation reduces to evaluating:
\begin{align}\label{eq:appendix_sg_comp_4}
    \int_{\mathbb{R}} x \cdot p(x) p(\mu x + \sigma g = y|x) \mathrm{d}x = \frac{1}{\sqrt{4\pi\sigma^2}}\int_{\mathbb{R}} x \cdot \exp\left(-\sqrt{2}\cdot |x|\right) \exp\left(-\frac{(y-\mu x)^2}{2\sigma^2}\right)\mathrm{d}x.
\end{align}
Reducing to cases again and ``completing a square'' gives
\begin{align*}
     &\int_{\mathbb{R}_{+}} x\cdot \exp\left(-\sqrt{2}\cdot x\right) \exp\left(-\frac{(y-\mu x)^2}{2\sigma^2}\right)\mathrm{d}x \\ &= \exp\left(-\frac{y^2}{2\sigma^2}\right) \cdot \left[ 
     \frac{\sigma^2}{\mu^2} 
      + \frac{\sqrt{\pi}\sigma \cdot (\sqrt{2} \mu y - 2 \sigma^2) \cdot e^{\frac{(\mu y - \sqrt{2} \sigma^2)^2}{2\mu^2\sigma^2}} \cdot \left(1 + \mathrm{erf}\left(\frac{y}{\sqrt{2}\sigma}-\frac{\sigma}{\mu}\right)\right)}{2\mu^3}\right],
    \\
    &\int_{\mathbb{R}_{-}} x\cdot\exp\left(\sqrt{2}\cdot x\right) \exp\left(-\frac{(y-\mu x)^2}{2\sigma^2}\right)\mathrm{d}x \\
    &= \exp\left(-\frac{y^2}{2\sigma^2}\right) \cdot \left[-\frac{\sigma^2}{\mu^2}
    + \frac{\sqrt{\pi}\sigma \cdot (\sqrt{2} \mu y + 2 \sigma^2) \cdot e^{\frac{(\mu y + \sqrt{2} \sigma^2)^2}{2\mu^2\sigma^2}} \cdot \mathrm{erfc}\left(\frac{y}{\sqrt{2}\sigma}+\frac{\sigma}{\mu}\right)}{2\mu^3}
    \right].
\end{align*}
The derivation for the case $p\neq 1$ can be obtained analogously, by noting that \eqref{eq:appendix_sg_comp_4} in this case is written as
\begin{align*}
    p\cdot \sqrt{\frac{p}{{4\pi\sigma^2}}} \cdot \int_{\mathbb{R}} x \cdot \exp\left(-\sqrt{2p}\cdot |x|\right) \exp\left(-\frac{(y-\mu x)^2}{2\sigma^2}\right)\mathrm{d}x.
\end{align*}

\paragraph{Sparse Rademacher.} The sparse Rademacher distribution with sparsity level $(1-p)$ has the following law
$$
(1-p) \cdot \delta_0 + \frac{p}{2} \cdot \left(\delta_{1/\sqrt{p}} + \delta_{-1/\sqrt{p}}\right).
$$
The denominator in \eqref{eq:appendix_sg_comp_1} reduces to
$$
(1-p) \cdot \frac{1}{\sqrt{2\pi\sigma^2}}\cdot\exp\left(-\frac{y^2}{2\sigma^2}\right) + \frac{p}{2} \cdot \frac{1}{\sqrt{2\pi\sigma^2}} \cdot \left[\exp\left(-\frac{(y-\mu/\sqrt{p})^2}{2\sigma^2}\right) + \exp\left(-\frac{(y+\mu/\sqrt{p})^2}{2\sigma^2}\right)\right].
$$
Moreover, it is easy to see that the enumerator of \eqref{eq:appendix_sg_comp_1} reduces to
$$
\frac{\sqrt{p}}{2}\cdot\left[\exp\left(-\frac{(y-\mu/\sqrt{p})^2}{2\sigma^2}\right) - \exp\left(-\frac{(y+\mu/\sqrt{p})^2}{2\sigma^2}\right)\right].
$$

\section{Experimental details and additional numerical results}

\subsection{Numerical setup}\label{app:num-setup}

\paragraph{Activation function and reparameterization of the weight matrix $\B$.} Since the sign activation has derivative zero almost everywhere, it is not directly suited for gradient-based optimization. To overcome this issue for SGD training of the models described in the main body, we use a ``straight-through'' (see for example \cite{yin2019understanding}) approximation of it. In details, during the forward pass the activation of the network $\sigma(\cdot)$ is treated as a sign activation. However, during the backward pass (gradient computation) the derivatives are computed as if instead of $\sigma(\cdot)$ its relaxed version is used, namely, the tempered hyperbolic tangent:
$$
\sigma_{\tau}(x) = \tanh\left(\frac{x}{\tau}\right).
$$
We also note that such approximation is pointwise consistent except zero: 
$$
\lim_{\tau\rightarrow0} = \sigma(x), \quad \forall \ \x \in \mathbb{R} \setminus \{0\}.
$$
For the experiments we fix the temperature $\tau$ to the value of $0.1$. Refining the approximation further, i.e., making $\tau$ smaller, does not affect the end result, but it makes numerics a bit less stable due to the increased variance of the derivative.

To ensure consistency of the ``straight-though'' approximation, we enforce the condition $\B_{i,:} \in \mathbb{S}^{d-1}$ via a simple differentiable reparameterization. Let $\B \in \mathbb{R}^{n\times d}$ be trainable network parameters, then
$$
\hat{\B}_{i,:} = \frac{\B_{i,:}}{\|\B_{i,:}\|_2}.
$$
It should be noted that it is not clear whether this constraint is necessary, since during the forward pass we use directly $\sigma(\cdot)$, which is agnostic to the row scaling of $\B$.

\paragraph{Augmentation and whitening.} For the natural image experiments in Figures \ref{fig:noniso_exps}, \ref{fig:noniso_exps_l} and \ref{fig:noniso_exps_2}, we use data augmentation to bring the amount of images per class to the initial dataset scale. This step is crucial to simulate the minimization of the population risk and not the empirical one, when the number of samples per class is insufficient. We augment each image $15$ times for CIFAR-10 data and $10$ times for MNIST data. We note that the described amount of augmentation is sufficient:  increasing it further does not change the results of the numerical experiments and only increases computational cost.

The whitening procedure corresponds to the matrix multiplication of each image by the inverse square root of the empirical covariance of the data. This is done to ensure that the data is isotropic (to be closer to the i.i.d.\ data assumption needed for the theoretical analysis). More formally, let $\X \in \mathbb{R}^{\mathrm{n_{\rm samples}}\times d}$ be the augmented data that is centered, i.e., the data mean is subtracted. Its empirical covariance is then given by
$$
\hat{\bSigma} = \frac{1}{\mathrm{n_{\rm samples}} - 1} \cdot \sum_{i=1}^{\mathrm{n_{\rm samples}}} \X_{i,:} \X_{i,:}^\top.
$$
In this view, the whitened data $\hat{\X}\in \mathbb{R}^{\mathrm{n_{\rm samples}}\times d}$ is obtained from the initial data $\X$ as follows
$$
\hat{\X}_{i,:} = \hat{\bSigma}^{-\frac{1}{2}} \X_{i,:},
$$
where $\X_{i,:}$ defines the $i$-th data sample.

\subsection{Phase transition and staircase in the learning dynamics for the autoencoder in \eqref{eq:linear_decoding}}\label{appendix:pt_sprase_gaussian_mixture}

First, we provide an additional numerical simulation similar to the one in Figure \ref{fig:sgd_rademacher} for the case of non-sparse Rademacher data, i.e., $p=1$. Since condition \eqref{eq:phase_transition} holds, we expect the minimizer to be a permutation of the identity, and the corresponding SGD dynamics to experience a staircase behaviour, as discussed in Section \ref{sec:4}. Namely, the SGD algorithm first finds a random rotation that achieves Gaussian performance (indicated by the orange dashed line). Next, it searches a direction towards a sparse solution given by a permutation of the identity, and the corresponding loss remains at the plateau. Finally, the correct direction is found, and SGD quickly converges to the optimal solution.

\begin{figure}[H]
    \centering
    \subfloat{\includegraphics[width=0.52\columnwidth]{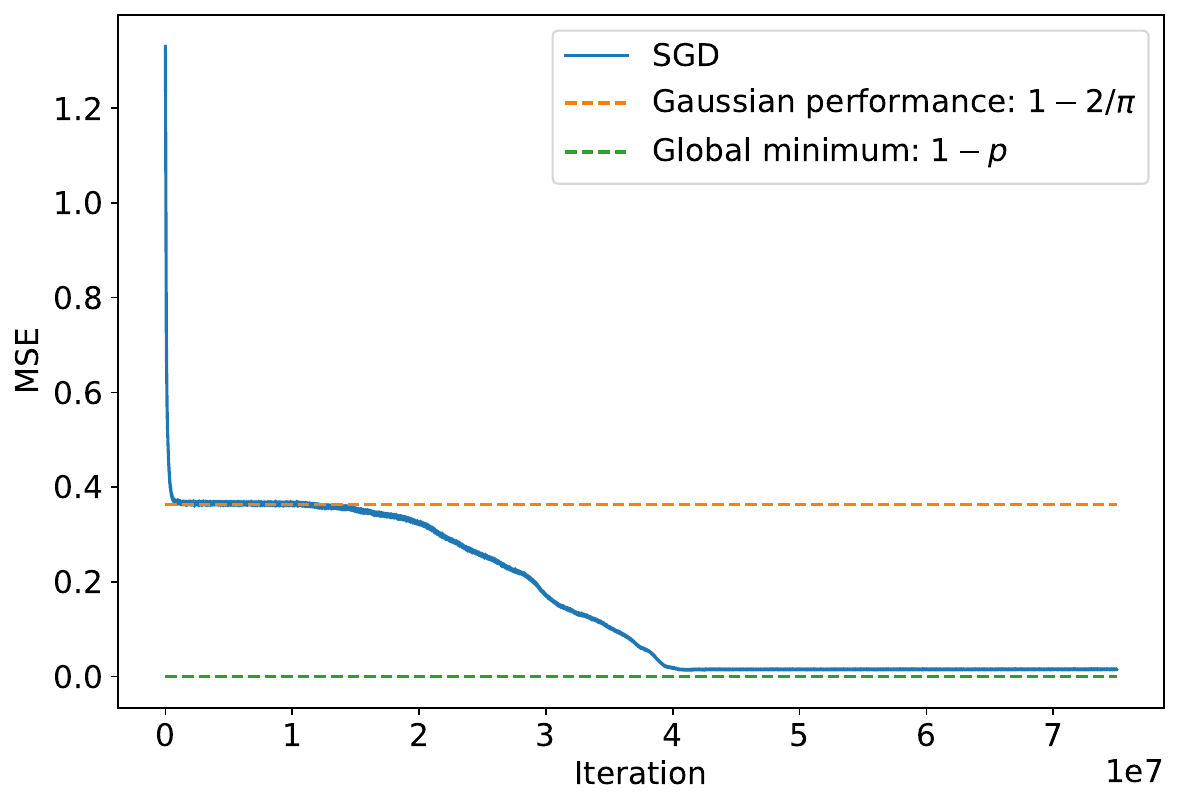}}\hspace{0.2em}
\caption{Compression of Rademacher data ($p=1$) via the autoencoder in \eqref{eq:linear_decoding}. We set $d=200$ and $r=1$. The MSE is plotted as a function of the number of iterations, and it displays a staircase behavior.}\label{fig:sgd_rademacher_2}
\end{figure}

Next, we consider the compression of $\x$ with i.i.d.\ components distributed according to the following sparse mixture of Gaussians:
$$
x_i \sim p \cdot \left(\frac{1}{2} \cdot \mathcal{N}\left(1, \frac{1-p}{p}\right) + \frac{1}{2} \cdot \mathcal{N}\left(-1, \frac{1-p}{p}\right)\right) + (1-p) \cdot \delta_0.
$$
It is easy to verify that $\E [x_i^2] = 1$. In order to compute the transition point we need to access the first absolute moment of $x_i$, i.e., $\E |x_i|$. Using the result in \cite{winkelbauer2012moments}, we are able to claim that
\begin{equation}\label{eq:bulbazaur_abs_moment}
   \E_{x \sim \mathcal{N}(\pm 1, \sigma^2)} |x| = \sigma \sqrt{\frac{2}{\pi}} \cdot \Phi\left(-\frac{1}{2}, \frac{1}{2}, -\frac{1}{2\sigma^2}\right), 
\end{equation}
where $\Phi(a,b,c)$ stands for Kummer's confluent hypergeometric function:
$$
\Phi(a,b,c) = \sum_{n=1}^\infty \frac{a^{\overline{n}}}{b^{\overline{n}}} \cdot \frac{c^n}{n!},
$$
with $x^{\overline{n}}$ denoting the rising factorial, i.e.,
$$
x^{\overline{n}} = z \cdot (z+1) \cdot \dots \cdot (z+n-1), \quad n \in \mathbb{N}_{0}.
$$
We use \texttt{scipy.special.hyp1f1} to evaluate numerically $\Phi\left(-\frac{1}{2}, \frac{1}{2}, -\frac{1}{2\sigma^2}\right)$, where $\sigma^2 = (1-p) / p$. Likewise, to find $p_{\mathrm{crit}}$
at which
$
\E |x_i| = \sqrt{\frac{2}{\pi}}
$
we rely on numerics. The results are presented in Figure \ref{fig:spare_mixture_of_gaussians_phase_transition}.

\begin{figure*}[h]
    \centering
    \subfloat{\includegraphics[width=0.402\columnwidth]{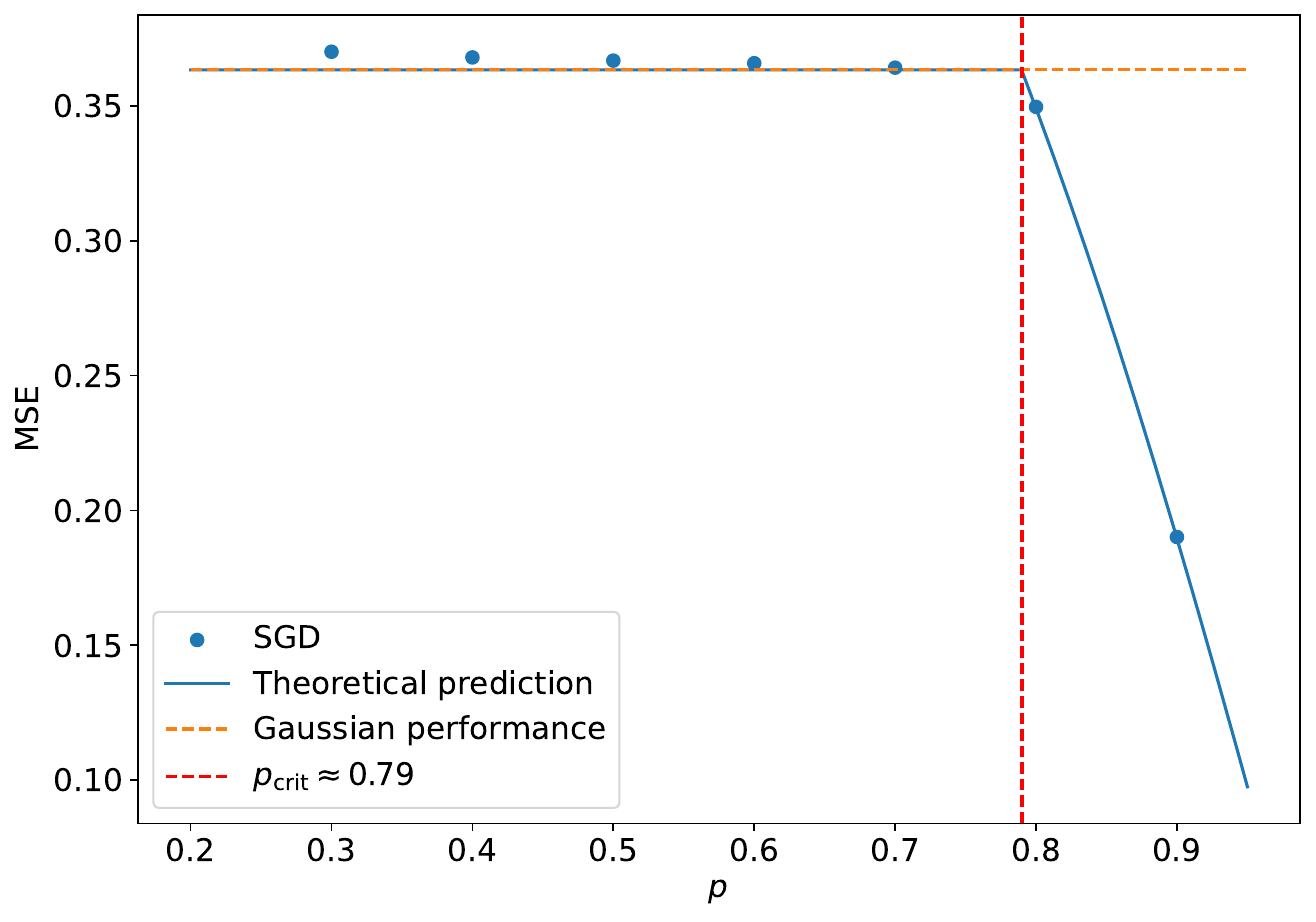}}
    \hspace{-0.1em}
    \subfloat{\includegraphics[width=0.285\columnwidth]{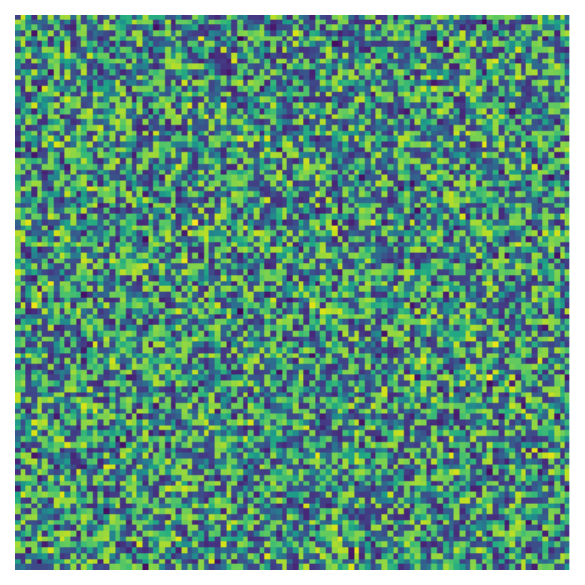}}
    \subfloat{\includegraphics[width=0.285\columnwidth]{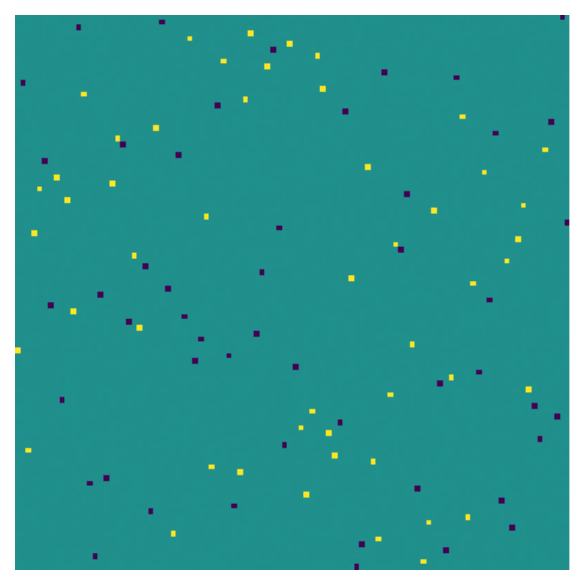}}\hspace{0.2em}
\vspace{-0.5em}\caption{Compression of data whose distribution is given by a sparse mixture of Gaussians via the autoencoder in \eqref{eq:linear_decoding}. We set $d=100$ and $r=1$. \emph{Left.} MSE achieved by SGD at convergence, as a function of the sparsity level $p$. The empirical values (dots) match our theoretical prediction (blue line): for $p<p_{\mathrm{crit}}$, the loss is equal to the value obtained for Gaussian data, i.e., $1-2r/\pi$; for $p> p_{\mathrm{crit}}$, the loss is smaller, and it is equal to $1-r\cdot (\E|x_1|)^2$. \emph{Center.} Encoder matrix $\B$ at convergence of SGD when $p=0.6<p_{\mathrm{crit}}$: the matrix is a random rotation. \emph{Right.} Encoder matrix $\B$ at convergence of SGD when $p=0.9\ge p_{\mathrm{crit}}$. The negative sign in part of the entries of $\B$ is cancelled by the corresponding sign in the entries of $\A$. Hence, $\B$ is equivalent to a permutation of the identity.}\label{fig:spare_mixture_of_gaussians_phase_transition}
\end{figure*}

We remark that the first absolute moment can always be estimated via Monte-Carlo sampling if a functional expression such as \eqref{eq:bulbazaur_abs_moment} is out of reach. We also note that the behaviour of the predicted curve after the transition point $p_{\mathrm{crit}}$ can be arbitrary. In particular, it is not always linear like in the case of sparse Rademacher data in Figure \ref{fig:rademacher_phase_transition}. For instance, in the case of the sparse Gaussian mixture of Figure \ref{fig:spare_mixture_of_gaussians_phase_transition}, the shape is clearly of non-linear nature.

\begin{figure}[H]
    \centering
    \subfloat{\includegraphics[width=0.52\columnwidth]{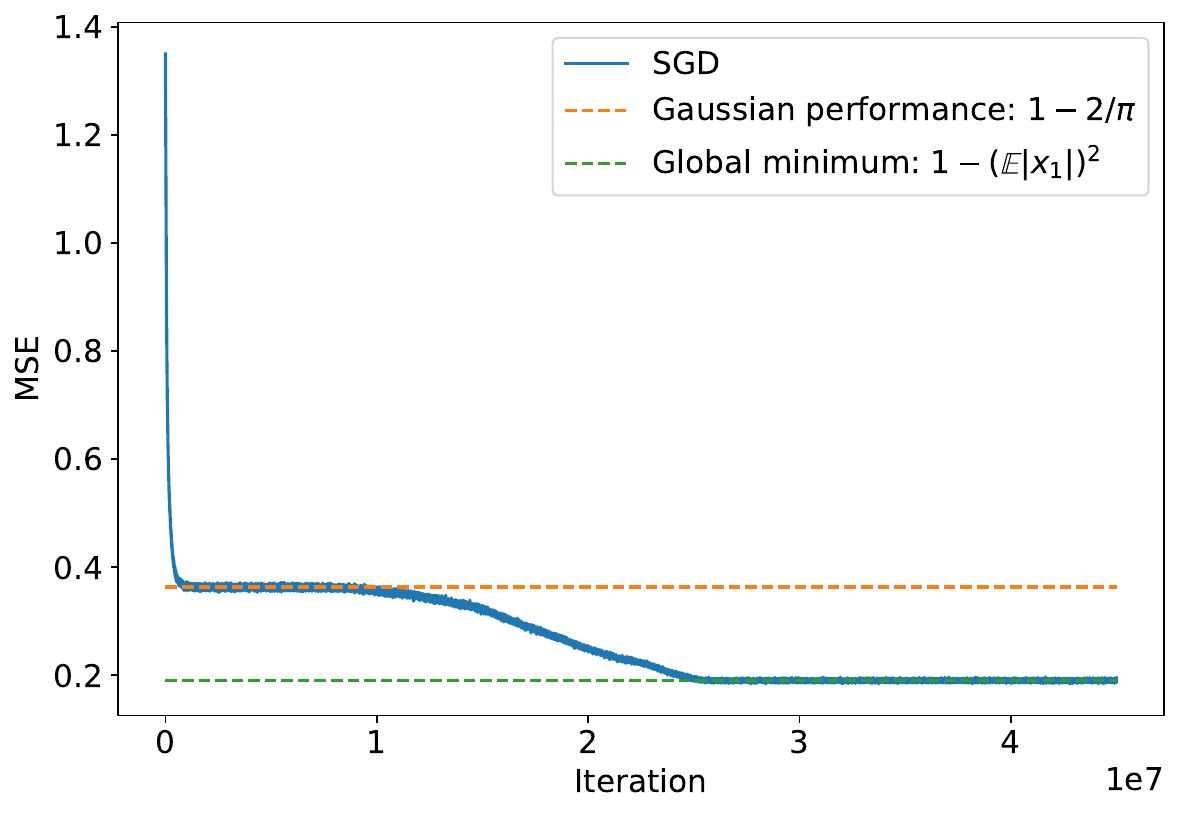}}\hspace{0.2em}
\caption{Compression of data whose distribution is given by a sparse mixture of Gaussians via the autoencoder in \eqref{eq:linear_decoding}. We set $d=100$, $r=1$, and $p=0.9$. The MSE is plotted as a function of the number of iterations and, as $p> p_{\mathrm{crit}}$, it displays a staircase behavior.}\label{fig:sgd_pt_sparse_gauss_2}
\end{figure}

In Figure \ref{fig:sgd_pt_sparse_gauss_2}, we provide an experiment similar to that of Figure \ref{fig:sgd_rademacher}, but for the compression of a sparse mixture of Gaussians with $p=0.9$ at $r=1$. We can clearly see that Figure \ref{fig:sgd_pt_sparse_gauss_2} again indicates the emergent staircase behaviour of the SGD loss for $p>p_{\mathrm{crit}}$.

\subsection{MNIST experiment}\label{appendix:masked_mnist}

In this subsection, we provide additional numerical evidence complementing the results presented in Figure \ref{fig:noniso_exps}. Namely, we provide a similar evaluation on Bernoulli-masked whitened MNIST data. As for the experiment in Figure \ref{fig:noniso_exps}, the sparsity level $p$ is set to $0.7$.
\begin{figure}[h]
\hspace{4em}\begin{tabular}{@{}cc@{}}
    \raisebox{-0.895\height}{\includegraphics[width=0.6\textwidth]{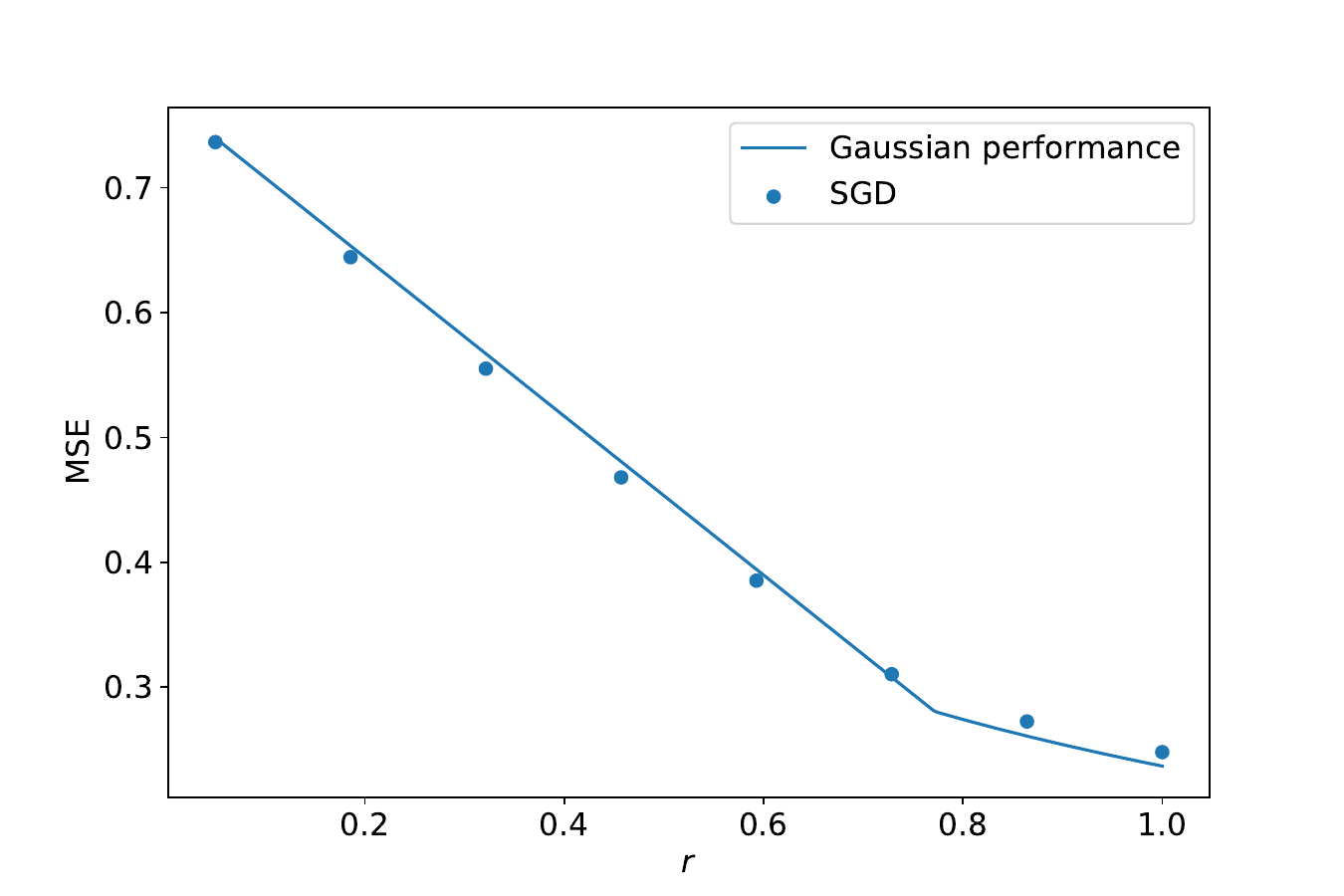}} & 
    \begin{tabular}[t]{@{}cc@{}}
        \raisebox{-\height}{\includegraphics[width=0.17\textwidth]{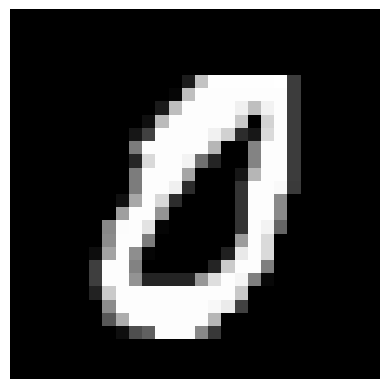}} &  \\[1.8cm]
        \raisebox{-\height}{\includegraphics[width=0.17\textwidth]{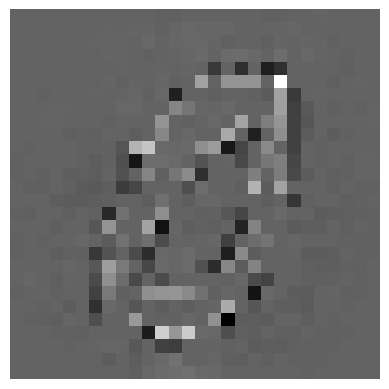}} & 
    \end{tabular}
\end{tabular}
\vspace{-0em}\caption{Compression of masked and whitened MNIST images that correspond to digit ``zero'' via the two-layer autoencoder in \eqref{eq:linear_decoding}. First, the data is whitened so that it has identity covariance (as in the setting of Theorem \ref{thm:GD-min-sparse-body}). Then, the data is masked by setting each pixel independently to $0$ with probability $p=0.7$. An example of an original image is on the top right, and the corresponding masked and whitened image is on the bottom right. The SGD loss at convergence (dots) matches the solid line, which corresponds to the prediction in \eqref{eq:gaussian_val} for the compression of standard Gaussian data (with no sparsity).\vspace{-5mm}}\label{fig:noniso_exps_2} .
\end{figure}

Note that the eigen-decomposition of the covariance of MNIST data has zero eigenvalues. In this case, we need to apply the lower bound from \cite{shevchenko2023fundamental} that accounts for a degenerate spectrum. The corresponding result is stated in Theorem {\color{mydarkblue} 5.2} of \cite{shevchenko2023fundamental}. In particular, the number of zero eigenvalues $n_0$ is equal to $179$, which means that at the value of the compression rate $r$ given by
$$
r = \frac{d - n_0}{d} = \frac{28^2-179}{28^2} \approx 0.77
$$
the derivative of the lower bound experiences a jump-like behavior, as described in 
\cite{shevchenko2023fundamental}.

\subsection{CIFAR-10: Laplace approximation of pixel distribution}\label{appendix:laplace_approx}

\begin{figure}[h!]
    \centering
    \subfloat{\includegraphics[width=0.48\columnwidth]{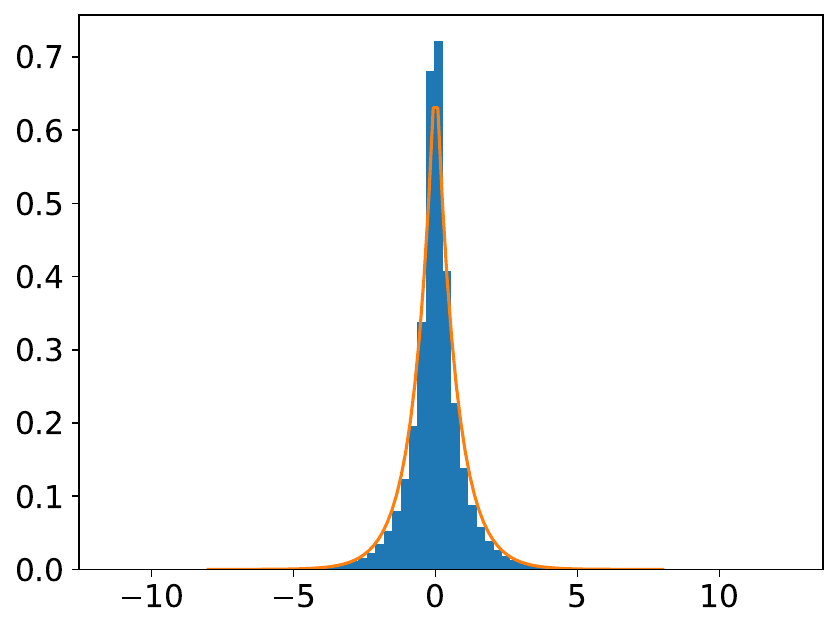}}
\vspace{-3mm}\caption{Empirical distribution of whitened CIFAR-10 image pixels (blue histogram), and its approximation via a Laplace distribution with unit second moment (orange curve).\vspace{-3mm}}\label{fig:nlaplace_approx_appendix}
\end{figure}

Figure \ref{fig:nlaplace_approx_appendix} demonstrates the quality of the Laplace approximation for whitened CIFAR-10 images. Namely, we note that 
the empirical distribution of the image pixels after whitening is well approximated by a Laplace random variable with unit second moment.

\subsection{Provable benefit of nonlinearities for the compression of  sparse Gaussian data}\label{app:prov}

\begin{figure*}[h]
    \centering
    \subfloat{\includegraphics[width=0.55\columnwidth]{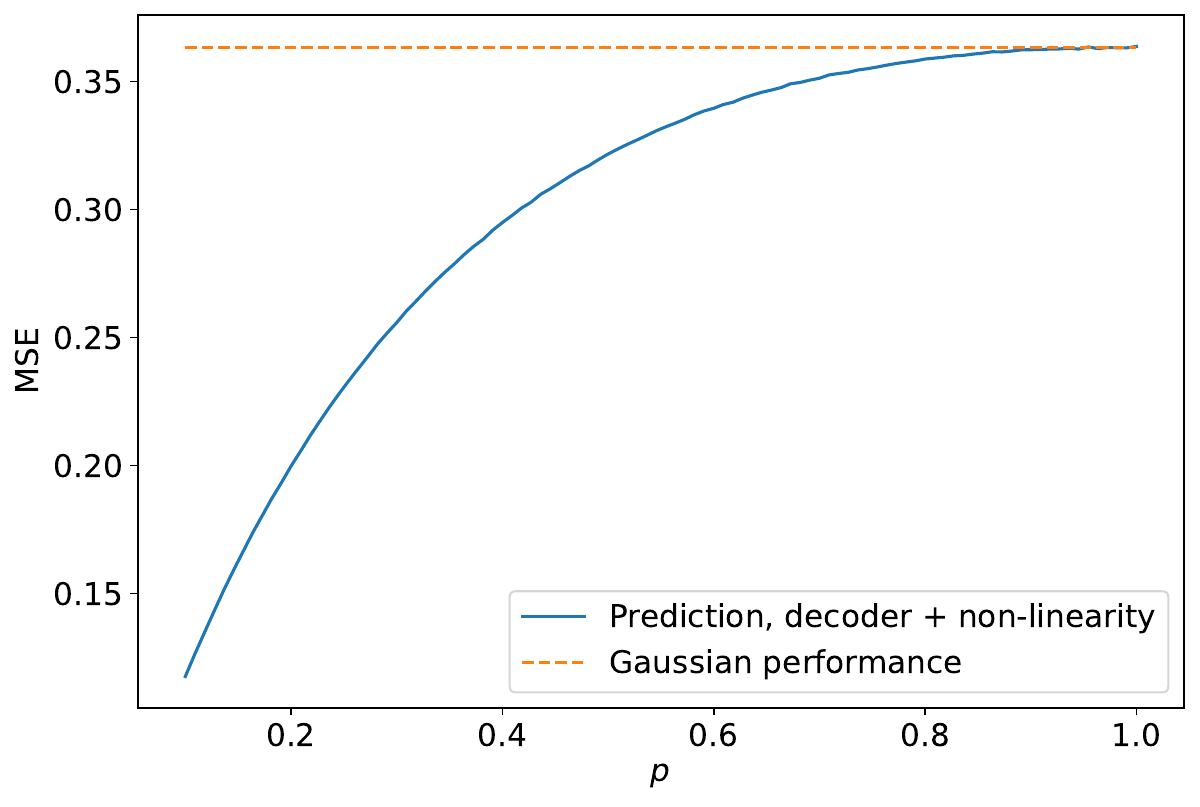}}
\vspace{-0.5em}\caption{Compression of sparse Gaussian data. We set $r=1$. The solid blue line corresponds to the MSE in  \eqref{eq:1RIGAMP} with $f=f^*$ (defined in \eqref{eq:pmean}), for different values of $p$; the dashed orange line corresponds to the Gaussian performance in \eqref{eq:gaussian_val}, which is achieved by the autoencoder in \eqref{eq:linear_decoding}.\vspace{-0mm}}\label{fig:MSEcomp} 
\end{figure*}

Figure \ref{fig:MSEcomp} considers the compression of sparse Gaussian data, and it shows that the MSE achieved by the autoencoder in \eqref{eq:linear_decoding_denoising} with the optimal choice of $f$ (namely, the RHS of \eqref{eq:1RIGAMP} with $f=f^*$) is strictly lower than the MSE \eqref{eq:gaussian_val} achieved by the autoencoder in \eqref{eq:linear_decoding}, for any sparsity level $p\in (0, 1)$. The conditional expectation $\E[x_1|\mu x_1 + \sigma g]$ (cf.\ the definition of $f^*$ in \eqref{eq:pmean}) is computed numerically via a Monte-Carlo approximation.

\subsection{Phase transition and staircase in the learning dynamics for the autoencoder in \eqref{eq:linear_decoding_denoising}}\label{appendix:denoiser_pt}

For sparse Rademacher data, the optimal $f^*$ given by \eqref{eq:pmean} is computed explicitly in Appendix \ref{appendix:denoiser_computations} and plotted in Figure \ref{fig:sparse_rademacher_denoiser_function}. We note that functions of the form in  \eqref{eq:parametric_denoiser} are unable to approximate $f^*$ well. Thus, in the experiments we use a different parametric function for $f$ given by the following mixture of hyperbolic tangents:
\begin{equation}\label{eq:sparse_rademacher_parametric_denoiser}
    f(x) = \mathbbm{1}_{x\geq0} \cdot (\gamma_1 \cdot \tanh(\varepsilon_1 \cdot x - \alpha_1) + \beta_1) + \mathbbm{1}_{x<0} \cdot (\gamma_2 \cdot \tanh(\varepsilon_2 \cdot x - \alpha_2) + \beta_2).
\end{equation}

\begin{figure*}[h]
    \centering
    \subfloat{\includegraphics[width=0.50\columnwidth]{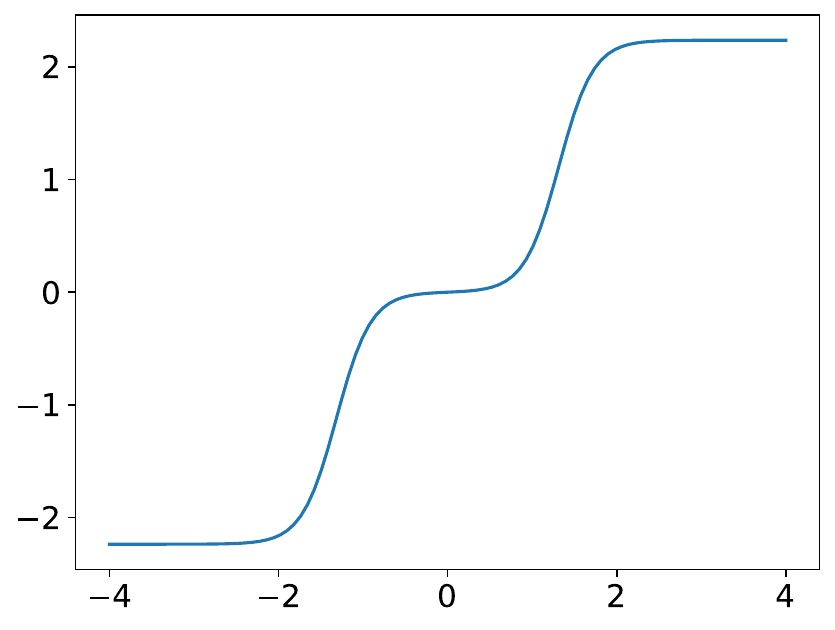}}
\vspace{-0.5em}\caption{Optimal $f^*$ in \eqref{eq:pmean} when $x_1$ is a sparse Rademacher random variable. We set $r=1$ and $p=0.2$.\label{fig:sparse_rademacher_denoiser_function}
}
\end{figure*}

The numerical evaluation of the autoencoder in \eqref{eq:linear_decoding_denoising} with $f$ of the form in \eqref{eq:sparse_rademacher_parametric_denoiser} for the compression of sparse Rademacher data is provided in Figure \ref{fig:rademacher_denoising_phase_transition}. We set $r=1$ and $d=200$. The solid blue line corresponds to the prediction of Proposition \ref{proposition:1}, obtained for random Haar $\B$; the solid orange line corresponds to the prediction of Proposition \ref{proposition:sparse_rademacher_id_denoising}, obtained for $\B$ equal to the identity. The blue dots correspond to the performance of SGD, and they exhibit the transition in the learnt $\B$ from a random Haar matrix ($p<p_{\mathrm{crit}}$) to a permutation of the identity ($p>p_{\mathrm{crit}}$). The critical value $p_{\mathrm{crit}}$ is obtained from the intersection between the blue curve and the orange curve. For all values of $p$, the autoencoder in 
\eqref{eq:linear_decoding_denoising} outperforms the Gaussian MSE \eqref{eq:gaussian_val} (green dashed line) and, hence, it is able to exploit the structure in the data. 

For $p>p_{\mathrm{crit}}$, the staircase behavior of the SGD training dynamics is presented in Figure \ref{fig:sgd_pt_sparse_rademacher_transition}. 

\begin{figure*}[h]
    \centering
    \subfloat{\includegraphics[width=0.492\columnwidth]{pics/sparse_rademacher_denoising_total.pdf}}
    \hspace{-1.6em}
    \subfloat{\includegraphics[width=0.26\columnwidth]{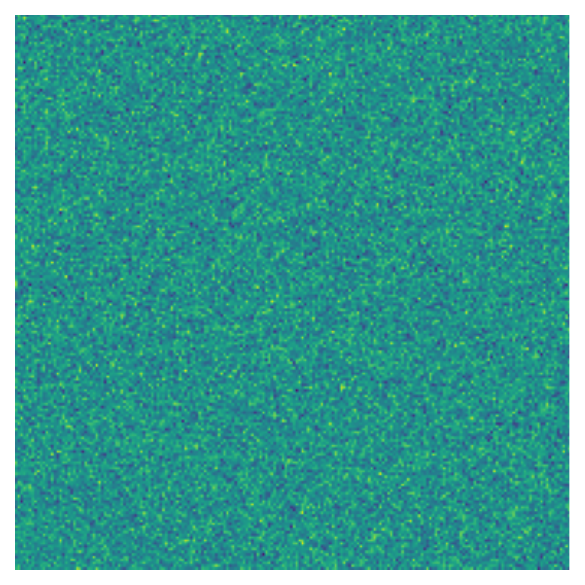}}
    \subfloat{\includegraphics[width=0.26\columnwidth]{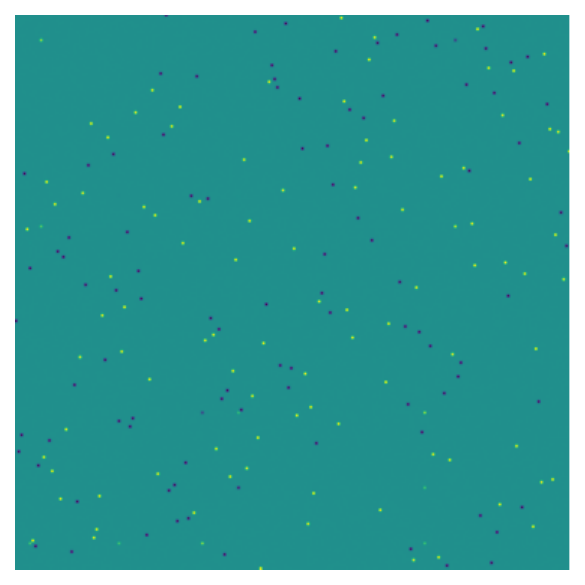}}\hspace{0.2em}
\vspace{-0.5em}\caption{Compression of sparse Rademacher data via the autoencoder in \eqref{eq:linear_decoding_denoising} with $f$ of the form in \eqref{eq:sparse_rademacher_parametric_denoiser}. We set $d=200$ and $r=1$. \emph{Left.} MSE achieved by SGD at convergence, as a function of the sparsity level $p$. The empirical values (dots) match our theoretical prediction (blue line). For $p<p_{\mathrm{crit}}$, the loss is given by Proposition \ref{proposition:1} for $\B$ sampled from the Haar distribution; for $p\ge p_{\mathrm{crit}}$, the loss is given by Proposition \ref{proposition:sparse_rademacher_id_denoising} for $\B$ equal to the identity. \emph{Center.} Encoder matrix $\B$ at convergence of SGD when $p=0.3<p_{\mathrm{crit}}$: the matrix is a random rotation. \emph{Right.} Encoder matrix $\B$ at convergence of SGD when $p=0.7\ge p_{\mathrm{crit}}$. The negative sign in part of the entries of $\B$ is cancelled by the corresponding sign in the entries of $\A$. Hence, $\B$ is equivalent to a permutation of the identity.}\label{fig:rademacher_denoising_phase_transition}
\end{figure*}

\begin{figure}[H]
    \centering
    \subfloat{\includegraphics[width=0.52\columnwidth]{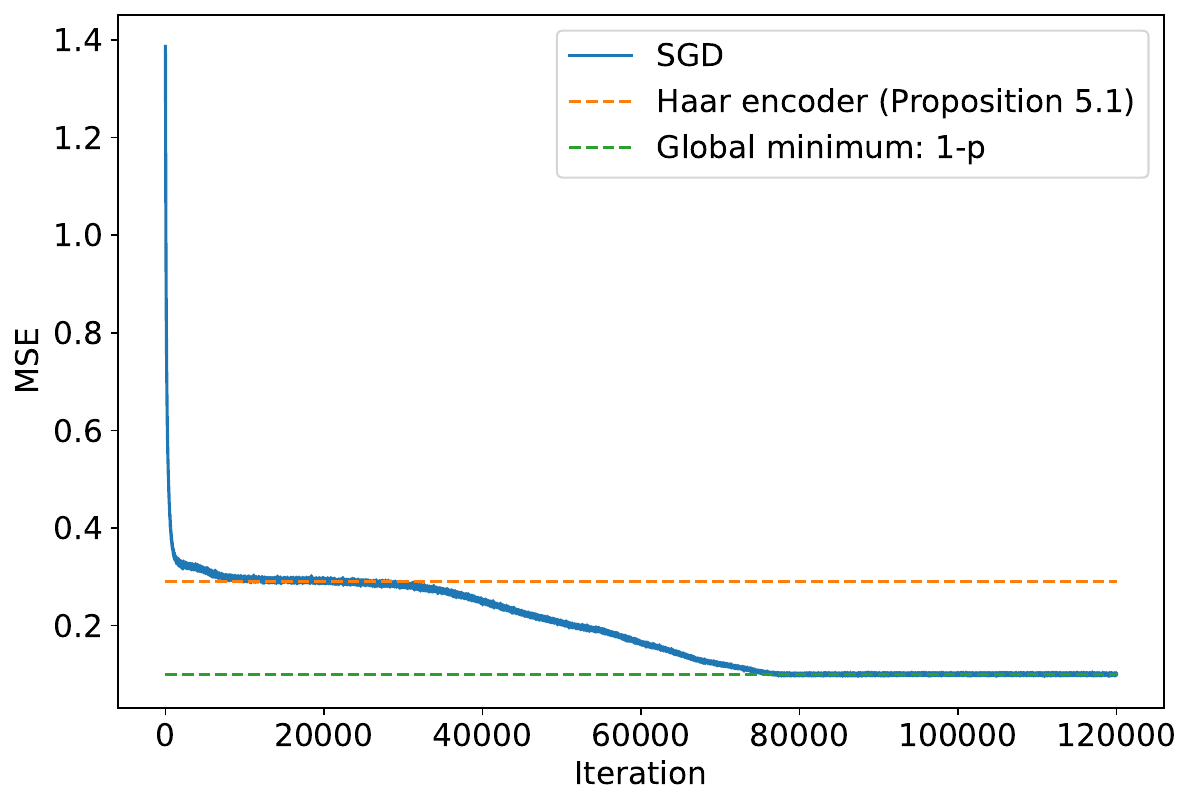}}\hspace{0.2em}
\caption{Compression of sparse Rademacher data via the autoencoder in \eqref{eq:linear_decoding_denoising}. We set $d=200$, $r=1$, and $p=0.9$. The MSE is plotted as a function of the number of iterations and, as $p> p_{\mathrm{crit}}$, it displays a staircase behavior.}\label{fig:sgd_pt_sparse_rademacher_transition}
\end{figure}

\subsection{Discussion on multi-layer decoder}\label{appendix:rigamp_like_nn}

First, let us elaborate on some design points for the network in \eqref{eq:rigamp_like_decoder}. The merging operations $\oplus_2$ and $\oplus_3$ play the role of the correction terms $- \sum_{i=1}^{t-1} \beta_{t,i} \hat{\bx}^i$ and $- \sum_{i=1}^{t} \alpha_{t,i} \hat{\bz}^i$ in the RI-GAMP iterates in \eqref{eq:RIGAMPt}. Furthermore, the composition of $\oplus_3$ and $f_2(\cdot)$ in $\hat{\bx}_2$ approximates taking the posterior mean in  \eqref{eq:RIGAMPt}.
We note that the network \eqref{eq:rigamp_like_decoder} can be generalized to emulate more RI-GAMP iterations, at the cost of additional layers and skip connections (induced by the merging operations $\oplus_k$).

In the rest of this appendix, we discuss how to obtain the orange curve in the right plot of Figure \ref{fig:rigamp_plots}, which corresponds to the Bayes-optimal MSE when $\B$ is sampled from the Haar distribution. This optimal MSE is achieved by the fixed point of the VAMP algorithm proposed in \cite{rangan2019vector}. Thus, we implement the state evolution recursion from \cite{rangan2019vector}, in order to evaluate the fixed point. 

As the specific setting considered here ($\x\sim {\rm SG}_d(p)$, $\B$ a Haar matrix, and a generalized linear model with $\rm sign$ activation) is not considered in \cite{rangan2019vector}, we provide explicit expressions for the recursion leading to the desired MSE. 
\begin{figure*}[t!]
    \centering
    \subfloat{\includegraphics[width=0.55\columnwidth]{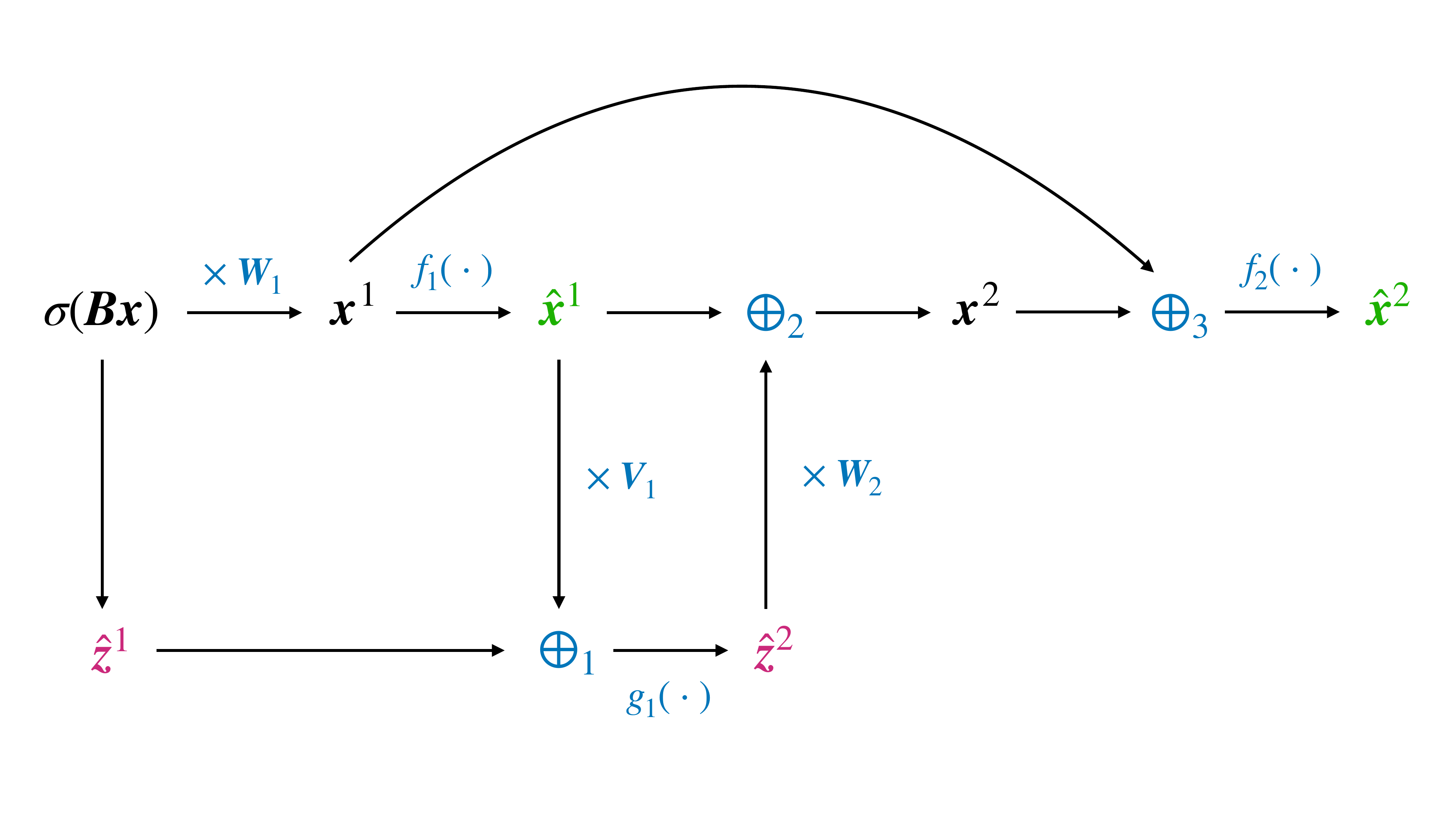}}\hspace{0.2em}
\vspace{-4mm}\caption{Block diagram of the decoder in \eqref{eq:rigamp_like_decoder}.}\label{fig:rigamp_decoder_block_diagram}
\end{figure*}

\paragraph{First state evolution function - $\mathcal{E}_1(\gamma_1)$.} We start with the state evolution function that is equal to the following expected value of the derivative of the conditional expectation
\begin{equation}\label{eq:appendix_e_cond}
    \mathcal{E}_1(\gamma_1) = \E_{R_1}\left[\frac{\partial}{\partial R_1} \mathbb{E}[X|R_1 = X + P]\right], \quad X \sim {\rm SG}_1(p),\quad P \sim \mathcal{N}(0, \gamma_1^{-1}).
\end{equation}
For completeness, we note that the quantity
$$
\frac{\partial}{\partial R_1} \mathbb{E}[X|R_1 = X + P]
$$
is in fact the conditional variance $\mathrm{Var}[X|R_1 = X + P]$ up to a scaling \cite{dytso2020general}, which is related to the optimal MSE.

 Modulo the scalings, the computation of $\mathbb{E}[X|R_1 = X + P]$ is similar to the computation performed in Section \ref{appendix:denoiser_computations}. For brevity, we just state the final result:
\begin{equation}\label{appendix:vamp_cond}
    \mathbb{E}[X|R_1 = X + P] = \frac{p\cdot\frac{R_1}{\sqrt{2\pi p^{-1}}}\cdot\exp\left(-\frac{pR_1^2}{2(p \gamma_1^{-1} + 1)}\right) \cdot \frac{1}{(p \gamma_1^{-1} + 1)^{3/2}}}{p \cdot \frac{1}{\sqrt{2 \pi (p^{-1} + \gamma_1^{-1})}} \cdot \exp\left(-\frac{pR_1^2}{2(p\gamma_1^{-1} + 1)}\right) + (1-p) \cdot \frac{1}{\sqrt{2\pi\gamma_1^{-1}}} \cdot \exp\left(-\frac{R_1^2}{2\gamma_1^{-1}}\right)}:= \frac{E(R_1)}{p(R_1)}.
\end{equation}
Taking the partial derivative in $R_1$ and substituting in \eqref{eq:appendix_e_cond} yields:
\begin{equation}\label{appendix:vamp_cond_prime_1}
\begin{split}
    \mathcal{E}_1(\gamma_1) &= \gamma_1^{-1}\int_{\R}  \frac{\partial}{\partial R_1}\mathbb{E}[X|R_1 = X + P] \cdot p(R_1)\mathrm{d}R_1 = \gamma_1^{-1}\int_{\R} \frac{E'(R_1) p(R_1) - E(R_1)p'(R_1)}{p^2(R_1)} p(R_1)\mathrm{d}R_1 \\
    &= \gamma_1^{-1} \int_{\R} \left(E'(R_1) - E(R_1) \cdot \frac{\partial}{\partial R_1}\log p(R_1)\right) \mathrm{d}R_1.
\end{split}
\end{equation}
We can readily verify that
$$
\int_{\mathbb{R}} E'(R_1)\mathrm{d}R_1 = \lim_{\mathrm{ext}\rightarrow\infty}E(R_1)\Bigg|_{-\mathrm{ext}}^{+\mathrm{ext}} = 0.
$$
An integration by parts for the remaining term in \eqref{appendix:vamp_cond_prime_1} gives:
\begin{equation}\label{appendix:vamp_cond_prime_2}
    \mathcal{E}_1(\gamma_1) = \gamma_1^{-1}\lim_{\mathrm{ext}\rightarrow\infty}E(R_1)\log p(R_1)\Bigg|_{-\mathrm{ext}}^{+\mathrm{ext}} - \gamma_1^{-1}\int_{\mathbb{R}} E'(R_1) \log p(R_1) \mathrm{d}R_1 = - \gamma_1^{-1}\int_{\mathbb{R}} E'(R_1) \log p(R_1) \mathrm{d}R_1.
\end{equation}
The RHS of \eqref{appendix:vamp_cond_prime_2} is then evaluated via numerical integration. For completeness, the derivative $E'(R_1)$ has the following form:
\begin{equation*}
    \begin{split}
        E'(R_1) &= p\cdot\frac{1}{\sqrt{2\pi p^{-1}}}\cdot\exp\left(-\frac{pR_1^2}{2(p \gamma_1^{-1} + 1)}\right) \cdot \frac{1}{(p \gamma_1^{-1} + 1)^{3/2}} \\&- 
        p^2\cdot\frac{R_1^2}{\sqrt{2\pi p^{-1}}}\cdot\exp\left(-\frac{pR_1^2}{2(p \gamma_1^{-1} + 1)}\right) \cdot \frac{1}{(p \gamma_1^{-1} + 1)^{5/2}}.
    \end{split}
\end{equation*}

\paragraph{Second state evolution function - $\mathcal{E}_2(\tau_2, \gamma_2)$.} This function is defined in terms of spectrum of $\B^\top\B \in \mathbb{R}^{d\times d}$. Namely, for $r\leq 1$, the distribution of the eigenvalues of $\B^\top\B$ obeys the following law
$$
\rho_{S} = r \cdot \delta_1 + (1-r) \cdot \delta_0.
$$
The state evolution function $\mathcal{E}_2(\tau_2,\gamma_2)$ is then defined as follows
$$
\mathcal{E}_2(\tau_2,\gamma_2) := \E_{S \sim \rho_S} \left[\frac{1}{\tau_2 \cdot S^2 + \gamma_2}\right] = r \cdot \frac{1}{\tau_2 + \gamma_2} + (1-r) \cdot \frac{1}{\gamma_2}.
$$

\paragraph{Third state evolution function - $\mathcal{B}_2(\tau_2,\gamma_2)$.} The computation is similar to the case of the second state evolution function. Namely, the third state evolution function is defined as follows:
$$
\mathcal{B}_2(\tau_2,\gamma_2) = \frac{1}{r} \cdot \E_{S \sim \rho_S} \left[\frac{\tau_2 S^2}{\tau_2 S^2 + \gamma_2}\right] = \frac{1}{r} \cdot r \cdot \frac{\tau_2}{\tau_2 + \gamma_2} = \frac{\tau_2}{\tau_2+\gamma_2}.
$$

\paragraph{Fourth state evolution function - $\mathcal{B}_1(\tau_1)$.} The last state evolution function is defined similarly to $\mathcal{E}_1(\gamma_1)$, namely
\begin{equation}\label{appendix:den4_1}
    \mathcal{B}_1(\tau_1) = \E_{P_1,Y}\left[ \frac{\partial}{\partial P_1} \E[Z|P_1,Y]\right].
\end{equation}
Here, $Z\sim \mathcal{N}(0,1)$ has variance one (since the spectrum of $\B$ has unit variance), $Y = \mathrm{sign}(Z)$ and $P_1 = b \cdot Z + a \cdot G$, where $G\sim \mathcal{N}(0,1)$ is independent of $Z$ and
$$
b = 1 - \tau_1^{-1}, \quad a = \sqrt{b \cdot (1-b)}.
$$
The outer expectation in \eqref{appendix:den4_1} is estimated via Monte-Carlo. We now compute the conditional expectation. First note that the following decomposition (depending on the sign of $Y$) holds:
\begin{equation}\label{appendix:den4_2}
    \E [Z|P_1, Y] = \E [Z'|P_1'],
\end{equation}
where $Z' = \mathbbm{1}_{ZY \geq 0} \cdot Z$ and $P'_1 = b\cdot Z'+a \cdot G$. Using Bayes formula, we get that
\begin{equation}\label{appendix:den4_3}
    \E [Z'|P'_1] = \frac{\int_{ZY\geq0} Z \exp\left(-\frac{Z^2}{2}\right) \exp\left(-\frac{(P_1-bZ)^2}{2a^2}\right)\mathrm{d}Z}{\int_{ZY\geq0} \exp\left(-\frac{Z^2}{2}\right) \exp\left(-\frac{(P_1-bZ)^2}{2a^2}\right)\mathrm{d}Z}.
\end{equation}
Completing the square in the exponents gives
\begin{equation*}
    \frac{Z^2a^2 + (P_1-bZ)^2}{2a^2} = \frac{bZ^2 - 2bZP_1 + P_1^2}{2b(1-b)} = \frac{(Z-P_1)^2}{2(1-b)} + \frac{P_1^2}{2b},
\end{equation*}
which after substitution in \eqref{appendix:den4_3} results in
\begin{equation}\label{appendix:den4_4}
    \E [Z'|P'_1] = \frac{\int_{ZY\geq0} Z \exp\left(-\frac{(Z-P_1)^2}{2\tau_1^{-1}}\right) \mathrm{d}Z}{\int_{ZY\geq0} \exp\left(-\frac{(Z-P_1)^2}{2\tau_1^{-1}}\right) \mathrm{d}Z}.
\end{equation}
Note that the denominator of \eqref{appendix:den4_4} is easy to access via the standard Gaussian CDF $\Psi(\cdot)$ as follows
\begin{equation}\label{appendix:den4_denom}
\begin{split}
    \frac{1}{\sqrt{2\pi\tau_1^{-1}}}\int_{ZY\geq0} \exp\left(-\frac{(Z-P_1)^2}{2\tau_1^{-1}}\right) \mathrm{d}Z &= \mathbbm{1}_{Y\geq 0} \cdot \left[1-\Psi\left(-\frac{P_1}{\tau_1^{-1/2}}\right)\right] + \mathbbm{1}_{Y < 0} \cdot \Psi\left(-\frac{P_1}{\tau_1^{-1/2}}\right) \\
    &=\Psi\left(\frac{YP_1}{\tau_1^{-1/2}}\right),
\end{split}
\end{equation}
where for the last equality we use that $\Psi(x) = 1 - \Psi(-x)$ and $Y\in\{-1,+1\}$. For the numerator of \eqref{appendix:den4_4}, we get
\begin{equation}\label{appendix:den4_5}
    \frac{1}{\sqrt{2\pi\tau_1^{-1}}}\int \mathbbm{1}_{YZ\geq 0} \cdot Z\exp\left(-\frac{(Z-P_1)^2}{2\tau_1^{-1}}\right) \mathrm{d}Z.
\end{equation}
Let us denote the PDF of $\mathcal{N}(\mu,\sigma^2)$ by $\rho_{\mu,\sigma^2}$, and use the  shorthand $\rho(\cdot)$ for $\rho_{0,1}(\cdot)$. Note that $\rho_{x,\sigma^2}(0) = \sigma^{-1}\rho(x/\sigma)$. Then, by Stein's identity, we have
$$
\E \left[\mathbbm{1}_{YZ\geq 0} \cdot (Z-P_1)\right] = \tau_1^{-1} \cdot \E [Y\cdot\delta_0(Z)] = Y\tau_{1}^{-1} \cdot \rho_{P_1,\tau_1^{-1}}(0) = Y\tau_{1}^{-1/2} \cdot \rho\left(\frac{P_1}{\tau_1^{-1/2}}\right),
$$
as the weak derivative of $\mathbbm{1}_{YZ\geq 0}$ is well-defined and equal to $Y\cdot\delta_0(Z)$. Noting that similarly to \eqref{appendix:den4_denom}
$$
\E \left[\mathbbm{1}_{YZ\geq 0} \cdot P_1\right] = P_1 \cdot \Psi\left(\frac{YP_1}{\tau_1^{-1/2}}\right),
$$
we conclude that
\begin{equation}\label{appendix:den4_6}
    \eqref{appendix:den4_5} = P_1 \cdot \Psi\left(\frac{YP_1}{\tau_1^{-1/2}}\right) + Y\tau_{1}^{-1/2} \cdot \rho\left(\frac{P_1}{\tau_1^{-1/2}}\right).
\end{equation}
Combining the results gives
\begin{equation}\label{appendix:4_allinall}
    \E[Z_1'|P_1'] = \frac{P_1 \cdot \Psi\left(\frac{YP_1}{\tau_1^{-1/2}}\right) + Y\tau_{1}^{-1/2} \cdot \rho\left(\frac{P_1}{\tau_1^{-1/2}}\right)}{\Psi\left(\frac{YP_1}{\tau_1^{-1/2}}\right)} = P_1 + Y\tau_{1}^{-1/2} \cdot \frac{\rho\left(\frac{P_1}{\tau_1^{-1/2}}\right)}{{\Psi\left(\frac{YP_1}{\tau_1^{-1/2}}\right)}}.
\end{equation}
It now remains to take the derivative in $P_1$. We get that
\begin{equation}
    \begin{split}
        \mathcal{B}_1(\tau_1) = 1 - \frac{YP_1\sqrt{\tau_1} \cdot \rho\left(\frac{P_1}{\tau_1^{-1/2}}\right) \cdot \Psi\left(\frac{YP_1}{\tau_1^{-1/2}}\right) + \rho\left(\frac{P_1}{\tau_1^{-1/2}}\right)^2}{\Psi\left(\frac{YP_1}{\tau_1^{-1/2}}\right)^2},
    \end{split}
\end{equation}
where we used that $Y^2=1$ and that $\frac{\partial}{\partial x}\Psi(x) = \rho(x)$.

\paragraph{State evolution recursion.} At this point, we are ready to present the state evolution recursion, which reads
\begin{equation}
    \begin{split}
        \gamma_{2,k} &= \gamma_{1,k} \cdot \frac{1-\mathcal{E}_1(\gamma_{1,k})}{\mathcal{E}_1(\gamma_{1,k})},\\
        \tau_{2,k} &= \tau_{1,k} \cdot \frac{1 - \mathcal{B}_1(\tau_{1,k})}{\mathcal{B}_1(\tau_{1,k})}, \\
        \gamma_{1,k+1} &= \gamma_{2,k} \cdot \frac{1-\mathcal{E}_2(\tau_{2,k},\gamma_{2,k})}{\mathcal{E}_2(\tau_{2,k},\gamma_{2,k})} = \gamma_{2,k} \cdot \frac{r\cdot\tau_{2,k}}{(1-r)\cdot\tau_{2,k}+\gamma_{2,k}},\\
        \tau_{1,k+1} &= \tau_{2,k} \cdot \frac{1 - \mathcal{B}_2(\tau_{2,k},\gamma_{2,k})}{\mathcal{B}_2(\tau_{2,k},\gamma_{2,k})} = \gamma_{2,k}.
    \end{split}
\end{equation}
The initialization $\gamma_{1,0}$ and $\tau_{1,0}$ can be set to a small strictly positive number. For the experiments, we choose the value of $10^{-6}$.

\paragraph{MSE from the state evolution parameter $\gamma_{1,k+1}$.} The MSE after $k$ steps of the recursion can be accessed via the function previously computed in \eqref{appendix:vamp_cond}. Namely, let $\x \sim {\rm SG}_d(p)$ and $\boldsymbol{r}_1 = \x + \boldsymbol{p}$, where $\boldsymbol{p}$ has i.i.d.\ entries with distribution $ \mathcal{N}(0,\gamma_{1,k+1}^{-1})$. Define
$$
g(\boldsymbol{r}_1) = \E[\x|\r_{1} = \x + \boldsymbol{p}].
$$
By the tower property of the conditional expectation, we claim that the following holds
$$
\E[\E[X|Y] \cdot X] = \E[\E[\E[X|Y] \cdot X| Y]] = \E\left[(\E[X|Y])^2\right],
$$
where we use that $\E[X|Y]$ is measurable w.r.t. $Y$.
Thus, we have that
$$
\E \langle g(\boldsymbol{r}_1), \x \rangle = d \cdot \E \left[(g(\boldsymbol{r}_1)_1)^2\right], 
$$
where $g(\boldsymbol{r}_1)_1$ denotes the first entry of the vector $g(\boldsymbol{r}_1)$.
Finally, the desired MSE after $k$ steps of the recursion is equal to
\begin{equation}\label{eq:vamp_fixed}
    \begin{split}
        d^{-1} \cdot \E \|\x - g(\boldsymbol{r}_1)\|_2^2 = 1 - \E \left[(g(\boldsymbol{r}_1)_1)^2\right].
    \end{split}
\end{equation}
We evaluate \eqref{eq:vamp_fixed} for $k$ large enough, so that the MSE has converged.
For the experiment in Figure \ref{fig:rigamp_plots}, we use $k=15$, as for $k\geq 15$ the MSE value in \eqref{eq:vamp_fixed} is stable.

\end{document}